\documentclass[10pt,reqno]{article}
\usepackage{a4wide}

\usepackage{tikz-cd}
\usepackage{tikz}
\usepackage{chemfig}

\usepackage{amssymb, amsfonts, amsbsy, latexsym}
\usepackage{amsmath}
\usepackage{amsthm}
\usepackage{stmaryrd}
\usepackage{amssymb}
\usepackage{dsfont}
\usepackage{enumerate}
\usepackage{graphicx}
\usepackage[colorlinks=true, allcolors=blue]{hyperref}
\usepackage[capitalize, nameinlink, noabbrev]{cleveref}
\usepackage{enumerate}
\usepackage{mathrsfs}
\usepackage{tikz}
\usepackage{comment}
\providecommand{\NN}{\mathbb{N}}
\newcommand{\RR}{\ensuremath{\mathbb R}}


\providecommand{\cW}{\mathcal{W}}

\providecommand{\cL}{\mathcal{L}}

\providecommand{\cH}{\mathcal{H}}

\providecommand{\cK}{\mathcal{K}}
\providecommand{\cD}{\mathcal{D}}
\providecommand{\cI}{\mathcal{I}}

\providecommand{\cP}{\mathcal{P}}

\providecommand{\cS}{\mathcal{S}}
\providecommand{\cE}{\mathcal{E}}

\providecommand{\cQ}{\mathcal{Q}}

\newcommand{\Hilbert}{\mathcal{H}}



\newcommand{\bP}{\ensuremath{{\mathbf{P}}}}

\newcommand{\bm}{\ensuremath{\mathbf{m}}}




\newcommand\norm[1]{\|#1\|}

\DeclareMathOperator*{\argmax}{arg\,max}

\usepackage{amssymb, amsfonts, amsbsy, latexsym}
\usepackage{amsmath}
\usepackage{amsthm}
\usepackage{stmaryrd}
\usepackage{amssymb}
\usepackage{dsfont}
\usepackage{enumerate}
\usepackage{graphicx}
\usepackage[colorlinks=true, allcolors=blue]{hyperref}
\usepackage[capitalize, nameinlink, noabbrev]{cleveref}
\usepackage{enumerate}
\usepackage{mathrsfs}
\usepackage{tikz}
\usepackage{comment}
\providecommand{\NN}{\mathbb{N}}

\providecommand{\cW}{\mathcal{W}}

\providecommand{\cL}{\mathcal{L}}

\providecommand{\cH}{\mathcal{H}}

\providecommand{\cK}{\mathcal{K}}
\providecommand{\cD}{\mathcal{D}}
\providecommand{\cI}{\mathcal{I}}

\providecommand{\cP}{\mathcal{P}}

\providecommand{\cS}{\mathcal{S}}
\providecommand{\cE}{\mathcal{E}}

\providecommand{\cQ}{\mathcal{Q}}

\providecommand{\WLrd}{\mathcal{WL}_{r}^{d}}
\providecommand{\WLrdt}{\widetilde{\mathcal{WL}_{r}^{d}}}
\providecommand{\DLrd}{\mathcal{DL}_{r}^{d}}
\providecommand{\DLrdt}{\widetilde{\mathcal{DL}_{r}^{d}}}
\providecommand{\SLrd}{\mathcal{SL}_{r}^{d}}
\providecommand{\SLrdt}{\widetilde{\SLrd}}
\providecommand{\KLr}{\mathcal{KL}_{r}^{1}}
\providecommand{\KLrd}{\mathcal{KL}_{r}^{d}}
\providecommand{\KLrdt}{\widetilde{\KLrd}}




\usepackage[utf8]{inputenc} 
\usepackage[T1]{fontenc}    
\usepackage{hyperref}       
\usepackage{url}            
\usepackage{booktabs}       
\usepackage{amsfonts}       
\usepackage{nicefrac}       
\usepackage{microtype}      
\usepackage{soul}

\usepackage{tcolorbox}
\usepackage{caption}
\usepackage{subcaption}

\usepackage{array}
\usepackage{pifont}
%

\usepackage{amsmath}
\usepackage{amsfonts}
\usepackage{bm}
\usepackage{parskip}
\setlength{\parskip}{1em}
\usepackage{authblk}
\usepackage{thmtools}
\usepackage{thm-restate}

\usepackage{hyperref}

\usepackage{cleveref}

\providecommand{\cI}{\mathcal{I}}

\usepackage{natbib}







\def\eqref#1{equation~\ref{#1}}









\def\1{\mathbbm{1}}










\DeclareMathAlphabet{\mathsfit}{\encodingdefault}{\sfdefault}{m}{sl}
\SetMathAlphabet{\mathsfit}{bold}{\encodingdefault}{\sfdefault}{bx}{n}











\newcommand{\R}{\mathbb{R}}



\usepackage{wrapfig}
\usepackage[margin=3.3cm]{geometry} 
\usepackage{bbm}
\usepackage{sidecap} 

\usepackage{dsfont}

\usepackage{hyperref}
\usepackage{url}
\usepackage{mathrsfs}
\usepackage{amsthm}
\usepackage{enumerate} 
\usepackage{graphicx}
\usepackage{thm-restate} 

\declaretheorem[name=Theorem,numberwithin=section]{theorem}
\declaretheorem[name=Lemma,sibling=theorem]{lemma}

\declaretheorem[name=Corollary,sibling=theorem]{corollary}

\declaretheorem[name=Proposition,sibling=theorem]{proposition}

\declaretheorem[name=Definition,sibling=theorem]{definition}

\declaretheorem[name=Remark,sibling=theorem]{remark}

\usepackage{bbm}

\newcommand{\X}{\mathcal{X}}
\newcommand{\Y}{\mathcal{Y}}

\newcommand{\f}{\mathbf{f}}

\newcommand{\A}{\mathcal{A}}


\usepackage{tcolorbox}
\newcommand{\ip}[2]{\left\langle#1,#2\right\rangle}
\newcommand{\abs}[1]{\left|#1\right|}
\newcounter{RonCounter}
\newcommand{\ron}[1]{{\small \color{red}
		\refstepcounter{RonCounter}\textsf{[RL]$_{\arabic{RonCounter}}$:{#1}}}}
\newcounter{LeviCounter}
\newcommand{\levi}[1]{{\small \color{blue}
		\refstepcounter{LeviCounter}\textsf{[LR]$_{\arabic{LeviCounter}}$:{#1}}}}

\setlength{\parindent}{0pt}
\title{A Note on  Graphon-Signal Analysis of Graph Neural Networks}
\author{Levi Rauchwerger}
\author{Ron Levie}
\affil{Faculty of Mathematics, Technion -- IIT}
\begin{document}
\maketitle
\begin{abstract}
A recent paper, ``A Graphon-Signal Analysis of Graph Neural Networks,'' by Levie, analyzed message passing graph neural networks (MPNNs) by embedding the input space of MPNNs, i.e., attributed graphs (graph-signals), to a space of attributed graphons (graphon-signals). Based on extensions of standard results in graphon analysis to graphon-signals, the paper proved a generalization bound and a sampling lemma for MPNNs. However, there are some missing ingredients in that paper, limiting its applicability in practical settings of graph machine learning. In the current paper, we introduce several refinements and extensions to existing results that address these shortcomings. In detail, 1) we extend the main results in the paper to graphon-signals with multidimensional signals (rather than 1D signals),
2) we extend the Lipschitz continuity to MPNNs with readout with respect to cut distance (rather than MPNNs without readout with respect to cut metric),  3) we improve the generalization bound by utilizing robustness-type generalization bounds, and 4) we extend the analysis to non-symmetric graphons and kernels.
\end{abstract}

\tableofcontents

\section{Introduction}
In recent years, the surge of interest in machine learning on non-Euclidean domains has brought significant attention to graph neural networks (GNNs), and in particular, to message passing neural networks (MPNNs). These models have demonstrated empirical success \cite{zhou2018graph} across a wide range of domains, from 
computational biology \cite{STOKES2020688,bio02}, molecular chemistry \cite{wang2022graph}, to network analysis \cite{yang2023ptgb}, recommender systems \cite{fan2019graph}, weather forecasting \cite{keisler2022forecasting} and learnable optimization \cite{qian2023exploring, cappart2023combinatorial}.
As a result, there has been substantial interest in establishing solid theoretical foundations, particularly concerning properties such as stability \cite{ruiz2021graph, signal23} and generalization \cite{verma2019stability, yehudai2020local, NEURIPS2020_dab49080, Li2022GeneralizationGO, pmlr-v202-tang23f, signal23, maskey2022generalization, maskey2024generalizationboundsmessagepassing, DIDMs25}. The analysis of stability and generalization often relies on a metric over the input space—that is, a way to quantify similarity between inputs. In the case of graphs, this is typically a pseudometric, since GNNs are generally unable to distinguish between all non-isomorphic graphs.

The recent paper “A Graphon-Signal Analysis of Graph Neural Networks” by Levie \cite{signal23} took a novel approach by embedding the space of attributed graphs, i.e.  graph-signals, into the space of attributed graphons, also called graphon-signals. This embedding allowed the author to leverage tools from graphon theory to analyze message passing neural networks (MPNNs), resulting in generalization bounds and sampling lemmas. Central to this analysis is the introduction of the graphon-signal cut distance \cite{signal23}, a natural extension of the graphon cut distance \cite{lovasz2012large} -- the fundamental notion of distance in the theory of dense graph limits -- to the setting of graph-signals. The space of all graph-signals was shown to be densely embedded in the space of graphon-signals with the graphon-signal cut distance,  and the space of graphon-signals was shown to be compact.

While Levie’s work provides an important and novel framework, \cite{signal23} lacks certain elements which limit  its applicability in practical graph machine learning settings. 
In this paper, we introduce refinements and extensions that bridge these gaps.
\begin{itemize}
\item We generalize the cut distance, the MPNN framework, and the main results of \cite{signal23} to graphons and kernels with \emph{multidimensional} signals, allowing both directed and undirected graphons and symmetric or general kernels. Consequently, our analysis covers directed and undirected graphs with weights in either $[0,1]$ or $[-1,1]$.
\item We extend the Lipschitz continuity of MPNNs with respect to the cut metric, to Lipschitz continuity with respect to the cut distance, and show that MPNNs with a readout step are still Lipschitz continuous with respect to the cut distance.
\item We improve the generalization bounds of MPNNs by incorporating robustness-based techniques, achieving a tighter bound, proportional to the square root of the covering number of the space of graphon-signals rather than linear in it.
\end{itemize}

This paper was born out of a practical necessity: a number of researchers, including ourselves, wanted to apply the results of \cite{signal23} to support their constructions and proofs, but the statements in \cite{signal23} where limited in their applicability due to the aforementioned shortcomings. While our extensions of \cite{signal23} are mostly direct, modifying the original proof techniques in a natural way to support the extended claims, we still believe that rigorously verifying these extensions is important. Having a trustworthy peer-reviewed paper with the required extensions will allow future works to freely apply these extended results without  having to verify them each time.

\textbf{We aim for this paper to be cited in conjunction with \cite{signal23} whenever the extended results from the current paper is used.}

\section{Graph-Signals, Graphon-Signals, and Kernel-Signals}\label{sec:background}
In this section, we formulate the spaces of undirected and directed graphon-signals, as well as symmetric and general kernel-signals with multidimensional signals. We remark that most of these definitions have been established in earlier works, such as \cite{lovasz2012large, signal23}, with only minor adaptations by us.

\label{p:set}
For $n \in \NN$, we denote $[n] = \{1, \ldots, n\}$. A \emph{graph-signal} (also known as an \emph{attributed graph}) is a pair $(G, \mathbf{f})$, where $G$ is a weighted or simple, undirected or directed, graph with node set $[n]$, and $\mathbf{f} \in \RR^{n \times k}$ is a node feature matrix. Here, the signal assigns to each node $j \in [n]$ an attribute vector $f_j \in \RR^k$. To avoid confusion, we will always explicitly specify when referring to a directed graph-signal. Any graph can be represented by its adjacency matrix $A \in \RR^{n \times n}$,  
whose entry $A_{ij}$ encodes the weight of the edge from node $i$ to node $j$. \emph{Weighted graphs} are graph  with weights  in $[0,1]$. We refer to graphs whose weights lie in $[-1,1]$ as \emph{$[-1,1]$-weighted graphs}. 

\emph{Graphons} and \emph{kernels} extend the notion of adjacency matrices to nodes in $[0,1]$.\label{def:graphon} The \emph{space of graphons}, denoted $\mathcal{W}_0$, is the set of all Lebesgue measurable symmetric functions $W:[0,1]^2 \rightarrow [0,1]$ (i.e. $W(x,y)=W(y,x)$). \emph{The space of directed graphons}, denoted by $\mathcal{D}_0$, is the set of all Lebesgue measurable functions $W:[0,1]^2 \rightarrow [0,1]$. 
\emph{Kernels} are similar to graphons, but can take negative values. \emph{The space of symmetric kernels} $\cS_1$ is the space of Lebesgue measurable symmetric functions $U:[0,1]^2 \to [-1,1]$, and \emph{the space of kernels} $\cW_1$ is the space of Lebesgue measurable functions $U:[0,1]^2 \to [-1,1]$. Notice that we slightly change the notation for the space of kernels presented in \cite{signal23}. More generally, we define \emph{the space of $\mathbb{R}$-kernels}, denoted by $\cW_{\mathbb{R}}$, as the space of all Lebesgue measurable functions $U:[0,1]^2 \to \mathbb{R}$.

To formalize signals over the node set $[0,1]$, we first introduce the $L^\infty$ space. The \emph{$L^\infty$ norm} of a Lebesgue measurable function $f:[0,1] \rightarrow \RR^d$ is defined by
\[
\norm{f}_{\infty} = \mathrm{ess\,sup}_{x\in [0,1]} \norm{f(x)}_1 = \inf \left\{ a \geq 0 ~\middle|~ \norm{f(x)}_1 \leq a \text{ for almost every } x \in [0,1] \right\}.
\]
Define the space $\mathcal{L}^\infty([0,1];\R^d)$ as the space of all Lebesgue measurable functions $f:[0,1]\to\R^d$, with a finite normalized $L^\infty$-norm \(\norm{f}_\infty < \infty.\)
\label{n:Linfr1} For any $r > 0$, we define the \emph{signal space with range bounded by $r>0$} as
\[
\mathcal{L}^\infty_r([0,1]; \RR^d) := \left\{ f \in \mathcal{L}^\infty([0,1]; \RR^d) ~\middle|~ \norm{f(x)}_\infty \leq r \text{ for almost every } x \in [0,1] \right\}.
\]
Notice that we replace the absolute value used in \cite{signal23} with the $\ell^\infty$ norm for vector-valued signals. Moreover, any function \( f:[0,1]\to\R^d \) can be decomposed into \textit{channels}, with each channel corresponding to a coordinate function \( f_i:[0,1] \to \RR \). Thus, the signal space $\mathcal{L}^\infty_r([0,1]; \RR^d)$ can be identified with the product space \(\prod_{i=1}^d \mathcal{L}^\infty_r([0,1]; \RR),\) which consists of all vectors \( (f_i)_{i \in [d]} \) such that \( f_i \in \mathcal{L}^\infty_r([0,1]; \RR) \) for each \( i \in [d] \). 

The spaces of \emph{graphon-signals}, \emph{directed graphon-signals},  \emph{symmetric kernel-signals}, and \emph{kernel-signals}, are defined, respectively, as
\begin{align*}
\WLrd := \mathcal{W}_0 \times &\mathcal{L}^\infty_r([0,1]; \RR^d),\quad \DLrd := \mathcal{D}_0 \times \mathcal{L}^\infty_r([0,1]; \RR^d)\label{n:Linfr2},\\&\quad\SLrd := \mathcal{S}_1 \times \mathcal{L}^\infty_r([0,1]; \RR^d),\quad {\rm and}\quad \KLrd := \mathcal{W}_1 \times \mathcal{L}^\infty_r([0,1]; \RR^d).
\end{align*}
We similarly define the space of $\R$-\emph{kernel-signals} as $\mathcal{RL}_r^d:=W_\R\times\mathcal{L}^\infty_r([0,1]; \RR^d)$.

Lastly, any directed or undirected graph-signal can be identified with a corresponding directed or undirected graphon-signal, whereas any directed or undirected $[-1,1]$-weighted graph-signal can be identified as a general or symmetric kernel-signal, respectively. Moreover, any graphon-signal is a kernel-signal, but not vise versa. To formalize this identification, we first introduce partitions. We denote the Lebesgue measure on $[0,1]$ by $\mu$. \label{p:partition} A \emph{partition} \(\mathcal{P}_k = \{P_1, \ldots, P_k\}\) of \([0,1]\) is a set of disjoint measurable subsets of $[0,1]$ whose union is \([0,1]\). A partition is called an \emph{equipartition} if \(\mu(P_i) = \mu(P_j)\) for all \(i,j \in [k]\). The indicator function of a set \(S\) is denoted by \(\mathds{1}_S\). Let \((G, \mathbf{f})\) be a directed or undirected $[0,1]$-weighted or $[-1,1]$-weighted graph-signal with node set \([n]\) and adjacency matrix \(A = \{a_{ij}\}_{i,j \in [n]}\). Consider the equipartition \(\{I_k\}_{k=1}^n\) of \([0,1]\) into intervals \(I_k = [(k-1)/n, k/n)\). The directed or undirected graphon-signal or symmetric or general kernel-signal \((W, f)_{(G,\mathbf{f})} = (W_G, f_{\mathbf{f}})\) induced by \((G, \mathbf{f})\) is defined by
\[
W_G(x,y) = \sum_{i,j=1}^n a_{ij} \, \mathds{1}_{I_i}(x) \, \mathds{1}_{I_j}(y), \quad \text{and} \quad
f_{\mathbf{f}}(z) = \sum_{i=1}^n f_i \, \mathds{1}_{I_i}(z).\label{def:induced-graphon}
\vspace{-0.4cm}
\]

\section{Message Passing Neural Networks}
Here, we extend the definition of message passing neural networks (MPNNs) with normalized sum aggregation from undirected graph-signals and graphon-signals to cover both directed and undirected graph-signals, as well as directed and undirected graphon-signals, and symmetric and general kernel-signals. The definitions originate from \cite{signal23} and are adapted to incorporate directionality, similarly to the approach of \cite{Rossi23}. 

An MPNN consists of $T$ layers, each comprising two components: a \emph{message passing layer} (MPL) and an \emph{update layer}. In an MPL,  a messages is computed for each edge  by a learnable function of the features of the pair of nodes connected to the edge. Then,  each node aggregates all incoming messages from its neighbors by summing them and dividing by the total number of vertices. The update layer then updates each node’s feature vector by combining the aggregated messages with the node’s previous state via a learnable update function. In our formulation, edge directionality is explicitly incorporated: for a node $i \in V$, we perform separate aggregations over its in-neighbors ($j \to i$) and out-neighbors ($i \to j$).  

For tasks requiring a graph-level output, a \emph{readout layer} aggregates node representations into a single vector,  by averaging all node features. Before presenting the full construction, we define message functions. We denote by $\odot$ the element-wise multiplication along the feature dimension.
\begin{definition}[Message Function]\label{messagefunction}
Let $K ,d,p\in N$. A sequence of Lipschitz continuous functions  $\{\xi_{k,\rm rec} , \xi _{k,\rm trans}: \mathbb{R}^d \mapsto \mathbb{R}^p\}_{k\in[K]}$ is called a sequence of \emph{Receiver} and \emph{transmitter message functions}  respectively. The \emph{message function} 
 corresponding to the receiver and transmitter sequence is the function   $\phi:\mathbb{R}^{2d} \mapsto \mathbb{R}^p$ defined by 
 $$\phi(a, b) = \sum^K_{k=1} \xi_{k,\rm rec}(a) \odot\xi_{k,\rm trans}(b).$$ 
\end{definition}
The above construction is quite general. It is a standard result that broad classes of functions 
$F : \R^d \times \R^d \to \R^C$ (for instance, $L^2$ functions) can be approximated by finite linear combinations of simple tensors, i.e., 
\(
F(a,b) \approx \sum_{k=1}^K \xi_1^k(a)\cdot\xi_2^k(b).
\)
Consequently, message passing with arbitrary message functions can be expressed within our framework. Examples of well-known MPNNs that fit into our formulation can be found in \cite{comptreeImportant1, xu2018powerful, defferrard2017convolutional, kipf2017semirefsupervised, levie2018cayleynets}, provided that the aggregation step in these methods is replaced with normalized sum aggregation.

We now define message passing neural networks (MPNNs) as a pair of sequences, one consisting of update functions and the other of message functions.

\begin{restatable}[Message Passing Neural Network]{definition}{defmppnmodel}~\label{definition:MPNNmodel}
Let $T\in\NN$ and $d_0,\ldots,d_T,p_0, \ldots, p_{T-1},d\in\NN$. We call the tuple $(\mu, \Phi)$ such that $\mu$ is any sequence $\mu=(\mu^{(t)})_{t=1}^{T}$ of functions $\mu^{(t)}:\R^{d_{t-1}\times p_{t-1}}\mapsto\R^{d_{t}}$, for $1\leq t\leq T$, and $\Phi$ is any sequence $\Phi=(\Phi^{(t)})_{t=1}^{T}$ of message functions $\Phi^{(t)}:\R^{2d_{t-1}}\mapsto\R^{p_{t-1}}$, for $1\leq t\leq T$, an \emph{$T$-layer MPNN}, and call $\mu^{(t)}$  \emph{update functions}. For $\psi: \RR^{d_{T}}\mapsto \RR^d$, we call the tuple $(\mu, \Phi, \psi)$ an \emph{MPNN with readout}, where $\psi$ is called a \emph{readout function}. We call $T$ the \emph{depth} of the MPNN, $d_0$ the \emph{input node feature dimension}, $d_1,\ldots, d_T$ the \emph{hidden node feature dimensions}, $p_0,\ldots, p_{T-1}$ the \emph{hidden edge feature dimensions}, and $d$ the \emph{output feature dimension}.
\end{restatable}

An MPNN model processes graph-signals as a function as follows.

\begin{definition}[MPNNs on Graph-Signals]\label{definition:altGraphFeat}
Let $(\mu,\Phi,\psi)$ be an $L$-layer MPNN with readout. Let $(G,\f)$ be a graph-signal (directed, undirected, unweighted, weighted, or $[-1,1]$-weighted), with node set \([n]\) and adjacency matrix \(A = \{a_{ij}\}_{i,j \in [n]}\), where $\f:[n] \mapsto \R^{d_0}$. The \emph{application} of the MPNN on $(G,\f)$ is defined as follows. For each $i\in[n]$, define $\mathbf{f}_{i}^{(0)} := \f_i$. Define the hidden node representations $\mathbf{f}_i^{(t)}:=\Theta_t(G,\mathbf{f})\in \mathbb{R}^{d_t}$ at each layer $t\in[T]$ and the graphon-level output $\Theta(G,\mathbf{f}) \in \mathbb{R}^d$  respectively by
\begin{align*}
    \Theta_t(G,\mathbf{f})_i &:= \mu^{(t)} \Big(\mathbf{f}_v^{(t-1)} , \frac{1}{n} \sum_{j\in 
    [n]}a_{ij}\phi^{(t)}(\mathbf{f}^{(t-1)}_i, \mathbf{f}^{(t-1)}_j)
    \Big)\quad
    \text{and}\quad\Theta(G,\mathbf{f}):= \psi \Big(\frac{1}{n} \sum_{j\in [n]} \mathbf{f}^{(L)}_j\Big).
\end{align*}
\end{definition}
To extend the MPNN framework to graphons and kernels, we transition from a discrete set of nodes to a continuous domain, modeled as a general atomless standard probability space (note that $[0,1]$ is a canonical example of such a space), by replacing the normalized sum with an integral.
\begin{definition}[MPNNs on Graphon-Signals and kernel-signals]\label{definition:altGraphonFeat}
Let $(\mu,\Phi,\psi)$ be an $L$-layer MPNN with readout, and $(W,f)$ be a (directed or undirected) graph-signal or a (symmetric or general) kernel-signal where $f:[0,1] \mapsto \R^{d_0}$. The \emph{application} of the MPNN on $(W,f)$ is defined as follows: for $x\in[0,1]$, initialize $f_{x}^{(0)} := f_x$, and compute the hidden node representations $f_x^{(t)}:=\Theta_t(W,f)\in\mathbb{R}^{d_t}$ at layer $t$, with $1\leq t \leq T$ and the graphon-level output $\Theta(W,f) \in \mathbb{R}^d$  by
\begin{align*}
    \Theta_t(W,f)_x &:= \mu^{(t)} \Big(f_x^{(t-1)} , \int
    W(x,y) \phi^{(t)}(f^{(t-1)}_x, f^{(t-1)}_y) dy
    \Big) 
    \quad\text{and}\quad
    \Theta(W,f):= \psi \Big( \int
    f^{(L)}_x \text{d}x 
    \Big).
\end{align*}
\end{definition}

\label{p:mpnnt}When the MPNN has no readout function, we use $\Theta(W,f)$ and $\Theta_T(W,f)$ interchangeably.  
\cite[Lemma~E.1]{signal23} shows that, under the above definitions, 
applying an MPNN to a graph-signal and then inducing a graphon-signal yields the same representation as first inducing a graphon-signal and then applying the MPNN. Moreover, the same commutativity property holds for readout: applying readout directly to the graph-signal, or first inducing the graphon-signal and then applying readout, gives the same 
result.In other words, the following diagram is 
commutative.

\hspace{4cm}\schemestart
\arrow{->[$\psi$]}[230,1.8]$r\in\R^d$\arrow{<-[$\psi$]}[130,1.8]$(G,\Theta_T(G,\f))$\arrow{<-[$\Theta_T$]}[90,1.5]{$(G,\f)$}\arrow{->[induction]}[0,2.2] {$(W_G,f_{\f})$}\arrow{->[$\Theta_T$]}[270,1.5]{$(W_G,\Theta_T(W_G,f_\f))$}\arrow{<-[induction]}[180,1.5] 
\schemestop\\

In \cref{Ap:Graphon-signal MPNNs}, we extend this result to directed and undirected graphon-signals, as well as to symmetric and general kernel-signals, with higher-dimensional signals.
\section{Extended Graphon-Signal Analysis}
In this section, we extend the graphon-signal analysis present in \cite[Section 3]{signal23} for graphons with 1D signals to graphons and kernels with multidimensional signals. All proofs are
given in the appendix. We start with definitions.

\subsection{The Cut Distance and Quotient Spaces}\label{section:cutnorm}

The \emph{cut norm} was first introduced by \cite{frieze1999quick} and was later extended by \cite{lovasz2012large} to \emph{cut distance}, which provides the central notion of convergence in the theory of dense graph limits \cite{lovasz2012large}. \cite{signal23} generalized these notions to graphon equipped with one-dimensional signals. Here, we extend the graphon-signal cut distance by providing two natural extensions of the signal cut norm to multidimensional signals. 
\begin{definition}[The Cut Norm] The \emph{cut norm} of an $\RR$-kernel-signal $(W,f) \in \mathcal{RL}_r^d$ is defined to be
\[
\|W\|_{\square} := \sup_{A,B\in[0,1]} \left| \int_{A \times B} W(x,y) \, dxdy \right|,
\]
 where the supremum is taken over the measurable subsets $A,B\subseteq [0,1]$.
\end{definition}
Two definitions arise as natural extensions of the one-dimensional signal cut norm \cite[Definition 3.1]{signal23}.

\begin{definition}[The Cut Norm of a Signal]\label{definition:multidimCutNorm}
The \emph{cut norm} of a Lebesgue measurable function $f:[0,1]\mapsto \R^d$ is defined to be
\[
\|f\|_\square := \frac{1}{d}\sup_{S \subset [0, 1]} \norm{\int_S f(x)\,dx}_{1},
\label{eq:3}
\]
where the supremum is taken over the measurable subsets $S\subseteq [0,1]$. 
\end{definition}

The following definition has previously been proposed as the standard signal cut norm by \cite{10.5555/3737916.3739744}.

\begin{definition}[The Product Cut Norm of a Signal]\label{definition:productNorm}
 The \emph{product cut norm} of a Lebesgue measurable function $f:[0,1]\mapsto \R^d$ is defined as
\[
\|f\|_{\square_\times} :=\frac{1}{d}\norm{ \left( \norm{f_i}_{\square}\right)_{i\in[d]}}_{1},
\]
 where $f(x)=(f_i(x))_{i\in[d]}$, i.e., $\{f_i\}_{i\in[d]}$ are the channels of $f$, and $\norm{f_i}_{\square}$ is defined via Definition \ref{definition:multidimCutNorm} with $d=1$.
\end{definition}

The \emph{cut norm} of an $\RR$-kernel-signal $(W,f) \in \mathcal{RL}_r^d$ is defined as
\[
\|(W, f)\|_\square := \|W\|_\square + \|f\|_\square,\label{eq:4}
\] 
whereas the \emph{product cut norm} of an $\RR$-kernel-signal $(W,f) \in \WLrd$ is defined as
\[
\|(W, f)\|_{\square_{\times}} := \|W\|_\square + \|f\|_{\square_\times}.
\]
The signal $L^1$ norm is defined as $\norm{f}_1 :=\int_{0}^1 \norm{f(x)}_1 dx$ for $f\in\mathcal{L}^1([0,1];\R^d)$, when the $1$-norm inside the integral is the $\ell^1$ norm on $\R^d$, i.e., $\|x\|_1 = \sum_{i=1}^{d} |x_i|$, for $x=(x_i)^d_{i=1}\in\R^d$. The following claim shows that the signal cut norm, the signal cut product norm, and the signal $L^1$-norm are all equivalent. 

\begin{restatable}[]{claim}{normequivalency}~\label{claim:cutProp}
Let $f\in\mathcal{L}^1([0,1];\R^d)$, then
\begin{equation*}
\frac{1}{2d}\norm{f}_1\leq\frac{1}{d}\norm{f}_{\square_\times}\leq \norm{f}_{\square} \leq \norm{f}_{\square_\times}\leq \norm{f}_1.
\end{equation*}
\end{restatable}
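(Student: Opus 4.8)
The plan is to reduce the whole chain to two elementary ingredients: a one-dimensional comparison between the scalar cut norm $\sup_{S}\lvert\int_{S} g\rvert$ and the scalar $L^1$ norm $\|g\|_1$, and a handful of supremum manipulations relating the ``joint'' signal cut norm $\|f\|_\square$ of \cref{definition:multidimCutNorm} to the channelwise ``product'' cut norm $\|f\|_{\square_\times}$ of \cref{definition:productNorm}. I would not prove the five inequalities from left to right, but instead establish them in two independent groups and then concatenate, keeping careful track of the $1/d$ normalization factors.

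\textbf{The scalar lemma.} For a real-valued $g\in L^1([0,1];\R)$, write $\|g\|_\square=\sup_{S\subseteq[0,1]}\lvert\int_S g\rvert$. The upper bound $\|g\|_\square\le\|g\|_1$ is immediate, since $\lvert\int_S g\rvert\le\int_S\lvert g\rvert\le\int_0^1\lvert g\rvert$ for every measurable $S$. For the lower bound $\|g\|_\square\ge\tfrac12\|g\|_1$, I would take $S^{+}=\{x:g(x)\ge 0\}$ and $S^{-}=\{x:g(x)<0\}$; then $\int_{S^{+}}g=\int_{S^{+}}\lvert g\rvert$ and $-\int_{S^{-}}g=\int_{S^{-}}\lvert g\rvert$ are nonnegative and sum to $\|g\|_1$, so one of them is at least $\tfrac12\|g\|_1$, and both are of the form $\lvert\int_S g\rvert$ and hence bounded by $\|g\|_\square$. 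This positive/negative splitting is the only genuinely non-routine step in the proof.

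\textbf{Comparing the two multidimensional cut norms.} Write $f=(f_i)_{i\in[d]}$, set $a_i:=\|f_i\|_\square$ for the scalar cut norm of the $i$-th channel, and $C:=\sup_S\norm{\int_S f}_1=\sup_S\sum_{i}\lvert\int_S f_i\rvert$, so that $\|f\|_\square=C/d$ and $\|f\|_{\square_\times}=\tfrac1d\sum_i a_i$. From $\lvert\int_S f_i\rvert\le\sum_j\lvert\int_S f_j\rvert$ for every $S$ one gets $a_i\le C$, hence $\sum_i a_i\le dC$; and from $\sum_i\lvert\int_S f_i\rvert\le\sum_i a_i$ for every $S$ one gets $C\le\sum_i a_i$. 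Rearranging the resulting two-sided bound $\tfrac1d\sum_i a_i\le C\le\sum_i a_i$ yields precisely $\tfrac1d\|f\|_{\square_\times}\le\|f\|_\square\le\|f\|_{\square_\times}$, i.e., the two middle inequalities of the claim.

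\textbf{Closing the chain.} Finally I would apply the scalar lemma channel by channel, obtaining $\tfrac12\|f_i\|_1\le a_i\le\|f_i\|_1$, and sum over $i\in[d]$; since $\|f\|_1=\sum_i\|f_i\|_1$ and $\|f\|_{\square_\times}=\tfrac1d\sum_i a_i$, this gives the outer bounds $\tfrac1{2d}\|f\|_1\le\|f\|_{\square_\times}$ and $\|f\|_{\square_\times}\le\tfrac1d\|f\|_1\le\|f\|_1$, which together with the middle step assemble into the stated chain of inequalities. I do not expect a serious obstacle here: the substantive content is entirely in the scalar splitting argument, and the remaining work is bookkeeping the $1/d$ factors correctly across the four comparisons.
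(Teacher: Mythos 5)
Your strategy is the same as the paper's: the scalar two-sided bound $\tfrac12\norm{g}_1\le\sup_{S}\abs{\int_S g\,dx}\le\norm{g}_1$ via the positive/negative parts of $g$, applied channel by channel, together with elementary supremum manipulations comparing $\norm{f}_{\square}$ and $\norm{f}_{\square_\times}$ (you bound each $a_i$ by $C$ and average, the paper bounds the average by the maximal channel -- the same estimate). The scalar lemma and the two middle inequalities are correct as you derive them, and your $S^{+}/S^{-}$ argument is if anything slightly cleaner than the paper's, since it avoids appealing to the supremum being attained.

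The gap is in your last sentence. Summing $\tfrac12\norm{f_i}_1\le a_i\le\norm{f_i}_1$ gives exactly what you state, $\tfrac{1}{2d}\norm{f}_1\le\norm{f}_{\square_\times}\le\tfrac1d\norm{f}_1$, but this is weaker by a factor of $d$ than the leftmost link of the claimed chain, which reads $\tfrac{1}{2d}\norm{f}_1\le\tfrac1d\norm{f}_{\square_\times}$, i.e.\ $\tfrac12\norm{f}_1\le\norm{f}_{\square_\times}$. That inequality does not follow from your bounds, and with the $1/d$-normalized cut norms of \cref{definition:multidimCutNorm,definition:productNorm} combined with the unnormalized $\norm{\cdot}_1$ it is false in general: for the constant signal $f\equiv(1,\dots,1)$ with $d\ge 3$ one has $\norm{f}_1=d$ and $\norm{f}_{\square_\times}=1$, so $\tfrac{1}{2d}\norm{f}_1=\tfrac12>\tfrac1d=\tfrac1d\norm{f}_{\square_\times}$. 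The chain is consistent only when all three norms carry the same normalization; the paper's own proof implicitly does this by restating $\norm{f}_{\square}$ and $\norm{f}_{\square_\times}$ without the $1/d$ prefactor (or, equivalently, one must use the $1/d$-normalized $L^1$ norm), under which reading your component inequalities are precisely the claim. So your proposal contains all the substantive content, but the assertion that the pieces ``assemble into the stated chain'' silently crosses this normalization mismatch; a complete write-up should either track the $1/d$ factors consistently and state the chain in that normalization, or flag the discrepancy between the definitions and the claimed leftmost inequality explicitly.
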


The proof of \cref{claim:cutProp} is given in \cref{AP:Notation}. \cref{claim:cutProp} extends \cite[Equation (9)]{signal23}, which presents equivalence between the $L^1$-norm and the signal cut norm in the single dimensional case. 

The metric induced by the graphon-signal cut norm is called \emph{cut metric}.

\begin{definition}[The Cut Metric]
The \emph{cut metric} between any two kernel-signals $(W,f),(V,g)\in \KLrd$ is defined to be $$d_{\square}((W,f),(V,g)) := \norm{(W,f)-(V.g)}_{\square}.$$
\end{definition}
Now we follow the construction of \cite{signal23} to define the cut distance for kernel--signal with multidimensional signals. 
Let $S'_{[0,1]}$ denote the collection of measurable bijections between co-null subsets of $[0,1]$. Explicitly,\label{p:nullset}  
\[S'_{[0,1]} := \left\{\phi:A\to B\ |\ A,B \text{\ co-null\ in\ }[0,1],\ \ \text{and}\ \ \forall S\in A,\ \mu(S)= \mu(\phi(S))\right\} ,\] 
where $\mu$ denotes Lebesgue measure. For $\phi \in S'_{[0,1]}$, a kernel $W \in \cW_1$, and a signal $f \in \mathcal{L}_r^{\infty}([0,1];\R^d)$, we set  
\[
W^{\phi}(x,y) := W(\phi(x),\phi(y)), 
\qquad 
f^{\phi}(z) := f(\phi(z)),
\]  
and write $(W,f)^{\phi} := (W^{\phi},f^{\phi})$. 
Both $W^{\phi}$ and $f^{\phi}$ are defined only up to a null set; on such sets we fix their values (and those of $W$ and $f$) to be $0$. 
This convention is harmless, since the cut norm is insensitive to modifications on null sets.  

\begin{definition}[The Cut Distance]The \emph{cut distance}\label{eq:gs-metric} between two kernel-signals $(W,f)$ and $(V,g)$ is defined to be 
    \[\delta_{\square}((W,f),(V,g)): = \inf_{\phi\in S'_{[0,1]}} d_{\square}((W,f),(V,g)^\phi).\] 
\end{definition}
Similarly to the cut distance for graphon-signals with single dimensional signals (see \cite[Section 3.1]{signal23}), the kernel-signal distance $\delta_{\square}$ (\cref{eq:gs-metric}) is a pseudo-metric. We introduce an equivalence relation $(W,f)\sim (V,g)$ if $\delta_{\square}((W,f),(V,g))=0$, through which we define the quotient spaces $$\widetilde{\WLrd}:=\WLrd/\sim,\quad\widetilde{\DLrd}:=\DLrd/\sim,\quad\SLrdt:=\SLrd/\sim,\quad\widetilde{\KLrd}:=\KLrd/\sim,$$\label{p:graphoncutspace} and define $\delta_{\square}([(W,f)],[(V,g)]):=\delta_{\square}((W,f),(V,g))$ where $[(W,f)],[(V,g)]$, are the equivalence classes of
$(W,f)$ and $(V,g)$ respectively. This construction makes $\widetilde{\WLrd}$, $\widetilde{\DLrd}$, $\SLrdt$, and $\widetilde{\KLrd}$ with $\delta_{\square}$ metric spaces. We abuse terminology and refer to elements of the metric spaces $(\widetilde{\WLrd},\delta_{\square})$, $(\widetilde{\DLrd},\delta_{\square})$, $(\SLrdt,\delta_{\square})$, and $(\widetilde{\KLrd},\delta_{\square})$ as graphon signals, directed graphon signals, symmetric kernel signals, and general kernel signals, respectively.
\subsection{Regularity Lemmas}
Here, we extend \cite[Corollary B.11]{signal23} from undirected graphon-signals with one-dimensional signals to both directed and undirected graphon-signals, as well as symmetric and non-symmetric kernel-signals, with multidimensional signals. As in \cite{signal23}, we phrase the result in terms of the cut metric, which, in this context, is stronger than the cut distance. \cite[Corollary B.11]{signal23} asserts that any graphon-signal can be approximated by averaging both the graphon and the signal over an appropriate partition. While \cite{signal23} contains several versions of the regularity lemma, we extend only the one we find particularly important. The remaining versions can be extended in a similar manner with minimal modifications. To formulate our version of the regularity lemma, we begin by defining the relevant spaces of step functions.
\begin{definition}
\label{def:step}
Given a partition $\mathcal{P}_k$, and $d\in\NN$, we define the space $\mathcal{S}^{p\to d}_{\mathcal{P}_k}$ of \emph{step functions} $\R^p\mapsto\R^d$ over the partition $\mathcal{P}_k$ to be the space of functions $F:[0,1]^p\rightarrow\RR^d$ of the form
\begin{equation}
\label{eq:Sd}
   F(x_1,\ldots,x_p)=\sum_{j=(j_1,\ldots,j_p)\in [k]^p} \left(\prod_{l=1}^p\mathds{1}_{P_{j_l}}(x_l)\right)c_j, 
\end{equation}
   for any choice of $\{c_j\in \RR^d\}_{j\in [k]^p}$.
\end{definition}
Any element of $\cW_0 \cap \mathcal{S}^{2 \to 1}_{\mathcal{P}_k}$, $\cD_0 \cap \mathcal{S}^{2 \to 1}_{\mathcal{P}_k}$, $\cS_1 \cap \mathcal{S}^{2 \to 1}_{\mathcal{P}_k}$, and $\cW_1 \cap \mathcal{S}^{2 \to 1}_{\mathcal{P}_k}$ is called a \emph{step graphon}, a \emph{directed step graphon}, a \emph{symmetric step kernel}, and a \emph{step kernel}, respectively, with respect to $\mathcal{P}_k$. Similarly, any element of $\mathcal{L}_r^{\infty}([0,1]; \mathbb{R}^d) \cap \mathcal{S}^{1 \to d}_{\mathcal{P}_k}$ is called a \emph{step signal}. 

We define the spaces of step graphon-signals, directed step graphon-signals, symmetric step kernel-signals, and step kernel-signals with respect to $\mathcal{P}_k$, respectively, as
\begin{align*}
&[\WLrd]_{\mathcal{P}_k}:=(\cW_0\cap\mathcal{S}^{2\rightarrow 1}_{\mathcal{P}_k})\times (\mathcal{L}_r^{\infty}[0,1]\cap\mathcal{S}^{1\rightarrow d}_{\mathcal{P}_k}),\quad [\DLrd]_{\mathcal{P}_k}:=(\cD_0\cap\mathcal{S}^{2\rightarrow 1}_{\mathcal{P}_k})\times (\mathcal{L}_r^{\infty}[0,1]\cap\mathcal{S}^{1\rightarrow d}_{\mathcal{P}_k}),\\&
[\SLrd]_{\mathcal{P}_k}:=(\cS_1\cap\mathcal{S}^{2\rightarrow 1}_{\mathcal{P}_k})\times (\mathcal{L}_r^{\infty}[0,1]\cap\mathcal{S}^{1\rightarrow d}_{\mathcal{P}_k}),\\&
\hspace{6.6cm}{\rm and}\quad [\KLrd]_{\mathcal{P}_k}:=(\cK_1\cap\mathcal{S}^{2\rightarrow 1}_{\mathcal{P}_k})\times (\mathcal{L}_r^{\infty}[0,1]\cap\mathcal{S}^{1\rightarrow d}_{\mathcal{P}_k}).
\end{align*}
The projection, also called the \emph{stepping operator}, of a graphon-signal or kernel-signal onto a partition $\mathcal{P}_k$ is defined as follows.
\begin{definition}
\label{def:proj}
Let $\mathcal{P}_n=\{P_1,\ldots,P_n\}$ be a partition of $[0,1]$, and $(W,f) \in \mathcal{X}$, where  $\mathcal{X}$ is either $\WLrd$, $\DLrd$, $\SLrd$, or $\KLrd$. We define the \emph{projection} of $(W,f)$ upon $\X\cap\left(\mathcal{S}^{2\rightarrow 1}_{\mathcal{P}_k}\times\mathcal{S}^{1\rightarrow d}_{\mathcal{P}_k}\right)$ to be the step graphon-signal $(W,f)_{\mathcal{P}_n}=(W_{\mathcal{P}_n},f_{\mathcal{P}_n})$ that attains the value
    \[W_{\mathcal{P}_n}(x,y) = \int_{[0,1]^2}W(x,y)\mathds{1}_{P_i\times P_j}(x,y)dxdy \ , \quad f_{\mathcal{P}_n}(x) = \int_{[0,1]}f(x)\mathds{1}_{P_i}(x)dx\]
    for every $(x,y)\in P_i\times P_j$ and $1\leq i,j\leq n$. 
\end{definition}
Note that we use the notation $(W_{\mathcal{P}_n},f_{\mathcal{P}_n})$ and the notation $([W]_{\mathcal{P}_n},[f]_{\mathcal{P}_n})$ interchangeably to denote the projection of a graphon-signal onto the partition $\mathcal{P}_n$. 
\begin{restatable}[Regularity Lemma for Graphon-Signals/kernel-signals]{theorem}{theoremRegularity}~\label{lem:gs-reg-lem3}
For any $c>1$, $r>0$, sufficiently small $\epsilon>0$, $n \geq 2^{\lceil \frac{8c}{\epsilon^2}\rceil}$ and $(W,f) \in \X$, where  $\X$ is either $\WLrd$, $\DLrd$, $\SLrd$, or $\KLrd$, given the equipartition, $\cI_n$, of $[0,1]$ into $n$ intervals, there exists a measure preserving bijection $\phi$, such that
\[
d_{\square}\left(~\big(W,f\big)^{\phi}~,~\big((W,f)^{\phi}\big)_{\cI_n}~\right)\leq \epsilon.
\]
\end{restatable}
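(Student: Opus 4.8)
The plan is to follow the strategy of the classical weak regularity lemma (Frieze–Kannan / Lovász) as adapted in \cite{signal23}, but now applied simultaneously to the kernel part and to each channel of the (multidimensional) signal part. First I would recall the key martingale/energy-increment argument: for a kernel $W$ and a partition $\mathcal{P}$, define the energy $\mathcal{E}(W,\mathcal{P}) := \|W_{\mathcal{P}}\|_2^2$, which is bounded (since $|W|\le 1$) and nondecreasing under refinement. The standard dichotomy states that either $\|W - W_{\mathcal{P}}\|_\square$ is already small, or else there exist sets $A,B$ witnessing a non-negligible cut discrepancy, and refining $\mathcal{P}$ by $A$ and $B$ increases the energy by a fixed amount depending on that discrepancy. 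Iterating, the energy bound forces the process to terminate after a controlled number of steps, yielding a partition $\mathcal{Q}$ with $\|W - W_{\mathcal{Q}}\|_\square$ small and $|\mathcal{Q}|$ bounded by a tower-type function of $1/\epsilon$ — this is exactly where the hypothesis $n \geq 2^{\lceil 8c/\epsilon^2\rceil}$ comes from.

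Next I would incorporate the signal. By \cref{claim:cutProp}, the signal cut norm $\|\cdot\|_\square$ is equivalent (up to the factor $2d$) to the $L^1$ norm, and by the channel decomposition $\mathcal{L}^\infty_r([0,1];\R^d) \cong \prod_{i=1}^d \mathcal{L}^\infty_r([0,1];\R)$ it suffices to control each coordinate $f_i:[0,1]\to[-r,r]$ separately. One can treat each $f_i$ via an analogous energy $\|(f_i)_{\mathcal{P}}\|_2^2$ (bounded by $r^2$), with the same refinement dichotomy: either $\|f_i - (f_i)_{\mathcal{P}}\|_\square$ is small or a witnessing set $S$ gives an energy increment. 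So I would run the combined energy increment argument on the single functional $\mathcal{E}(W,\mathcal{P}) + \sum_{i=1}^d \mathcal{E}(f_i,\mathcal{P})$ (suitably normalized), obtaining a single partition $\mathcal{Q}$, of bounded size, that simultaneously makes $\|W - W_{\mathcal{Q}}\|_\square$ and each $\|f_i - (f_i)_{\mathcal{Q}}\|_\square$ small, hence $d_\square((W,f),(W,f)_{\mathcal{Q}})\le \epsilon$. For the directed-graphon and general-kernel cases nothing changes: the cut norm is defined with independent sets $A,B$, and symmetry of $W$ was never used in the energy argument, so the same proof applies verbatim (one may need to refine by all of $A,B$ from both the "row" and "column" witnesses, a routine bookkeeping point).

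The last step converts the "arbitrary bounded partition $\mathcal{Q}$" into the "equipartition $\mathcal{I}_n$ into $n$ intervals after a measure-preserving rearrangement $\phi$" form demanded by the statement. This is the step I expect to be the main obstacle — not conceptually deep, but the place where the constants and the $\phi$ must be produced carefully. The idea, following \cite[Appendix B]{signal23}: given $\mathcal{Q}$ with $|\mathcal{Q}| = m$ parts, first refine $\mathcal{Q}$ to an equipartition into $m'$ parts of equal measure (at the cost of at most doubling or so, absorbed into the choice of $n$), then choose $\phi\in S'_{[0,1]}$ rearranging these equiparts into consecutive intervals; since the stepping operator and the cut norm are invariant under such rearrangements, $d_\square((W,f)^\phi, ((W,f)^\phi)_{\mathcal{I}_{m'}})$ equals $d_\square((W,f),(W,f)_{\mathcal{Q}'})$. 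Finally, because further refinement of a partition can only decrease $d_\square((W,f),(W,f)_{\mathcal{P}})$ (the stepping operator is a conditional expectation, hence a contraction on $L^1$, and by \cref{claim:cutProp} on the cut norm too), passing from $\mathcal{I}_{m'}$ to the finer $\mathcal{I}_n$ for any $n \geq 2^{\lceil 8c/\epsilon^2\rceil} \geq m'$ that is a multiple of $m'$ keeps the bound; a short argument handles $n$ not a multiple of $m'$ by a further equipartition refinement. I would carry out the four steps in this order — energy-increment lemma, combined kernel+signal iteration, directed/non-symmetric remark, then the rearrangement-and-refinement conversion — citing \cite{signal23} wherever the single-channel computations transfer unchanged.
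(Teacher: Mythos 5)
Your first half (energy-increment on the kernel together with all signal channels, and the observation that symmetry is never used, so the directed/kernel cases come for free) is a legitimate alternative to the paper's route, which instead invokes the Lov\'asz--Szegedy Hilbert-space lemma (\cref{fact:szlemma}) to produce a step-function approximant and handles the signal by quantizing its range; that difference is harmless. The fatal problem is in your final conversion step: the claim that \emph{further refinement of a partition can only decrease} $d_{\square}\big((W,f),(W,f)_{\mathcal{P}}\big)$ is false. Contractivity of the stepping operator does not imply that the distance to $(W,f)$ is monotone under refinement. A one-dimensional signal already gives a counterexample: let $f=10$ on $[0,0.05)$, $f=-10$ on $[0.5,0.55)$, and $f=0$ elsewhere. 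For the trivial partition $\{[0,1]\}$ the projection is $0$ and $\|f-f_{\mathcal{P}}\|_1=1$ (cut norm $0.5$), while for the refinement $\{[0,\tfrac12),[\tfrac12,1]\}$ the part means are $\pm1$ and $\|f-f_{\mathcal{P}'}\|_1=1.8$ (cut norm $0.9$). So passing from $\mathcal{I}_{m'}$ to a finer $\mathcal{I}_n$ cannot be justified by monotonicity. What is true, and what the paper actually uses (\cref{lem:attain} plus a triangle inequality), is only a factor-$2$ near-optimality: $\|W-W_{\mathcal{P}}\|_\square\le 2\|W-W'\|_\square$ for \emph{any} step function $W'$ over $\mathcal{P}$, and similarly for the signal. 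This is also exactly why the theorem's exponent is $8c/\epsilon^2$ rather than $2c/\epsilon^2$: one runs the regularity lemma with accuracy $\epsilon/2$ (\cref{lem:gs-reg-lem2}) and then pays the factor $2$ when replacing the abstract step-function approximant by the projection. Your sketch, which assumes the projection from the increment argument stays good under all later refinements, skips precisely this step.

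Two further points in the same part of the argument need repair. First, the equitizing step is not "at most doubling the number of parts": an arbitrary measurable partition generally admits no equipartition refining it, and the correct accounting (\cref{l:equipartition}) is an additive $L^1$/cut-norm error of order $\|F\|_\infty\, k/n$, incurred because residual pieces of different parts are merged; after this merge the new partition no longer refines the old one, so again only the factor-$2$ lemma (not refinement monotonicity) lets you control the projection onto it. Second, for a target $n\ge 2^{\lceil 8c/\epsilon^2\rceil}$ that is not a multiple of $m'$, "a further equipartition refinement" does not exist ($\mathcal{I}_n$ never refines $\mathcal{I}_{m'}$ in that case); the paper instead hits the prescribed $n$ exactly via an intermediate-value argument over the error-splitting parameter in \cref{lem:gs-reg-lem}, extends to all larger $n$ by enlarging $c$, and only then applies the projection factor-$2$ step. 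If you replace your monotonicity claim by \cref{lem:attain} and target the equipartition $\mathcal{I}_n$ directly (with the $\epsilon/2$ budget), your argument can be made to work; as written, it has a genuine gap.
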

The proof of \cref{lem:gs-reg-lem3} is given in \cref{ap:regularitylemma}. We remark that the stepping (or projection) of a graphon, directed graphon, symmetric kernel, or general kernel yields an object with the same type. That is, stepping preserves the symmetry and value range of the kernel. Consequently, the weak regularity lemma approximates each class of kernels by a step kernel of the same type.

As a direct consequence, for every kernel–signal (of any of the four types), and for every $\epsilon>0$, there exists a piecewise constant kernel–signal of the same type with respect to an equipartition of $[0,1]$ into $n\geq2^{\lceil \tfrac{8c}{\epsilon^2}\rceil}$ intervals, such that the cut distance between the two is at most $\epsilon$.
\subsection{Compactness and Covering Numbers}
\cite[Theorem 3.6]{signal23} established that the space of graphon-signals with one-dimensional signals is compact in the cut distance and can be covered by a finite number of balls of any fixed radius. We extend both the compactness result and the explicit bound on the covering number to spaces of graphon-signals and kernel-signals with multidimensional signals. In particular, we show that $\widetilde{\WLrd}$, $\widetilde{\DLrd}$, $\SLrdt$, and $\widetilde{\WLrd}$ can be covered by the same number of balls as in the one-dimensional case. This implies that the complexity of the MPNN hypothesis space—when analyzed via a covering number generalization bound—does not grow with the signal dimension.
 
\begin{restatable}[]{theorem}{compactnesstheorem}~\label{th:compactness}
Let $r>0$.
The metric spaces $(\widetilde{\WLrd},\delta_{\square})$, $(\widetilde{\DLrd},\delta_{\square})$, $(\SLrdt,\delta_{\square})$, and $(\widetilde{\KLrd},\delta_{\square})$ and the pseudometric spaces $(\WLrd,\delta_{\square})$, $(\DLrd,\delta_{\square})$, $(\SLrd,\delta_{\square})$ and $(\KLrd,\delta_{\square})$ are compact.

Moreover,  
given any  $c>1$, for every sufficiently small $\epsilon>0$ (with upper bound depending on $c$), each one of these spaces can be covered by $\kappa(\epsilon)=  2^{k^2}$ balls of radius $\epsilon$, where $k=\lceil 2^{\frac{9c}{4\epsilon^2}}\rceil$.
\end{restatable}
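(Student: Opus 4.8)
The plan is to follow the strategy of \cite[Theorem 3.6]{signal23}, combining the regularity lemma (\cref{lem:gs-reg-lem3}) with a counting argument over the finitely many step kernels/signals on a fixed equipartition. The key observation is that, thanks to \cref{lem:gs-reg-lem3}, every element of any of the four spaces lies within cut distance $\epsilon/2$ (say) of its projection onto the equipartition $\cI_n$ of $[0,1]$ into $n$ intervals, for $n$ of order $2^{O(1/\epsilon^2)}$; so it suffices to produce a finite $\epsilon/2$-net \emph{of the set of step kernel-signals on $\cI_n$}. Since a step kernel on $\cI_n$ is determined by its $n^2$ values in $[-1,1]$ (or $[0,1]$, or the symmetric analogue), and a step signal by its $n$ vector values in $[-r,r]^d$, one discretizes each of these bounded ranges on a grid of spacing proportional to $\epsilon$ (independently of $d$, exploiting that the cut norm of a step signal on $\cI_n$ is controlled by a \emph{normalized} $\ell^1$-average of its block values, via \cref{claim:cutProp} and the $1/d$ normalization in \cref{definition:multidimCutNorm}). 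The number of such grid points is then bounded by a quantity of the form $2^{k^2}$ with $k=\lceil 2^{9c/4\epsilon^2}\rceil$; one then checks the arithmetic to confirm this exact bound, which is the same as in the one-dimensional case.

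In more detail, I would carry out the following steps. First, fix $c>1$ and $\epsilon>0$ small; set $n = 2^{\lceil 8c/\epsilon'^2 \rceil}$ with $\epsilon'$ a suitable constant multiple of $\epsilon$ so that \cref{lem:gs-reg-lem3} gives, for each $(W,f)$, a measure-preserving bijection $\phi$ with $d_\square((W,f)^\phi, ((W,f)^\phi)_{\cI_n}) \le \epsilon/3$; since $\delta_\square \le d_\square$, this means $\delta_\square((W,f), ((W,f)^\phi)_{\cI_n}) \le \epsilon/3$. Second, observe that $((W,f)^\phi)_{\cI_n}$ is a step kernel-signal on $\cI_n$ of the same type (graphon / directed graphon / symmetric kernel / kernel), with the kernel block values in the relevant range and the signal block values in $[-r,r]^d$. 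Third, build an explicit finite net of step kernel-signals: discretize the kernel block values $W_{ij} \in [-1,1]$ (or $[0,1]$) to a grid of mesh $\sim \epsilon$ and the signal block values $f_i \in [-r,r]^d$ to a grid of mesh $\sim \epsilon/(\text{const})$ in $\ell^\infty$; for two step kernel-signals on $\cI_n$ with close block values, bound $d_\square$ of their difference by a constant times the average block discrepancy of the kernel plus the normalized average block discrepancy of the signal — here the $1/d$ in \cref{definition:multidimCutNorm} ensures the signal contribution does not scale with $d$, so the same mesh count works for all $d$. Fourth, count: the number of net points is at most (grid size for one kernel entry)$^{n^2}$ times (grid size for one signal entry)$^{n}$, and one verifies this is $\le 2^{k^2}$ with $k = \lceil 2^{9c/4\epsilon^2}\rceil$ for $\epsilon$ sufficiently small depending on $c$ — exactly as in \cite{signal23}, since the signal factor is of lower order and the multidimensionality is absorbed by the normalization. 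Fifth, compactness follows: the pseudometric spaces are totally bounded (just shown) and one checks completeness by the standard argument (a $\delta_\square$-Cauchy sequence of kernel-signals has block-value representatives on a refining sequence of partitions converging to a limit kernel-signal, using a diagonal/martingale argument as in \cite{lovasz2012large, signal23}); the quotient metric spaces $\widetilde{\X}$ are then compact as well, being the metric quotients of compact pseudometric spaces.

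The main obstacle I anticipate is the third step: getting a \emph{dimension-free} mesh bound for the signal component. One must show that if $f^{(1)}, f^{(2)}$ are step signals on $\cI_n$ with $\|f^{(1)}_i - f^{(2)}_i\|_\infty \le \eta$ for every block $i$, then $\|f^{(1)} - f^{(2)}\|_\square \le C\eta$ with $C$ independent of $d$ — this is where the $1/d$ normalization in \cref{definition:multidimCutNorm} (equivalently, via \cref{claim:cutProp}, the normalized $L^1$-norm) is essential, since $\|f^{(1)}_i - f^{(2)}_i\|_1 \le d\eta$ but the factor $d$ cancels. Concretely, $\|f^{(1)}-f^{(2)}\|_\square \le \|f^{(1)}-f^{(2)}\|_1/d = \frac1d\int \|f^{(1)}(x)-f^{(2)}(x)\|_1\,dx \le \frac1d \cdot d\eta = \eta$ by \cref{claim:cutProp}, so $C=1$ actually suffices; the number of signal values on the $\ell^\infty$-grid of mesh $\eta$ in $[-r,r]^d$ is $(2r/\eta + 1)^d$, which does depend on $d$ — so the subtlety is rather that one should choose $\eta$ on the order of $\epsilon$ and note that, even so, the signal factor $(O(r/\epsilon))^{nd}$ is dominated by the kernel factor $(O(1/\epsilon))^{n^2}$ in the final count because $n^2 \gg nd$ for the relevant $n$ (which is doubly-exponential in $1/\epsilon$ and hence $\gg d$ for small $\epsilon$). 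Verifying that this domination yields precisely the stated bound $2^{k^2}$ with $k = \lceil 2^{9c/4\epsilon^2}\rceil$, rather than a weaker bound, is the bookkeeping I would need to be careful about; everything else is a routine adaptation of \cite{signal23}.
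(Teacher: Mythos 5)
Your proposal is correct and follows essentially the same route as the paper: the covering bound is obtained exactly as in \cref{thm:cover} (regularity lemma, identification of step kernel-signals on $\cI_n$ with points of $[-1,1]^{n^2}\times[-r,r]^{dn}$, quantization of the block values and counting, with the $1/d$-normalization of the signal cut norm making the signal mesh dimension-free and its contribution lower order than the $n^2$ kernel entries), and compactness rests on the same regularity-plus-martingale-convergence machinery as \cref{theorem:cutcompact}. The only difference is packaging: you phrase compactness as totally bounded plus complete and pass from the pseudometric spaces to the quotients, whereas the paper proves sequential compactness of the quotient spaces directly via the martingale argument and then transfers it to the pseudometric spaces through the quotient map — both directions are valid.
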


The proof of \cref{th:compactness} is given in \cref{ap:compactness}.

\subsection{Sampling Lemmas}\label{sec:sampling}
\cite[Theorem 3.7]{signal23} shows that a graph with $1$D signal, which is sub-sampled from a graphon with 1D signal by uniformly randomly sampling \(k\) points from \([0,1]\),  is close in high probability to the original graphon signal in the cut distance. Specifically, the expected cut distance between the graphon-signal and its sampled version decays at a rate of \(1/\sqrt{\log(k)}\). We show that the same result holds for multidimensional signals, with approximation rate independent of the dimension of the signal. The results are divided into two separate theorems: the first holds for both graphon-signals and kernel-signals, whereas the second holds only for graphon-signals.

\begin{definition}\label{def:random-graph}
Let $\Lambda=(\lambda_1,\ldots\lambda_k)\in [0,1]^k$ be $k$ independent uniform random samples from $[0,1]$, and $(W,f)\in\KLrd$ be a graphon-signal. 
A \emph{random weighted graph} $W(\Lambda)$, is a weighted graph with $k$ nodes and edge weight $w_{i,j} = W(\lambda_i,\lambda_j)$ between node $i$ and node $j$.  
A \emph{random sampled signal} $f(\Lambda)$, is a signal with value $f_i=f(\lambda_i)$ at each node $i$.  
\end{definition}
Note that when the kernel–signal is an undirected graphon–signal, a directed graphon–signal, a symmetric kernel-signal, or a general kernel–signal, the sampled graph $(W(\Lambda), f(\Lambda))$ corresponds to an undirected, a directed, an undirected $[-1,1]$-weighted, or a directed $[-1,1]$-weighted random graph-signal, respectively.

With this construction in place, we can state our first sampling lemma, which shows that the sampled weighted graph and signal remain close to their continuous counterparts in expectation.
\begin{restatable}[First Sampling Lemma]
    {theorem}{Samplinglemma}~\label{lem:second-sampling-garphon-signal00}
   Let $r>1$. There exists a constant $K_0>0$ that depends on $r$, such that for every $k \geq  K_0$, every  $(W,f)\in \KLrd$,  and for  $\Lambda=(\lambda_1,\ldots\lambda_k)\in [0,1]^k$  independent uniform random samples from $[0,1]$, we have
\[
\mathbb{E}\bigg(\delta_{\square}\Big(\big(W,f\big),\big(W(\Lambda),f(\Lambda)\big)\Big)\bigg)  < \frac{15}{\sqrt{\log(k)}}.
\]
\end{restatable}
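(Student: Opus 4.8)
The strategy is the classical one for sampling lemmas in dense graph limit theory (cf.\ \cite[Chapter~10]{lovasz2012large} and the proof of \cite[Theorem~3.7]{signal23}): approximate $(W,f)$ by a step kernel–signal via the regularity lemma, bound the error introduced when this step object is sampled using concentration of the empirical block sizes, control the remaining ``high–frequency'' part by a first–sampling estimate, and finally optimise the approximation parameter. Throughout, symmetry of $W$ is never used, and the signal is treated through its channels / the $\ell^1$–normalised cut norm so that the leading rate is dimension–free. Concretely, I would fix $\epsilon>0$ (to be chosen), put $n=2^{\lceil 8c/\epsilon^2\rceil}$ with $\cI_n$ the equipartition of $[0,1]$ into $n$ intervals, and start from
\[
\delta_{\square}\!\big((W,f),(W(\Lambda),f(\Lambda))\big)\ \le\ \delta_{\square}\!\big((W,f),(W,f)_{\cI_n}\big)+\delta_{\square}\!\big((W,f)_{\cI_n},(W,f)_{\cI_n}(\Lambda)\big)+\big\|\big((W,f)-(W,f)_{\cI_n}\big)(\Lambda)\big\|_{\square},
\]
using $\delta_{\square}\le d_{\square}$ on the last term and that sampling is linear and uses the same points $\Lambda$ in both sampled objects.

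\textbf{Terms 1 and 2.} Since $\delta_{\square}$ is invariant under the rearrangements of $S'_{[0,1]}$, and the law of the right–hand side is unchanged if $(W,f)$ is replaced by a rearrangement $(W,f)^{\phi}$ (a measure–preserving $\phi$ carries uniform samples to uniform samples), I may first invoke \cref{lem:gs-reg-lem3} to assume outright that $d_{\square}\big((W,f),(W,f)_{\cI_n}\big)\le\epsilon$, bounding Term~1 by $\epsilon$; one checks, exactly as in \cite{signal23}, that stepping commutes with the chosen rearrangement up to null sets and preserves the kernel type. For Term~2, $(W,f)_{\cI_n}(\Lambda)$ is, as a step kernel–signal over the equipartition $\cI_k$, the block kernel–signal of $(W,f)_{\cI_n}$ in which the $\ell$-th part carries the empirical mass $\widehat{\mu}_\ell=\tfrac1k\#\{i:\lambda_i\in I_\ell\}$ rather than $1/n$; choosing the measure–preserving map that reshapes the profile $(1/n)_\ell$ into $(\widehat{\mu}_\ell)_\ell$ gives $\delta_{\square}\big((W,f)_{\cI_n},(W,f)_{\cI_n}(\Lambda)\big)\le C_1(1+r)\sum_{\ell=1}^n|\widehat{\mu}_\ell-\tfrac1n|$, and since $\Var(\widehat{\mu}_\ell)\le\tfrac1{nk}$, Cauchy--Schwarz gives $\mathbb{E}\sum_\ell|\widehat{\mu}_\ell-\tfrac1n|\le\sqrt{n/k}$, so $\mathbb{E}(\text{Term }2)\le C_1(1+r)\sqrt{n/k}$.

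\textbf{Term 3 (the crux).} Here I need a first–sampling estimate: for any $\RR$-kernel–signal $(U,h)$ with $\norm{U}_\infty\le 2$ and $\norm{h}_\infty\le 2r$, that $\mathbb{E}\,\norm{(U,h)(\Lambda)}_{\square}\le\norm{(U,h)}_{\square}+C_2 r/\sqrt{k}$ (up to a mild dimension factor, discussed below). For the kernel part, $\norm{U(\Lambda)}_{\square}=\tfrac1{k^2}\max_{S,T\subseteq[k]}\big|\sum_{i\in S,\,j\in T}U(\lambda_i,\lambda_j)\big|$, and the standard proof — splitting $\Lambda$ into two halves and running a two–stage bounded–difference/martingale argument — goes through verbatim, as it never uses $U=U^{\top}$, the $k$ diagonal entries contributing only $O(1/k)$. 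For the signal part, I would use
\[
\norm{h(\Lambda)}_{\square}=\tfrac1d\sup_{\sigma\in\{-1,1\}^d}\tfrac1k\sum_{i=1}^k\big(\langle\sigma,h(\lambda_i)\rangle\big)^{+},\qquad \norm{h}_{\square}=\tfrac1d\sup_{\sigma\in\{-1,1\}^d}\int_{[0,1]}\big(\langle\sigma,h(x)\rangle\big)^{+}dx,
\]
and bound the deviation of this finite–index empirical process; passing through the product cut norm via \cref{claim:cutProp} lets one treat each channel separately with an $O(r/\sqrt k)$ bounded–difference estimate, at the cost of a factor I expect to absorb into the threshold $K_0$. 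Applying this to $(U,h)=(W,f)-(W,f)_{\cI_n}$, whose cut norm is $\le\epsilon$ by Term~1, bounds $\mathbb{E}(\text{Term }3)$ by $\epsilon+C_2 r/\sqrt k$.

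\textbf{Optimisation and the main obstacle.} Collecting the three bounds, $\mathbb{E}\,\delta_{\square}\le 2\epsilon+C_1(1+r)\sqrt{n/k}+C_2 r/\sqrt k$ with $n=2^{\lceil 8c/\epsilon^2\rceil}$; taking $\epsilon=\epsilon(k)$ of order $1/\sqrt{\log k}$ (small enough that $n\le\epsilon^2 k$, i.e.\ $8c/\epsilon^2\lesssim\log_2 k$) makes $\sqrt{n/k}\lesssim\epsilon$ and the whole bound of order $1/\sqrt{\log k}$. The remaining — and in my view only substantial — work is then twofold: (i)~chasing the absolute constants (the $8c$ of \cref{lem:gs-reg-lem3}, the $C_i$ above, and the admissible range $k\ge K_0(r)$) so that the leading constant comes out as $15$; and (ii)~making the first–sampling estimate for the signal genuinely control $\mathbb{E}\,\norm{h(\Lambda)}_{\square}$ by $\norm{h}_{\square}$ plus only a sub-leading, dimension–robust correction. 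Point~(ii) is the delicate one: the cut norm of a multidimensional sampled signal is a supremum over $2^d$ sign patterns, and one must exploit \cref{claim:cutProp} (and the channel decomposition) rather than a crude union bound to keep the rate independent of $d$. Everything else — the non-symmetric kernel case, the commutation of stepping with rearrangement, and the measure–preserving reshaping in Term~2 — is a routine adaptation of \cite{signal23}.
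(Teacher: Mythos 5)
Your overall architecture is the same as the paper's: approximate $(W,f)$ by a step kernel--signal over an equipartition via the regularity lemma, compare the step object with its sampled version through concentration of the empirical block masses, control the sampled remainder by a first--sampling estimate, and tune the number of blocks against $k$ to get the $1/\sqrt{\log k}$ rate (the paper takes $n\approx \sqrt{k}/(r\log k)$, i.e.\ the same balancing you propose). Your treatment of Term~2 via the total-variation distance of the empirical profile, exploiting the freedom in $\delta_\square$ to choose an arbitrary measure-preserving rearrangement, is a legitimate and in fact slightly sharper variant of the paper's sorting construction (which bounds the $L^1$ error by cumulative binomial deviations, of order $n/\sqrt{k}$ rather than your $\sqrt{n/k}$); both suffice. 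A minor inaccuracy: the standard two-stage argument for the kernel first-sampling estimate yields an upper deviation of order $k^{-1/4}$ (this is exactly \cref{lem:first-sample}, with constant $14$), not $k^{-1/2}$ as you claim; since this term is sub-leading it only affects constants.

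The genuine gap is the signal half of your Term~3, and it sits exactly where you flag it. Starting only from the joint cut-norm bound $d_\square\big((W,f),(W,f)_{\cI_n}\big)\le\epsilon$ of \cref{lem:gs-reg-lem3}, you need $\mathbb{E}\,\norm{h(\Lambda)}_\square\lesssim\norm{h}_\square+o(1)$ for the multidimensional signal $h=f-f_{\cI_n}$, with constants independent of $d$. Neither of the routes you sketch delivers this: a maximal/union bound over the $2^d$ sign patterns in your representation of $\norm{h(\Lambda)}_\square$ leaves a $\sqrt{d}$ factor in the deviation term (so $K_0$ would depend on $d$, contrary to the statement), while passing to the product cut norm channel-wise costs a factor $d$ on the \emph{main} term through $\norm{h}_{\square_\times}\le d\,\norm{h}_\square$ (\cref{claim:cutProp}), which is fatal when all you know is $\norm{h}_\square\le\epsilon$. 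The paper avoids the issue by never measuring the signal error in the cut norm at this stage: the regularity construction actually produces $f_n$ with $\norm{f-f_n}_1$ small (the proof of \cref{lem:gs-reg-lem0} bounds each channel in $L^1$), the elementary inequality $\norm{\cdot}_\square\le\norm{\cdot}_1$ is dimension-free in the needed direction, and the $L^1$ sampling bound \cref{cor:kernelsampling} gives $\mathbb{E}\big|\norm{h(\Lambda)}_1-\norm{h}_1\big|=O(r/\sqrt{k})$ without any supremum over sets or signs. So your plan is repairable, but the repair is precisely to carry the $L^1$ (channel-wise) form of the regularity approximation for the signal through the argument, as the paper does, rather than the cut-norm form you start from; with that change, plus the constant chasing you already anticipate (including the split of the regularity budget between graphon and signal), you recover the stated bound.
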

Theorem~\ref{lem:second-sampling-garphon-signal00} also applies to graphon–signals, directed graphon–signals, and undirected kernel–signals as special cases.

We now move from the weighted setting to the unweighted one. For this purpose, we introduce the notion of a random simple graph sampled from a graphon-signal.
\begin{definition}
Let $\Lambda=(\lambda_1,\ldots\lambda_k)\in [0,1]^k$ be $k$ independent uniform random samples from $[0,1]$, and $(W,f)$ be a directed or undirected graphon-signal. A \emph{random simple graph} $\mathbb{G}(W,\Lambda)$ is defined as a simple graph, with an edge from node $i$ to $j$ if and only if $e_{i,j}=1$, where $e_{i,j}$ are Bernoulli variables, with parameters $w_{i,j} = W(\lambda_i,\lambda_j)$. That is, $\mathbb{P}(e_{i,j}=1)=w_{i,j}$ and $\mathbb{P}(e_{i,j}=0)=1-w_{i,j}$.    
\end{definition}
If $W$ is a graphon, then $\mathbb{G}(W,\Lambda)$ is not necessarily an undirected graph-valued random variable. 
A simple way to obtain an undirected sample is to first generate a directed graph according to $W$. 
Let $A$ denote the adjacency matrix of the resulting directed graph. 
We then return the symmetrized matrix $\max\{A, A^{\top}\}$. 
Since both $A$ and $A^{\top}$ approximate $W$, their symmetrization also provides an approximation of $W$, with the caveat that the failure probability increases by at most a factor of $2$.

Analogously  to the weighted case, the following lemma shows that the sampled simple graph and signal remain close in expectation to the original graphon-signal.  
\begin{restatable}[Second Sampling Lemma]
    {theorem}{SamplinglemmaTwo}~\label{lem:second-sampling-garphon-signal02}
   Let $r>1$. There exists a constant $K_0>0$ that depends on $r$, such that for every $k \geq  K_0$, every  $(W,f)$ (directed or undirected) graphon-signal, and for  $\Lambda=(\lambda_1,\ldots\lambda_k)\in [0,1]^k$  independent uniform random samples from $[0,1]$, we have
\[
\mathbb{E}\bigg(\delta_{\square}\Big(\big(W,f\big),\big(\mathbb{G}(W,\Lambda),f(\Lambda)\big)\Big)\bigg)  < \frac{15}{\sqrt{\log(k)}}.
\]
\end{restatable}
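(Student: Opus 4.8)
The plan is to deduce the statement from the First Sampling Lemma (\cref{lem:second-sampling-garphon-signal00}) by separately controlling the extra randomness introduced by the Bernoulli edge sampling, which perturbs only the graphon and not the signal. Fix the sample $\Lambda$, and identify the weighted graph-signal $(W(\Lambda),f(\Lambda))$ and the simple graph-signal $(\mathbb{G}(W,\Lambda),f(\Lambda))$ with their induced graphon-signals; both graphons are step functions over the equipartition of $[0,1]$ into $k$ intervals, and both signals equal $f(\Lambda)$. Using that $\delta_\square$ is a pseudo-metric, that $\delta_\square\le d_\square$ (choose $\phi=\mathrm{id}$), and that the signal parts cancel in $d_\square$, we obtain
\[\delta_\square\big((W,f),(\mathbb{G}(W,\Lambda),f(\Lambda))\big)\ \le\ \delta_\square\big((W,f),(W(\Lambda),f(\Lambda))\big)\ +\ \big\|W(\Lambda)-\mathbb{G}(W,\Lambda)\big\|_\square .\]
Taking expectations, \cref{lem:second-sampling-garphon-signal00} bounds the first term on the right by $c_0/\sqrt{\log k}$ for $k\ge K_0$, where---the constant $15$ in \cref{lem:second-sampling-garphon-signal00} being generous---one may take $c_0<15$. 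Hence it remains to show that $\mathbb{E}\big\|W(\Lambda)-\mathbb{G}(W,\Lambda)\big\|_\square\le C/\sqrt{k}$ for an absolute constant $C$.

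For this, condition on $\Lambda$, so that $W(\Lambda)=(w_{ij})_{i,j\in[k]}$ is a fixed matrix with entries in $[0,1]$ and the entries $e_{ij}$ of $\mathbb{G}(W,\Lambda)$ are independent with mean $w_{ij}$. Since $W(\Lambda)-\mathbb{G}(W,\Lambda)$, viewed as a graphon, is constant on each block $I_i\times I_j$ of the equipartition, for measurable $A,B\subseteq[0,1]$ we have $\int_{A\times B}(W(\Lambda)-\mathbb{G}(W,\Lambda))=\frac{1}{k^2}\sum_{i,j}\tilde x_i\tilde y_j(w_{ij}-e_{ij})$ with $\tilde x_i=k\mu(A\cap I_i),\,\tilde y_j=k\mu(B\cap I_j)\in[0,1]$; this expression is separately linear in $\tilde x$ and in $\tilde y$, so its supremum over the cube is attained at $0/1$ vectors, yielding
\[\big\|W(\Lambda)-\mathbb{G}(W,\Lambda)\big\|_\square=\frac{1}{k^2}\max_{S,S'\subseteq[k]}\Big|\sum_{i\in S,\,j\in S'}(e_{ij}-w_{ij})\Big| .\]
For each fixed $(S,S')$ the inner sum is a sum of $|S|\,|S'|\le k^2$ independent mean-zero variables with ranges of length at most $1$, hence sub-Gaussian with variance proxy at most $k^2/4$; applying the standard sub-Gaussian maximal inequality over the at most $2\cdot 4^k$ signed sums bounds their expected maximum by $O(k^{3/2})$, so $\mathbb{E}\big[\|W(\Lambda)-\mathbb{G}(W,\Lambda)\|_\square\mid\Lambda\big]=O(k^{-1/2})$, uniformly in $\Lambda$. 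Averaging over $\Lambda$ gives $\mathbb{E}\big\|W(\Lambda)-\mathbb{G}(W,\Lambda)\big\|_\square\le C/\sqrt{k}$.

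Combining the two estimates, and using $k^{-1/2}=o\big((\log k)^{-1/2}\big)$, after enlarging $K_0$ we get $C/\sqrt{k}\le (15-c_0)/\sqrt{\log k}$ for all $k\ge K_0$, so the sum of the two bounds is below $15/\sqrt{\log k}$, which is the assertion. For a directed graphon all $k^2$ Bernoulli trials are independent and the argument applies unchanged; for an undirected graphon one either draws a symmetric $0/1$ matrix---whose entries above the diagonal are independent, so the same concentration bound holds with $\binom{k}{2}$ terms---or, as in the paragraph preceding the statement, symmetrizes a directed sample at the cost of a factor $2$ in the failure probability. The one genuinely new ingredient beyond \cref{lem:second-sampling-garphon-signal00} is the conditional cut-norm concentration of the random deviation matrix; the only delicate point is the bookkeeping of constants, which goes through precisely because the Bernoulli contribution is of the strictly smaller order $k^{-1/2}$.
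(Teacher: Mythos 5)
Your proposal is correct and takes essentially the same route as the paper: the paper likewise passes through the weighted sample $(W(\Lambda),f(\Lambda))$ via the triangle inequality and controls $\mathbb{E}\,d_{\square}\big(\mathbb{G}(W,\Lambda),W(\Lambda)\big)\le 11/\sqrt{k}$ by a Chernoff--Hoeffding bound with a union over vertex-subset pairs (\cref{cor:simple_sampled}), which is the same concentration step as your sub-Gaussian maximal inequality, and then absorbs this $O(k^{-1/2})$ term into the $1/\sqrt{\log k}$ bound. The one caveat is your claim that ``one may take $c_0<15$'': this does not follow from the statement of \cref{lem:second-sampling-garphon-signal00} alone, and the paper avoids the issue by proving both sampling lemmas inside a single argument where the graphon contribution is bounded by $6\alpha/\sqrt{\log k}$, leaving explicit slack (the constant $6$ is enlarged to $7$) to swallow the Bernoulli term---so you should either invoke that internal estimate or carry your $O(k^{-1/2})$ bound into the proof of the first lemma rather than adding it to its stated conclusion.
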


The proofs of \cref{lem:second-sampling-garphon-signal00,lem:second-sampling-garphon-signal02} are given in \cref{ap:sampling}.
\section{Extended Graphon-Signal Analysis of MPNNs}
In \cite[Section 4]{signal23}, Levie presents three results: (i) he proves the Lipschitz continuity of MPNNs \cite[Theorem 4.1]{signal23}, (ii) he establishes generalization bounds \cite[Theorem 4.2]{signal23} in a \emph{PAC learnable (classification) setting} \cite{10.5555/2621980}, which requires the existence of a ground truth classifier, and, (iii) he demonstrates the stability of MPNNs under sub-sampling of graph-signals \cite[Theorem 4.3]{signal23}. In this section, we extend these results from graphon-signals with one-dimensional node features to directed or undirected graphon-signals, as well as symmetric and general kernel-signals, all with multidimensional signals. An exception is the stability of MPNNs to graph-signal sub-sampling, which is not extended to kernels. In addition, \cite[Theorem 4.1]{signal23} establishes Lipschitz continuity only for MPNNs without a readout layer. Here, we show that MPNNs with and without readout layers remain Lipschitz continuous with respect to the cut distance. Furthermore, we refine the asymptotic behavior of the generalization bounds originally derived in \cite[Theorem 4.2]{signal23} using the Bretagnolle–Huber–Carol inequality 
\cite[Proposition A.6.6]{Vaart2000}, following an approach similar to that of \cite[Theorem 3]{Xu2012}. Moreover, we extend these bounds to a broader learning framework—referred to as the \emph{unrealizable learning setting}—in which a ground truth classifier may not exist. All  proofs are deferred \cref{Ap:lip,AP:gen,AP:stab}.
\subsection{Lipschitz Continuity of MPNNs}\label{sec:lip}
Here, we extend \cite[Theorem 4.1]{signal23}, which states that MPNNs without readout are Lipschitz continuous over the space of graphon-signals with 1D signal, to graphon-signals and kernel-signals with multidimensional signals. 

We define the formal bias of a function $f: \R^{d_1}\mapsto\R^{d_2}$ to be $\norm{f(0)}_\infty$ \cite{maskey2022generalization}.

\begin{restatable}[]{theorem}{Lipschitness}~\label{theorem:Lip}
Let $T,K,B,L>0$. 
Let $\mathcal{H}_T$ and $\mathcal{H}$ be, respectively, the set of all $T$-layer MPNNs $\Theta_T:\X\to\X$ and the set of all $T$-layer MPNNs with readout $\Theta:\X\to\R^{d'}$, where $\mathcal{X}$ is either $\WLrdt$, $\DLrdt$, $\SLrdt$, or $\KLrdt$, with the following constraints: 
\begin{itemize}
    \item For every layer $t \in [T]$, every $\gamma \in \{trans,rec\}$ and every $k \in [K]$, the Lipschitz constants \(L_{\xi^k}, L_{\mu^t}\), and \(L_\psi\)  of $\psi$, $\mu^{(t)}$ and $\xi^{(t)}_{k,\gamma}$ are bounded by \(L\).
    \item For every layer $t \in [T]$, 
    the formal biases $\norm{\mu^{(t)}(0,0)}_\infty,\norm{\xi^{(t)}_{k,\gamma}(0)}_\infty$, and $\norm{\psi(0)}_\infty$ of $\psi$, $\mu^{(t)}$ and $\xi^{(t)}_{k,\gamma}$ are bounded by \(B\).
\end{itemize}  Then, for any $t\in[T]$, and $\Theta_t\in\cH_t$ and $\Theta\in\cH$, there exist constants $L_{\mathcal{H}_t}$, $L_{\mathcal{H}}$, and $B_{\mathcal{H}}$ such that for every two undirected
graphons, directed graphons, symmetric kernels or general kernels (both of the same type) with signals $(W,f),(V,g)$, we have
\begin{enumerate}
    \item $\delta_\square(\Theta_t(W,f),\Theta_t(V,g)) \leq L_{\mathcal{H}_t} \cdot \delta_{\square}((W,f),(V,g))$.
    \item $\|\Theta(W,f) - \Theta(V,g)\|_\infty \leq L_{\mathcal{H}} \cdot \delta_{\square}((W,f),(V,g))$.
    \item $\|\Theta(W,f)\|_\infty \leq B_{\mathcal{H}}.$
\end{enumerate}
The constants $B_{\mathcal{H}}$, $L_{\mathcal{H}}$, and $L_{\mathcal{H}_T}$ depend exponentially on $T$, and polynomially on $K,B$, $L$, and $r$. 
\end{restatable}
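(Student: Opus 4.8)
The plan is to prove the three claims by induction on the layer index $t$, tracking simultaneously two quantities: a uniform bound $B_t$ on $\|f^{(t)}\|_\infty$ for any node-feature function produced after $t$ layers (this is claim 3 for the internal representations), and a Lipschitz constant $L_{\mathcal{H}_t}$ controlling how much the layer-$t$ output changes, measured in cut distance, as a function of the cut distance of the inputs. Since the cut distance is an infimum over measure-preserving bijections $\phi\in S'_{[0,1]}$, and since MPNNs commute with the action of $\phi$ (applying $\phi$ to $(W,f)$ and then running the MPNN is the same as running the MPNN and then applying $\phi$ to the output — this is immediate from \cref{definition:altGraphonFeat} because the integral is invariant under measure-preserving maps), it suffices to bound the cut \emph{metric} $d_\square$ between $\Theta_t(W,f)$ and $\Theta_t(V,g^\phi)$ for each $\phi$ and then take the infimum; so throughout I may work with a fixed pair of representatives and the cut norm, and only invoke the infimum at the end.

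For the \textbf{base case} $t=0$ there is nothing to prove, and $B_0=r$ by definition of the signal space. For the \textbf{inductive step}, fix a layer $t$ and write $\phi^{(t)}(a,b)=\sum_{k=1}^K\xi^{(t)}_{k,\mathrm{rec}}(a)\odot\xi^{(t)}_{k,\mathrm{trans}}(b)$. The uniform bound $B_t$ follows by composing: each $\xi^{(t)}_{k,\gamma}$ maps a vector of norm $\le B_{t-1}$ to a vector of norm $\le L\cdot B_{t-1}+B$ (Lipschitz constant plus formal bias), the product of two such vectors (elementwise) is bounded accordingly, the integral $\int W(x,y)\phi^{(t)}(\cdots)dy$ only shrinks the bound since $|W|\le 1$, and finally $\mu^{(t)}$ adds one more Lipschitz-plus-bias layer. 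This gives $B_t\le \mathrm{poly}(L,B,K,r)$ with the degree growing linearly in $t$, hence the claimed exponential-in-$T$, polynomial-in-$(K,B,L,r)$ bound; claim 3 is then the $t=T$ case composed with $\psi$. For the Lipschitz bound, I would split the difference $\Theta_t(W,f)-\Theta_t(V,g)$ at node $x$ into two telescoped pieces using the Lipschitz property of $\mu^{(t)}$: one piece controlled by $\|f^{(t-1)}_x-g^{(t-1)}_x\|$ (the previous-layer signal difference, handled by the induction hypothesis via \cref{claim:cutProp} relating $\|\cdot\|_1$, $\|\cdot\|_\square$, and $\|\cdot\|_{\square_\times}$), and one piece controlled by the difference of the aggregated messages $\int W(x,y)\phi^{(t)}(f^{(t-1)}_x,f^{(t-1)}_y)dy-\int V(x,y)\phi^{(t)}(g^{(t-1)}_x,g^{(t-1)}_y)dy$. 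This latter term I would further split into (a) the kernel difference $\int (W-V)(x,y)\phi^{(t)}(f^{(t-1)}_x,f^{(t-1)}_y)dy$ and (b) the message-function difference $\int V(x,y)\big(\phi^{(t)}(f^{(t-1)}_x,f^{(t-1)}_y)-\phi^{(t)}(g^{(t-1)}_x,g^{(t-1)}_y)\big)dy$. Term (b) is straightforward: $\phi^{(t)}$ is Lipschitz on the bounded set where the features live (product rule for $\xi_{\mathrm{rec}}\odot\xi_{\mathrm{trans}}$, bounded by $B_{t-1}$), so (b) is bounded by a constant times $\|f^{(t-1)}-g^{(t-1)}\|_1$ pointwise, which the induction hypothesis converts into a multiple of $d_\square((W,f),(V,g))$.

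The \textbf{main obstacle} is term (a): one must bound $\big\|\,x\mapsto\int_{[0,1]}(W-V)(x,y)\,\psi(y)\,dy\,\big\|_\square$ in terms of $\|W-V\|_\square$, where $\psi(y):=\phi^{(t)}(f^{(t-1)}_x,f^{(t-1)}_y)$ is a bounded vector-valued function — and this is precisely where the cut norm of the \emph{kernel} (as opposed to an $L^1$ or $L^2$ norm) must be used, and is the step that fails naively because $\psi$ depends on $x$ as well as $y$. The resolution, following the original argument in \cite[Theorem 4.1]{signal23}, is to first observe that the $x$-dependence of $\psi$ enters only through $f^{(t-1)}_x$, which appears inside a Lipschitz function; one peels off that dependence at the cost of a $\|f^{(t-1)}_x-g^{(t-1)}_x\|$-type term (absorbed elsewhere) and reduces to bounding $\|\int(W-V)(x,y)h(y)dy\|_\square$ for $h$ independent of $x$. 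For such $h$, decomposing $h$ into channels and each channel into its positive and negative parts, and approximating each part by a simple function (a non-negative combination of indicators $\mathds{1}_{B}$), the quantity $\int_A\int_B(W-V)(x,y)\,dy\,dx$ is exactly what the cut norm controls; summing over the pieces and using $\|h\|_\infty\le$ the bound $B_{t-1}$ gives the desired estimate $\|\int(W-V)(\cdot,y)h(y)dy\|_\square\le \mathrm{const}\cdot\|W-V\|_\square$. One must be careful that the approximation-by-simple-functions step only costs an arbitrarily small error (the cut norm of the limit is bounded by the liminf), so one passes to the limit cleanly. Collecting the three bounds — previous-signal term, kernel term (a), message term (b) — and using the Lipschitz constant of $\mu^{(t)}$, one obtains $d_\square(\Theta_t(W,f),\Theta_t(V,g))\le L_{\mathcal{H}_t}\,d_\square((W,f),(V,g))$ with $L_{\mathcal{H}_t}$ a product over layers of terms of the form $C\cdot L\cdot(1+L\cdot B_{t-1}+\cdots)$, which is polynomial in $K,B,L,r$ and exponential in $t$. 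Taking the infimum over $\phi\in S'_{[0,1]}$ upgrades $d_\square$ to $\delta_\square$ and yields claim 1; composing with the readout $\psi$ (Lipschitz constant $L$, and $\|\int f^{(T)}_xdx - \int g^{(T)}_xdx\|_1 = \|f^{(T)}-g^{(T)}\|_\square$ up to the normalization in \cref{definition:multidimCutNorm}) yields claim 2. Note that for claims 2 and 3 no quotient is needed since the readout output is a single vector in $\R^{d'}$, invariant under $\phi$.
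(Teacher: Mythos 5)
Your overall architecture matches the paper's proof (induction over layers, tracking a boundedness constant $B_t$ and a Lipschitz constant per layer, splitting the layer difference into a previous-signal term, a kernel-difference term, and a message-function term, then solving the recurrence via \cref{lem:rec} and passing from $d_\square$ to $\delta_\square$ by equivariance under measure-preserving bijections). The boundedness part, the term (b) estimate via \cref{lem:message_dist} and \cref{lem:message1}, and the final composition with the readout are all fine and essentially identical to \cref{lem:MPL_infty}, \cref{lem:MPNNsbound}, \cref{cor:MPL_gen}, and \cref{th:lipwithconst}.

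However, your handling of the key kernel term (a) has a genuine gap. Term (a) is $\int (W-V)(x,y)\,\phi^{(t)}\bigl(f^{(t-1)}_x,f^{(t-1)}_y\bigr)\,dy$: it involves only the signal $f$ (the comparison with $g$ was already spent in term (b)), so there is no $\|f^{(t-1)}_x-g^{(t-1)}_x\|$-type correction available to ``peel off'' the $x$-dependence, and even if you substituted some other signal for $f_x$ you would still be left with a nonconstant bounded function of $x$ multiplying the inner integral; the reduction to $\bigl\|\int(W-V)(\cdot,y)h(y)\,dy\bigr\|_\square$ with $h$ independent of $x$ is therefore never achieved as described. The correct resolution — and the one the paper uses in \cref{Lem:graphon_dist} — is to exploit the tensor-product structure of the message function, writing term (a) as $\sum_k \xi_{k,\mathrm{rec}}(f_x)\odot\int(W-V)(x,y)\,\xi_{k,\mathrm{trans}}(f_y)\,dy$, and to bound each summand by the bilinear test-function form of the cut norm: for measurable $u,v:[0,1]\to[-\rho,\rho]$ one has $\bigl|\int_S u(x)\int (W-V)(x,y)v(y)\,dy\,dx\bigr|\le 4\rho^2\|W-V\|_\square$ (split $u,v$ into positive and negative parts, rescale to $[0,1]$-valued functions, and use that the supremum over such test functions is attained at indicators), giving the $4K\rho^2\|W-V\|_\square$ bound of \cref{Lem:graphon_dist}. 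Your simple-function approximation argument is essentially the proof of this inequality, but you must apply it to the bounded $x$-dependent factor $\xi_{k,\mathrm{rec}}(f_x)$ as well as to the $y$-factor, rather than first (invalidly) discarding the $x$-dependence. With that repair the rest of your plan goes through and coincides with the paper's argument.
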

The proof of \cref{theorem:Lip} is given in \cref{Ap:lip}. 
\subsection{A Generalization Theorem for MPNNs}
\Cref{th:compactness,theorem:Lip} establish two key properties of the graphon-signal domain: the space of graphon-signals admits a finite covering number, and MPNNs are Lipschitz continuous over this domain. Together, these properties yield a uniform generalization bound, akin to robustness-based guarantees \cite{Xu2012}. \cite[Theorem 4.2]{signal23} proves generalization bounds in a \emph{realizable PAC learning} setting, which assumes the existence of a ground truth classifier. Here, we extend that result to the case of graphon-signals and kernel-signals with multidimensional signals. Furthermore, we refine the asymptotic behavior of the generalization bounds originally derived in \cite[Theorem 4.2]{signal23}, using the Bretagnolle–Huber–Carol inequality \cite[Proposition A.6.6]{Vaart2000}, following the methodology of \cite[Theorem 3]{Xu2012}. Finally, we generalize these bounds to an \emph{unrealizable learning setting} \cite{KEARNS1994464}, a more flexible framework in which a ground truth classifier may not exist.

Consider $C$-class classification, where the data is drawn from a distribution over $(\X \times \{0,1\}^C, \Sigma, \nu)$, with $\Sigma$ denoting the Borel $\sigma$-algebra, $\nu$ a  Borel probability measure and $\X$ is either $\WLrdt$, $\DLrdt$, $\SLrdt$, or $\KLrdt$.\footnote{Here, $\Sigma$ is the Borel $\sigma$-algebra of the product space of $\X$ (with the cut distance topology) and $\{0,1\}^C$ with the discrete topology (all sets are open).}  Let $\mathcal{E}$ be a Lipschitz loss function with Lipschitz constant $L_{\mathcal{E}}$.
\label{p:lip}Define ${\rm Lip}(\X, \R^{d'}, L_{\mathcal{H}}, B_{\mathcal{H}})$ to be the set of all continuous functions $f \colon \X \to \R^{d'}$ that are $L_{\mathcal{H}}$-Lipschitz and bounded by $B_{\mathcal{H}}$. We denote this class in short by ${\rm Lip}$. Since our hypothesis class $\mathcal{H}$ of $T$-layer MPNNs with readout is a subset of ${\rm Lip}$ (see \cref{sec:lip}), any generalization bound proven for the class ${\rm Lip}$ also applies to the class $\mathcal{H}$. For every function $\Upsilon$ in the hypothesis class ${\rm Lip}$, 
define the \emph{statistical risk} as \(\mathcal{R}(\Upsilon)
= \mathbb{E}_{(x,c)\sim\nu}\!\left[ \mathcal{E}(\Upsilon(x),c) \right] = \int \mathcal{E}(\Upsilon(x),c)\, d\nu(x,c).\) Given i.i.d. samples $\mathbf{X}:=\left((X_i,C_i)\right)_{i=1}^N \sim \nu^N$, we define the \emph{empirical risk} as \(\hat{\mathcal{R}}(\Upsilon_{\mathbf{X}},\mathbf{X}) 
= \frac{1}{N}\sum_{i=1}^{N}\, \mathcal{E}\!\left(\Upsilon_{\mathbf{X}}(X_i),\, C_i\right).\)
\begin{restatable}[MPNN generalization theorem]{theorem}{MPNNsGeneralizationTheorem}~\label{theorem:generalization}
Consider the above classification setting. 
Let \(L := L_{\mathcal{E}}\max\{L_{\mathcal{H}},1\}
\quad\text{and}\quad
{\rm B} := L_{\mathcal{E}}(B_{\mathcal{H}}+1)+\lvert \mathcal{E}(0,0)\rvert.\) Let \(\mathbf{X}:=\left((X_i,C_i)\right)_{i=1}^N\) be independent samples drawn i.i.d. from the data distribution \(\nu\) on \((\X\times\{0,1\}^C,\Sigma)\), 
where \(\X\) is either \(\WLrdt\), \(\DLrdt\), \(\SLrdt\), or \(\KLrdt\). Then, for every $p > 0$, there exists an event $\cE^p \subset(\X\times\{0,1\}^C)^N$ regarding the choice of $((X_1,C_1),\ldots,(X_N,X_N))$,  with probability $\nu^N(\cE^p ) \geq 1-p$, in which for every function $\Upsilon_{\mathbf{X}}$ in the hypothesis class  ${\rm Lip}(\X,\R^{d'},L_{\mathcal{H}},B_{\mathcal{H}})$, we have
\begin{align}
 \abs{\mathcal{R}(\Upsilon_{\mathbf{X}})-\hat{\mathcal{R}}(\Upsilon_{\mathbf{X}},\mathbf{X})}\leq \xi^{-1}(N)\left(2L_{\mathcal{H}} + \sqrt{2}{B_\mathcal{H}} \sqrt{\frac{ \ln 2 + 2 \ln(1/p)}{N}}\right),
\end{align}
where $\xi(r) = \frac{\log(\kappa(r))}{r^2}$, $\kappa(r)$ is the covering number of $\X$ given in \cref{th:compactness}, and $\xi^{-1}$ is the inverse function of $\xi$.
\end{restatable}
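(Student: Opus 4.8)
The plan is to derive the generalization bound from a robustness argument in the spirit of Xu--Mannor, combined with the covering number from \cref{th:compactness} and the Lipschitz property from \cref{theorem:Lip}. First I would fix $p>0$ and a covering radius $\epsilon>0$ (to be optimized at the end), and invoke \cref{th:compactness} to cover $\X$ by $\kappa(\epsilon)=2^{k(\epsilon)^2}$ balls $B_1,\dots,B_{\kappa(\epsilon)}$ of radius $\epsilon$ in $\delta_\square$, with $k(\epsilon)=\lceil 2^{9c/(4\epsilon^2)}\rceil$. This partitions the sample space into $\kappa(\epsilon)$ cells $\{C_j\times\{0,1\}^C\}$ after disjointifying the balls. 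The key robustness observation is that for any $\Upsilon\in{\rm Lip}(\X,\R^{d'},L_\mathcal{H},B_\mathcal{H})$ and any two points $x,x'$ in the same cell, the composed map $x\mapsto\mathcal{E}(\Upsilon(x),c)$ changes by at most $L_\mathcal{E}\cdot L_\mathcal{H}\cdot\delta_\square(x,x')\le 2L_\mathcal{H}L_\mathcal{E}\epsilon$, uniformly over the hypothesis class and over $c$; moreover the loss is bounded in absolute value by ${\rm B}=L_\mathcal{E}(B_\mathcal{H}+1)+|\mathcal{E}(0,0)|$.

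Next I would write out the standard robustness decomposition of $|\mathcal{R}(\Upsilon_\bX)-\hat{\mathcal{R}}(\Upsilon_\bX,\bX)|$. Let $N_j$ be the number of samples landing in cell $j$ and $\hat\pi_j=N_j/N$ the empirical frequencies, and $\pi_j=\nu(C_j\times\{0,1\}^C)$ the true cell probabilities. Bounding the risk discrepancy cell-by-cell: within each cell the integrand deviates from its value at a fixed representative by at most $2L_\mathcal{H}L_\mathcal{E}\epsilon$, which contributes $2L_\mathcal{H}L_\mathcal{E}\epsilon$ to the total; and the discrepancy between integrating against $\nu$ versus against the empirical measure, restricted to the ``representatives,'' is controlled by ${\rm B}\sum_j|\pi_j-\hat\pi_j| = {\rm B}\,\|\pi-\hat\pi\|_1$. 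So the whole thing is at most $2L_\mathcal{H}L_\mathcal{E}\epsilon + {\rm B}\,\|\hat\pi-\pi\|_1$, uniformly over the hypothesis class (this uniformity is exactly why robustness gives a bound with no union bound over $\Upsilon$). Here I would absorb $L_\mathcal{E}$ into $L:=L_\mathcal{E}\max\{L_\mathcal{H},1\}$ and ${\rm B}$ as defined, being a bit careful that the bound in the statement is written in terms of $L_\mathcal{H}$ and $B_\mathcal{H}$ directly — I would track the constants so the final form matches ($2L_\mathcal{H}$ as the coefficient of $\epsilon$-type term, $\sqrt2 B_\mathcal{H}$ times the concentration term).

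Then I would apply the Bretagnolle--Huber--Carol inequality \cite[Proposition A.6.6]{Vaart2000} to the multinomial vector $(N_1,\dots,N_{\kappa(\epsilon)})$: with probability at least $1-p$ over $\bX\sim\nu^N$,
\[
\|\hat\pi-\pi\|_1 \le \sqrt{\frac{2\kappa(\epsilon)\ln 2 + 2\ln(1/p)}{N}}.
\]
Wait — the standard BHC bound gives $\sqrt{(2m\ln 2 + 2\ln(1/p))/N}$ where $m$ is the number of cells; to get the $\sqrt{(\ln 2 + 2\ln(1/p))/N}$ that appears in the statement one uses $\kappa(\epsilon)$ cells but then the $\kappa(\epsilon)$ gets folded in differently — I'd match this to the exact constant used in the paper's normalization, most likely by setting the number of effective cells so that $\kappa\ln2$ collapses, or by re-deriving the concentration with the particular $\kappa(\epsilon)=2^{k^2}$ so that $\kappa(\epsilon)\ln2$ combines cleanly; I would follow \cite[Theorem 3]{Xu2012} for the precise bookkeeping. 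Call this event $\cE^p$. On $\cE^p$ we then have, for every $\Upsilon_\bX\in{\rm Lip}$,
\[
|\mathcal{R}(\Upsilon_\bX)-\hat{\mathcal{R}}(\Upsilon_\bX,\bX)| \le 2L_\mathcal{H}\epsilon + \sqrt2\,B_\mathcal{H}\sqrt{\frac{\ln 2 + 2\ln(1/p)}{N}} \quad(\text{after absorbing }L_\mathcal{E}).
\]
Finally I would optimize over $\epsilon$: the covering radius is free, and choosing it so that $\epsilon = \xi^{-1}(N)$ where $\xi(r)=\log(\kappa(r))/r^2 = k(r)^2\ln 2/r^2$ balances the dependence; since $\kappa$ grows doubly-exponentially in $1/\epsilon^2$, one cannot take $\epsilon$ too small for fixed $N$, and the natural constraint $\log\kappa(\epsilon)\lesssim N$ (so that the concentration term is meaningful) translates exactly into $\epsilon\ge\xi^{-1}(N)$, giving the factor $\xi^{-1}(N)$ in front. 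Substituting yields the claimed inequality.

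The main obstacle I anticipate is \textbf{not} the robustness decomposition itself (that is routine, given \cref{th:compactness} and \cref{theorem:Lip}) but rather matching the exact constants and the exact functional form $\xi^{-1}(N)\big(2L_\mathcal{H}+\sqrt2 B_\mathcal{H}\sqrt{(\ln2+2\ln(1/p))/N}\big)$: in particular, reconciling the $\kappa(\epsilon)\ln 2$ term that BHC naturally produces with the cleaner $\ln 2$ in the statement (this requires setting up the covering and the multinomial count so the covering-number factor is either absorbed into the choice $\epsilon=\xi^{-1}(N)$ or appears only logarithmically), and verifying measurability of the event $\cE^p$ given that $\X$ carries the cut-distance topology — one needs the covering cells and hence the multinomial counts to be Borel-measurable, which follows because the $\delta_\square$-balls are closed (hence Borel) in the compact metric space. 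A secondary subtlety is ensuring the robustness bound is genuinely uniform over all of ${\rm Lip}$, including over the dependence $\Upsilon_\bX$ on the sample $\bX$ itself; this is fine because the per-cell oscillation bound $2L_\mathcal{H}\epsilon$ and the uniform bound $B_\mathcal{H}$ hold for \emph{every} function in ${\rm Lip}$ simultaneously, so no union bound over the (infinite) hypothesis class is needed — only the single multinomial concentration event.
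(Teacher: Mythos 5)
Your overall route is the same as the paper's: the paper factors the argument through an auxiliary uniform Monte Carlo lemma for Lipschitz functions (proved from the covering in \cref{th:compactness} plus the Bretagnolle--Huber--Carol inequality, following Xu--Mannor), and then applies it to $F(x,c)=\mathcal{E}(\Upsilon(x),c)$, whose Lipschitz constant and sup-norm are controlled by $L=L_{\mathcal{E}}\max\{L_{\mathcal{H}},1\}$ and ${\rm B}$ via a composition lemma. Your inline decomposition uses the same ingredients, but two points in your sketch are genuine gaps rather than constant-chasing.

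First, your cells are $C_j\times\{0,1\}^C$, i.e.\ you never refine by the label, yet you claim the per-cell oscillation of $(x,c)\mapsto\mathcal{E}(\Upsilon(x),c)$ is at most $2L_{\mathcal{E}}L_{\mathcal{H}}\epsilon$ ``uniformly over $c$''. Inside such a cell two samples may carry different labels $c\neq c'$, and then the loss difference contains a term of size up to $L_{\mathcal{E}}\lVert c-c'\rVert$, which is not small, so the oscillation bound fails as stated. The fix is standard but must be made: either refine the partition by the finitely many labels (multiplying the cell count accordingly), or, as the paper does, regard $F$ as Lipschitz on the product space with constant $L_{\mathcal{E}}\max\{L_{\mathcal{H}},1\}$ --- this is precisely why the $\max\{L_{\mathcal{H}},1\}$ appears in the definition of $L$.

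Second, the step you explicitly leave open --- reconciling the $\kappa(\epsilon)$-many cells inside the BHC bound with the final functional form --- is where the actual work lies, and your sketch does not close it. With $J=\kappa(\epsilon)$ cells, BHC gives a deviation term of order $\sqrt{(J\ln 2+\ln(1/p))/N}$; your claimed on-event bound $2L_{\mathcal{H}}\epsilon+\sqrt{2}B_{\mathcal{H}}\sqrt{(\ln 2+2\ln(1/p))/N}$ does not follow from it, and optimizing over $\epsilon$ afterwards cannot retroactively place the factor $\xi^{-1}(N)$ on both terms. In the paper the covering number is absorbed by splitting the square root and then choosing the radius $r$ so that $\sqrt{\log\kappa(r)/N}=r$, i.e.\ $r=\xi^{-1}(N)$; the concentration term then becomes (a constant times) $B_{\mathcal{H}}\,\xi^{-1}(N)\bigl(\sqrt{\ln 2}+\sqrt{\ln(1/p)}\bigr)$, so that $\xi^{-1}(N)$ multiplies both the oscillation and the concentration contributions. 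Note in particular that your final expression retains a separate $1/\sqrt{N}$ in the second term, which is not what this choice of $r$ produces, so ``following the paper's normalization'' is not a cosmetic matter: the absorption step (which is delicate even in the paper, where the cell count $J=\kappa(r)$ is replaced by $\log\kappa(r)$ at the substitution stage) is the crux of the proof, and without committing to it your argument does not yield the stated inequality. Your remarks on measurability of the event and on uniformity over the hypothesis class are correct and match the paper.
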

The proof of \cref{theorem:generalization} is given in \cref{AP:gen}. Compared with the bound in \cite[Theorem 4.2]{signal23}, 
\cref{theorem:generalization} improves the asymptotics of the second term by incorporating an additional vanishing $1/\sqrt{N}$ factor, 
yielding a strictly sharper result for large $N$.

\subsection{Stability of MPNNs to graph-signal sub-sampling}
Here, we present an extension of the stability of MPNNs to subsampling, as established in \cite[Theorem 4.3]{signal23}, to both directed and undirected graphon-signals with multidimensional signals. When dealing with large graphs, a common practice is to extract a smaller subgraph and apply a Message Passing Neural Network (MPNN) to the subsampled version \cite{GraphSAGE,chen2018fastgcn,ClusterGCN}. \cite[Theorem 4.3]{signal23} demonstrates that this approach holds theoretical validity: any Lipschitz continuous MPNN yields approximately the same results on both the original large graph and its subsampled counterpart. 
\begin{restatable}[]{theorem}{theoremstability}
~\label{thm:MPNN_samp}
Let $r>1$ and let $\Theta$ be an MPNN (without readout) with Lipschitz constant $L$.  
Define 
\[
\Sigma = \big(W,\Theta(W,f)\big), \qquad 
\Sigma(\Lambda) = \Big(\mathbb{G}(W,\Lambda), \Theta\big(\mathbb{G}(W,\Lambda), f(\Lambda)\big)\Big),
\]
for a directed or undirected graphon-signal $(W,f)$ and $\Lambda = (\lambda_1,\ldots,\lambda_k)$ drawn independently and uniformly at random from $[0,1]$.  
Then there exists a constant $K_0>0$ (depending only on $r$) such that for all $k \geq K_0$ and all graphon-signals $(W,f)$,
\[
\mathbb{E}\Bigl[\,\delta_{\square}\bigl(\Sigma, \Sigma(\Lambda)\bigr)\Bigr] \;<\; \frac{15L}{\sqrt{\log k}}.
\]
\end{restatable}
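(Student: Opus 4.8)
\textbf{Overall strategy.} The plan is to reduce Theorem~\ref{thm:MPNN_samp} to the Second Sampling Lemma (\cref{lem:second-sampling-garphon-signal02}) together with the Lipschitz continuity of MPNNs without readout (part 1 of \cref{theorem:Lip}), using the fact that applying an MPNN commutes with the induction of a graphon-signal from a graph-signal (the commutative diagram recorded after \cref{definition:altGraphonFeat}, extended in \cref{Ap:Graphon-signal MPNNs}). Concretely, I would work in the quotient space $\widetilde{\WLrd}$ (or $\widetilde{\DLrd}$), where $\delta_\square$ is a genuine metric, and observe that the pair-valued object $\Sigma = (W,\Theta(W,f))$ is itself a graphon-signal whose signal component is $\Theta(W,f)\in\mathcal{L}^\infty_{r'}([0,1];\R^{d_T})$ for a suitable $r'$ controlled by $B_{\mathcal H}$ from \cref{theorem:Lip}; likewise $\Sigma(\Lambda)$ is the graphon-signal induced by the graph-signal $\big(\mathbb{G}(W,\Lambda),\Theta(\mathbb{G}(W,\Lambda),f(\Lambda))\big)$.

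\textbf{Key steps, in order.} First, I would invoke the commutativity property: $\Theta$ applied to the sampled graph-signal $(\mathbb{G}(W,\Lambda),f(\Lambda))$ and then induced as a graphon-signal coincides with $\Theta$ applied to the induced graphon-signal $(W_{\mathbb{G}(W,\Lambda)}, f_{f(\Lambda)})$; so $\Sigma(\Lambda)$ can be viewed as the image under the ``lift'' map $(V,h)\mapsto (V,\Theta(V,h))$ of the sampled graphon-signal. Second, I would note that this lift map is Lipschitz in $\delta_\square$: if $\Theta$ has Lipschitz constant $L$ in the sense of part 1 of \cref{theorem:Lip}, then for graphon-signals $(W,f),(V,h)$,
\[
\delta_\square\big((W,\Theta(W,f)),(V,\Theta(V,h))\big)\leq \delta_\square\big((W,f),(V,h)\big) + \delta_\square\big(\Theta(W,f),\Theta(V,h)\big)\leq (1+L)\,\delta_\square\big((W,f),(V,h)\big),
\]
using that $\delta_\square$ on pairs splits (up to the triangle inequality) into the kernel part and the signal part, and that the optimal measure-preserving rearrangement can be applied simultaneously to both coordinates. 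Third, I would apply \cref{lem:second-sampling-garphon-signal02} to $(W,f)$ to get $\mathbb{E}\big[\delta_\square((W,f),(\mathbb{G}(W,\Lambda),f(\Lambda)))\big] < 15/\sqrt{\log k}$ for $k\geq K_0$, and then compose: by monotonicity and linearity of expectation,
\[
\mathbb{E}\big[\delta_\square(\Sigma,\Sigma(\Lambda))\big] \leq (1+L)\,\mathbb{E}\big[\delta_\square((W,f),(\mathbb{G}(W,\Lambda),f(\Lambda)))\big] < \frac{15(1+L)}{\sqrt{\log k}}.
\]
A small amount of bookkeeping is then needed to sharpen the constant $1+L$ to $L$ as in the statement — presumably by absorbing the ``$1$'' into the normalization of $\delta_\square$ on pairs, or by noting that $\Theta$'s Lipschitz bound already dominates the identity contribution on the signal channel when $L\ge 1$ (which is consistent with the hypothesis $r>1$ and the way $L_{\mathcal H_T}$ is built in \cref{theorem:Lip}), so I would track the precise form of the pair-metric bound to land exactly on $15L/\sqrt{\log k}$.

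\textbf{Main obstacle.} The delicate point is the second step — showing the lift map $(V,h)\mapsto(V,\Theta(V,h))$ is $\delta_\square$-Lipschitz with the \emph{right} constant. The subtlety is that $\delta_\square$ involves an infimum over measure-preserving bijections $\phi$, and one must check that a single $\phi$ nearly realizing $\delta_\square((W,f),(V,h))$ also nearly realizes the distance between the lifted pairs; this uses that $\Theta$ intertwines the action of $\phi$, i.e. $\Theta(V,h)^\phi = \Theta(V^\phi,h^\phi)$, which should follow from the definition of MPNNs on graphon-signals (the integral defining each layer is invariant under measure-preserving relabeling) but needs to be stated carefully for the directed case and for the relabeling convention on null sets. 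A secondary nuisance is verifying that the output signal $\Theta(W,f)$ genuinely lands in some $\mathcal{L}^\infty_{r'}$ with $r'$ finite (so that $\Sigma$ is a legitimate element of a graphon-signal space and the cut metric on it makes sense) — but this is exactly part 3 of \cref{theorem:Lip}, so it is available. Everything else is routine given the already-stated lemmas.
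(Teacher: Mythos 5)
Your proposal is correct and follows essentially the same route as the paper: the paper's proof is exactly the composition of the Lipschitz bound $\delta_{\square}(\Sigma,\Sigma(\Lambda))\leq L\,\delta_{\square}\big((W,f),(\mathbb{G}(W,\Lambda),f(\Lambda))\big)$ with the sampling lemma after taking expectations, with the commutativity of MPNN application and graphon-signal induction left implicit. Your worry about sharpening $1+L$ to $L$ disappears once $L$ is read, as in the theorem and in item 1 of \cref{theorem:Lip}, as the Lipschitz constant of the map $(W,f)\mapsto\big(W,\Theta(W,f)\big)$ on the graphon-signal space rather than of the signal output alone.
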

The proof of \cref{thm:MPNN_samp} is given in \cref{AP:stab}.

\section{Conclusion}
We refined and extended the graphon-signal framework in \cite{signal23} for analyzing message passing graph neural networks (MPNNs). Specifically, we generalized the main results to directed graphon-signals and krernel0signals  with multidimensional node attributes, extended the Lipschitz continuity to MPNNs with readout layers under the graphon-signal cut distance, and improved the generalization bounds using robustness arguments. 
\newpage
\bibliographystyle{abbrv}
\bibliography{bib_thesis}
\appendix
\newpage
\section{Properties of the Cut Norm}
\label{AP:Notation}
Here, we introduce some important properties of the cut norm from \cite{signal23, lovasz2012large}, one of which we extend to our setting (see \cref{claim:cutProp}).
\begin{lemma}[\cite{lovasz2012large}, Lemma 8.10]
\label{lem:sup_attained}
    For every measurable $W:[0,1]^2\rightarrow\RR$, the supremum
    \[\sup_{S,T\subset[0,1]}\abs{\int_S\int_T W(x,y)dxdy}\]
    is attained for some $S,T$.
\end{lemma}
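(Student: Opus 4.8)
The statement to prove is the classical fact (Lov\'asz, Lemma~8.10) that for any measurable $W:[0,1]^2\to\RR$ the supremum $\sup_{S,T\subset[0,1]}\abs{\int_S\int_T W(x,y)\,dxdy}$ is attained. The plan is to reduce the problem to a \emph{linear} optimization over a weak-$*$ compact set, where existence of a maximizer is automatic.

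First I would relax the discrete choice of indicator functions $\mathds{1}_S,\mathds{1}_T$ to arbitrary measurable functions $s,t:[0,1]\to[0,1]$, and consider the bilinear functional
\[
\Psi(s,t) := \int_0^1\!\int_0^1 W(x,y)\,s(x)\,t(y)\,dx\,dy .
\]
The first key step is to show that relaxing does not change the supremum: for fixed $t$, the map $s\mapsto\Psi(s,t)$ is linear in $s$, hence its supremum over the convex set $\{s:[0,1]\to[0,1]\}$ is attained at an extreme point, and the extreme points of this set are exactly the indicator functions $\mathds{1}_S$ (up to null sets) --- the maximizing $S$ being $\{x: \int W(x,y)t(y)\,dy \ge 0\}$. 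Applying the same argument in the $t$ variable shows $\sup_{s,t}\abs{\Psi(s,t)} = \sup_{S,T}\abs{\int_S\int_T W}$. (Handling the absolute value is routine: one either maximizes and minimizes $\Psi$ separately, or notes $\abs{\Psi}$ equals $\Psi$ or $-\Psi$ and both are treated symmetrically.)

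Second, I would upgrade attainment from the relaxed problem: the set $\mathcal B := \{f\in L^\infty([0,1]): 0\le f\le 1\ \text{a.e.}\}$ is a bounded, convex, weak-$*$ closed subset of $L^\infty([0,1]) = (L^1([0,1]))^*$, hence weak-$*$ compact by Banach--Alaoglu. Since $W\in L^1([0,1]^2)$ (it is bounded and measurable on a probability space), for fixed $t\in\mathcal B$ the functional $s\mapsto\Psi(s,t)$ is weak-$*$ continuous on $\mathcal B$ (it is integration of $s$ against the $L^1$ function $x\mapsto\int W(x,y)t(y)\,dy$), so it attains its max at some $s^*\in\mathcal B$; symmetrically one gets $t^*$. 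A short alternating/compactness argument --- or directly: the joint functional is weak-$*$ upper semicontinuous and $\mathcal B\times\mathcal B$ is weak-$*$ compact --- yields a joint maximizer $(s^*,t^*)\in\mathcal B\times\mathcal B$. By the extreme-point reduction above, $(s^*,t^*)$ can be replaced by genuine indicators $(\mathds{1}_{S^*},\mathds{1}_{T^*})$ achieving the same value, which is the supremum. This produces the desired $S^*,T^*$.

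The main obstacle is the joint (rather than separate) attainment: fixing one variable and optimizing the other gives attainment in each coordinate, but one must ensure a single pair realizes the sup. The cleanest fix is to avoid alternation entirely and argue directly that $(s,t)\mapsto\Psi(s,t)$ is weak-$*\times$weak-$*$ continuous on $\mathcal B\times\mathcal B$ --- which follows from $W\in L^1([0,1]^2)$ together with a density/approximation argument (approximate $W$ in $L^1$ by finite sums of products of indicators, for which bilinear weak-$*$ continuity is transparent, then pass to the limit uniformly in $(s,t)\in\mathcal B\times\mathcal B$) --- and then invoke compactness of $\mathcal B\times\mathcal B$ in the product weak-$*$ topology. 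Everything else (the extreme-point characterization, the sign bookkeeping for $\abs{\,\cdot\,}$, discarding null sets) is routine.
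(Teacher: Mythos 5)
The paper does not prove this lemma at all: it is quoted verbatim as an external result (Lov\'asz, \emph{Large Networks and Graph Limits}, Lemma~8.10), so there is no internal proof to compare against. Your argument is correct and is essentially the classical proof of that lemma: relax indicators to measurable $s,t:[0,1]\to[0,1]$, note the relaxation does not change the supremum (for fixed $t$ the optimal $s$ is the indicator of $\{x:\int W(x,y)t(y)\,dy\ge 0\}$, and symmetrically in $t$), establish joint weak-$*$ continuity of the bilinear functional on the weak-$*$ compact set $\mathcal B\times\mathcal B$ via $L^1$-approximation of $W$ by sums of products of indicators (the error is uniform over $\mathcal B\times\mathcal B$), and conclude attainment by compactness, with the absolute value handled by maximizing $\pm\Psi$ separately. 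All the steps you flag as potential obstacles (joint rather than separate attainment, the extreme-point reduction, null sets) are resolved correctly in your sketch, so nothing is missing.
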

For a measurable function \(W : [0,1]^2 \to [-r, r]\), the \(L^1\) and \(L^2\) norms are defined as  
\[
\|W\|_{1} := \int_{[0,1]^2} |W(x,y)| \, dx\,dy, \quad 
\|W\|_{2} := \left( \int_{[0,1]^2} |W(x,y)|^2 \, dx\,dy \right)^{1/2}.
\]
In \cite[Appendix A]{signal23}, Levie showed that for every \(r > 0\) and measurable function \(W : [0,1]^2 \to [-r, r]\), the following chain of inequalities holds:
\begin{align}\label{eq:cutnormequi}
0 \leq \|W\|_{\square} \leq \|W\|_1 \leq \|W\|_2 \leq \|W\|_{\infty} \leq r.
\end{align}
In addition, Levie shows that the signal cut norm is equivalent to the $L^1$ norm, i.e.,
\[
\frac{1}{2}\norm{f}_1\leq \norm{f}_{\square} \leq \norm{f}_1.
\]
We now extend this equivalency and show that both the multidimensional signal cut norm and the product signal cut norm are equivalent to the $L^1$ norm (see \cref{claim:cutProp}).
\normequivalency*
\begin{proof} 
Recall \cref{definition:multidimCutNorm} and \cref{definition:productNorm}, i.e. for any $f\in\WLrd$: \begin{align*}
&\norm{f}_{\square}=\sup_{\cS\subset [0,1]}\norm{\int_{\cS} f dx}_1= \sup_{\cS\subset [0,1]}\sum_{i} \abs{\int_{\cS} f_i(x) dx},\\& \norm{f}_{\square_\times}=\norm{ \left( \norm{f_i}_{\square}\right)_{i\in[d]}}_{1}=\sum_{i}\sup_{\cS\subset [0,1]} \abs{\int_{\cS} f_i(x) dx},
\end{align*}
when the supremum is taken over all measurable subsets $S\subseteq [0,1]$ and $f(x)=(f_i(x))_{i\in[d]}$.

Denote by $i_m:=\argmax_{i\in[d]} \left\{\sup_{\cS\subset [0,1]} \abs{\int_{\cS} f_i dx}\right\}_{i\in[d]}$, and notice that
\begin{align*}
    \frac{1}{d}\sum_{i}\sup_{\cS\subset [0,1]} \abs{\int_{\cS} f_i(x) dx}&\underbrace{\leq}_{(*)}\sup_{\cS\subset [0,1]} \abs{\int_{\cS} f_{i_m}(x) dx}\\&\underbrace{\leq}_{(**)}
\sup_{\cS\subset [0,1]}\sum_{i} \abs{\int_{\cS}f_i(x)dx}\leq\sum_{i}\sup_{\cS\subset [0,1]} \abs{\int_{\cS} f_i(x) dx}
\end{align*}
when the $(*)$ inequality holds since the average of a set of non-negative numbers is smaller than the largest number in the set and the $(**)$ inequality holds since we enlarge each element in the set over which the supremum is being taken. Thus, $\frac{1}{d}\norm{f}_{\square_\times}\leq \norm{f}_{\square} \leq \norm{f}_{\square_\times}.$  

Every $g\in\cL^{\infty}_r([0,1];\R)$ can be written as $g=g_+-g_-$, where  $g_+(x) = \max(g(x),0)$ and $g_-(x) = \max(-g(x),0)$. It is easy to see that the supremum in $\sup_{\cS \subset [0,1]} \abs{\int_{\cS} g dx}$ is attained for $S$ which is either the support of $g_+$ or $g_-$, thus, $\norm{g}_{\square}= \max\{\norm{g_+}_1, \norm{g_-}_1\}$. As a result, for any $g\in\cL^{\infty}_r([0,1];\R)$, $\frac{1}{2}\norm{g}_1\leq \norm{g}_{\square} \leq \norm{g}_1$. Thus, we have, for each channel $f_i$, $$\frac{1}{2}\norm{f_i}_1\leq\norm{f_i}_{\square}\leq \norm{f_i}_1\implies\frac{1}{2}\sum_{i\in[d]}\norm{f_i}_1\leq\sum_{i\in[d]}\norm{f_i}_{\square}\leq \sum_{i\in[d]}\norm{f_i}_1$$
We have
\begin{equation*}
\frac{1}{2d}\norm{f}_1\leq\frac{1}{d}\norm{f}_{\square_\times}\leq \norm{f}_{\square} \leq \norm{f}_{\square_\times}\leq \norm{f}_1.
\end{equation*}
\end{proof}
Lastly, we define $L^1$ norm of a graphon-signal which we later use in \cref{ap:compactness}.
\begin{definition}\label{def:l1graphonsignal}
For a graphon-signal $(W, f)$, we define its $L^1$ norm as
\[
\|(W, f)\|_{1} \;:=\; \|W\|_{1} \;+\; \|f\|_{1},
\]
\end{definition}

\section{Regularity Lemma for Graphon-Signals and Kernel-Signals}\label{ap:regularitylemma}
The following lemmas, originally presented in \cite{signal23,lovas2007}, are instrumental in proving the weak regularity lemma for graphon-signals and kernel-signals. Our results extend these lemmas to the setting of vector-valued functions. In particular, \cref{l:equipartition} generalizes \cite[Corollary B.11]{signal23} to functions with output in $\RR^d$. \cite[Corollary B.11]{signal23} asserts that any graphon-signal/kernel-signal can be approximated by averaging both the graphon and the signal over an appropriate partition.

All of the proofs largely follow the approach developed by Levie \cite{signal23}, but include necessary modifications to accommodate the multidimensional nature of the functions under consideration. Specifically, while Levie’s results pertain to scalar-valued functions, our extensions handle vector-valued functions by carefully adapting integration arguments, norm equivalences, and partitioning steps to ensure that they hold componentwise or in aggregate across dimensions. These adjustments, though conceptually straightforward, are crucial for preserving the structure and correctness of the original arguments in the more general setting.
\begin{lemma}[Equitizing partitions]
\label{l:equipartition}
Let $\mathcal{P}_k$ be a partition of $[0,1]$ into $k$ sets (generally not of the same measure). Then, for any $n>k$ there exists an equipartition $\mathcal{E}_n$ of $[0,1]$ into $n$ sets such that any function $F\in \mathcal{S}_{\mathcal{P}_k}^{p\to d}$ can be approximated in $L_1([0,1]^p;\R^d)$ by a function from $F\in \mathcal{S}_{\mathcal{E}_n}^{p\to d}$ up to small error.  Namely, for every $F\in \mathcal{S}_{\mathcal{P}_k}^{p\to d}$ there exists $F'\in  \mathcal{S}_{\mathcal{E}_n}^{p\to d}$ such that
\[\norm{F-F'}_1 \leq p\norm{F}_{\infty} \frac{k}{n},\]
where $F_i$ are the channels of $F$.
\end{lemma}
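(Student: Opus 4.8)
The plan is to construct the equipartition $\mathcal{E}_n$ explicitly by slicing exact $1/n$-measure pieces out of each block $P_i$ and pooling the leftover scraps into additional $1/n$-cells, and then to take $F'$ to be $F$ on those product cells that sit inside a single block in each coordinate and $0$ everywhere else. Concretely, for each $i\in[k]$ set $m_i=\lfloor n\,\mu(P_i)\rfloor$ and $r_i=\mu(P_i)-m_i/n\in[0,1/n)$. Since Lebesgue measure on $[0,1]$ is atomless, $P_i$ contains $m_i$ pairwise disjoint measurable subsets of measure exactly $1/n$ (call these the \emph{good cells of $P_i$}), leaving a remainder $Q_i\subseteq P_i$ with $\mu(Q_i)=r_i$. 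Writing $M=\sum_{i=1}^k m_i$, one has $M\le\sum_i n\mu(P_i)=n$, and $\mu\bigl(\bigcup_i Q_i\bigr)=\sum_i r_i=(n-M)/n$, which is a nonnegative integer multiple of $1/n$ with $n-M=n\sum_i r_i<k$. Partition $\bigcup_i Q_i$ into $n-M$ measurable \emph{leftover cells}, each of measure $1/n$ (again possible by atomlessness). Then $\mathcal{E}_n$, the collection of all $M$ good cells together with all $n-M$ leftover cells, is an equipartition of $[0,1]$ into $n$ sets.

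Next I would define $F'\in\mathcal{S}^{p\to d}_{\mathcal{E}_n}$ as follows. Every good cell lies in a unique $P_i$, so $F$ is constant on any product $E_{j_1}\times\cdots\times E_{j_p}$ of good cells (equal to the value of $F$ on the corresponding product of blocks); put $F'$ equal to that constant there, and set $F'=0$ on any product cell at least one of whose factors is a leftover cell. With $L:=\bigcup_i Q_i$ and $B:=\{x\in[0,1]^p: x_\ell\in L\text{ for some }\ell\in[p]\}$, the difference $F-F'$ vanishes off $B$, while on $B$ one has $\|F(x)-F'(x)\|_1=\|F(x)\|_1\le\|F\|_\infty$; hence $\|F-F'\|_1\le\|F\|_\infty\,\mu^{\otimes p}(B)$. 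Finally $\mu^{\otimes p}(B)=1-(1-\mu(L))^p\le p\,\mu(L)=p\sum_i r_i< p\,k/n$ by Bernoulli's inequality, which gives $\|F-F'\|_1\le p\,\|F\|_\infty\, k/n$.

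All the ingredients are elementary: the two appeals to atomlessness (slicing a set of measure $\ge 1/n$ into a piece of measure exactly $1/n$, and splitting a set whose measure is an integer multiple of $1/n$ into that many equal pieces), the numerical inequality $1-(1-t)^p\le pt$ for $t\in[0,1]$, and the observation that $F$ is literally constant on products of good cells. The step that needs the most care — and is the only real obstacle — is the bookkeeping showing $\mathcal{E}_n$ is a genuine equipartition into \emph{exactly} $n$ cells: one must verify $0\le n-M$ and that $n-M\in\NN$ so that the pooled remainder breaks into $n-M$ cells of size $1/n$, and that the good cells of all the $P_i$ together with the leftover cells tile $[0,1]$ disjointly. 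Once this is pinned down, the error estimate is a one-line computation, and nothing depends on the signal dimension $d$ beyond fixing a single norm on $\R^d$ used consistently in $\|\cdot\|_1$ and $\|\cdot\|_\infty$.
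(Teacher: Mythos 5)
Your construction is exactly the paper's: slice each $P_i$ into measure-$1/n$ cells plus a residual, pool the residuals into the remaining $1/n$-cells, keep $F$ on products of "good" cells and zero it where any coordinate hits the residual set, then bound the bad set's measure by a union bound (your Bernoulli inequality $1-(1-t)^p\le pt$ plays the same role as the paper's direct estimate $\mu(\Pi^{(p)})\le p\,\mu(\Pi)\le pk/n$). Your bookkeeping with $m_i=\lfloor n\mu(P_i)\rfloor$ and $n-M$ is in fact slightly more careful than the paper's, so the proof is correct and essentially identical in approach.
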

\begin{proof}
Let $\mathcal{P}_k=\{P_1,\ldots,P_k\}$ be a partition of $[0,1]$.
 For each $i$, we divide $P_i$ into  subsets $\bP_i=\{P_{i,1},\ldots,P_{i,m_i}\}$ of  measure $1/n$ (up to the last set)   with a residual, as follows.  
If $\mu(P_i)<1/n$, we choose $\mathbf{P}_i=\{P_{i,1}=P_i\}$. Otherwise, we take $P_{i,1},\ldots,P_{i,m_i-1}$ of measure $1/n$, and $\mu(P_{i,m_i})\leq 1/n$. We call $P_{i,m_i}$ the remainder.
    
Now, define  the sequence of sets of measure $1/n$  
        \begin{align*}
          \cQ:=  \{P_{1,1},\ldots,P_{1,m_1-1},P_{2,1},\ldots,P_{2,m_2-1},\ldots,P_{k,1},\ldots,P_{k, m_k-1}\},
        \end{align*}
where, by abuse of notation, for any $i$ such that $m_i= 1$, we set $\{P_{i,1},\ldots,P_{i,m_i-1}\}=\emptyset$ in the above formula. Note that in general $\cup  \cQ \neq [0,1]$. Moreover, define the union of residuals $\Pi:=P_{1,m_1} \cup P_{2,m_2} \cup \cdots \cup P_{k,m_k}$. Note that $\mu(\Pi) = 1- \mu(\cup\cQ) = 1-l\frac{1}{n}=h/n$, where $l$ is the number of elements in $\mathcal{Q}$, and $h=n-l$. Hence, we can partition $\Pi$ into $h$ parts $\{\Pi_1,\ldots\Pi_{h}\}$ of measure $1/n$ with no residual.  
Thus, we obtain the equipartition of $[0,1]$ to $n$ sets of measure $1/n$
 \begin{align*}
           \mathcal{E}_{n}:= \{P_{1,1},\ldots,P_{1,m_1-1},P_{2,1},\ldots,P_{2,m_2-1},\ldots,S_{k,1},\ldots,S_{k, m_k-1},\Pi_1,\Pi_2,\ldots,\Pi_h\}.
        \end{align*}
For convenience, we also denote $\mathcal{E}_n=\{Z_1,\ldots,Z_n\}$. Let \[F(x)=\sum_{j=(j_1,\ldots,j_d)\in [k]^d} c_j\prod_{l=1}^d\mathds{1}_{P_{j_l}}(x_l) \in \mathcal{S}_{\mathcal{P}_k}^d.\] Write $F$ with respect to the equipartition $\mathcal{E}_n$ as \[F(x)=\sum_{j=(j_1,\ldots,j_d)\in [n]^d ; ~\forall l=1,\ldots,d, ~ Z_{j_l}\not\subset \Pi} \tilde{c}_j\prod_{l=1}^d\mathds{1}_{Z_{j_l}}(x_l) ~+ ~E(x),\] for some $\{\tilde{c}_j\}$ with the same values as the values of $\{c_j\}$. Here, $E$ is supported in the set $\Pi^{(d)}\subset [0,1]^d$, defied by
       \[\Pi^{(d)} = \big(\Pi\times [0,1]^{d-1}\big)\cup \big([0,1]\times\Pi\times [0,1]^{d-2}\big)\cup\ldots \cup \big([0,1]^{d-1}\times \Pi\big).\]
       
       Consider the step function
       \[F'(x)=\sum_{j=(j_1,\ldots,j_d)\in [n]^d ; ~\forall l=1,\ldots,d, ~ Z_{j_l}\not\subset \Pi} \tilde{c}_j\prod_{l=1}^d\mathds{1}_{Z_{j_l}}(x_l) \in \mathcal{S}_{\mathcal{E}_n}^d.\]
       Since $\mu(\Pi) = k/n$, we have $\mu(\Pi^{(d)}) = dk/n$,
       and so
       \[\norm{F-F'}_1 \leq d\norm{F}_{\infty} \frac{k}{n}.\]
\end{proof}
\begin{lemma}[\cite{signal23}, Lemma B.3]
\label{lem:all_inter}
   Let $\mathcal{S}=\{S_j\subset [0,1]\}_{j=0}^{m-1}$ be a collection of measurable sets (that are not disjoint in general), and $d\in\NN$. Let $\mathcal{C}_{\mathcal{S}}^{d\to 1}$ be the space of functions $F:[0,1]^{d}\to \RR$ of the form
   \[F(x)=\sum_{j=(j_1,\ldots,j_d)\in [m]^d} c_j\prod_{l=1}^d\mathds{1}_{S_{j_l}}(x_l),\]
   for some choice of $\{c_j\in \RR\}_{j\in [m]^d}$.
   Then, there exists a partition $\cP_k=\{P_1,\ldots,P_k\}$ into $k=2^m$ sets, that depends only on $\mathcal{S}$, such that
   \[\mathcal{C}_{\mathcal{S}}^{d\to 1} \subset \mathcal{S}^{d\to 1}_{\cP_k}.\]
   \end{lemma}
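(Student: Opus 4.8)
The plan is to take $\cP_k$ to be the partition of $[0,1]$ into the atoms of the finite Boolean algebra generated by $\mathcal{S}=\{S_0,\ldots,S_{m-1}\}$. Concretely, for each binary string $\epsilon=(\epsilon_0,\ldots,\epsilon_{m-1})\in\{0,1\}^m$ I would set
\[
P_\epsilon := \bigcap_{j:\,\epsilon_j=1} S_j \;\cap\; \bigcap_{j:\,\epsilon_j=0}\big([0,1]\setminus S_j\big),
\]
and declare $\cP_k:=\{P_\epsilon:\epsilon\in\{0,1\}^m\}$, so $k=2^m$ (with the understanding that some atoms may be empty, which is harmless and still leaves the count at $2^m$). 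First I would verify that this really is a partition: two distinct strings $\epsilon\neq\epsilon'$ disagree in some coordinate $j$, whence one of $P_\epsilon,P_{\epsilon'}$ lies in $S_j$ and the other in its complement, giving pairwise disjointness; and every $x\in[0,1]$ belongs to $P_{\epsilon(x)}$ with $\epsilon(x)_j:=\mathds{1}_{S_j}(x)$, giving $\bigcup_\epsilon P_\epsilon=[0,1]$. By construction $\cP_k$ depends only on $\mathcal{S}$ — not on $d$ nor on the coefficients $\{c_j\}$.

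The crux is the observation that each generator decomposes over the atoms: $S_j=\bigsqcup_{\epsilon:\,\epsilon_j=1}P_\epsilon$, hence $\mathds{1}_{S_j}=\sum_{\epsilon:\,\epsilon_j=1}\mathds{1}_{P_\epsilon}$, so $\mathds{1}_{S_j}\in\mathcal{S}^{1\to 1}_{\cP_k}$. Substituting this into a generating element of $\mathcal{C}_{\mathcal{S}}^{d\to 1}$ and multiplying out the $d$-fold product of finite sums,
\[
\prod_{l=1}^d \mathds{1}_{S_{j_l}}(x_l)=\prod_{l=1}^d\Big(\sum_{\epsilon^{(l)}:\,\epsilon^{(l)}_{j_l}=1}\mathds{1}_{P_{\epsilon^{(l)}}}(x_l)\Big)=\sum_{(\epsilon^{(1)},\ldots,\epsilon^{(d)})}\;\prod_{l=1}^d\mathds{1}_{P_{\epsilon^{(l)}}}(x_l),
\]
which, after relabeling $\{0,1\}^m$ as $[k]$, is exactly of the form in \cref{def:step} and hence lies in $\mathcal{S}^{d\to 1}_{\cP_k}$. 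Since $\mathcal{S}^{d\to 1}_{\cP_k}$ is closed under linear combinations (summing two such functions merely adds their coefficient tuples), the linear combination $F=\sum_{j\in[m]^d}c_j\prod_{l=1}^d\mathds{1}_{S_{j_l}}$ lies in $\mathcal{S}^{d\to 1}_{\cP_k}$ as well; thus $\mathcal{C}_{\mathcal{S}}^{d\to 1}\subset\mathcal{S}^{d\to 1}_{\cP_k}$.

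I do not expect a genuine obstacle: the statement is essentially the elementary fact that $m$ sets generate a Boolean algebra with at most $2^m$ atoms, combined with the linearity of the step-function space. The only place warranting care is the bookkeeping in the displayed expansion — checking that once the $d$ sums are multiplied out and the resulting coefficients are read off, they genuinely index over $[k]^d$ as required by \cref{def:step}, and that empty atoms contribute only the zero function and so cause no trouble. I would also emphasize, for use later in the regularity-lemma argument, that a single partition $\cP_k$ works simultaneously for all $d$, since its construction never refers to $d$.
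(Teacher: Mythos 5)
Your proof is correct and coincides with the standard argument behind this result: the present paper states the lemma only by citation to \cite{signal23}, and the proof there is exactly your Boolean-atom construction — partition $[0,1]$ into the $2^m$ intersections of the $S_j$'s and their complements, write each $\mathds{1}_{S_j}$ as a sum of atom indicators, expand the $d$-fold products, and use closure of $\mathcal{S}^{d\to 1}_{\cP_k}$ under linear combinations (empty atoms being harmless). No gaps to report.
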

\begin{lemma}[\cite{signal23}, Lemma B.4]
\label{lem:all_inter1}
  Let $\cP_k=\{P_1,\ldots,P_k\},\mathcal{Q}_m=\{Q_1,\ldots,Q_k\} $ be two partitions. Then, there exists a partition $\mathcal{Z}_{km}$ into $km$ sets such that for every $d$,
   \[\mathcal{S}^{d\to1}_{\cP_k} \subset \mathcal{S}^{d\to1}_{\mathcal{Z}_{mk}} , \quad \text{and} \quad \mathcal{S}^{d\to1}_{\mathcal{Q}_m} \subset \mathcal{S}^{d\to1}_{\mathcal{Z}_{mk}}.\]
   \end{lemma}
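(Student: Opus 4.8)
The plan is to take $\mathcal{Z}_{km}$ to be the common refinement of the two partitions, i.e.\ the family of all pairwise intersections $P_i\cap Q_j$ with $i\in[k]$ and $j\in[m]$ (here $\mathcal{Q}_m=\{Q_1,\ldots,Q_m\}$). First I would verify that this is genuinely a partition of $[0,1]$: each $P_i\cap Q_j$ is measurable; the blocks are pairwise disjoint, since for $(i,j)\neq(i',j')$ either $P_i\cap P_{i'}=\emptyset$ or $Q_j\cap Q_{j'}=\emptyset$; and $\bigcup_{i,j}(P_i\cap Q_j)=\bigcup_i\bigl(P_i\cap\bigcup_j Q_j\bigr)=\bigcup_i P_i=[0,1]$. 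This yields at most $km$ nonempty blocks, and if some intersections are empty I would pad the list with copies of $\emptyset$ so that $\mathcal{Z}_{km}$ has exactly $km$ indexed pieces; adding empty blocks does not change the associated step-function space, so this is harmless (and matches the implicit convention used for $k=2^m$ in \cref{lem:all_inter}).

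Next I would prove the inclusion $\mathcal{S}^{d\to1}_{\cP_k}\subset\mathcal{S}^{d\to1}_{\mathcal{Z}_{km}}$. The elementary fact I would use is that, because $\{Q_1,\ldots,Q_m\}$ partitions $[0,1]$, one has the pointwise identity $\mathds{1}_{P_i}=\sum_{j=1}^m\mathds{1}_{P_i\cap Q_j}$ for every $i\in[k]$. Given $F\in\mathcal{S}^{d\to1}_{\cP_k}$ written as in \cref{eq:Sd}, namely $F(x)=\sum_{\mathbf{j}\in[k]^d}c_{\mathbf{j}}\prod_{l=1}^d\mathds{1}_{P_{j_l}}(x_l)$, I substitute this identity into each factor and distribute the $d$-fold product of sums. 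By distributivity, each such product becomes a sum of terms of the form $\prod_{l=1}^d\mathds{1}_{Z_{j'_l}}(x_l)$ with the $Z_{j'_l}$ ranging over blocks of $\mathcal{Z}_{km}$; collecting coefficients exhibits $F$ as an element of $\mathcal{S}^{d\to1}_{\mathcal{Z}_{km}}$ (the new coefficients are just the $c_{\mathbf{j}}$ assigned to the refined grid cells). The inclusion $\mathcal{S}^{d\to1}_{\mathcal{Q}_m}\subset\mathcal{S}^{d\to1}_{\mathcal{Z}_{km}}$ follows by the symmetric argument, using $\mathds{1}_{Q_j}=\sum_{i=1}^k\mathds{1}_{P_i\cap Q_j}$. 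Since neither the construction of $\mathcal{Z}_{km}$ nor these indicator identities involves $d$, both inclusions hold for every $d\in\NN$ simultaneously, as required.

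I do not expect any serious obstacle: the statement is a routine fact about common refinements. The only points meriting care are bookkeeping — ensuring the refinement is indexed by exactly $km$ (possibly empty) blocks, and making explicit that expanding $\prod_{l=1}^d\bigl(\sum_j\mathds{1}_{Z_{\,\cdot}}(x_l)\bigr)$ keeps each resulting summand in the single-product form demanded by \cref{eq:Sd}, which is immediate since the coordinates $x_1,\ldots,x_d$ are independent and each ranges over one copy of $[0,1]$.
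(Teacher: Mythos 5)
Your proposal is correct and is exactly the standard common-refinement argument: this paper does not reprove the lemma but imports it from \cite{signal23}, where the proof of Lemma B.4 likewise takes $\mathcal{Z}_{km}$ to be the family of intersections $P_i\cap Q_j$ (padded with empty sets if needed) and expands the indicators via $\mathds{1}_{P_i}=\sum_j\mathds{1}_{P_i\cap Q_j}$. Nothing further is needed.
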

Note that \cite[Lemma 4.1]{lovas2007} appears in \cite{signal23} as Lemma B.5.
\begin{lemma}[\cite{lovas2007}, Lemma 4.1]
\label{fact:szlemma}
Let $\mathcal{K}_1, \mathcal{K}_2,\ldots$ be arbitrary nonempty subsets (not necessarily subspaces) of a Hilbert space $\Hilbert$. Then, for every $\epsilon>0$ and $v\in \Hilbert$ there is  $m\leq \lceil 1/\epsilon^2 \rceil$  and  $v_i\in \mathcal{K}_i$ and $\gamma_i \in \RR$,  $i\in[m]$, such that for every $w\in \mathcal{K}_{m+1}$
\begin{align*}
    \bigg|\ip{w}{v-(\sum_{i=1}^{m}\gamma_i v_i)}\bigg|\leq \epsilon\ \norm{w}\norm{v}.
\end{align*}
\end{lemma}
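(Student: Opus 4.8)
The plan is to build the vectors $v_1,\dots,v_m$ greedily: at each step, if the current approximation $u_j := \sum_{i=1}^j \gamma_i v_i$ still fails the desired estimate against some $w \in \mathcal{K}_{j+1}$, then $w$ witnesses a nontrivial ``correlation'' with the residual $v - u_j$, and we can use it to strictly decrease the squared norm of the residual by a fixed amount. Since the squared norm starts at $\norm{v}^2$ and cannot go negative, this process must terminate within $\lceil 1/\epsilon^2\rceil$ steps, at which point the terminating condition is exactly the conclusion.

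Concretely, I would set $u_0 = 0$ and proceed inductively. Suppose after $j$ steps we have $u_j \in \mathrm{span}\{v_1,\dots,v_j\}$ with $v_i \in \mathcal{K}_i$, and suppose the conclusion fails for the index $m = j$, i.e.\ there exists $w \in \mathcal{K}_{j+1}$ with $|\ip{w}{v - u_j}| > \epsilon \norm{w}\norm{v}$. Set $v_{j+1} := w$ and choose the scalar $\gamma_{j+1}$ that minimizes $\norm{v - u_j - \gamma_{j+1} w}^2$, namely $\gamma_{j+1} = \ip{w}{v-u_j}/\norm{w}^2$ (if $\norm w = 0$ the correlation estimate cannot fail, so we may assume $w\neq 0$). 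A direct computation gives
\[
\norm{v - u_{j+1}}^2 = \norm{v - u_j}^2 - \frac{|\ip{w}{v-u_j}|^2}{\norm{w}^2} < \norm{v - u_j}^2 - \epsilon^2 \norm{v}^2,
\]
using the assumed failure of the estimate. Iterating, $\norm{v - u_j}^2 < \norm{v}^2(1 - j\epsilon^2)$, so after at most $\lceil 1/\epsilon^2\rceil$ steps the right-hand side would become nonpositive, a contradiction. Hence for some $m \leq \lceil 1/\epsilon^2\rceil$ the process halts, which means precisely that for every $w \in \mathcal{K}_{m+1}$ we have $|\ip{w}{v - \sum_{i=1}^m \gamma_i v_i}| \leq \epsilon \norm{w}\norm{v}$, as required.

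There is essentially no hard step here; the only point requiring a little care is the bookkeeping of when the process stops versus when it continues, and making sure the indexing of the sets $\mathcal{K}_i$ lines up (we use $\mathcal{K}_{j+1}$ to produce $v_{j+1}$, and the final ``test'' vector $w$ ranges over $\mathcal{K}_{m+1}$). One should also note that the $\mathcal{K}_i$ need not be subspaces — this is harmless, since the argument only ever picks one element $w$ out of $\mathcal{K}_{j+1}$ and forms a real linear combination, never requiring closure of $\mathcal{K}_i$ under any operation. The orthogonality identity used for the norm drop is just the Pythagorean decomposition of $v - u_j$ along $w$ and its orthogonal complement, valid in any (real) Hilbert space.
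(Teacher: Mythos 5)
Your proof is correct: the paper does not prove this lemma at all (it is quoted verbatim from Lov\'asz--Szegedy, Lemma 4.1), and your greedy energy-increment argument — pick a violating $w\in\mathcal{K}_{j+1}$, subtract its optimal one-dimensional projection, and note that each violation decreases $\|v-u_j\|^2$ by more than $\epsilon^2\|v\|^2$ — is essentially the original proof in that reference, with the boundary cases ($w=0$, $v=0$, indexing of the $\mathcal{K}_i$) handled correctly.
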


To write a multidimensional version of \cite[Corollary B.11]{signal23}, we first extend the following results.
To prove \cref{lem:gs-reg-lem0}, we follow the proof of \cite[Theorem B.6]{signal23}.
\begin{theorem}[Weak regularity lemma for graphon-signals/kernel-signals]
\label{lem:gs-reg-lem0}
Let $\epsilon,\rho>0$. For every $(W,f) \in \X$, where $\X$ is either $\WLrd$, $\DLrd$, $\SLrd$, or $\KLrd$, there exists a partition $\cP_k$ of $[0,1]$ into $  k=\lceil r/\rho \rceil^d \Big( 2^{2\lceil 1/\epsilon^2\rceil }\Big)$ 
 sets, a step function $W_{k}\in \mathcal{S}_{\cP_k}^2\cap \Y$, where $\Y$ is either $\cW_0$, $\cD_0$, $\cS_1$, or $\cK_1$, respectively, and a step function signal $f_{k}\in \mathcal{S}_{\cP_k}^{1\to d}\cap \cL^{\infty}_r([0,1];\R^d)$, such that
\begin{equation*}
  \norm{W-W_{k}}_{\square}\leq \epsilon  \quad \text{and}\;\; \norm{f-f_{k}}_{\square} \leq \rho .
\end{equation*} 
\end{theorem}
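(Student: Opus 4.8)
The plan is to follow the Hilbert-space approach of \cite[Theorem B.6]{signal23}, applying \cref{fact:szlemma} once to the kernel $W$ and once to each channel of the signal $f$, then merging the resulting partitions. First I would treat the kernel: view $W \in \cW_\R$ as an element of the Hilbert space $L^2([0,1]^2)$, and let $\mathcal{K}_j$ (for all $j$) be the set of all functions of the form $\mathds{1}_S(x)\mathds{1}_T(y)$ with $S,T \subseteq [0,1]$ measurable. By \cref{fact:szlemma} applied with tolerance $\epsilon$, there are $m \leq \lceil 1/\epsilon^2 \rceil$ rank-one indicators $v_i = \mathds{1}_{S_i}\mathds{1}_{T_i}$ and coefficients $\gamma_i$ such that $|\langle \mathds{1}_S\mathds{1}_T, W - \sum_i \gamma_i v_i\rangle| \leq \epsilon\|W\|_2 \|\mathds{1}_S\mathds{1}_T\|_2 \leq \epsilon$, using $\|W\|_2 \leq 1$ and $\|\mathds{1}_S\mathds{1}_T\|_2 \leq 1$; taking the supremum over $S,T$ gives $\|W - \sum_i \gamma_i v_i\|_\square \leq \epsilon$. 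The collection $\{S_i,T_i\}$ consists of at most $2m \leq 2\lceil 1/\epsilon^2\rceil$ sets, so by \cref{lem:all_inter} there is a partition $\mathcal{Q}$ into at most $2^{2\lceil 1/\epsilon^2\rceil}$ pieces with $\sum_i \gamma_i v_i \in \mathcal{S}^{2\to 1}_{\mathcal{Q}}$.

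Next I would handle the signal. For each channel $f_i \in L^2([0,1])$ (here $\|f_i\|_2 \leq \sqrt{r}$ since $|f_i| \leq r$), apply \cref{fact:szlemma} with the sets $\mathcal{K}_j$ now being indicators $\mathds{1}_S$, and with an appropriately rescaled tolerance, to obtain a step function $g_i$ with $\|f_i - g_i\|_\square \leq \rho/d$ (in the $d=1$ sense) supported on a partition into at most $\lceil r/\rho\rceil$ intervals — or, more directly, one can simply quantize each $f_i$ to a grid of width $\rho/d$, which requires $\lceil r/\rho \rceil$ levels and already achieves $\|f_i - g_i\|_1 \leq \rho/d$, hence $\|f_i - g_i\|_\square \leq \rho/d$ by \cref{claim:cutProp}. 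Summing over channels and using the definition of $\|\cdot\|_\square$ on signals gives $\|f - f_k\|_\square \leq \rho$. Collecting the level-sets of all $d$ channels yields a partition $\mathcal{R}$ into at most $\lceil r/\rho\rceil^d$ sets with $f_k \in \mathcal{S}^{1\to d}_{\mathcal{R}}$.

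Finally I would merge: applying \cref{lem:all_inter1}, the common refinement $\cP_k$ of $\mathcal{Q}$ and $\mathcal{R}$ has at most $k = \lceil r/\rho\rceil^d \cdot 2^{2\lceil 1/\epsilon^2\rceil}$ sets, and both $W_k$ and $f_k$ are step functions over $\cP_k$. The last point to check is that $W_k$ lies in the correct subspace $\Y$: if $W$ is symmetric, one symmetrizes the construction (replace each $v_i$ by its symmetrization, or note that the stepping operator of \cref{def:proj} onto $\cP_k$ preserves symmetry and range $[0,1]$ or $[-1,1]$ and does not increase the cut norm of the error); and one must verify the output stays in $[-1,1]$ (resp.\ $[0,1]$), which follows because projecting onto a partition is an averaging operation. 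I expect the main obstacle to be this bookkeeping — ensuring that after truncation/projection the approximant genuinely lands in $\cW_0$, $\cD_0$, $\cS_1$, or $\cK_1$ as claimed while keeping the error bounds and the cardinality count intact — rather than any conceptual difficulty, since the core estimate is a direct application of \cref{fact:szlemma}.
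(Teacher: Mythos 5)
Your proposal takes the same route as the paper's proof: \cref{fact:szlemma} applied to $W$ in $L^2([0,1]^2)$ with rank-one indicators, \cref{lem:all_inter} to turn the at most $2\lceil 1/\epsilon^2\rceil$ sets into a partition of size $2^{2\lceil 1/\epsilon^2\rceil}$, a channel-wise quantization of the signal's range, and \cref{lem:all_inter1} to merge the two partitions, with symmetrization of the rank-one approximant in the undirected/symmetric cases. Two bookkeeping points diverge from the paper and need repair. First, your signal step is internally inconsistent: a grid of width $\rho/d$ on $[-r,r]$ requires on the order of $rd/\rho$ levels, not $\lceil r/\rho\rceil$, so insisting on per-channel error $\rho/d$ would inflate the count to roughly $\lceil rd/\rho\rceil^{d}$ and break the claimed value of $k$. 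The target $\rho/d$ is unnecessary: as in the paper, partition each channel's range into $\lceil r/\rho\rceil$ intervals of length at most $2\rho$ and use midpoint values, giving pointwise error at most $\rho$ per channel; the factor $1/d$ built into the signal cut norm (\cref{definition:multidimCutNorm}) then already yields $\norm{f-f_k}_\square\le\rho$ with exactly $\lceil r/\rho\rceil^{d}$ range-preimage sets. Second, your claim that projecting via the stepping operator of \cref{def:proj} "does not increase the cut norm of the error" is false in general: stepping can double the error, which is precisely the factor $2$ paid later in \cref{lem:gs-reg-lem3}. So if you use stepping here to force the approximant into $\Y$, you must either accept error $2\epsilon$ or start from $\epsilon/2$ with a correspondingly larger partition. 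The paper's proof at this stage only symmetrizes $W'_{\epsilon}$ (using the symmetry of $W$ to keep the bound) and does not truncate or project to enforce the value range; that concern is effectively deferred to the stepped version in \cref{lem:gs-reg-lem3}, where the factor $2$ is absorbed. With these two corrections your argument coincides with the paper's.
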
 
\begin{proof}
We first analyze the graphon/kernel part.
In \cref{fact:szlemma}, set $\mathcal{H}= \cL^2([0,1]^2)$ and for all $i\in\NN$, set 
\[\mathcal{K}_i = \mathcal{K} = \left\{\mathds{1}_{S\times T}\ |\ S,T \subset [0,1]~\text{measurable}\right\}.\]
Then, by  \cref{fact:szlemma}, there exists $m\leq \lceil 1/\epsilon^2 \rceil$  two sequences of sets $\mathcal{S}_{m}=\{S_i\}_{i=1}^m$, $\mathcal{T}_{m}=\{T_i\}_{i=1}^m$, a sequence of coefficients $\{\gamma_i\in\RR\}_{i=1}^m$, and 
\[W'_{\epsilon}=\sum_{i=1}^{m}\gamma_i\mathds{1}_{S_i\times T_i},\]
such that for any $V \in \mathcal{K}$, given by $V(x,y)=\mathds{1}_{S}(x)\mathds{1}_{T}(y)$, we have
\begin{align}
    \bigg|\int V(x,y)\big(W(x,y)-W'_{\epsilon}(x,y)\big)dxdy \bigg| &= \bigg|\int_{S}\int_{T}\big(W(x,y)-W'_{\epsilon}(x,y)\big)dxdy\bigg|\label{integral}\\ &\leq \epsilon \norm{\mathds{1}_{S\times T}} \norm{W}
    \leq \epsilon.
    \label{eq002}
\end{align}
If the constant $m$ guaranteed by \cref{fact:szlemma} is strictly less than $\lceil 1/\epsilon^2 \rceil$, we may choose exactly $m=\lceil 1/\epsilon^2 \rceil$ by adding copies of the empty set to $\mathcal{S}_{m}$ and $\mathcal{T}_{m}$. 

Only in the case of an undirected graphon or symmetric kernel: Let $W_{\epsilon}(x,y)=(W'_{\epsilon}(x,y)+W'_{\epsilon}(y,x))/2$. By the symmetry of $W$, it is easy to see that \cref{eq002} is also true when replacing $W'_{\epsilon}$ by $W_{\epsilon}$. Indeed,
\begin{align*}
    & \bigg|\int V(x,y)\big(W(x,y)-W_{\epsilon}(x,y)\big)dxdy \bigg| \\
    &\leq 1/2\bigg|\int V(x,y)\big(W(x,y)-W'_{\epsilon}(x,y)\big)dxdy \bigg| + 1/2\bigg|\int V(y,x)\big(W(x,y)-W'_{\epsilon}(x,y)\big)dxdy \bigg|
    \\
    &
    \leq \epsilon.
\end{align*}
 Consider the concatenation of the two sequences $\mathcal{T}_m,\mathcal{S}_m$ given by  $\mathcal{Y}_{2m}=\mathcal{T}_m\cup\mathcal{S}_m$. Note that in the notation of \cref{lem:all_inter}, $W_{\epsilon}\in \mathcal{C}_{\mathcal{Y}_{2m}}^2$. Hence, by  \cref{lem:all_inter}, there exists a partition $\mathcal{Q}_{n}$ into $n=2^{2m}=2^{2\lceil \frac{1}{\epsilon^2} \rceil}$ sets, such that  $W_{\epsilon}$ is a step graphon with respect to $\mathcal{Q}_{n}$. 
 
To analyze the signal part, we analyze check signal channel $f_i$ separatly. Let $m\in[d]$, we partition the range of the channel $[-r,r]$ into $j=\lceil r/\rho \rceil$ intervals $\{J_i\}_{i=1}^j$ of length less or equal to $2\rho$, where the left edge point of each $J_i$ is $-r+(i-1)\frac{\rho}{r}$. Consider the partition of $[0,1]$ based on the preimages $\mathcal{Y}_j=\{Y_i=f^{-1}(J_i)\}_{i=1}^j$. Clearly, for the step signal
\[f_{\rho}(x) = \sum_{i=1}^j a_i\mathds{1}_{Y_i}(x),\]
where $a_i$ the midpoint of the interval $Y_i$,
we have
\[\norm{f_m-f_{\rho}}_{\square} \leq\norm{f_m-f_{\rho}}_1 \leq \rho.\]
This is true for any $m\in[d]$, thus,
\[\norm{f-f_{\rho}}_{\square} \leq\norm{f-f_{\rho}}_1 \leq \rho.\]
Lastly, by \cref{lem:all_inter1}, there is a partition $\mathcal{P}_{k}$  of $[0,1]$ into $  k=\lceil r/\rho \rceil^d \Big( 2^{2\lceil 1/\epsilon^2\rceil }\Big)$ 
 sets such that $W_{\epsilon}\in \mathcal{S}^2_{\mathcal{P}_k}$ and $f_{\rho}\in \mathcal{S}^1_{\mathcal{P}_k}$. 
\end{proof}
\cref{lem:gs-reg-lem00} extends \cite[Corollary B.7]{signal23} and follows the same proof strategy.
\begin{corollary}[Weak regularity lemma for graphon-signals/kernel-signals -- version 2]
\label{lem:gs-reg-lem00}
Let $r>0$ and $c>1$. For every sufficiently small  $\epsilon>0$ (namely, $\epsilon$ that satisfies \cref{eq:epsilon_opt0}), and for every $(W,f) \in \X$, where $\X$ $\mathcal{X}$ is either $\WLrd$, $\DLrd$, $\SLrd$, or $\KLrd$, there exists a partition $\cP_k$ of $[0,1]$ into $k= \Big( 2^{\lceil 2c/\epsilon^2\rceil }\Big)$ 
 sets, step functions $W_{k}\in \mathcal{S}_{\cP_k}^2\cap \Y$ and $f_{k}\in \mathcal{S}_{\cP_k}^{1\to d}\cap \cL^{\infty}_r([0,1];\R^d)$, where $\Y$ is either $\cW_0$, $\cD_0$, $\cS_1$, or $\cK_1$, respectively such that
\[
  d_{\square}\big((W,f),(W_{k},f_k)\big)\leq \epsilon.
\]
\end{corollary}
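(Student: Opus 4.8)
The plan is to follow \cite[Corollary B.7]{signal23}, using \cref{lem:gs-reg-lem0} as the engine and splitting the target error $\epsilon$ between the graphon/kernel part and the signal part in such a way that the resulting partition has at most $2^{\lceil 2c/\epsilon^2\rceil}$ cells. Recall that by definition $d_\square((W,f),(V,g)) = \|W-V\|_\square + \|f-g\|_\square$, so it suffices to produce a step kernel--signal of the right type whose kernel part is $\epsilon'$-close and whose signal part is $\rho$-close in the respective cut norms, with $\epsilon'+\rho\le\epsilon$.

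Concretely, I would first fix an auxiliary constant $c'$ with $1<c'<c$, e.g.\ $c':=(1+c)/2$, and set
\[
\epsilon' := \frac{\epsilon}{\sqrt{c'}}, \qquad \rho := \epsilon\Bigl(1-\tfrac{1}{\sqrt{c'}}\Bigr),
\]
so that $\rho>0$ (since $c'>1$) and $\epsilon'+\rho=\epsilon$. Applying \cref{lem:gs-reg-lem0} with parameters $\epsilon'$ and $\rho$ to $(W,f)\in\X$ yields a partition $\mathcal{Q}_{k_0}$ of $[0,1]$ into $k_0=\lceil r/\rho\rceil^d\,2^{2\lceil 1/(\epsilon')^2\rceil}$ cells, a step function $W_{k_0}\in\mathcal{S}^2_{\mathcal{Q}_{k_0}}\cap\Y$ of the same type as $W$, and a step signal $f_{k_0}\in\mathcal{S}^{1\to d}_{\mathcal{Q}_{k_0}}\cap\mathcal{L}^\infty_r([0,1];\R^d)$ with $\|W-W_{k_0}\|_\square\le\epsilon'$ and $\|f-f_{k_0}\|_\square\le\rho$. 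Hence
\[
d_\square\bigl((W,f),(W_{k_0},f_{k_0})\bigr) = \|W-W_{k_0}\|_\square + \|f-f_{k_0}\|_\square \le \epsilon'+\rho = \epsilon.
\]

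The heart of the argument is to verify that $k_0\le 2^{\lceil 2c/\epsilon^2\rceil}$ once $\epsilon$ is small enough; this smallness requirement is precisely the hypothesis \eqref{eq:epsilon_opt0}. Taking $\log_2$, the inequality to establish is
\[
d\,\log_2\lceil r/\rho\rceil + 2\lceil 1/(\epsilon')^2\rceil \le \lceil 2c/\epsilon^2\rceil .
\]
Now $2\lceil 1/(\epsilon')^2\rceil = 2\lceil c'/\epsilon^2\rceil \le 2c'/\epsilon^2 + 2$, while $\lceil 2c/\epsilon^2\rceil\ge 2c/\epsilon^2$, so it suffices that $d\,\log_2\lceil r/(\epsilon(1-1/\sqrt{c'}))\rceil + 2 \le 2(c-c')/\epsilon^2$; the left-hand side is $O(\log(1/\epsilon))$ and the right-hand side is a positive constant times $1/\epsilon^2$, so this holds for all $\epsilon$ below an explicit threshold depending only on $c$, $r$, $d$, and solving it for $\epsilon$ produces \eqref{eq:epsilon_opt0}. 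Finally, since $k_0\le k:=2^{\lceil 2c/\epsilon^2\rceil}$ and at least one cell of $\mathcal{Q}_{k_0}$ has positive measure, I would refine $\mathcal{Q}_{k_0}$ to a partition $\mathcal{P}_k$ into exactly $k$ cells; $W_{k_0}$ and $f_{k_0}$ remain step functions over this refinement and of the same type (symmetry and value range are properties of the functions themselves, unaffected by refining the partition), and the cut distance is unchanged. Setting $W_k:=W_{k_0}$, $f_k:=f_{k_0}$ finishes the proof.

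The only real obstacle is the constant bookkeeping in the third paragraph: one must allocate the error budget between $\epsilon'$ and $\rho$ so that the dominant term $2^{2/(\epsilon')^2}$ of the cell count stays \emph{strictly} below $2^{2c/\epsilon^2}$, leaving a $2^{\Theta(1/\epsilon^2)}$ margin to absorb both the multidimensional factor $\lceil r/\rho\rceil^d$ and the losses from the ceilings. The strict inequality $c'<c$ is exactly what creates this margin, which is why the statement only claims the bound for sufficiently small $\epsilon$; everything else is a direct invocation of \cref{lem:gs-reg-lem0} and the additivity of the cut metric.
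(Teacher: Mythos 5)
Your proposal is correct and follows essentially the same route as the paper: invoke \cref{lem:gs-reg-lem0} with a split of the error budget between the kernel and signal parts, check that the resulting cell count $\lceil r/\rho\rceil^d 2^{2\lceil 1/(\epsilon')^2\rceil}$ falls below $2^{\lceil 2c/\epsilon^2\rceil}$ once $\epsilon$ is small (the $O(\log(1/\epsilon))$ versus $\Theta(1/\epsilon^2)$ comparison, which the paper carries out via its explicit choice of $\nu$ leading to \cref{eq:epsilon_opt0}), and then enlarge the partition to exactly $k$ cells. The only cosmetic differences are that you allocate a fixed fraction of $\epsilon$ to each part rather than the paper's choice of $\nu$ close to $\epsilon$, and your smallness threshold is your own explicit condition rather than literally \cref{eq:epsilon_opt0}; both are harmless for this corollary.
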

\begin{proof}
First, evoke \cref{lem:gs-reg-lem0}, with errors $\norm{W-W_{k}}_{\square}\leq \nu$ and $\norm{f-f_{k}}_{\square} \leq \rho = \epsilon-\nu$. 
     We now show that there is some $\epsilon_0>0$  such that for every $\epsilon<\epsilon_0$, there is a choice of $\nu$ such that the number of sets in the partition, guaranteed by \cref{lem:gs-reg-lem0}, satisfies
     \[k(\nu):=\lceil r/(\epsilon-\nu) \rceil^d \Big( 2^{2\lceil 1/\nu^2\rceil }\Big) \leq  2^{\lceil 2c/\epsilon^2\rceil }.\]
 Denote $c=1+t$. 
 In case
 \begin{equation}
 \label{eq:how_large_nu}
   \nu \geq  \sqrt{\frac{2}{ 2(1+0.5t)/\epsilon^2-1}},  
 \end{equation}
 we have
\[2^{2\lceil 1/\nu^2\rceil } \leq 2^{ 2(1+0.5t)/\epsilon^2}.\]
On the other hand, for
\[\nu \leq \epsilon - \frac{r}{2^{t/(d\epsilon^2)}-1}, \]
 we have
\[\lceil r/(\epsilon-\nu) \rceil ^d\leq 2^{ 2(0.5t)/\epsilon^2}.\]
To reconcile these two conditions, restrict to $\epsilon$ such that
\begin{equation}\label{eq:epsilon_opt0}
    \epsilon - \frac{r}{2^{t/(d\epsilon^2)}-1} \geq \sqrt{\frac{2}{ 2(1+0.5t)/(\epsilon^2)-1}}.
\end{equation}
There exists $\epsilon_0$ that depends on $c$ and $r$ (and hence also on $t$) such that for every $\epsilon<\epsilon_0$, \cref{eq:epsilon_opt0} is satisfied.
Indeed, for small enough $\epsilon$,
\[\frac{1}{2^{t/(d\epsilon^2)}-1} = \frac{2^{-t/(d\epsilon^2)}}{1-2^{-t/(d\epsilon^2)}} < 2^{-t/(d\epsilon^2)}< \frac{\epsilon}{r}\Big(1-\frac{1}{1+0.1t}\Big),\]
so
\[\epsilon - \frac{r}{2^{t/(d\epsilon^2)}-1} > \epsilon(1+0.1t).\]
Moreover, for small enough $\epsilon$,
\[\sqrt{\frac{2}{ 2(1+0.5t)/(\epsilon^2)-1}} = \epsilon\sqrt{\frac{1}{(1+0.5t)-(\epsilon^2)}}<\epsilon/(1+0.4t).\] 
Hence, for every $\epsilon<\epsilon_0$, there is a choice of $\nu$ such that
\[k(\nu) =\lceil r/(\epsilon-\nu) \rceil^d \Big( 2^{2\lceil 1/\nu^2\rceil }\Big)\leq  2^{ 2(0.5t)/(d\epsilon^2)}2^{ 2(1+0.5t)/(\epsilon^2)} \leq
2^{\lceil 2c/\epsilon^2\rceil }.\]
Lastly, we add as many copies of $\emptyset$ to $\mathcal{P}_{k(\nu)}$ as needed so that we get a sequence of $k=  2^{\lceil 2c/\epsilon^2\rceil }$ sets. 
\end{proof}
\cref{lem:gs-reg-lem} extends \cite[Theorem B.8]{signal23} and follows the same proof strategy.
\begin{theorem}[Regularity lemma for graphon-signals/kernel-signals -- equipartition version]
\label{lem:gs-reg-lem}
Let $c>1$ and $r>0$. For  any sufficiently small $\epsilon>0$, and every $(W,f) \in \X$,  where $\X$ is either $\WLrd$, $\DLrd$, $\SLrd$, or $\KLrd$, there exists $\phi\in S'_{[0,1]}$, step functions $[W^{\phi}]_{n}\in \mathcal{S}_{\cI_n}^2\cap \Y$ and $[f^{\phi}]_{n}\in \mathcal{S}_{\cI_n}^{1\to d}\cap \cL^{\infty}_r([0,1];\R^d)$, where $\Y$ is either $\cW_0$, $\cD_0$, $\cS_1$, $\cK_1$, respectively,
such that
\begin{equation}\label{eq:approximationProp}
  d_{\square}\Big(~(W^{\phi},f^{\phi})~,~\big([W^{\phi}]_{n},[f^{\phi}]_{n}\big)~\Big)\leq \epsilon,
\end{equation}
where $\cI_n$ is the equipartition of $[0,1]$ into $n= 2^{\lceil 2c/\epsilon^2\rceil}$ intervals.  
\end{theorem}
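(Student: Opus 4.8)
The plan is to combine the partition-version weak regularity lemma (\cref{lem:gs-reg-lem00}) with the equitizing-partition lemma (\cref{l:equipartition}), exactly as in the proof of \cite[Theorem B.8]{signal23}, but keeping track of the fact that all the objects are now $\R^d$-valued and possibly non-symmetric or $[-1,1]$-valued. First I would invoke \cref{lem:gs-reg-lem00}: for $c'>1$ to be fixed later (slightly larger than the given $c$) and $\epsilon'<\epsilon$, there is a partition $\cP_k$ of $[0,1]$ into $k=2^{\lceil 2c'/\epsilon'^2\rceil}$ sets, a step kernel $W_k\in\mathcal S^{2\to1}_{\cP_k}\cap\Y$ of the appropriate type, and a step signal $f_k\in\mathcal S^{1\to d}_{\cP_k}\cap\cL^\infty_r$, with $d_\square\big((W,f),(W_k,f_k)\big)\leq\epsilon'$. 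Then I would apply \cref{l:equipartition} twice, with some $n\gg k$: once to $W_k$ (a step \emph{function} on $[0,1]^2$, so $p=2$), obtaining an equipartition-step approximation $W_k'$ with $\norm{W_k-W_k'}_1\leq 2\norm{W_k}_\infty\,k/n\leq 2k/n$, and once to $f_k$ (here $p=1$, $d$ channels), obtaining $f_k'$ with $\norm{f_k-f_k'}_1\leq\norm{f_k}_\infty\,k/n\leq r k/n$. A subtlety: \cref{l:equipartition} produces \emph{a priori} different equipartitions for the two; I would run the construction of that lemma on the common refinement of the two relevant partitions (which has at most $k^2$ cells) so that a single equipartition $\mathcal E_n$ simultaneously supports $W_k'$ and $f_k'$, paying only a harmless change $k\mapsto k^2$ in the error bound. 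Since $\norm{\cdot}_\square\leq\norm{\cdot}_1$ (for signals by \cref{claim:cutProp}, for kernels by \cref{eq:cutnormequi}), choosing $n$ large enough makes $d_\square\big((W_k,f_k),(W_k',f_k')\big)$ as small as we like, hence by the triangle inequality $d_\square\big((W,f),(W_k',f_k')\big)\leq\epsilon'+o(1)$.

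Next I would pass to the interval equipartition $\cI_n$ via a measure-preserving rearrangement. Any equipartition $\mathcal E_n=\{Z_1,\dots,Z_n\}$ of $[0,1]$ into $n$ sets of measure $1/n$ is related to the interval equipartition $\cI_n=\{I_1,\dots,I_n\}$, $I_j=[(j-1)/n,j/n)$, by a measure-preserving bijection $\psi\in S'_{[0,1]}$ sending $Z_j$ onto $I_j$ (up to null sets; this is standard, e.g. \cite{lovasz2012large}). Setting $\phi:=\psi^{-1}\in S'_{[0,1]}$, the rearranged objects $(W_k')^\psi,(f_k')^\psi$ are step functions on $\cI_n$ of the same type (rearrangement preserves symmetry and the value range), and the cut norm is invariant under such rearrangements, so $d_\square\big((W^\phi,f^\phi),((W_k')^\psi,(f_k')^\psi)\big)=d_\square\big((W,f),(W_k',f_k')\big)$. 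Finally I would identify $(W_k')^\psi$ with $[W^\phi]_n$ and $(f_k')^\psi$ with $[f^\phi]_n$: one checks that the stepping operator (\cref{def:proj}) onto $\cI_n$ is the $L^2$-orthogonal projection onto $\mathcal S_{\cI_n}$, hence it is the best $L^1$/cut-norm step approximation up to the universal constants in \cref{claim:cutProp} and \cref{eq:cutnormequi}; therefore $d_\square\big((W^\phi,f^\phi),([W^\phi]_n,[f^\phi]_n)\big)$ is bounded by a constant multiple of $d_\square\big((W^\phi,f^\phi),((W_k')^\psi,(f_k')^\psi)\big)$, which is $\leq\epsilon'+o(1)$. (Alternatively, one avoids the projection-optimality argument and simply keeps $(W_k')^\psi$ itself as the desired step approximation; the theorem statement only asserts existence of step functions on $\cI_n$, and the projection is one valid choice, but the cleanest route is to note $[W^\phi]_n$ is at least as good an approximant as $(W_k')^\psi$.)

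It remains to verify the bookkeeping on $n$: we need $n=2^{\lceil 2c/\epsilon^2\rceil}$ intervals, and we must absorb both the $o(1)$ slack and the passage $c\mapsto c'$, $\epsilon\mapsto\epsilon'$ into "sufficiently small $\epsilon$." Since \cref{lem:gs-reg-lem00} already gives $k=2^{\lceil 2c'/\epsilon'^2\rceil}$ and we only need $n\geq$ (some polynomial in $k$, e.g. $k^2$ times a constant) to kill the $o(1)$, and since $k^2=2^{2\lceil 2c'/\epsilon'^2\rceil}$, one can choose $c'$ strictly between $1$ and $c$, then choose $\epsilon'<\epsilon$ close to $\epsilon$, so that $2\lceil 2c'/\epsilon'^2\rceil+O(1)\leq\lceil 2c/\epsilon^2\rceil$ for all $\epsilon$ below a threshold depending on $c$ and $r$; this is the same elementary estimate carried out inside the proof of \cref{lem:gs-reg-lem00} (compare \cref{eq:epsilon_opt0}), so it goes through verbatim. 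The main obstacle, such as it is, is purely organizational rather than mathematical: making sure a \emph{single} equipartition works for both the kernel and the signal simultaneously (handled by refining partitions before equitizing), and making sure that all three "type" properties — undirected/symmetric, $[0,1]$ vs.\ $[-1,1]$ range, $d$-dimensionality of the signal — are preserved at every step (stepping, equitizing, and rearranging each preserve all of them, as already noted after \cref{lem:gs-reg-lem3}). No genuinely new idea beyond \cite{signal23} is needed; the content is in checking that the scalar-symmetric arguments localize channel-by-channel and cell-by-cell.
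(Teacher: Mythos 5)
Your overall route is the same as the paper's: invoke \cref{lem:gs-reg-lem00} to get a single partition $\cP_k$ with step approximants $W_k,f_k$, equitize via \cref{l:equipartition}, control the change in $L^1$ (hence in cut norm), and pass to the interval equipartition by a measure-preserving rearrangement. Two of your deviations are fine and even clean: fixing $n:=2^{\lceil 2c/\epsilon^2\rceil}$ directly and letting the $k/n$ error be absorbed replaces the paper's intermediate-value tuning of $\alpha$ (the paper adjusts the split of the error budget so that $n(\alpha)$ hits the target exactly), and you make explicit the rearrangement $\mathcal E_n\to\cI_n$ that the paper leaves implicit; also, your fallback of simply taking $(W_k')^{\psi}$ as the step approximant is exactly what the statement needs, whereas your primary argument that the stepping operator is near-optimal should be justified by contractivity of stepping in cut norm (or \cref{lem:attain}), not by $L^2$-orthogonality alone.

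There is, however, one concrete flaw: the ``common refinement'' detour. It is unnecessary, because \cref{lem:gs-reg-lem00} already delivers \emph{one} partition $\cP_k$ supporting both $W_k\in\mathcal S^{2\to1}_{\cP_k}$ and $f_k\in\mathcal S^{1\to d}_{\cP_k}$, and the equipartition constructed in \cref{l:equipartition} depends only on $\cP_k$ and $n$, not on the function, so a single $\mathcal E_n$ automatically serves both. Worse, your workaround breaks the bookkeeping: with $k\mapsto k^2$ you need $2\lceil 2c'/\epsilon'^2\rceil+O(1)\leq\lceil 2c/\epsilon^2\rceil$, but since $c'>1$ and $\epsilon'<\epsilon$, the left-hand side exceeds $4/\epsilon^2$, while the right-hand side is below $2c/\epsilon^2+1$; for every $1<c<2$ this is impossible for small $\epsilon$, and then \cref{l:equipartition} cannot even be applied (it requires $n$ larger than the number of cells). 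Since the theorem is claimed for all $c>1$, your proof as written fails on that range. The repair is immediate: drop the refinement, use the common partition $\cP_k$ with $k=2^{\lceil 2c'/(\alpha\epsilon)^2\rceil}$, and note that one only needs $n\gtrsim (2+r)k/\epsilon$, which holds for all $c>1$ once $c'<\alpha^2 c$ and $\epsilon$ is small; with that change the argument goes through and matches the paper's.
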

\begin{proof}
Let $c=1+t>1$, $\epsilon>0$ and $0<\alpha,\beta<1$.
In \cref{lem:gs-reg-lem00}, consider the approximation error 
\[
  d_{\square}\big((W,f),(W_{k},f_k)\big)\leq \alpha\epsilon.
\] 
with a partition $\cP_{k}$ into $k= 2^{\lceil \frac{2(1+t/2)}{(\epsilon\alpha)^2}\rceil}$ sets.
Next, equalize the partition $\cP_k$ up to error $\epsilon\beta$. More accurately, in \cref{l:equipartition}, choose 
\[n=\ \lceil 2^{ \frac{2(1+0.5t)}{(\epsilon\alpha)^2} + 1}/(\epsilon\beta)\rceil,\]
and note that
\[n \geq 2^{\lceil \frac{2(1+0.5t)}{(\epsilon\alpha)^2}\rceil}\lceil1/\epsilon\beta\rceil = k\lceil1/\epsilon\beta\rceil.\]

By \cref{l:equipartition} and by the fact that the cut norm is bounded by $L_1$ norm, there exists an equipartition $\mathcal{E}_n$ into $n$ sets, and step functions $W_{n}$ and $f_{n}$ with respect to $\mathcal{E}_n$ such that
\[\norm{W_{k}-W_{n}}_{\square} \leq  2\epsilon\beta  \quad \text{and} \quad \norm{f_{k}-f_{n}}_1 \leq r \epsilon\beta.\]
Hence, by the triangle inequality,
\[d_{\square}\big((W,f),(W_{n},f_n)\big) \leq d_{\square}\big((W,f),(W_{k},f_k)\big) + d_{\square}\big((W_k,f_k),(W_{n},f_n)\big) \leq \epsilon(\alpha+(2+r)\beta).\]
In the following, restrict the choices of $\alpha$ and $\beta$ to ones that satisfy $\alpha+(2+r)\beta=1$.
Consider the function  $n:(0,1)\rightarrow \NN$ defined by
\[n(\alpha):= \lceil 2^{ \frac{4(1+0.5t)}{(\epsilon\alpha)^2} + 1}/(\epsilon\beta)\rceil = \lceil (2+r)\cdot 2^{ \frac{9(1+0.5t)}{4(\epsilon\alpha)^2} + 1}/(\epsilon(1-\alpha))\rceil.\]
Using a similar technique as in the proof of \cref{lem:gs-reg-lem00}, there is $\epsilon_0>0$ that depends on $c$ and $r$ (and hence also on $t$) such that for every  $\epsilon<\epsilon_0$ , we may choose $\alpha_0$ (that depends on $\epsilon$) which satisfies
\begin{equation}
    \label{eq:eqBound0}
    n(\alpha_0)
    =\lceil (2+r)\cdot 2^{ \frac{2(1+0.5t)}{(\epsilon\alpha_0)^2} + 1}/(\epsilon(1-\alpha_0))\rceil
    < 2^{\lceil \frac{2c}{\epsilon^2}\rceil}.
\end{equation}
Moreover, there is a choice  $\alpha_1$ which satisfies
\begin{equation}
    \label{eq:eqBound1}
    n(\alpha_1) 
    =\lceil (2+r)\cdot 2^{ \frac{2(1+0.5t)}{(\epsilon\alpha_1)^2} + 1}/(\epsilon(1-\alpha_1))\rceil
    > 2^{\lceil \frac{2c}{\epsilon^2}\rceil}.
\end{equation}
Note that the function $n:(0,1)\rightarrow \NN$
satisfies the following intermediate value property.  For  every $0<\alpha_1<\alpha_2<1$ and every $m\in \NN$ between $n(\alpha_1)$ and $n(\alpha_2)$, there is a point $\alpha\in[\alpha_1,\alpha_2]$ such that $n(\alpha)=m$. This follows the fact that $\alpha\mapsto (2+r)\cdot 2^{ \frac{2(1+0.5t)}{(\epsilon\alpha)^2} + 1}/(\epsilon(1-\alpha))$ is a continuous function. Hence, by \cref{eq:eqBound0} and \cref{eq:eqBound1}, there is a point $\alpha$ (and $\beta$ such that $\alpha+(2+r)\beta=1$) such that
\[ n(\alpha)=n= \lceil 2^{ \frac{2(1+0.5t)}{(\epsilon\alpha)^2} + 1}/(\epsilon\beta)\rceil = 2^{\lceil 2c/\epsilon^2\rceil}.\]
\end{proof}
Analogously to the derivation of \cite[Corollary~B.9]{signal23} from \cite[Corollary~B.8]{signal23}, a slight modification of the above proof allows us to replace \( n \) with the constant
\[
n = \left\lceil 2^{\frac{2c}{\epsilon^2}} \right\rceil .
\]
Consequently, for any \( n' \geq 2^{\lceil \frac{2c}{\epsilon^2} \rceil} \), the approximation property (\cref{eq:approximationProp}) holds with \( n' \) in place of \( n \). This follows by selecting a constant \( c' > c \) and applying Theorem~\ref{lem:gs-reg-lem} with \( c' \), yielding
\[
n' = \left\lceil 2^{\frac{2c'}{\epsilon^2}} \right\rceil \geq 2^{\lceil \frac{2c}{\epsilon^2} \rceil} = n .
\]
This yields the following corollary, which extends \cite[Corollary~B.9]{signal23}.
\begin{theorem} 
\label{lem:gs-reg-lem2}
Let $c>1$ and $r>0$. For any sufficiently small $\epsilon>0$, for every $n \geq 2^{\lceil \frac{2c}{\epsilon^2}\rceil}$ and every $(W,f) \in \mathcal{WL}_r^1$, there exists $\phi\in S'_{[0,1]}$, step functions $[W^{\phi}]_{n}\in \mathcal{S}_{\cI_n}^2\cap \Y$ and $[f^{\phi}]_{n}\in \mathcal{S}_{\cI_n}^{1\to d}\cap \cL^{\infty}_r([0,1];\R^d)$, where $\Y$ is either $\cW_0$, $\cD_0$, $\cS_1$, $\cK_1$, respectively, 
such that
\[
  d_{\square}\Big(~\big(W^{\phi},f^{\phi}\big)~,~\big([W^{\phi}]_{n},[f^{\phi}]_{n}\big)~\Big)\leq \epsilon,
\]
where $\cI_n$ is the equipartition of $[0,1]$ into $n$ intervals. 
\end{theorem}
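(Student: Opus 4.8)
The plan is to observe that \cref{lem:gs-reg-lem} already does essentially all the work, and that only its final intermediate-value step needs to be applied more flexibly. Recall that proof: writing $c=1+t$, it introduces a continuous function $\alpha\mapsto g(\alpha)$ on $(0,1)$ with $g(\alpha)\to+\infty$ as $\alpha\to 0^+$, sets the number of intervals to $n(\alpha)=\lceil g(\alpha)\rceil$ and $\beta=(1-\alpha)/(2+r)$, and — combining \cref{lem:gs-reg-lem00} with \cref{l:equipartition} — produces, for every $\epsilon$ below a threshold $\epsilon_0=\epsilon_0(c,r)$, a rearrangement $\phi\in S'_{[0,1]}$ together with step functions on the equipartition $\cI_{n(\alpha)}$ (of the same kernel type, by \cref{lem:gs-reg-lem0}) satisfying $d_\square\big((W^\phi,f^\phi),([W^\phi]_{n(\alpha)},[f^\phi]_{n(\alpha)})\big)\le \epsilon(\alpha+(2+r)\beta)=\epsilon$. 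The proof then invokes the intermediate-value property of $\alpha\mapsto n(\alpha)$ to land exactly on $n=2^{\lceil 2c/\epsilon^2\rceil}$, having first shown (\cref{eq:eqBound0}) that $n(\alpha_0)<2^{\lceil 2c/\epsilon^2\rceil}$ for some $\alpha_0$.

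First I would fix $c>1$, $\epsilon<\epsilon_0(c,r)$, and an arbitrary integer $n'\ge 2^{\lceil 2c/\epsilon^2\rceil}$. Since $n(\alpha_0)<2^{\lceil 2c/\epsilon^2\rceil}\le n'$ while $g$ is continuous on $(0,\alpha_0]$ with $\lim_{\alpha\to0^+}g(\alpha)=+\infty$, the intermediate value theorem gives $\alpha\in(0,\alpha_0)$ with $g(\alpha)=n'$, hence $n(\alpha)=n'$; one checks as in the original proof that $n(\alpha)>k$ (built into the definition of $g$), so \cref{l:equipartition} still applies and the $L^1$ estimates there go through unchanged. Replaying the remainder of the proof of \cref{lem:gs-reg-lem} verbatim with this $\alpha$ and the associated $\beta$ yields $\phi\in S'_{[0,1]}$ and step functions $[W^\phi]_{n'}\in\mathcal{S}^{2}_{\cI_{n'}}\cap\Y$, $[f^\phi]_{n'}\in\mathcal{S}^{1\to d}_{\cI_{n'}}\cap\mathcal{L}^\infty_r([0,1];\R^d)$ with $d_\square\le\epsilon$, which is the claim.

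Alternatively, matching the remark preceding the statement: a cosmetic tightening of the constant bookkeeping in \cref{lem:gs-reg-lem00,lem:gs-reg-lem} (there is ample slack — e.g.\ the $0.1t$ and $0.4t$ margins) lets one land the intermediate-value step on $\lceil 2^{2c/\epsilon^2}\rceil$ in place of $2^{\lceil 2c/\epsilon^2\rceil}$; then, given $n'\ge 2^{\lceil 2c/\epsilon^2\rceil}\ge\lceil 2^{2c/\epsilon^2}\rceil$, one picks $c'\ge c$ with $\lceil 2^{2c'/\epsilon^2}\rceil=n'$ (possible since $c'\mapsto 2^{2c'/\epsilon^2}$ is continuous and unbounded above) and applies that version of \cref{lem:gs-reg-lem} with $c'$. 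For this to be legitimate one needs $\epsilon<\epsilon_0(c',r)$; this holds because $\epsilon_0(\cdot,r)$ is non-decreasing — inspecting \cref{eq:epsilon_opt0}, raising $t$ increases its left-hand side and decreases its right-hand side, so enlarging $c$ only makes the admissibility inequality easier to satisfy.

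I do not anticipate a genuine obstacle here: the whole content is that a continuous, unbounded quantity (either $g(\alpha)$ or $2^{2c'/\epsilon^2}$) sweeps through every sufficiently large integer. The two places that need a line of care are (i) that $\lceil g(\alpha)\rceil$ really attains the prescribed integer $n'$ — which it does, since $g$ is continuous and takes the real value $n'$ exactly — and (ii) the monotonicity of $\epsilon_0$ in $c$ used in the second route. Everything else is a verbatim replay of \cref{lem:gs-reg-lem}.
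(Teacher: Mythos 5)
Your proposal is correct, and your first route is a slightly different (and in one respect cleaner) derivation than the paper's. The paper obtains the statement by first tightening the bookkeeping in \cref{lem:gs-reg-lem} so that the intermediate-value step lands on $n=\lceil 2^{2c/\epsilon^2}\rceil$, and then, for a given $n'\geq 2^{\lceil 2c/\epsilon^2\rceil}$, choosing $c'>c$ with $\lceil 2^{2c'/\epsilon^2}\rceil=n'$ and invoking \cref{lem:gs-reg-lem} with $c'$ — exactly your second route. Your primary route instead keeps $c$ fixed and re-runs the intermediate-value argument inside the proof of \cref{lem:gs-reg-lem} itself: since $n(\alpha_0)<2^{\lceil 2c/\epsilon^2\rceil}\leq n'$ and $n(\alpha)\to\infty$ as $\alpha\to 0^+$, one picks $\alpha$ with $n(\alpha)=n'$, notes $\alpha\epsilon<\epsilon$ is still admissible for \cref{lem:gs-reg-lem00} and $n(\alpha)\geq k\lceil 1/(\epsilon\beta)\rceil$ still holds, and replays the rest verbatim. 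What this buys is that the "sufficiently small $\epsilon$" threshold never moves: it depends only on the fixed $(c,r)$, whereas the paper's $c'$-trick tacitly requires that the theorem with $c'$ be applicable at the same $\epsilon$ even though $c'$ grows with $n'$ — a point the paper leaves implicit and which you patch in your second route by checking that enlarging $t$ increases the left side and decreases the right side of \cref{eq:epsilon_opt0}, so the admissibility threshold does not shrink as $c$ grows. Either route suffices; your first avoids the monotonicity discussion altogether.
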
 
We now prove \cref{lem:attain}, which extends \cite[Lemma B.12]{signal23}, by following the same proof strategy.
\begin{lemma}\label{lem:attain}
Let $\mathcal{P}_n=\{P_1,\ldots,P_n\}$ be a partition of $[0,1]$, and
    Let $V,R\in \mathcal{S}_{\mathcal{P}_n}^{2\to1}\cap\Y$, where $\Y$ is either $\cW_0$, $\cD_0$, $\cS_1$, or $\cK_1$. Then, the supremum of
    \begin{equation}\label{eq:inf_piece0}
        \sup_{S,T\subset [0,1]} \abs{\int_S\int_T\big(V(x,y)-R(x,y)\big)dxdy}
    \end{equation}
    is attained for $S,T$ of the form
    \[S = \bigcup_{i\in s}P_i\ , \quad T = \bigcup_{j\in t}P_j,\]
    where $t,s\subset [n]$. Similarly for any two signals $f,g \in \mathcal{S}_{\mathcal{P}_n}^{1\to d}\cap \mathcal{L}_r^{\infty}([0,1];\R^d)$, the supremum of
\begin{equation}\label{eq:inf_piece1}
    \sup_{S\subset [0,1]} \norm{\int_S\big(f(x)-g(x)\big)dx}_1
\end{equation}
    is attained for $S$ of the form
    \[S = \bigcup_{i\in s}P_i,\]
    where $s\subset [n]$. 
\end{lemma}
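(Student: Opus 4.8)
The plan is to reduce both claims to the elementary fact that a convex function on a compact convex polytope attains its maximum at a vertex, after observing that integrating a step function against the indicator of a set $S$ only sees the numbers $\mu(S\cap P_i)$. I would begin with the kernel statement. Since $\mathcal{S}^{2\to1}_{\mathcal{P}_n}$ is a vector space (\cref{def:step}), the difference $D:=V-R$ is again a step function, say $D(x,y)=\sum_{i,j\in[n]}d_{ij}\,\mathds{1}_{P_i}(x)\,\mathds{1}_{P_j}(y)$ for scalars $d_{ij}$. For arbitrary measurable $S,T\subseteq[0,1]$, putting $u_i:=\mu(S\cap P_i)\in[0,\mu(P_i)]$ and $v_j:=\mu(T\cap P_j)\in[0,\mu(P_j)]$, one has $\int_S\int_T D=\sum_{i,j}d_{ij}u_iv_j=:F(u,v)$, a bilinear form evaluated on the product of boxes $B:=\prod_i[0,\mu(P_i)]\times\prod_j[0,\mu(P_j)]$. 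Hence the supremum $M$ in \cref{eq:inf_piece0} is at most $M'':=\max_{(u,v)\in B}\abs{F(u,v)}$, the maximum being over a compact set. To get the reverse inequality with a maximizer of the required form, I would pick any maximizer $(u^*,v^*)\in B$ of $\abs{F}$; freezing $v=v^*$, the map $u\mapsto\abs{F(u,v^*)}$ is convex (absolute value of a linear map), so its maximum over $\prod_i[0,\mu(P_i)]$ — which lies between $\abs{F(u^*,v^*)}=M''$ and $M''$ — is attained at a vertex $\tilde u$, i.e.\ $\tilde u_i\in\{0,\mu(P_i)\}$ for all $i$; repeating this in the $v$ variable with $u=\tilde u$ frozen produces a vertex $\tilde v$ with $\abs{F(\tilde u,\tilde v)}=M''$. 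Setting $S:=\bigcup_{i:\tilde u_i=\mu(P_i)}P_i$ and $T:=\bigcup_{j:\tilde v_j=\mu(P_j)}P_j$ (parts $P_i$ of measure $0$ may be included or not, it is immaterial), one gets $\mu(S\cap P_i)=\tilde u_i$, $\mu(T\cap P_j)=\tilde v_j$, whence $\int_S\int_T D=F(\tilde u,\tilde v)$ and $M\ge M''$. Thus $M=M''$ and the supremum is attained at these $S,T$.

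For the signal statement the argument is the one-variable version. Write $h:=f-g=\sum_i c_i\,\mathds{1}_{P_i}$ with $c_i\in\R^d$ (again $\mathcal{S}^{1\to d}_{\mathcal{P}_n}$ is a vector space), so that for measurable $S$, $\int_S h=\sum_i c_i\,\mu(S\cap P_i)$ and $\norm{\int_S h}_1=\norm{\sum_i c_i u_i}_1$ with $u\in\prod_i[0,\mu(P_i)]$. The map $u\mapsto\norm{\sum_i c_i u_i}_1$ is convex, being a norm composed with a linear map, hence it attains its maximum $M'$ over the box at a vertex $\tilde u$ with $\tilde u_i\in\{0,\mu(P_i)\}$. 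As above, $\norm{\int_S h}_1\le M'$ for every measurable $S$, while $S:=\bigcup_{i:\tilde u_i=\mu(P_i)}P_i$ realizes $\norm{\int_S h}_1=M'$; therefore the supremum in \cref{eq:inf_piece1} equals $M'$ and is attained at an $S$ of the claimed form. This is the only place where the vector-valued nature of the signal enters, and it is absorbed by replacing scalar convexity with convexity of $\norm{\cdot}_1$.

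The one genuinely delicate point — the main obstacle, such as it is — is that $F$ is bilinear rather than linear, so the vertex principle cannot be invoked for both coordinate blocks simultaneously; the remedy is the two-stage ``freeze $v$, pass to a vertex in $u$, then freeze $u$, pass to a vertex in $v$'' step above, which works because each frozen subproblem is genuinely convex in the free block and passing to a vertex never decreases $\abs{F}$. The remaining ingredients — closure of the step-function spaces under subtraction, the identity $\int_S\mathds{1}_{P_i}=\mu(S\cap P_i)$, and the irrelevance of null parts of $\mathcal{P}_n$ — are routine. Finally, the hypothesis $V,R\in\Y$ (the symmetry and range constraints defining $\mathcal{W}_0,\mathcal{D}_0,\mathcal{S}_1,\mathcal{K}_1$) is never used and is retained only to match the context of application; the conclusion holds verbatim for any $V,R\in\mathcal{S}^{2\to1}_{\mathcal{P}_n}$ and any $f,g\in\mathcal{S}^{1\to d}_{\mathcal{P}_n}$.
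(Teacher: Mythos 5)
Your proof is correct, but it takes a genuinely different route from the paper's. The paper first invokes \cref{lem:sup_attained} (Lov\'asz's attainment result for the cut-norm supremum of a general measurable kernel) to get maximizers $S,T$, and then runs a measure-theoretic improvement step: for each part $P_j$ it compares $\int_S\int_{T\cap P_j}(V-R)$ with $\int_S\int_{P_j}(V-R)$ via the scaling factor $\mu(P_j)/\mu(T\cap P_j)$, replacing $T\cap P_j$ by $P_j$ or by $\emptyset$ according to the sign, and then repeats for $S$; the signal case is handled by a positive/negative-part argument. You instead observe that for step functions the integral only depends on $u_i=\mu(S\cap P_i)$, $v_j=\mu(T\cap P_j)$, reduce to maximizing $\abs{F(u,v)}$ for a bilinear $F$ over a product of boxes, and use the extreme-point principle for convex functions, with the two-stage ``freeze one block, move the other to a vertex'' step to handle bilinearity; the signal case is the one-block version with convexity of $\norm{\cdot}_1\circ(\text{linear})$. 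What your route buys: it is self-contained (no appeal to \cref{lem:sup_attained}, whose role is replaced by compactness of a finite-dimensional box), it proves attainment directly rather than improving an assumed maximizer, and it treats the vector-valued signal case cleanly and uniformly, whereas the paper's signal argument is only sketched. What the paper's route buys: it follows the standard graphon-theory template, reuses an existing lemma, and its local ``replace $T\cap P_j$ by $P_j$ or $\emptyset$'' step is essentially the same vertex move in disguise, so the two proofs are close in spirit even though yours dispenses with the attainment lemma. Your closing remark that the constraint $V,R\in\Y$ is never needed is also accurate and applies equally to the paper's argument.
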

\begin{proof}
    First, by \cref{lem:sup_attained},  
     the supremum of \cref{eq:inf_piece0} is attained for some $S,T\subset[0,1]$. Given the maximizers $S,T$, without loss of generality, suppose that
    \[ \int_S\int_T\big(V(x,y)-R(x,y)\big)dxdy >0.\]
    we can improve $T$ as follows. Consider the set $t\subset[n]$ such that for every $j\in t$
    \[ \int_S\int_{T\cap P_j}\big(V(x,y)-R(x,y)\big)dxdy >0.\]
    By increasing the set $T\cap P_j$ to $P_j$, we can only increase the size of the above integral. Indeed,
    \begin{align*}
      \int_S\int_{ P_j}\big(V(x,y)-R(x,y)\big)dxdy &   = \frac{\mu(P_j)}{\mu(T\cap P_j)}\int_S\int_{T\cap P_j}\big(V(x,y)-R(x,y)\big)dxdy\\
      & \geq\int_S\int_{T\cap P_j}\big(V(x,y)-R(x,y)\big)dxdy.
    \end{align*}
     Hence, by increasing $T$ to
     \[T'=\bigcup_{\{j|T\cap P_j\neq\emptyset\}} P_j,\]
     we get
     \[ \int_S\int_{T'}\big(V(x,y)-R(x,y)\big)dxdy  \geq\int_S\int_{T}\big(V(x,y)-R(x,y)\big)dxdy.\]
     We similarly replace each $T\cap P_j$ such that 
     \[ \int_S\int_{T\cap P_j}\big(V(x,y)-R(x,y)\big)dxdy \leq 0\]
     by the empty set.
     We now repeat this process for $S$, which concludes the proof for the graphon part. 

     For the signal case, let $ f= f_+ - f_-$, and suppose without loss of generality that $\norm{f}_{\square} = \norm{f}_1$. It is easy to see that the supremum of \cref{eq:inf_piece1} is attained for the support of $f_+$, which has the required form.
\end{proof}
Next, we show that the approximating graphon-signal can be constructed by taking, in each part, the average of the graphon and the signal. To this end, we define the projection of a graphon-signal onto a partition. \cref{lem:gs-reg-lem3} extends \cite[Corollary B.11]{signal23} and follows the same proof strategy. 
\theoremRegularity*
\begin{proof}
Let $[W^\phi]_n\in \mathcal{S}_{\mathcal{P}_n}\cap \Y$, where $\Y$ is either $\cW_0$, $\cD_0$, $\cS_1$, or $\cK_1$, and $[f^\phi]_{n}\in \mathcal{S}_{\cI_n}^1\cap \cL^{\infty}_r[0,1]$, be the step functions guaranteed by \cref{lem:gs-reg-lem2} with error $\epsilon/2$  and a measure preserving bijection $\phi\in S'_{[0,1]}$. We simplify notation and write $W_n:=[W^\phi]_{n}$ and $f_n:=[f^\phi]_{n}$. Without loss of generality, we suppose that $W^{\phi}=W$. Otherwise, we just denote $W'=W^{\phi}$ and replace the notation $W$ with $W'$ in the following. By \cref{lem:attain}, the infimum underlying $\norm{W_{\mathcal{P}_n} - W_n}_{\square}$ is attained for some
  \[S= \bigcup_{i\in s} P_i \ , \quad T= \bigcup_{j\in t} P_j.\]
  We now have, by definition of the projected graphon/kernel,
  \begin{align*}
     \norm{W_n - W_{\mathcal{P}_n}}_{\square} &   = \abs{\sum_{i\in s, j\in t}\int_{P_i}\int_{P_j} (W_{\mathcal{P}_n}(x,y) - W_n(x,y))dxdy}\\
     & =\abs{\sum_{i\in s, j\in t}\int_{P_i}\int_{P_j} (W(x,y) - W_n(x,y))dxdy}\\
     & = \abs{\int_{S}\int_{T} (W(x,y) - W_n(x,y))dxdy} = \norm{W_n - W}_{\square}.
  \end{align*}
  Hence, by the triangle inequality,
  \[\norm{W-W_{\mathcal{P}_n}}_{\square} \leq \norm{W-W_n}_{\square} + \norm{W_n-W_{\mathcal{P}_n}}_{\square} < 2\norm{W_n - W}_{\square}.\]
  A similar argument shows
  \[\norm{f-f_{\mathcal{P}_n}}_{\square} < 2\norm{f_n - f}_{\square}.\]
  Hence,
  \[ d_{\square}\Big(~\big(W^{\phi},f^{\phi}\big)~,~\big([W^{\phi}]_{\cI_n},[f^{\phi}]_{\cI_n}\big)~\Big)\leq 2d_{\square}\Big(~\big(W^{\phi},f^{\phi}\big)~,~\big([W^{\phi}]_{n},[f^{\phi}]_{n}\big)~\Big) \leq \epsilon.
\]
By the definition of the cut distance (\cref{eq:gs-metric}),
\[ 
\delta_{\square}\left(~\big(W,f\big)~,~\big(W,f\big)_{\cI_n}~\right)\leq \epsilon.
\]
\end{proof}
\section{The Compactness and the Covering Number of the Graphon-Signal and Kernel-Signal Spaces}\label{ap:compactness}
Here, we extend the covering number and compactness result of \cite[Theorem 3.6]{signal23} to the spaces of graphon-signals and kernel-signals with multidimensional signals. We follow the proofs of \cite[Theorem C.2]{signal23} and \cite[Theorem C.2]{signal23}. We include only the minor necessary modifications
to accommodate both kernels and graphons as well as the multidimensional nature of the signals under consideration. 

The next theorem is a straightforward extension of \cite[Theorem~C.1]{signal23}. 
It relies on the notion of a martingale. A \emph{martingale} is a sequence of random variables for which, at each step, the conditional expectation of the next value given the past equals the present value. The Martingale Convergence Theorem states that, for any bounded martingale $\{M_n\}_n$ on a probability space $X$, the sequence $\{M_n(x)\}_n$ converges for almost every $x \in X$, and the limit function is bounded (see \cite{williams_1991,folland1999real}).
\begin{theorem}\label{theorem:cutcompact}
The metric spaces $\WLrdt$, $\DLrdt$, $\SLrdt$, and $\KLrdt$ are compact. 
\end{theorem}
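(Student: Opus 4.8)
Here $\X$ ranges over the four spaces $\WLrdt,\DLrdt,\SLrdt,\KLrdt$. The plan is to prove sequential compactness of each of them, which for metric spaces is equivalent to compactness, by adapting the martingale argument of \cite{signal23} (Theorems~C.1--C.2 there), itself modelled on the Lov\'asz--Szegedy proof. Two points need modification relative to the $1$D symmetric graphon case: the graphon/kernel ``martingale'' is now valued in $[-1,1]$ rather than $[0,1]$ and need not be symmetric, and the signal ``martingale'' is $\R^d$-valued rather than scalar. The first changes nothing in the convergence step (the Martingale Convergence Theorem only needs boundedness), and the second I would handle channel by channel, recombining the channels through the norm equivalence of \cref{claim:cutProp}.

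Let $(W_m,f_m)_{m\in\NN}$ be an arbitrary sequence in $\X$ and fix $\epsilon_j\downarrow 0$. First I would invoke the weak regularity lemma (\cref{lem:gs-reg-lem0}/\cref{lem:gs-reg-lem00}), then replace the step-function approximant it produces by the conditional-expectation (stepping) projection of \cref{def:proj} onto the same partition --- which, since stepping is a cut-norm contraction, at most doubles the error --- to obtain, for each $m$ and $j$, a partition $\mathcal{P}^{m,j}$ of $[0,1]$ into at most $q_j$ parts ($q_j$ depending only on $\epsilon_j$, not on $m$) with $d_\square\big((W_m,f_m),(W_m,f_m)_{\mathcal{P}^{m,j}}\big)\le\epsilon_j$; passing to successive common refinements, which only inflates $q_j$ and the error by bounded factors, I may assume $\mathcal{P}^{m,j+1}$ refines $\mathcal{P}^{m,j}$ for every $m$. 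For a fixed $j$, a step graphon/kernel-signal on $\mathcal{P}^{m,j}$ is encoded by finitely many numbers in a compact set --- the $q_j$ part-measures in $[0,1]$, the $q_j\times q_j$ value matrix in $[-1,1]$ or $[0,1]$, and the $q_j\times d$ signal-value matrix in $[-r,r]$ --- together with the finite combinatorial pattern recording which part of $\mathcal{P}^{m,j+1}$ sits inside which part of $\mathcal{P}^{m,j}$. A diagonal extraction over $m$ then produces a subsequence along which, for every $j$ simultaneously, all of this data converges; the limit data define step graphon/kernel-signals $(\widehat W^{(j)},\widehat f^{(j)})$ on nested interval partitions of $[0,1]$, and since averaging and passing to limits preserve symmetry and value ranges, these are of the same type as the $W_m$, with $\|\widehat f^{(j)}\|_\infty\le r$.

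Next I would note that the tower property of the stepping operator, valid for each fixed $m$ before the limit, survives the limit (the value matrices converge) and makes $(\widehat W^{(j)})_j$ a bounded martingale on $[0,1]^2$ for the product filtration generated by the nested partitions, and each channel of $(\widehat f^{(j)})_j$ a bounded martingale on $[0,1]$. The Martingale Convergence Theorem then yields a.e.\ limits $W$ on $[0,1]^2$ and $f$ on $[0,1]$, still of the correct type and with $\|f\|_\infty\le r$ a.e., so $(W,f)\in\X$; boundedness and dominated convergence upgrade these to $L^1$ convergence, hence, via $\|\cdot\|_\square\le\|\cdot\|_1$ (\cref{eq:cutnormequi}) and \cref{claim:cutProp}, to $d_\square\big((\widehat W^{(j)},\widehat f^{(j)}),(W,f)\big)\to 0$ as $j\to\infty$. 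Finally, for the extracted subsequence and any $j$,
\[
\delta_\square\big((W_m,f_m),(W,f)\big)\ \le\ \epsilon_j\ +\ \delta_\square\big((W_m,f_m)_{\mathcal{P}^{m,j}},(\widehat W^{(j)},\widehat f^{(j)})\big)\ +\ d_\square\big((\widehat W^{(j)},\widehat f^{(j)}),(W,f)\big),
\]
and the middle term tends to $0$ as $m\to\infty$ because the two step objects are built from converging finite data (one matches up their parts); letting $m\to\infty$ and then $j\to\infty$ shows the subsequence converges to $(W,f)$ in $\delta_\square$. Since nothing in the argument depends on which of the four spaces one works in, all of $\WLrdt,\DLrdt,\SLrdt,\KLrdt$ are compact.

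I expect the main obstacle to be the bookkeeping that makes the diagonal extraction produce a genuine martingale: one must arrange the partitions $\mathcal{P}^{m,j}$ to be nested in $j$ with part-count uniform in $m$, encode the nesting combinatorially so it is preserved along the subsequence, and verify that the stepping/tower identity really passes to the limit (where continuity of the stepping operator on the finite data, and the harmless discarding of parts whose measure degenerates to $0$, are used). By contrast, the two features distinguishing this statement from \cite[Theorem~C.1]{signal23} --- negative and non-symmetric kernel values, and vector-valued signals --- come essentially for free: boundedness is all the Martingale Convergence Theorem needs, and \cref{claim:cutProp} replaces the scalar norm equivalence verbatim.
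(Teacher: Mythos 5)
Your proposal is correct and follows essentially the same route as the paper's proof (itself modelled on Lov\'asz--Szegedy): weak regularity plus the stepping operator, compactness of the finite data parametrizing the step approximants, a diagonal extraction, the bounded Martingale Convergence Theorem (for which boundedness is all that is needed, so negative/non-symmetric values and $d$ channels cost nothing), and a final triangle inequality in $\delta_\square$ using $\|\cdot\|_\square\le\|\cdot\|_1$. The only difference is bookkeeping: the paper invokes the equipartition version of the regularity lemma (\cref{lem:gs-reg-lem3}), composing each $(W_n,f_n)$ with a measure-preserving bijection so that every approximant is a step object over the \emph{same} fixed dyadic interval equipartitions $\mathcal{I}_{m_k}$, which are automatically nested across $k$; this eliminates the part-measure and refinement-pattern data you carry along (and which you rightly flag as the main obstacle), leaving only the value matrices in $[0,1]^{m_k^2+dm_k}$-type cubes to converge.
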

\begin{proof}
    Consider a sequence $\{[(W_n,f_n)]\}_{n\in\NN}\subset \widetilde{\X}$, where $\widetilde{\X}$ is either $\WLrdt$, $\DLrdt$, $\SLrdt$, or $\KLrdt$, with $(W_n,f_n)\in\X$ , where $\X$ is $\WLrd$, $\DLrd$, $\SLrd$, or $\KLrd$, respectively.
     For each $k$,  consider the equipartition into $m_k$ intervals $\mathcal{I}_{m_k}$, where $m_k = 2^{30\lceil (r^2+1)\rceil k^2}$.  By \cref{lem:gs-reg-lem3}, there is a measure preserving bijection $\phi_{n,k}$ (up to nullset) such that
     \[\norm{(W_n,f_n)^{\phi_{n,k}} - (W_n,f_n)^{\phi_{n,k}}_{\mathcal{I}_{m_k}}}_{\square;r} < 1/k,\]
     where $(W_n,f_n)^{\phi_{n,k}}_{\mathcal{I}_{m_k}}$ is the projection of $(W_n,f_n)^{\phi_{n,k}}$ upon $\mathcal{I}_{m_k}$ (\cref{def:proj}).
     For every fixed $k$,  each pair of functions  $(W_n,f_n)^{\phi_{n,k}}_{\mathcal{I}_{m_k}}$ is defined via $m_k^2+m_k$ values in $[0,1]$. Hence, since $[0,1]^{m_k^2+m_k}$ is compact, there is a subsequence $\{n^k_j\}_{j\in\NN}$, such that all of these values converge. Namely, for each $k$, the sequence 
     \[\{(W_{n^k_j},f_{n^k_j})^{\phi_{n^k_j,k}}_{\mathcal{I}_{m_k}}\}_{j=1}^{\infty}\]
     converges pointwise to some step graphon-signal/kernel-signal $(U_k,g_k)$ in $[\X]_{\mathcal{P}_k}$ as $j\rightarrow\infty$. 
     Note that $\mathcal{I}_{m_l}$ is a refinement of $\mathcal{I}_{m_k}$ for every $l>k$. As as a result, by the definition of projection of graphon-signals/kernel-signals to partitions, for every $l>k$, the value of      $(W_n^{\phi_{n,k}})_{\mathcal{I}_{m_k}}$ at each partition set $I_{m_k}^i\times I_{m_k}^j$ can be obtained by averaging the values of $(W_n^{\phi_{n,l}})_{\mathcal{I}_{m_{l}}}$ at all partition sets     $I_{m_{l}}^{i'}\times I_{m_{l}}^{j'}$ that are subsets of $I_{m_k}^i\times I_{m_k}^j$. A similar property applies also to the signal. Moreover, by taking limits, it can be shown that the same property holds also for $(U_k,g_k)$ and $(U_{l},g_l)$. We now see $\{(U_k,g_k)\}_{k=1}^{\infty}$ as a sequence of random variables over the standard probability space $[0,1]^2$. The above discussion shows that $\{(U_k,g_k)\}_{k=1}^{\infty}$ is a bounded martingale. By the martingale convergence theorem, the sequence $\{(U_k,g_k)\}_{k=1}^{\infty}$ converges almost everywhere pointwise to a limit $(U,g)$, which must be in $\X$.

     Lastly, we show that there exist increasing sequences $\{k_z\in\NN\}_{z=1}^{\infty}$ and  $\{t_z=n^{k_z}_{j_{z}}\}_{z\in\NN}$ such that $(W_{t_z},f_{t_z})^{\phi_{t_z,k_z}}$ converges to $(U,g)$ in cut distance. By the dominant convergence theorem, for each $z\in\NN$ there exists a $k_z$ such that 
     \[\norm{(U,g)- (U_{k_z},g_{k_z})}_1 < \frac{1}{3z}.\] 
     We choose such an increasing sequence $\{k_z\}_{z\in\NN}$ with $k_z>3z$. Similarly, for ever $z\in\NN$, there is a $j_z$ such that, with the notation $t_z=n^{k_z}_{j_{z}}$,  
     \[\norm{(U_{k_z},g_{k_z}) - (W_{t_z},f_{t_z})^{\phi_{t_z,k_z}}_{\mathcal{I}_{m_{k_z}}}}_1 < \frac{1}{3z},\]
     and we may choose the sequence $\{t_z\}_{z\in\NN}$ increasing.
     Therefore, by the triangle inequality and by the fact that the $L_1$ norm bounds the cut norm (see \cref{AP:Notation}),
     \begin{align*}
        \delta_{\square}\big((U,g),(W_{t_z},f_{t_z})\big)
     \leq & ~\norm{(U,g)-(W_{t_z},f_{t_z})^{\phi_{t_z,k_z}}}_{\square}\\
     \leq &~\norm{(U,g)- (U_{k_z},g_{k_z})}_1
     +
     \norm{(U_{k_z},g_{k_z}) - (W_{t_z},f_{t_z})^{\phi_{t_z,k_z}}_{\mathcal{I}_{m_{k_z}}}}_1\\
     & ~+\norm{(W_{t_z},f_{t_z})^{\phi_{t_z,k_z}}_{\mathcal{I}_{m_{k_z}}}-(W_{t_z},f_{t_z})^{\phi_{t_z,k_z}}}_{\square}\\
     \leq & ~\frac{1}{3z} + \frac{1}{3z} + \frac{1}{3z} \leq \frac{1}{z}.
     \end{align*} 
     
\end{proof}

The equivalence relation $\sim$, is part of a class of equivalence relations called \emph{metric identifications}. A metric identification converts a pseudometric space into a metric space, while preserving the induced topologies. This identification is captured through the quotient map $\pi:\X\mapsto\widetilde{\X}$ defined as the map $x\to[x]$.
The open sets in the pseudometric space are exactly the sets of the form $\pi^{-1}(\A)$ where $\A$ is open in the quotient space. Namely, if an open set $\A$ in the pseudometric contains $x$, it has to contain all of the other elements in $[x]$. Using these facts, we prove \cref{corollary:cutcompact}.
\begin{theorem}
\label{corollary:cutcompact}
    The pseudometric spaces $(\WLrd,\delta_{\square})$, $(\DLrd,\delta_{\square})$, $(\SLrd,\delta_{\square})$, and $(\KLrd,\delta_{\square})$ and the metric spaces $(\WLrdt,\delta_{\square})$, $(\DLrdt,\delta_{\square})$, $(\SLrdt,\delta_{\square})$, and $(\KLrdt,\delta_{\square})$ are compact.
\end{theorem}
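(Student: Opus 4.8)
The plan is to obtain both halves of the statement directly from \cref{theorem:cutcompact} together with the general facts about metric identifications recalled immediately before the statement. The assertion for the metric spaces $(\WLrdt,\delta_\square)$, $(\DLrdt,\delta_\square)$, $(\SLrdt,\delta_\square)$, $(\KLrdt,\delta_\square)$ is literally \cref{theorem:cutcompact}, so nothing new is needed there; only the pseudometric case requires an argument, and that argument is pure point-set topology.

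For the pseudometric case I would fix one of the four spaces, call it $\X$ (so $\X$ is $\WLrd$, $\DLrd$, $\SLrd$, or $\KLrd$), let $\widetilde{\X}$ be its metric identification, which is compact by \cref{theorem:cutcompact}, and let $\pi\colon\X\to\widetilde{\X}$, $x\mapsto[x]$, be the quotient map. Then I would verify compactness straight from the open-cover definition. Given an open cover $\{U_\alpha\}_{\alpha\in I}$ of $\X$, use the recalled fact that every open subset of a pseudometric space is saturated (a union of $\sim$-classes), hence of the form $\pi^{-1}(A)$ with $A$ open in $\widetilde{\X}$; write $U_\alpha=\pi^{-1}(A_\alpha)$ accordingly. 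Since $\pi$ is surjective, $\bigcup_\alpha A_\alpha=\pi\bigl(\bigcup_\alpha U_\alpha\bigr)=\pi(\X)=\widetilde{\X}$, so $\{A_\alpha\}$ covers $\widetilde{\X}$; compactness of $\widetilde{\X}$ gives indices $\alpha_1,\dots,\alpha_n$ with $\widetilde{\X}=\bigcup_{j=1}^n A_{\alpha_j}$, and then
\[
\bigcup_{j=1}^n U_{\alpha_j}=\bigcup_{j=1}^n \pi^{-1}(A_{\alpha_j})=\pi^{-1}\Bigl(\bigcup_{j=1}^n A_{\alpha_j}\Bigr)=\pi^{-1}(\widetilde{\X})=\X,
\]
so $\{U_{\alpha_j}\}_{j=1}^n$ is a finite subcover. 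This shows that each of $(\WLrd,\delta_\square)$, $(\DLrd,\delta_\square)$, $(\SLrd,\delta_\square)$, $(\KLrd,\delta_\square)$ is compact, completing the proof.

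I do not expect any genuine obstacle: the only point requiring care is precisely the one already recorded in the paragraph preceding the statement, namely that open sets in a pseudometric space are unions of equivalence classes, so that $\pi^{-1}\pi(U)=U$ for open $U$ and open covers of $\X$ correspond bijectively to open covers of $\widetilde{\X}$. One could equally well argue via sequences — lift a sequence in $\X$, pass by \cref{theorem:cutcompact} to a $\delta_\square$-convergent subsequence of its images in $\widetilde{\X}$, and note that $[x_{n_k}]\to[x]$ in $\widetilde{\X}$ exactly means $\delta_\square(x_{n_k},x)\to 0$ in $\X$ — but the open-cover argument above is shorter and avoids having to invoke sequential compactness for non-Hausdorff spaces.
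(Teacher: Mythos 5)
Your proposal is correct and follows essentially the same route as the paper: the metric case is exactly \cref{theorem:cutcompact}, and the pseudometric case is handled by the identical open-cover argument through the metric identification, using that open sets of $\X$ are saturated and correspond to open sets of $\widetilde{\X}$ under $\pi$ (you write $U_\alpha=\pi^{-1}(A_\alpha)$ where the paper works with $\pi(U_\alpha)$, which is the same thing). No gaps.
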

\begin{proof}
Let $\{U_\alpha\}_\alpha$ be an open cover of $\X$, where $\X$ denotes one of $\WLrd$, $\DLrd$, $\SLrd$, or $\KLrd$, and let $\widetilde{\X}$ denote the corresponding quotient space, that is, $\WLrdt$, $\DLrdt$, $\SLrdt$, or $\KLrdt$, respectively. Notice that $\pi^{-1}(\pi(U))=U$ for every open $U$ in the pseudometric topology, when $\pi:\X\mapsto\widetilde{X}$ is the quotient map. Moreover, every open set in the pseudometric topology is of the form $\pi^{-1}(V)$ where $V$ is open in the quotient topology. Thus, $\{\pi(U_\alpha)\}_\alpha$ is an open cover of $\widetilde{X}$. The compactness of $\widetilde{\X}$ gives us a finite cover $\{\pi(U_{\alpha_i})\}_i$. Using the mentioned facts, $\{U_{\alpha_i}\}_i$ is a finite cover of $\X$.
\end{proof}
We now extend \cite[Theorem C.2.]{signal23} following the same proof strategy.
\begin{theorem} 
\label{thm:cover}
Let $r>0$ and $c>1$. For every sufficiently small  $\epsilon>0$, each of the spaces $\WLrdt$, $\DLrdt$, $\SLrdt$, $\KLrdt$ can be covered by 
\begin{equation}
\kappa(\epsilon)=  2^{k^2}
\end{equation}
balls of radius $\epsilon$ in  cut distance, where $k=\lceil 2^{2c/\epsilon^2}\rceil$.
\end{theorem}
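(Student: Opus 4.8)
# Proof Proposal for Theorem~\ref{thm:cover}

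\textbf{Overall strategy.} The plan is to reduce the covering-number bound to counting step graphon-signals (resp.\ step kernel-signals) with respect to a fixed equipartition whose values live in a finite grid. The regularity lemma (\cref{lem:gs-reg-lem3}, or its equipartition version \cref{lem:gs-reg-lem}) guarantees that every element of the space is within cut distance $\epsilon/2$ of the projection of some rearrangement onto the equipartition $\cI_n$ with $n = 2^{\lceil 2c'/\epsilon^2\rceil}$ for a suitably chosen $c' > c$. Then a discretization of the $[0,1]$-valued (resp.\ $[-1,1]$-valued) block entries and the $[-r,r]^d$-valued signal blocks on a fine enough grid will bring us within the remaining $\epsilon/2$, and counting the grid points gives the $2^{k^2}$ bound.

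\textbf{Step 1: Reduce to step functions via the regularity lemma.} Fix $c' > c$. First I would invoke \cref{lem:gs-reg-lem2} (for the appropriate type among $\cW_0, \cD_0, \cS_1, \cK_1$) with error $\epsilon/2$ and $n = 2^{\lceil 2c'/\epsilon^2\rceil}$: for every $(W,f) \in \X$ there is $\phi \in S'_{[0,1]}$ and step functions $[W^\phi]_n, [f^\phi]_n$ on the equipartition $\cI_n$ of the correct type with $d_\square((W^\phi,f^\phi),([W^\phi]_n,[f^\phi]_n)) \le \epsilon/2$. Since $\delta_\square$ is invariant under $\phi$, it suffices to cover the (much smaller) set of step graphon-signals / step kernel-signals on $\cI_n$ of the given type by balls of radius $\epsilon/2$ in $\delta_\square$, and then the preimages of these balls cover $\widetilde\X$ with radius $\epsilon$.

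\textbf{Step 2: Discretize the block values.} A step object on $\cI_n$ is determined by an $n \times n$ array of block values in $[0,1]$ (resp.\ $[-1,1]$) — with the symmetry constraint in the graphon/symmetric-kernel cases — together with $n$ signal block values in $[-r,r]^d$. I would round each block value of $W$ to the nearest multiple of $\delta_1 := \epsilon/(4)$ in $[0,1]$, and each coordinate of each signal block to the nearest multiple of $\delta_2 := \epsilon/(4d \cdot \text{const})$, staying within the same type. Using \cref{claim:cutProp} (equivalence of the signal cut norm with the $L^1$ norm) and the inequality $\|W\|_\square \le \|W\|_1$, the resulting perturbation in $d_\square$ is at most $\epsilon/2$. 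This produces a finite $\epsilon$-net. Counting: there are at most $O(1/\epsilon)$ grid values per block entry and $O(1/\epsilon)^d$ per signal block, so at most $(C/\epsilon)^{n^2 + nd}$ net points; after absorbing the polynomial-in-$1/\epsilon$ base into the exponent $n^2$ (which dominates since $n$ is doubly exponential in $1/\epsilon^2$), and choosing $c'$ slightly larger than $c$, this is bounded by $2^{k^2}$ with $k = \lceil 2^{2c/\epsilon^2} \rceil$ for all sufficiently small $\epsilon$.

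\textbf{Main obstacle.} The delicate point is the bookkeeping in Step 2: one must check that rounding the $n^2$ block entries accumulates cut-norm error at most $\epsilon/4$ (here the key is that $\|W - W'\|_\square \le \|W - W'\|_1 \le \max_{i,j}|W_{ij} - W'_{ij}|$, so a \emph{uniform} per-entry error of $\epsilon/4$ suffices and the error does \emph{not} scale with $n$), and that the factor $2^{n^2 + nd}\cdot(\text{poly}(1/\epsilon))^{n^2+nd}$ collapses into $2^{k^2}$. Since $n = 2^{\lceil 2c'/\epsilon^2\rceil}$ and $k = \lceil 2^{2c/\epsilon^2}\rceil$, we have $n^2 \le 2^{4c'/\epsilon^2 + O(1)}$ while $k^2 \ge 2^{4c/\epsilon^2}$, so the slack from choosing any $c' \in (c, \tfrac{4}{3}c)$ — more than enough — dwarfs the polynomial and linear-in-$d$ overheads for small $\epsilon$; the signal dimension $d$ enters only through a multiplicative $d$ in the grid count, hence additively $nd\log(1/\epsilon)$ in the exponent, which is lower-order and explains why the covering number is independent of $d$. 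The symmetry constraint in the $\cW_0$ and $\cS_1$ cases only \emph{reduces} the count (to $\binom{n+1}{2}$ free block entries), so the same bound applies verbatim.
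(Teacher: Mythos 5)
Your overall architecture is the same as the paper's (regularity lemma to pass to step functions on an equipartition, then quantization of the block values and a counting argument, with symmetry only reducing the count), but there is a genuine quantitative gap in your error-budget allocation, and it is exactly the point on which the stated bound hinges. You invoke \cref{lem:gs-reg-lem2} ``with error $\epsilon/2$ and $n = 2^{\lceil 2c'/\epsilon^2\rceil}$,'' but the lemma does not give this pair: to guarantee approximation error $\epsilon/2$ it requires $n \geq 2^{\lceil 2c''/(\epsilon/2)^2\rceil} = 2^{\lceil 8c''/\epsilon^2\rceil}$ (for some $c''>1$), while with $n = 2^{\lceil 2c'/\epsilon^2\rceil}$ and $c' \in (c,\tfrac{4}{3}c)$ the best error it yields is roughly $\epsilon\sqrt{1/c'}$, which is strictly larger than $\epsilon/2$ whenever $c' < 4$ (and the theorem must hold for all $c>1$, in particular small $c$). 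If you correct $n$ to match the $\epsilon/2$ error, then $n^2 \approx 2^{16c''/\epsilon^2}$, so the logarithm of your net size is of order $n^2\log(1/\epsilon) \gg 2^{4c/\epsilon^2} \approx k^2$, and the claimed covering number $2^{k^2}$ with $k = \lceil 2^{2c/\epsilon^2}\rceil$ is not attained. The step in your ``main obstacle'' paragraph asserting $n^2 \le 2^{4c'/\epsilon^2+O(1)}$ is therefore unsupported under your own split: the slack between $c$ and $c'$ cannot absorb the factor $4$ in the exponent coming from halving $\epsilon$, because the regularity-lemma partition size depends on the error through $2^{\Theta(1/\epsilon'^2)}$, not polynomially.

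The repair is the asymmetric split used in the paper: give the regularity lemma almost the entire budget, error $\alpha\epsilon$ with $\alpha$ close to $1$ (so that $n = 2^{\lceil 2c/(\alpha\epsilon)^2\rceil}$ and $n^2 \le 2^{4c'/\epsilon^2}$ for a slightly larger $c'$), and reserve only $(1-\alpha)\epsilon$ for the quantization. This works because the quantization granularity enters the exponent of the covering number only through a factor $\log\bigl(1/((1-\alpha)\epsilon)\bigr)$ multiplying $n^2 + O(rdn)$, i.e., logarithmically, whereas the partition size enters quadratically (doubly exponentially in $1/\epsilon$); so shrinking the quantization budget is cheap while shrinking the regularity budget is catastrophic. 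With that change (and your per-entry rounding bound $\|W-W'\|_\square \le \|W-W'\|_1 \le \max_{i,j}|W_{ij}-W'_{ij}|$, together with the $L^1$ control of the signal cut norm from \cref{claim:cutProp}, which are fine), your argument becomes the paper's proof.
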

\begin{proof}
Let $1<c<c'$ and $0<\alpha<1$.
Given an error tolerance  $\alpha\epsilon>0$, using \cref{lem:gs-reg-lem}, we take the equipartition $\cI_n$ into $n=2^{\lceil \frac{2c}{\alpha^2\epsilon^2}\rceil}$ intervals, for which any graphon-signal/kernel-signal $(W,f)\in\X$, where $\X$ is either $\WLrdt$, $\DLrdt$, $\SLrdt$, or $\KLrdt$ can be approximated by some $(W,f)_{n}$ in $[\X]_{\cI_n}$,  
 up to error $\alpha\epsilon$. Consider the rectangle $\mathcal{R}_{n,r}=[-1,1]^{n^2}\times[-r,r]^{dn}$. We identify each element of $[\X]_{\cI_n}$ with an element of $\mathcal{R}_{n,r}$ using the coefficients of \cref{eq:Sd}. More accurately, the coefficients $c_{i,j}$ of the step graphon are identifies with the first $n^2$ entries of a point in $\mathcal{R}_{n,r}$, and the  the coefficients $b_{i}$ of the step signals are identifies with the last $dn$ entries of a point in $\mathcal{R}_{n,r}$.  
Now, consider the quantized rectangle $\tilde{\mathcal{R}}_{n,r}$, defined as
\[\tilde{\mathcal{R}}_{n,r} = \big((1-\alpha)\epsilon\mathbb{Z})^{2n^2+2rdn}\cap\mathcal{R}_{n,r}.\]
Note that $\tilde{\mathcal{R}}_{n}$ consists of  
\[M\leq \left\lceil \frac{1}{(1-\alpha)\epsilon}\right\rceil^{2n^2+2rdn} \leq 2^{\big(-\log\big((1-\alpha)\epsilon\big)+1\big)(2n^2+2rdn)}\]
points. Now, every point  $x\in\mathcal{R}_{n,r}$ can be approximated by a quantized version $x_Q\in \tilde{\mathcal{R}}_{n,r}$ up to error in normalized $\ell^1$ norm
\[\norm{x-x_Q}_1 := \frac{1}{M}\sum_{j=1}^M \abs{x^j-x_Q^j} \leq (1-\alpha)\epsilon,\]
where we re-index the entries of $x$ and $x_Q$ in a 1D sequence. Denote by $(W,f)_Q$ the quantized version of $(W_{n},f_{n})$, given by the above equivalence mapping between $(W,f)_{n}$ and $\mathcal{R}_{n,r}$.
We hence have
\[\norm{(W,f) - (W,f)_Q}_{\square} \leq \norm{(W,f) - (W_{n},f_{n})}_{\square} + \norm{(W_{n},f_{n})- (W,f)_Q}_{\square} \leq \epsilon.\]
Now, choose the parameter $\alpha$. Note that for any $c'>c$, there exists $\epsilon_0>0$  that depends on $c'-c$, such that for any $\epsilon<\epsilon_0$ there is a choice of $\alpha$ (close to $1$) such that
\[M\leq \left\lceil \frac{1}{(1-\alpha)\epsilon}\right\rceil^{2n^2+2rdn} \leq 2^{\big(-\log\big((1-\alpha)\epsilon\big)+1\big)(n^2+2rdn)}\leq 2^{k^2}\]
where $k=\left\lceil 2^{2c'/\epsilon^2}\right\rceil$. This is shown similarly to the proof of \cref{lem:gs-reg-lem00} and \cref{lem:gs-reg-lem}. To conclude the proof, replace the notation $c'\rightarrow c$.
\end{proof}
\cref{th:compactness} summaries the above results.
\compactnesstheorem*

\section{Graphon-Signal and Kernel-Signal Sampling Lemmas}\label{ap:sampling}
In this appendix, we prove \cref{lem:second-sampling-garphon-signal00} by following the proof of \cite[Theorem 3.7]{signal23}. We include only minor necessary modifications to accommodate the multidimensional nature of the functions under consideration. We call any measurable functions $U:[0,1]\rightarrow[-1,1]$ a kernel and denote by $\mathcal{W}_1$ the space of all kernels. The following sampling lemmas are required for the proof of \cref{lem:second-sampling-garphon-signal00}.

The proof of \cref{cor:simple_sampled} follows the approach of \cite[Lemma 10.11]{lovasz2012large} and extends \cite[Corollary D.2]{signal23}.
 \begin{lemma}
\label{cor:simple_sampled} 
    Let $W\in\cD_0$ and $k\in\NN$. Then
    \[\mathbb{E}\big(d_{\square}(\mathbb{G}(W,\Lambda),W(\Lambda))\big) \leq \frac{11}{\sqrt{k}}.\]
 \end{lemma}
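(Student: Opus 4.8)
The plan is to condition on the sample $\Lambda=(\lambda_1,\dots,\lambda_k)$ and reduce the claim to a concentration estimate for a $k\times k$ matrix with independent entries. Write $w_{ij}:=W(\lambda_i,\lambda_j)\in[0,1]$ for the entries of the weighted graph $W(\Lambda)$, and let $e_{ij}\in\{0,1\}$ be the $\mathrm{Bernoulli}(w_{ij})$ indicators that define $\mathbb{G}(W,\Lambda)$; conditionally on $\Lambda$ they are independent. Identifying both $\mathbb{G}(W,\Lambda)$ and $W(\Lambda)$, via the equipartition $\cI_k$ of $[0,1]$ into $k$ intervals, with step functions in $\cS^{2\to1}_{\cI_k}\cap\cD_0$, I would first invoke \cref{lem:attain} to see that the supremum defining $\|\mathbb{G}(W,\Lambda)-W(\Lambda)\|_{\square}$ is attained on unions of partition intervals, whence
\[
  d_{\square}\bigl(\mathbb{G}(W,\Lambda),W(\Lambda)\bigr)=\frac{1}{k^2}\max_{S,T\subseteq[k]}\Bigl|\sum_{i\in S}\sum_{j\in T}Z_{ij}\Bigr|,\qquad Z_{ij}:=e_{ij}-w_{ij},
\]
and conditionally on $\Lambda$ the $Z_{ij}$ are independent, mean zero, and lie in an interval of length $1$.

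Next I would bound the conditional expectation of the maximum on the right by the classical union bound argument. For a fixed pair $(S,T)$ the inner sum is a sum of at most $k^2$ independent centred variables of range $1$, so Hoeffding's inequality yields a tail of the form $2\exp(-2t^2/k^2)$; a union bound over the $4^k$ pairs $(S,T)$ then controls $\max_{S,T}\bigl|\sum_{S\times T}Z_{ij}\bigr|$. Integrating this tail, and splitting the integral at a threshold of order $k^{3/2}$ (where the $4^k$ entropy factor is dominated by the Gaussian decay), gives $\mathbb{E}\bigl[\max_{S,T}\bigl|\sum_{S\times T}Z_{ij}\bigr|\bigr]\le 11\,k^{3/2}$ conditionally on $\Lambda$. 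Dividing by $k^2$ gives the conditional bound $11/\sqrt{k}$; since it is uniform in $\Lambda$, averaging over $\Lambda$ completes the proof.

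The only delicate point is the constant bookkeeping in the tail integration needed to reach exactly $11$ (a smaller constant in fact suffices, but we keep $11$ to match \cite[Corollary D.2]{signal23}); otherwise the argument is precisely the Lovász sampling estimate \cite[Lemma 10.11]{lovasz2012large}. I would also record two remarks showing that the directed setting introduces nothing new: the diagonal terms $Z_{ii}$ are already among the independent summands and need no separate treatment, and since $W$ need not be symmetric all $k^2$ entries $Z_{ij}$ are independent, so there is no symmetry-induced doubling of the variance and the same union bound over $4^k$ pairs $(S,T)$ with the same constant carries through verbatim.
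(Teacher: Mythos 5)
Your proof is correct, but it takes a different route from the paper's. The paper follows Lovász's argument (\cite[Lemma 10.11]{lovasz2012large}) essentially verbatim: it considers only \emph{disjoint} pairs $S,T\subseteq[k]$ (there are $3^k$ of them), calls a pair bad when the edge-count discrepancy exceeds $\varepsilon k^2/4$, applies Chernoff--Hoeffding and a union bound to show that a bad pair exists with probability at most $e^{-\varepsilon^2k^2/100}$, uses the (unstated but standard) reduction that absence of bad disjoint pairs forces $d_\square\le\varepsilon$, and finally sets $\varepsilon=10/\sqrt{k}$ while bounding the distance by $1$ on the exceptional event to obtain $11/\sqrt{k}$. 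You instead (i) invoke \cref{lem:attain} (equivalently \cref{lem:sup_attained}) to rewrite $d_\square$ exactly as $k^{-2}\max_{S,T\subseteq[k]}\bigl|\sum_{(i,j)\in S\times T}Z_{ij}\bigr|$ over \emph{all} $4^k$ pairs, (ii) exploit that in the directed setting every one of the $k^2$ entries is an independent centred variable, so Hoeffding applies to arbitrary (not necessarily disjoint) pairs with no factor-$4$ bad-pair reduction, and (iii) bound the expectation by integrating the union-bound tail rather than by the high-probability-plus-worst-case device. The constant bookkeeping you sketch does close: with threshold $t_0$ of order $k^{3/2}$ the entropy term $4^k$ is absorbed and one gets a constant well below $11$, and small $k$ is trivial since $\max\le k^2$. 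What each approach buys: yours is self-contained within the paper's lemmas, avoids the implicit "no bad pair $\Rightarrow d_\square\le\varepsilon$" step, and yields a sharper constant; the paper's disjoint-pair formulation is the one that survives unchanged if the sample were symmetrized (where $e_{ij}=e_{ji}$ breaks full independence), whereas your shortcut leans on the fully independent entrywise sampling — which is indeed how $\mathbb{G}(W,\Lambda)$ is defined here, so your remark on this point is accurate.
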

\begin{proof}
For a graph $G = (V,E)$ and for $S,T \subseteq V$, let $e_G(S,T)$ denote the number of edges 
with one endnode in $S$ and another in $T$; edges with both endnodes in $X \cap Y$ are counted twice. For $i,j \in [k]$, define the random variable $X_{ij} = \bm{1}_{(ij \in E(\mathbb{G}(W,\Lambda)))}$.  
Let $S$ and $T$ be two disjoint subsets of $[k]$. Then the $X_{ij}$ ($i \in S, j \in T$) are independent,  
and $\mathbb{E}(X_{ij}) = w_{ij}$, where $w_{ij}$ is the corresponding weight of the $ij$ edge in $W(\Lambda)$, which gives that
\[
e_{\mathbb{G}(W(\Lambda))}(S,T) - e_{W(\Lambda)}(S,T) = \sum_{i \in S, j \in T} \big(X_{ij} - \mathbb{E}(X_{ij})\big).
\]

Let us call the pair $(S,T)$ \emph{bad}, if $|e_{\mathbb{G}(W,\Lambda)}(S,T) - e_{W(\Lambda)}(S,T)| > \varepsilon k^2 / 4$.  
The probability of this can be estimated by the Chernoff--Hoeffding Inequality:
\[
\mathbb{P}\!\left(\left|\sum_{i \in S, j \in T} (X_{ij} - \mathbb{E}(X_{ij}))\right| > \tfrac{1}{4}\varepsilon k^2 \right)
 \leq 2\exp\!\left(-\frac{\varepsilon^2 k^4}{32 |S||T|}\right) 
 \leq 2\exp\!\left(-\frac{\varepsilon^2 k^2}{32}\right).
\]

The number of disjoint pairs $(S,T)$ is $3^k$, and so the probability that there is a bad pair is bounded by  
$2 \cdot 3^k e^{-\varepsilon^2 k^2 / 32} < e^{-\varepsilon^2 k^2 / 100}$.  
If there is no bad pair, then it is easy to see that $d_\square(\mathbb{G}(W,\Lambda),W(\Lambda)) \leq \varepsilon$.  
Applying this inequality with $\varepsilon = \tfrac{10}{\sqrt{k}}$ and bounding the distance by $1$ in 
the exceptional cases, we get the inequality
\begin{equation*}
    \mathbb{E}\!\left( d_{\square}(\mathbb{G}(W,\Lambda),W(\Lambda)) \right) \leq \frac{11}{\sqrt{k}} .
\end{equation*}
This completes the proof.
\end{proof}

\begin{theorem}[\cite{signal23}, Corollary D.6, First sampling lemma - expected value version]
  \label{lem:first-sample}  
 Let $U \in \cW_1$ and $\Lambda\in [0,1]^k$ be chosen uniformly at random, where $k\geq 1$. Then
$$\mathbb{E}\abs{\|U[\Lambda]\|_{\Box} -\|U\|_{\Box}}\leq \frac{14}{k^{1/4}}.$$  
\end{theorem}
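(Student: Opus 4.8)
\emph{Plan of proof.} This is precisely \cite[Corollary~D.6]{signal23}, which in turn adapts \cite[Lemma~10.6]{lovasz2012large}; since the statement concerns only the kernel and involves no signal, the multidimensional-signal extensions that are the subject of the present paper play no role here, and the only point that requires attention is that the argument does not rely on symmetry of the kernel, so that it covers the general, possibly non-symmetric, class $\cW_1$ (rather than only symmetric kernels as in \cite{signal23}). I would establish the inequality through two one-sided estimates and then combine them.

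For the lower bound $\mathbb{E}\|U[\Lambda]\|_\square \ge \|U\|_\square - 2/k$, let $S_0,T_0\subseteq[0,1]$ attain the cut norm of $U$ (such sets exist by \cref{lem:sup_attained}), and set $S_\Lambda=\{i:\lambda_i\in S_0\}$ and $T_\Lambda=\{i:\lambda_i\in T_0\}$. Evaluating the sampled cut norm at the pair $(S_\Lambda,T_\Lambda)$ gives $\|U[\Lambda]\|_\square\ge\frac1{k^2}\bigl|\sum_{i\in S_\Lambda,j\in T_\Lambda}U(\lambda_i,\lambda_j)\bigr|$. Taking expectations, using Jensen's inequality to pull the expectation inside the absolute value, and separating the $k$ diagonal terms (whose contribution is at most $1/k$ in absolute value), the off-diagonal part averages to $\tfrac{k-1}{k}\int_{S_0\times T_0}U$, whose absolute value is $\tfrac{k-1}{k}\|U\|_\square$; since $\|U\|_\square\le1$, this yields the claim.

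The upper bound $\mathbb{E}\|U[\Lambda]\|_\square\le\|U\|_\square+c\,k^{-1/4}$ is the substantive half, and I expect it to be the main obstacle. The difficulty is that $\|U[\Lambda]\|_\square$ is a maximum over the $2^{2k}$ pairs of subsets of $[k]$: applying a Hoeffding- or McDiarmid-type concentration bound to each individual cut value and then union-bounding is vacuous, since the resulting deviation threshold exceeds $1\ge\|U\|_\square$. One must instead dominate $\|U[\Lambda]\|_\square$ by a smoother functional of the sampled kernel whose concentration around its continuous counterpart does not deteriorate with this combinatorial blow-up, bound that continuous counterpart by a constant multiple of the square root of $\|U\|_\square$ using $\|U\|_\infty\le1$, and then transfer back; the rate $k^{-1/4}$ arises from taking a square root of the $O(1/\sqrt k)$ sampling error of the intermediate quantity, and one typically proves a tail bound first and integrates it to obtain the statement in expectation. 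This is carried out in \cite[Appendix~D]{signal23} (following \cite[Lemma~10.6]{lovasz2012large}), and I would invoke it directly, observing that no step uses symmetry of $U$.

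Combining the two one-sided bounds, and bounding $|\|U[\Lambda]\|_\square-\|U\|_\square|$ trivially by $2$ in the small-$k$ regime where $k^{-1/4}$ is not yet informative, gives $\mathbb{E}|\|U[\Lambda]\|_\square-\|U\|_\square|\le 14/k^{1/4}$ for every $k\ge1$, the constant $14$ absorbing the bookkeeping.
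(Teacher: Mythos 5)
The paper gives no proof of this statement at all—it is imported verbatim from \cite{signal23} (Corollary D.6)—so your plan of ultimately invoking that corollary directly, after observing that its argument nowhere uses symmetry of $U\in\cW_1$, coincides with the paper's treatment. One caution about your closing paragraph: combining two one-sided bounds that hold only \emph{in expectation} controls $\abs{\mathbb{E}\|U[\Lambda]\|_\square-\|U\|_\square}$ but not $\mathbb{E}\abs{\|U[\Lambda]\|_\square-\|U\|_\square}$, so (as you yourself hint) one must use the high-probability one-sided estimates in the style of \cite[Lemma 10.6]{lovasz2012large} and integrate the tails, which is precisely what the cited corollary in \cite{signal23} does.
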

\begin{lemma}[\cite{signal23}, Lemma D.6., First sampling lemma for one dimensional signals]
 \label{cor:kernelsampling0}
 Let $f\in \mathcal{L}_r^{\infty}([0,1];\R)$. Then
 \[\mathbb{E}\abs{\norm{f(\Lambda)}_1 - \norm{f}_1} \leq \frac{r}{k^{1/2}}.\]
\end{lemma}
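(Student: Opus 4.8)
The plan is to recognize the right-hand quantity as the deviation of an empirical mean of i.i.d.\ bounded random variables from its expectation, and then to control that deviation by a second-moment (Jensen) argument.

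First I would unwind the definitions. The sampled signal $f(\Lambda)$ assigns the value $f(\lambda_i)$ to node $i$, so (identifying it with its induced step signal on the equipartition into $k$ intervals) its normalized $L^1$ norm is $\norm{f(\Lambda)}_1 = \tfrac1k\sum_{i=1}^k \abs{f(\lambda_i)}$. Set $Y_i := \abs{f(\lambda_i)}$. Since $f\in\mathcal{L}_r^{\infty}([0,1];\R)$, the $Y_i$ are i.i.d.\ random variables taking values in $[0,r]$, with common mean $\mathbb{E}[Y_i] = \int_0^1 \abs{f(x)}\,dx = \norm{f}_1$. Hence the quantity to be bounded is $\mathbb{E}\abs{\bar Y_k - \mathbb{E}[Y_1]}$, where $\bar Y_k := \tfrac1k\sum_{i=1}^k Y_i$.

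Next I would apply Jensen's inequality to the concave map $t\mapsto\sqrt t$, then use independence:
\[
\mathbb{E}\abs{\bar Y_k - \mathbb{E}[Y_1]} \;\leq\; \sqrt{\mathbb{E}\abs{\bar Y_k - \mathbb{E}[Y_1]}^2} \;=\; \sqrt{\Var(\bar Y_k)} \;=\; \sqrt{\tfrac{1}{k}\Var(Y_1)}.
\]
Finally, since $0\le Y_1\le r$, we have $\Var(Y_1)\le \mathbb{E}[Y_1^2]\le r^2$, which yields $\mathbb{E}\abs{\bar Y_k - \mathbb{E}[Y_1]}\le r/\sqrt{k}$, as claimed.

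There is essentially no real obstacle here: the whole argument is a one-line variance bound once the $L^1$ norm of the finite signal $f(\Lambda)$ is correctly identified with the empirical average $\tfrac1k\sum_i\abs{f(\lambda_i)}$. The only points requiring (minor) care are the normalization convention in $\norm{\cdot}_1$ for a signal on $k$ nodes, and the fact that the bound is uniform over all $f\in\mathcal{L}_r^{\infty}([0,1];\R)$ — which is automatic, since the estimate $\Var(Y_1)\le r^2$ does not depend on $f$.
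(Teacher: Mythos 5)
Your argument is correct: identifying $\norm{f(\Lambda)}_1$ with the empirical average $\tfrac1k\sum_i\abs{f(\lambda_i)}$ of i.i.d.\ variables in $[0,r]$ and applying Jensen plus the variance bound $\Var(\abs{f(\lambda_1)})\le r^2$ gives exactly the stated estimate. The present paper does not reprove this lemma but imports it from \cite{signal23}, and your second-moment argument is essentially the same proof given there, so there is nothing further to add.
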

We now Generalize \cref{cor:kernelsampling0} to multidimensional signals.
\begin{lemma}[First sampling lemma for signals]
 \label{cor:kernelsampling}
 Let $f\in \mathcal{L}_r^{\infty}([0,1];\R^d)$. Then
 \[\mathbb{E}\abs{\norm{f(\Lambda)}_1 - \norm{f}_1} \leq \frac{r}{k^{1/2}}.\]
\end{lemma}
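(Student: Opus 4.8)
The plan is to mirror, essentially verbatim, the proof of the one‑dimensional case \cref{cor:kernelsampling0}, with the scalar $\abs{f(\lambda_j)}$ replaced by the $\ell^1$ norm $\norm{f(\lambda_j)}_1$. The statement is really a concentration bound for the empirical mean of i.i.d.\ bounded random variables: the sampled signal $f(\Lambda)$, identified with the step signal it induces on the equipartition of $[0,1]$ into $k$ intervals, satisfies $\norm{f(\Lambda)}_1=\frac1k\sum_{j=1}^{k}\norm{f(\lambda_j)}_1$, and a single summand has expectation $\int_0^1\norm{f(x)}_1\,dx=\norm{f}_1$.

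Concretely, I would set $Y_j:=\norm{f(\lambda_j)}_1$ for $j\in[k]$. Since the $\lambda_j$ are i.i.d.\ uniform on $[0,1]$, the variables $Y_j$ are i.i.d., nonnegative, almost surely bounded by $r$ (by the range constraint defining $\mathcal{L}_r^\infty([0,1];\R^d)$), and have common mean $\mathbb{E}[Y_1]=\norm{f}_1$. Writing $\norm{f(\Lambda)}_1=\frac1k\sum_{j=1}^k Y_j$ and using $\mathbb{E}\abs{Z}\le(\mathbb{E}Z^2)^{1/2}$ together with independence,
\[
\mathbb{E}\abs{\norm{f(\Lambda)}_1-\norm{f}_1}
=\mathbb{E}\abs{\tfrac1k\textstyle\sum_{j=1}^k\bigl(Y_j-\mathbb{E}Y_j\bigr)}
\le\Bigl(\tfrac1k\,\mathrm{Var}(Y_1)\Bigr)^{1/2}.
\]
It then suffices to estimate $\mathrm{Var}(Y_1)\le\mathbb{E}[Y_1^2]\le r\,\mathbb{E}[Y_1]=r\norm{f}_1\le r^2$, where the middle inequality uses $0\le Y_1\le r$ and the last uses $\norm{f}_1=\int_0^1\norm{f(x)}_1\,dx\le r$. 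Inserting this bound into the display above gives $\mathbb{E}\abs{\norm{f(\Lambda)}_1-\norm{f}_1}\le r/\sqrt{k}$, as required.

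An alternative route worth mentioning is a channelwise reduction: each coordinate $f_i$ lies in $\mathcal{L}_r^\infty([0,1];\R)$, so \cref{cor:kernelsampling0} applies to each, and one recombines through $\norm{f(\Lambda)}_1=\sum_{i=1}^d\norm{f_i(\Lambda)}_1$, $\norm{f}_1=\sum_{i=1}^d\norm{f_i}_1$ and the triangle inequality; I would present the direct argument above, as it is self‑contained. I do not expect any genuine obstacle here: the proof is short, and the only two points requiring attention are the identification of $\norm{f(\Lambda)}_1$ with the empirical average $\frac1k\sum_j\norm{f(\lambda_j)}_1$ via the induction construction, and the elementary bound $\mathrm{Var}(Y_1)\le r\,\mathbb{E}[Y_1]$ for a $[0,r]$‑valued random variable. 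Finally, note that nothing in the argument refers to the kernel $W$, so it applies uniformly to graphon‑signals and kernel‑signals of every type (directed or undirected, symmetric or general).
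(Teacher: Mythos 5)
Your main argument takes a genuinely different route from the paper: the paper's proof is exactly the channelwise reduction you mention as an alternative --- it applies the one-dimensional lemma \cref{cor:kernelsampling0} to each channel $f_i$ and recombines via the definition of the signal $L^1$ norm --- whereas you re-derive the concentration estimate directly, treating $Y_j=\norm{f(\lambda_j)}_1$ as an i.i.d.\ bounded sample and using $\mathbb{E}\abs{Z}\le(\mathbb{E}Z^2)^{1/2}$. The direct route is self-contained (it does not black-box the 1D lemma, which is itself proved by the same Monte Carlo variance argument), and the individual steps --- the identification $\norm{f(\Lambda)}_1=\tfrac1k\sum_j\norm{f(\lambda_j)}_1$ via the induced step signal, $\mathbb{E}Y_1=\norm{f}_1$, and $\mathbb{E}\abs{\tfrac1k\sum_j(Y_j-\mathbb{E}Y_j)}\le\sqrt{\mathrm{Var}(Y_1)/k}$ --- are all fine.

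The one step that does not hold as written is ``$Y_1\le r$ by the range constraint'': membership in $\mathcal{L}_r^\infty([0,1];\R^d)$ bounds $\norm{f(x)}_\infty$, not $\norm{f(x)}_1$, so pointwise you only get $\norm{f(\lambda_j)}_1\le dr$, and with the unnormalized signal $L^1$ norm your argument yields $dr/\sqrt{k}$, not $r/\sqrt{k}$. This factor of $d$ is not an artifact of the method: taking $d$ identical copies of $r\,\mathds{1}_A$ with $\mu(A)=1/2$ shows the dimension-free constant is impossible for the unnormalized norm. The stated bound is correct precisely under the $1/d$-normalized $L^1$ norm for $\R^d$-valued signals, which is the convention the paper's own proof implicitly invokes when it recombines ``from the definition of the $L_1([0,1];\R^d)$ norm'' (and the same caveat applies to your channelwise alternative, where an unnormalized sum over channels again gives $dr/\sqrt{k}$). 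With $Y_j:=\tfrac1d\norm{f(\lambda_j)}_1\le\norm{f(\lambda_j)}_\infty\le r$ your proof goes through verbatim and gives exactly the stated bound, so the fix is to make the normalization explicit rather than to change the argument.
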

\begin{proof}
For each channel of $f_i$  of $f$, the following holds from \cref{cor:kernelsampling0}.
\[\mathbb{E}\abs{\norm{f_i(\Lambda)}_1 - \norm{f_i}_1} \leq \frac{r}{k^{1/2}}.\]
Thus, from the definition of the $L_1([0,1];\R^d)$ norm, we get that
\[\mathbb{E}\abs{\norm{f(\Lambda)}_1 - \norm{f}_1} \leq \frac{r}{k^{1/2}}.\]
\end{proof}
We prove \cref{lem:second-sampling-garphon-signal00,lem:second-sampling-garphon-signal02} together, by follows the proof of \cite[Theorem D.7]{signal23}.
\Samplinglemma*
\SamplinglemmaTwo*
    \begin{proof} Denote a generic error bound, given by the regularity lemma \cref{lem:gs-reg-lem} by $\epsilon$.
    For an equipartition of $[0,1]$ into $n$ intervals and $c=3/2$,
\[\lceil 3/\epsilon^2 \rceil = \log(n),\]
so
\[ 3/\epsilon^2 +1 \geq \log(n).\]
For small enough $\epsilon$, we increase the error bound in the regularity lemma to satisfy
\[ 4/\epsilon^2 > 3/\epsilon^2 +1 \geq \log(n).\]
More accurately,  for the equipartition to intervals $\mathcal{I}_n$, there is 
 $\phi'\in S'_{[0,1]}$ and a piecewise constant graphon-signal/kernel-signal $([W^{\phi}]_n,[f^{\phi}]_n)$ such that
\[
  \norm{W^{\phi'}-[W^{\phi'}]_{n}}_{\square}\leq \alpha\frac{2}{\sqrt{\log(n)}}\]
  and
  \[\norm{f^{\phi'}-[f^{\phi'}]_{n}}_{\square} \leq (1-\alpha)\frac{2}{\sqrt{\log(n)}},
\]
for some $0\leq \alpha\leq 1$. 
 By choosing an $n$ such that 
\[n= \left\lceil\frac{\sqrt{k}}{r\log(k)}\right\rceil, \] 
the error bound in the regularity lemma becomes  
\[
  \norm{W^{\phi'}-[W^{\phi'}]_{n}}_{\square}\leq \alpha\frac{2}{\sqrt{\frac{1}{2}\log(k) - \log\big(\log(k)\big)-\log(r)}}\]
  and
  \[\norm{f^{\phi'}-[f^{\phi'}]_{n}}_{\square} \leq (1-\alpha)\frac{2}{\sqrt{\frac{1}{2}\log(k) - \log\big(\log(k)\big) -\log(r)}},
\]
for some $0\leq\alpha\leq1$. 
Without loss of generality, we suppose that $\phi'$ is the identity. This only means that we work with a different representative of $[(W,f)]\in\X$, where $\X$ is either $\WLrdt$, $\DLrdt$, $\SLrdt$, or $\KLrdt$ (depending on whether $(W,f)$ is a directed/underected graphon or symmetric/general kernel), throughout the proof. We hence have
\[
  d_{\square}(W,W_{n})\leq \alpha\frac{2\sqrt{2}}{\sqrt{\log(k) - 2\log\big(\log(k)\big)-2\log(r)}}\]
  and
  \[\norm{f-f_{n}}_1 \leq (1-\alpha)\frac{4\sqrt{2}}{\sqrt{\log(k) - 2\log\big(\log(k)\big)-2\log(r)}},
\]
for some step graphon-signal/kernel-signal $(W_n,f_n)\in \Y$, where $\Y$ is either $[\WLrd]_{\mathcal{I}_n}$, $[\DLrd]_{\mathcal{I}_n}$, $[\SLrd]_{\mathcal{I}_n}$, or $[\KLrd]_{\mathcal{I}_n}$ (depending on whether $(W,f)$ is a directed/underected graphon or symmetric/general kernel).

Now, by \cref{lem:first-sample},   \[\mathbb{E}\big|d_{\Box}\big(W(\Lambda),W_n(\Lambda)\big)-d_{\Box}(W,W_n)\big|\leq \frac{14}{k^{1/4}}.\]
      Moreover, by the fact that $f-f_n\in \mathcal{L}_{2r}^{\infty}([0,1];\R^d)$, 
    \cref{cor:kernelsampling} implies that  
     $$\mathbb{E}\big|\norm{f(\Lambda) - f_n(\Lambda)}_1-\norm{f - f_n}_1\big|\leq \frac{2r}{k^{1/2}}.$$
     Therefore, 
     \begin{align*}       \mathbb{E}\Big(d_{\square}\big(W(\Lambda),W_n(\Lambda)\big)\Big) & \leq \mathbb{E}\big|d_{\square}\big(W(\Lambda),W_n(\Lambda)\big)-d_{\square}(W,W_n)\big| + d_{\square}(W,W_n)\\
         &\leq \frac{14}{k^{1/4}}+\alpha\frac{2\sqrt{2}}{\sqrt{\log(k) - 2\log\big(\log(k)\big)-2\log(r)}}. 
     \end{align*} 
     Similarly, we have     
     \begin{align*}
       \mathbb{E}\norm{f(\Lambda) -f_n(\Lambda)}_1 &  \leq \mathbb{E}\big|\norm{f(\Lambda) -f_n(\Lambda)}_1- \norm{f -f_n}_1\big| + \norm{f -f_n}_1\\
       & \leq \frac{2r}{k^{1/2}} + (1-\alpha)\frac{4\sqrt{2}}{\sqrt{\log(k) - 2\log\big(\log(k)\big)-2\log(r)}}.
     \end{align*}

Now, let $\pi_{\Lambda}$ be a sorting permutation in $[k]$, such that \[\pi_{\Lambda}(\Lambda) := \{\Lambda_{\pi_{\Lambda}^{-1}(i)}\}_{i=1}^k =(\lambda_1',\ldots,\lambda_k')\] is a sequence in a non-decreasing order.  Let $\{I_k^i = [i-1,i)/k\}_{i=1}^{k}$ be the intervals of the equipartition $\mathcal{I}_k$.  
The sorting permutation $\pi_{\Lambda}$ induces a measure preserving bijection $\phi$ that sorts the intervals $I_k^i$. Namely, we define, for every $x\in[0,1]$,
\begin{align*}
\text{if} ~x\in I^i_k, \quad \phi(x) = J_{i,\pi_{\Lambda}(i) }(x),
\end{align*}
where $J_{i,j}:I^i_k\to I^j_k$ are defined as $x\mapsto x-i/k+j/k$, for all $x\in I^i_k$. 

By abuse of notation, we denote by $W_{n}(\Lambda)$ and $f_n(\Lambda)$ the induced graphon and signal from $W_{n}(\Lambda)$ and $f_n(\Lambda)$ respectively. Hence, $W_{n}(\Lambda)^{\phi}$ and $f_{n}(\Lambda)^{\phi}$ are well defined. 
For each $i\in[n]$, let $\lambda'_{j_i}$ be the smaller point of $\Lambda'$ that is in $I_n^i$, set $j_i=j_{i+1}$ if $\Lambda'\cap I_n^i=\emptyset$, and set $j_{n+1}=k+1$. For every $i=1,\ldots,n$, we call 
\[J_i:=[j_{i}-1,j_{i+1}-1)/k\]
the $i$-th step of $W_n(\Lambda)^{\phi}$ (which can be the empty set). 
Let $a_i=\frac{j_i-1}{k}$ be the left edge point of $J_i$. Note that $a_i = \abs{\Lambda\cap [0,i/n)}/k$ is distributed binomially (up to the normalization $k$) with $k$ trials and success in probability $i/n$. 
\begin{align*}
 \norm{W_n-W_n(\Lambda)^{\phi}}_{\square} \leq &    ~\norm{W_n-W_n(\Lambda)^{\phi}}_1\\
 = &   ~\sum_i \sum_k \int_{I_n^i\cap J_i}\int_{I_n^k\cap J_k} \abs{W_n(x,y)-W_n(\Lambda)^{\phi}(x,y)}dxdy\\
 & ~+ \sum_i\sum_{j\neq i} \sum_k\sum_{l\neq k} \int_{I_n^i\cap J_j}\int_{I_n^k\cap J_l} \abs{W_n(x,y)-W_n(\Lambda)^{\phi}(x,y)}dxdy \\
 = & ~\sum_i\sum_{j\neq i} \sum_k\sum_{l\neq k} \int_{I_n^i\cap J_j}\int_{I_n^k\cap J_l} \abs{W_n(x,y)-W_n(\Lambda)^{\phi}(x,y)}dxdy \\
 = & ~\sum_i \sum_k \int_{I_n^i\setminus J_i}\int_{I_n^k\setminus J_k} \abs{W_n(x,y)-W_n(\Lambda)^{\phi}(x,y)}dxdy \\
 \leq & ~\sum_i \sum_k \int_{I_n^i\setminus J_i}\int_{I_n^k\setminus J_k} 1 dxdy  \leq 2\sum_i  \int_{I_n^i\setminus J_i} 1 dxdy \\
 \leq & ~2\sum_i  (\abs{i/n-a_i} + \abs{(i+1)/n-a_{i+1}}).
\end{align*}
 Hence,
 \begin{align*}
   \mathbb{E}\norm{W_n-W_n(\Lambda)^{\phi}}_{\square}  &  \leq 2\sum_i  (\mathbb{E}\abs{i/n-a_i} + \mathbb{E}\abs{(i+1)/n-a_{i+1}}) \\
    & \leq 2\sum_i  \Big(\sqrt{\mathbb{E}(i/n-a_i)^2} + \sqrt{\mathbb{E}\big((i+1)/n-a_{i+1}\big)^2}\Big)
 \end{align*} 
By properties of the binomial distribution, we have $\mathbb{E}(ka_i)=ik/n$, so 
\[\mathbb{E}(ik/n-ka_i)^2 = \mathbb{V}(ka_i) = k(i/n)(1-i/n).\]
As a result
\begin{align*}
   \mathbb{E}\norm{W_n-W_n(\Lambda)^{\phi}}_{\square} &   \leq 5\sum_{i=1}^n \sqrt{\frac{(i/n)(1-i/n)}{k}} \\
   & \leq 2\int_{1}^{n} \sqrt{\frac{(i/n)(1-i/n)}{k}} di,
\end{align*}
and for $n>10$,
\[\leq 5\frac{n}{\sqrt{k}}\int_{0}^{1.1} \sqrt{z-z^2} dz\leq 5\frac{n}{\sqrt{k}}\int_{0}^{1.1} \sqrt{z} dz \leq 10/3(1.1)^{3/2}\frac{n}{\sqrt{k}}<4\frac{n}{\sqrt{k}}.\]
Now, by $n= \lceil\frac{\sqrt{k}}{r\log(k)}\rceil\leq \frac{\sqrt{k}}{r\log(k)}+1$, for large enough $k$,
\[\mathbb{E}\norm{W_n-W_n(\Lambda)^{\phi}}_{\square} \leq 4 \frac{1}{r\log(k)} + 4\frac{1}{\sqrt{k}}\leq \frac{5}{r\log(k)}.\]
Similarly,
\[\mathbb{E}\norm{f_n-f_n(\Lambda)^{\phi}}_1 \leq \frac{5}{\log(k)}.\]

      Note that in the proof of \cref{lem:gs-reg-lem00}, in \cref{eq:how_large_nu}, $\alpha$ is chosen close to $1$, and especially,  for small enough $\epsilon$, $\alpha>1/2$.  Hence, for large enough $k$, 
     \begin{align*}
         \mathbb{E}(d_{\Box}(W,W(\Lambda)^{\phi}))&\leq d_{\square}(W,W_n)+\mathbb{E}\big(d_{\square}(W_n,W_n(\Lambda)^{\phi})\big)+\mathbb{E}(d_{\square}(W_n(\Lambda),W(\Lambda)))\\
         &\leq \alpha\frac{2\sqrt{2}}{\sqrt{\log(k) - 2\log\big(\log(k)\big)-2\log(r)}}
         +\frac{5}{r\log(k)} 
         +\frac{14}{k^{1/4}}\\
         &+\alpha\frac{2\sqrt{2}}{\sqrt{\log(k) - 2\log\big(\log(k)\big)-2\log(r)}} \\
         &\leq \alpha\frac{6}{\sqrt{\log(k)}},
     \end{align*} 
        Similarly, for each $k$, if $1-\alpha<\frac{1}{\sqrt{\log(k)}}$,  then
        \begin{align*}
          \mathbb{E}(d_{\square}(f, f(\Lambda)^{\phi})) \leq & ~ (1-\alpha)\frac{2\sqrt{2}}{\sqrt{\log(k) - 2\log\big(\log(k)\big)-2\log(r)}}
         +\frac{5}{\log(k)} \\
         & ~+\frac{2r}{k^{1/2}}+(1-\alpha)\frac{4\sqrt{2}}{\sqrt{\log(k) - 2\log\big(\log(k)\big)-2\log(r)}}\leq  \frac{14}{\log(k)}.    
        \end{align*} 
Moreover, for each $k$ such that $1-\alpha>\frac{1}{\sqrt{\log(k)}}$,  
if $k$ is large enough (where the lower bound of $k$ depends on $r$), we have
\[\frac{5}{\log(k)}+\frac{2r}{k^{1/2}}  < \frac{5.5}{\log(k)} < \frac{1}{\sqrt{\log(k)}}\frac{6}{\sqrt{\log(k)}}< (1-\alpha)\frac{6}{\sqrt{\log(k)}}\]
so, by $6\sqrt{2}<9$, 
\begin{align*}
  \mathbb{E}(d_{\square}(f, f(\Lambda)^{\phi})) \leq & ~  (1-\alpha)\frac{2\sqrt{2}}{\sqrt{\log(k) - 2\log\big(\log(k)\big)-2\log(r)}}
         +\frac{2}{\log(k)}\\
         & ~+\frac{2r}{k^{1/2}}+(1-\alpha)\frac{4\sqrt{2}}{\sqrt{\log(k) - 2\log\big(\log(k)\big)-2\log(r)}}\\
         \leq & ~(1-\alpha)\frac{15}{\sqrt{\log(k)}}.
\end{align*}   
        Lastly, if $(W,f)$ is a directed or undirected graphon-signal, by \cref{cor:simple_sampled},   
\begin{align*}
    \mathbb{E}\Big(d_{\square}\big(W,\mathbb{G}(W,\Lambda)^{\phi}\big)\Big) & \leq \mathbb{E}\Big(d_{\square}\big(W,W(\Lambda)^{\phi}\big)\Big)+\mathbb{E}\Big(d_{\square}\big(W(\Lambda)^{\phi},\mathbb{G}(W,\Lambda)^{\phi}\big)\Big) \\
    & \leq \alpha\frac{6}{\sqrt{\log(k)}}
         +\frac{11}{\sqrt{k}}\leq \alpha\frac{7}{\sqrt{\log(k)}} ,
\end{align*} 
As a result, for large enough $k$,
\[
 \mathbb{E}\Big(\delta_{\square}\big((W,f),(W(\Lambda),f(\Lambda))\big)\Big) < \frac{15}{\sqrt{\log(k)}},
\]
and, if $(W,f)$ is a directed or undirected graphon-signal, then
\[
\mathbb{E}\Big(\delta_{\square}\big((W,f),(\mathbb{G}(W,\Lambda),f(\Lambda))\big)\Big)  < \frac{15}{\sqrt{\log(k)}}.
\]
\end{proof}
\section{Equivalency of MPNNs on Graph-Signals, Graphon-Signals, and Kernel-Signals}
\label{Ap:Graphon-signal MPNNs}
In this appendix, we extend \cite[Lemma E.1]{signal23} to the setting of directed and undirected graphon-signals, as well as to symmetric and general kernel-signals with multidimensional signals. That is, applying an MPNN to a graph-signal and then inducing a graphon-signal/kernel-signal yields the same representation as first inducing the graphon-signal/kernel-signal and then applying the MPNN. Moreover, the same commutativity property holds for readout: applying readout directly to the graph-signal, or first inducing the graphon-signal/kernel-signal and then applying readout, produces the same result.

Given a kernel/graphon $W$ and a signal $f$, we define the \emph{message kernel} $\Phi_f:[0,1]^2\rightarrow\RR^p$ by \(\Phi_f(x,y) := \Phi(f(x),f(y)) = \sum_{k=1}^K \xi_{k,\text{rec}}(f(x))\xi_{k,\text{trans}}(f(y)).\) The aggregation of a message kernel $Q :[0,1]^2\rightarrow\RR^p$, with respect to the kernel $W\in\mathcal{W}_1$, is defined to be 
\[\mathrm{Agg}(W,\Phi_f)(x) := \int_0^1 W(x,y) Q(x,y)dy.\]
\begin{lemma}
\label{MPL-graph-graphon}
Let $\Theta$ be a single layer MPNN without readout. For every graph-signal pair $(G,\mathbf{f})$—whether it is a directed or 
undirected graph-signal, a symmetric kernel-signal, or a general kernel-signal—
with node set $[n]$ and adjacency matrix $A = \{a_{ij}\}_{i,j \in [n]}$, 
the following holds:
     \[\Theta\Big((W_G,f_\f)\Big)=(W_G,f_{\Theta\left((G,\mathbf{f})\right)}).\]
 \end{lemma}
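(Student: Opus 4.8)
The plan is a direct computation exploiting the piecewise-constant structure of the induced objects on the equipartition $\{I_k\}_{k=1}^n$, $I_k = [(k-1)/n,\,k/n)$. First I would recall from \cref{def:induced-graphon} that $W_G(x,y) = a_{ij}$ whenever $(x,y)\in I_i\times I_j$ and $f_{\mathbf f}(z) = f_i$ whenever $z\in I_i$, and observe that a single-layer MPNN without readout acts as the identity on the kernel/graphon coordinate (by \cref{definition:altGraphonFeat} only the signal coordinate is transformed). Hence it suffices to verify that the signal coordinate of $\Theta\big((W_G,f_{\mathbf f})\big)$ coincides with $f_{\Theta((G,\mathbf f))}$.

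Next, fix $i\in[n]$ and a point $x\in I_i$. By \cref{definition:altGraphonFeat}, the message aggregated at $x$ is $\int_0^1 W_G(x,y)\,\phi\big(f_{\mathbf f}(x),f_{\mathbf f}(y)\big)\,dy$, where $\phi=\phi^{(1)}$ is the layer's message function. I would split this integral over the cells $I_1,\dots,I_n$; on each $I_j$ the integrand equals the constant vector $a_{ij}\,\phi(f_i,f_j)$, and $\mu(I_j)=1/n$, so the integral is $\tfrac1n\sum_{j=1}^n a_{ij}\,\phi(f_i,f_j)$ — precisely the normalized-sum aggregation in \cref{definition:altGraphFeat}. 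Applying the update function then yields $\Theta\big((W_G,f_{\mathbf f})\big)_x = \mu^{(1)}\big(f_i,\tfrac1n\sum_{j} a_{ij}\phi(f_i,f_j)\big) = \Theta\big((G,\mathbf f)\big)_i$ for almost every $x\in I_i$, and by \cref{def:induced-graphon} the right-hand side is exactly the value taken by $f_{\Theta((G,\mathbf f))}$ on $I_i$. Since $i$ was arbitrary, the two signals agree almost everywhere, giving $\Theta\big((W_G,f_{\mathbf f})\big)=(W_G,f_{\Theta((G,\mathbf f))})$.

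Finally, I would point out that the computation uses only that $W_G\in\mathcal S^{2\to1}_{\mathcal I_n}$ and $f_{\mathbf f}\in\mathcal S^{1\to d}_{\mathcal I_n}$ (\cref{def:step}); it never invokes symmetry of $W_G$ nor any sign constraint on its entries, so the statement holds uniformly for directed and undirected graphon-signals and for symmetric and general kernel-signals. If the message-passing layer is written with separate aggregations over in- and out-neighbours, each aggregation is split over the cells in the same manner (with $a_{ij}$ replaced by $a_{ji}$ for the in-aggregation) and matches its discrete counterpart term by term. I do not expect a genuine obstacle here; the only point requiring care is the bookkeeping by which the cell measure $1/n$ converts the defining integral into the correctly normalized discrete sum.
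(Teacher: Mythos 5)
Your proposal is correct and follows essentially the same route as the paper's proof: both fix a cell $I_i$ and a point $x\in I_i$, use the piecewise-constant structure of $(W_G,f_{\mathbf f})$ to convert the aggregation integral into the normalized sum $\tfrac1n\sum_j a_{ij}\phi(f_i,f_j)$, and then apply the update function to identify the two signals cell by cell. No gaps; the remarks about the kernel coordinate and the absence of symmetry/sign assumptions are consistent with the paper's treatment.
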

\begin{proof} 
Let $\{I_i,\ldots,I_n\}$ be the equipartition of $[0,1]$ to $n$ intervals. 
 For each $j\in[n]$, let $y_j\in I_j$ be an arbitrary point. Let $i\in[n]$ and  $x\in I_i$. We have
 \begin{align*}
     {\rm Agg}(G,\Phi_{\mathbf{f}})_i & = \frac{1}{n}\sum_{j\in[n]}a_{i,j}\Phi_{\mathbf{f}}(i,j)
     =\frac{1}{n}\sum_{j\in[n]}W_G(x,y_j)\Phi_{f_{\mathbf{f}}}(x,y_j)  \\
     & =\int_0^1 W_G(x,y)\Phi_{f_{\mathbf{f}}}(x,y)  dy={\rm Agg}(W_G,\Phi_{f_{\mathbf{f}}})(x). 
 \end{align*}
Therefore, for every $i\in[n]$ and every $x\in I_i$,
\begin{align*}
 f_{\Theta(G,\mathbf{f})}(x) &  =f_{\eta\big(\mathbf{f},{\rm Agg}(G,\Phi_{\mathbf{f}})\big)}(x)  =\eta\big(\mathbf{f}_i,{\rm Agg}(G,\Phi_{\mathbf{f}})_i\big)\\
 & = \eta\big(f_{\mathbf{f}}(x), {\rm Agg}(W_G,\Phi_{f_{\mathbf{f}}})(x)\big) = \Theta(W_G,f_{\mathbf{f}})(x).
\end{align*}
\end{proof}
\section{Lipschitz Continuity and Upper Bounds of MPNNs}\label{Ap:lip}
In this section we extend \cite[Theorem 4.1]{signal23}, which states that MPNNs are Lipschitz continuous with respect to the cut metric. The proofs largely follow the approach developed by Levie, but include necessary modifications to accommodate the multidimensional nature of the functions under consideration and the additional readout layer. The other Lipschitz continuity theorems in \cite{signal23} can be extended to multidimensional signals just as easily. The variation of the following lemmas to multidimensional signals is trivial.
\subsection{Lipschitz Continuity and Boundedness of Message Passing and Update Layers}
\label{p:message} Recall that, given a kernel/graphon \(W\) and a signal \(f\), the associated 
message kernel \(\Phi_f\) and its aggregation were defined in \cref{Ap:Graphon-signal MPNNs} as  
\(\Phi_f(x,y) = \Phi(f(x),f(y))\) and 
\(\mathrm{Agg}(W,\Phi_f)(x) = \int_0^1 W(x,y)\Phi_f(x,y)\,dy.\)
\begin{lemma}[\cite{signal23}, Lemma F.1., Product rule for message kernels]
\label{lem:message_dist}
Let $\Phi_f,\Phi_g$ be the message kernels corresponding to the signals $f,g$. Then  
\[\norm{\Phi_f-\Phi_g}_{1} \leq \sum_{k=1}^K\Big(L_{\xi_{k,{\rm rec}}}\norm{\xi_{k,{\rm trans}}}_{\infty} + \norm{\xi_{k, {\rm rec}}}_{\infty}L_{\xi_{k,{\rm trans}}} \Big) \norm{f- g}_1.\]
\end{lemma}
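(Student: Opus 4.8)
The plan is to reduce the claim to a pointwise (in $(x,y)$) estimate on $\norm{\Phi_f(x,y)-\Phi_g(x,y)}_1$ and then integrate over $[0,1]^2$. Recall that by definition $\Phi_f(x,y)=\sum_{k=1}^K \xi_{k,\mathrm{rec}}(f(x))\odot\xi_{k,\mathrm{trans}}(f(y))$, so it suffices to control each summand separately. For a fixed $k$, I would insert a hybrid term and write
\[
\xi_{k,\mathrm{rec}}(f(x))\odot\xi_{k,\mathrm{trans}}(f(y)) - \xi_{k,\mathrm{rec}}(g(x))\odot\xi_{k,\mathrm{trans}}(g(y)) = \xi_{k,\mathrm{rec}}(f(x))\odot\bigl(\xi_{k,\mathrm{trans}}(f(y))-\xi_{k,\mathrm{trans}}(g(y))\bigr) + \bigl(\xi_{k,\mathrm{rec}}(f(x))-\xi_{k,\mathrm{rec}}(g(x))\bigr)\odot\xi_{k,\mathrm{trans}}(g(y)).
\]

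Next I would apply the elementwise Hölder inequality $\norm{a\odot b}_1\le\norm{a}_\infty\norm{b}_1$ (which is just $\sum_i|a_ib_i|\le(\max_i|a_i|)\sum_i|b_i|$) to each of the two terms, bound $\norm{\xi_{k,\mathrm{rec}}(f(x))}_\infty$ and $\norm{\xi_{k,\mathrm{trans}}(g(y))}_\infty$ by the sup-norms $\norm{\xi_{k,\mathrm{rec}}}_\infty$, $\norm{\xi_{k,\mathrm{trans}}}_\infty$ — finite since the signals take values in a bounded range on which the $\xi$'s, being continuous, are bounded — and bound the two differences using the Lipschitz property: $\norm{\xi_{k,\mathrm{trans}}(f(y))-\xi_{k,\mathrm{trans}}(g(y))}_1\le L_{\xi_{k,\mathrm{trans}}}\norm{f(y)-g(y)}_1$, and similarly for $\mathrm{rec}$. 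Summing over $k$ gives the pointwise bound
\[
\norm{\Phi_f(x,y)-\Phi_g(x,y)}_1 \le \sum_{k=1}^K\Bigl(\norm{\xi_{k,\mathrm{rec}}}_\infty L_{\xi_{k,\mathrm{trans}}}\,\norm{f(y)-g(y)}_1 + L_{\xi_{k,\mathrm{rec}}}\norm{\xi_{k,\mathrm{trans}}}_\infty\,\norm{f(x)-g(x)}_1\Bigr).
\]
Finally I would integrate both sides over $(x,y)\in[0,1]^2$. Since $\int_{[0,1]^2}\norm{f(y)-g(y)}_1\,dx\,dy=\int_0^1\norm{f(y)-g(y)}_1\,dy=\norm{f-g}_1$ (the $x$-integration contributes the factor $\mu([0,1])=1$), and likewise for the other term, this yields exactly $\norm{\Phi_f-\Phi_g}_1\le\sum_{k=1}^K\bigl(L_{\xi_{k,\mathrm{rec}}}\norm{\xi_{k,\mathrm{trans}}}_\infty+\norm{\xi_{k,\mathrm{rec}}}_\infty L_{\xi_{k,\mathrm{trans}}}\bigr)\norm{f-g}_1$, as claimed.

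There is no genuine obstacle here; the only point requiring care is consistency of norm conventions — the Lipschitz constants $L_{\xi_{k,\gamma}}$ must be read with respect to the $\ell^1$ norms on both $\R^d$ and $\R^p$ (matching the paper's definition of $\norm{\cdot}_1$ for signals), and $\norm{\xi_{k,\gamma}}_\infty$ should be understood as the supremum of $\norm{\xi_{k,\gamma}(\cdot)}_\infty$ over the relevant bounded range of signal values, which is finite by continuity. The add-subtract decomposition together with the elementwise Hölder step are the load-bearing ingredients; everything else is routine bookkeeping.
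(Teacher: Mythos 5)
Your proof is correct and is essentially the same argument as the one in \cite{signal23} (this paper only cites Lemma F.1 rather than reproving it): the add--subtract hybrid decomposition, the elementwise H\"older bound $\norm{a\odot b}_1\le\norm{a}_\infty\norm{b}_1$ with Lipschitz and sup-norm estimates over the bounded signal range, and integration over $[0,1]^2$. Your remark on reading the Lipschitz constants and $\norm{\xi_{k,\gamma}}_\infty$ with respect to the paper's norm conventions is exactly the right point of care, and matches how these quantities are used later (e.g.\ the bound $\norm{\xi_{k,\gamma}\circ f}_\infty\le Lr+B$).
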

\begin{lemma}
\label{lem:message1}
Lemma F.2 and Lemma F.7 in \cite{signal23} are originally formulated for graphons. However, since the proof applies equally to kernels, we rephrase it accordingly.
[\cite{signal23}, Lemma F.2]
    Let $Q,V$ be two message kernels, and $W\in\cW_1$. Then
    \[\norm{\mathrm{Agg}(W,Q) - \mathrm{Agg}(W,V)}_1 \leq \norm{Q-V}_1. \]
\end{lemma}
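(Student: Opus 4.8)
\textbf{Proof proposal for \cref{lem:message1}.}
The plan is to reduce the statement to a pointwise inequality on $[0,1]$ followed by integration, exactly mirroring the scalar argument in \cite{signal23} but keeping the vector-valued bookkeeping honest. First I would fix $W\in\cW_1$ and two message kernels $Q,V:[0,1]^2\to\RR^p$, and write, for almost every $x\in[0,1]$,
\[
\mathrm{Agg}(W,Q)(x) - \mathrm{Agg}(W,V)(x) = \int_0^1 W(x,y)\bigl(Q(x,y)-V(x,y)\bigr)\,dy,
\]
where the integral is taken componentwise. Taking the $\ell^1$ norm on $\RR^p$ and pulling it inside the integral (triangle inequality for integrals applied in each of the $p$ coordinates and then summed), I get
\[
\bigl\|\mathrm{Agg}(W,Q)(x) - \mathrm{Agg}(W,V)(x)\bigr\|_1 \leq \int_0^1 \bigl|W(x,y)\bigr|\,\bigl\|Q(x,y)-V(x,y)\bigr\|_1\,dy \leq \int_0^1 \bigl\|Q(x,y)-V(x,y)\bigr\|_1\,dy,
\]
using $|W(x,y)|\leq 1$ since $W$ takes values in $[-1,1]$ (this is precisely where the rephrasing from graphons to kernels matters: the bound $\|W\|_\infty\le 1$ still holds).

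Then I would integrate this inequality over $x\in[0,1]$ and apply Tonelli's theorem to interchange the order of integration, yielding
\[
\norm{\mathrm{Agg}(W,Q) - \mathrm{Agg}(W,V)}_1 = \int_0^1 \bigl\|\mathrm{Agg}(W,Q)(x) - \mathrm{Agg}(W,V)(x)\bigr\|_1\,dx \leq \int_0^1\!\!\int_0^1 \bigl\|Q(x,y)-V(x,y)\bigr\|_1\,dy\,dx = \norm{Q-V}_1,
\]
where the last equality is the definition of the $L^1$ norm of the $\RR^p$-valued function $Q-V$ on $[0,1]^2$. This completes the argument.

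There is essentially no hard part here; the only things to be careful about are (i) that all norms in sight are the $\ell^1$ norms on the relevant Euclidean spaces, consistent with the $L^1$-norm conventions fixed in \cref{AP:Notation}, so that the triangle inequality is applied coordinatewise and then summed rather than invoking an operator-norm bound on the scalar $W(x,y)$; and (ii) that the measurability and integrability needed to apply Tonelli are guaranteed because $Q,V$ are message kernels (hence measurable and bounded, as finite sums of products of bounded Lipschitz functions composed with bounded signals) and $W$ is measurable and bounded. If one wants to be fully explicit, one can note that $\|Q-V\|_1<\infty$ since both are bounded on $[0,1]^2$, which justifies all the interchanges.
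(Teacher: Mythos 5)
Your proof is correct and is essentially the standard argument behind \cite[Lemma F.2]{signal23}, which this paper simply cites rather than reproves: move the $\ell^1$ norm inside the integral, use $|W(x,y)|\le 1$, and swap the order of integration. You also correctly pinpoint that the only property of $W$ used is $\|W\|_\infty\le 1$, which is exactly why the graphon statement transfers verbatim to kernels in $\cW_1$.
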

\begin{lemma}[\cite{signal23}, Lemma F.7]
    \label{Lem:graphon_dist}
    Let $f\in \mathcal{L}_r^{\infty}([0,1],\R^d)$ , $W,V\in\cW_1$, and  suppose, for every $x\in[0,1]$ and $k=1,\ldots, K$, that $\norm{\xi_{k,{\rm rec}}(f(x))}_{\infty},\norm{\xi_{k,{\rm trans}}(f(x))}_{\infty}\leq \rho$. Then
    \[\norm{\mathrm{Agg}(W,\Phi_f) - \mathrm{Agg}(V,\Phi_f)}_{\square}   \leq 4K\rho^2\norm{W-V}_{\square}.\] 
     Moreover, if $\xi_{k,{\rm rec}}$ and $\xi_{k,{\rm trans}}$ are non-negatively valued for every $k=1,\ldots,K$, then
    \[\norm{\mathrm{Agg}(W,\Phi_f) - \mathrm{Agg}(V,\Phi_f)}_{\square} \leq  K\rho^2\norm{W-V}_{\square}.\]
\end{lemma}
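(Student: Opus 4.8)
The plan is to unfold the signal cut norm channel by channel, reduce each channel to a scalar bilinear form tested against $W-V$, and invoke the classical estimate relating such forms to the kernel cut norm. Write $p$ for the edge‑feature dimension, so that $\Phi_f$ is $\mathbb{R}^p$‑valued and $\mathrm{Agg}(W,\Phi_f)-\mathrm{Agg}(V,\Phi_f)\colon[0,1]\to\mathbb{R}^p$ is a signal. Expanding $\Phi_f$ coordinatewise, for $i\in[p]$ we have $\bigl(\Phi_f(x,y)\bigr)_i=\sum_{k=1}^K a_{k,i}(x)\,b_{k,i}(y)$, where $a_{k,i}(x):=\bigl(\xi_{k,\mathrm{rec}}(f(x))\bigr)_i$ and $b_{k,i}(y):=\bigl(\xi_{k,\mathrm{trans}}(f(y))\bigr)_i$ are measurable and bounded in absolute value by $\rho$ (this is exactly the hypothesis on the $\ell^\infty$ norms). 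Since $\mathrm{Agg}(W,\Phi_f)(x)-\mathrm{Agg}(V,\Phi_f)(x)=\int_0^1 (W-V)(x,y)\,\Phi_f(x,y)\,dy$, \cref{definition:multidimCutNorm} with $d=p$, the triangle inequality in $\ell^1(\mathbb{R}^p)$, and the triangle inequality over the $K$ summands give
\[
\bigl\|\mathrm{Agg}(W,\Phi_f)-\mathrm{Agg}(V,\Phi_f)\bigr\|_\square \;\le\; \frac1p\sum_{i=1}^p\sum_{k=1}^K\ \sup_{S\subseteq[0,1]}\ \Bigl|\int_S\!\int_0^1 (W-V)(x,y)\,a_{k,i}(x)\,b_{k,i}(y)\,dy\,dx\Bigr|.
\]

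For each summand, absorb the restriction to $S$ into the first factor, replacing $a_{k,i}$ by $a_{k,i}\mathbf{1}_S$ (still bounded by $\rho$), so the integral runs over $[0,1]^2$. It then suffices to establish the scalar estimate: for every measurable $H\colon[0,1]^2\to\mathbb{R}$ and measurable $a,b\colon[0,1]\to[-\rho,\rho]$,
\[
\Bigl|\int_{[0,1]^2}H(x,y)\,a(x)\,b(y)\,dx\,dy\Bigr|\;\le\;4\rho^2\,\|H\|_\square,
\]
with $4$ improved to $1$ when $a,b\ge0$. This is the one substantive step. Its proof is the standard one: by homogeneity take $\rho=1$; write $a=a_+-a_-$, $b=b_+-b_-$ with $a_\pm,b_\pm\in[0,1]$, and expand into four terms of the form $\pm\int\int H\,g\,h$ with $g,h\in[0,1]$. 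For such a term, fixing $h$ makes $g\mapsto\int\int H\,g\,h=\int g(x)\bigl(\int_0^1 H(x,y)h(y)\,dy\bigr)dx$ a linear functional of $g$, whose extreme values over $\{0\le g\le1\}$ are attained at indicators of superlevel sets of $x\mapsto\int_0^1 H(x,y)h(y)\,dy$; fixing such an indicator $g=\mathbf{1}_A$ then makes the expression linear in $h$, with extreme values attained at an indicator $h=\mathbf{1}_B$, so that the term is bounded by $|\int_A\int_B H|\le\|H\|_\square$ by definition of the cut norm. Summing the four terms gives the constant $4$; when $\xi_{k,\mathrm{rec}}$ and $\xi_{k,\mathrm{trans}}$ are nonnegative‑valued only the single term with $a_+,b_+$ survives, giving the constant $1$.

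Plugging this scalar estimate with $H=W-V$ back into the displayed bound yields
\[
\bigl\|\mathrm{Agg}(W,\Phi_f)-\mathrm{Agg}(V,\Phi_f)\bigr\|_\square\;\le\;\frac1p\cdot p\cdot K\cdot 4\rho^2\,\|W-V\|_\square\;=\;4K\rho^2\,\|W-V\|_\square,
\]
and $K\rho^2\|W-V\|_\square$ in the nonnegative case; note that the $\tfrac1p$ normalization built into \cref{definition:multidimCutNorm} is precisely what cancels the sum over the $p$ channels, so the bound is independent of the edge‑feature dimension. The value range of $W,V$ and their (a)symmetry are never used, so the statement holds verbatim for directed or symmetric graphons and kernels. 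The main obstacle is the scalar bilinear estimate above — specifically the reduction of the two linear optimizations to indicator sets and the sign bookkeeping of the four‑term split — but this argument is standard and is exactly the one underlying \cite[Lemma F.7]{signal23}; the only genuinely new content here is the coordinatewise decomposition and the observation that the normalization absorbs the dimension factor.
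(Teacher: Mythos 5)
Your proof is correct and follows essentially the same route as the cited result: the paper does not re-prove this lemma but quotes \cite[Lemma F.7]{signal23}, whose argument is exactly your positive/negative-part splitting of the receiver and transmitter factors combined with the bilinear extremal reduction to indicator sets, yielding the constants $4K\rho^2$ and $K\rho^2$. Your only addition, the coordinatewise decomposition of the $\mathbb{R}^p$-valued message kernel with the $1/p$ normalization of the signal cut norm absorbing the channel sum, is precisely the ``trivial'' multidimensional adaptation the paper alludes to, so nothing further is needed.
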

Now, following the proof of \cite[Corollary F.8]{signal23}, we extend \cite[Corollary F.11]{signal23} to multidimensional signal.
\begin{corollary}
\label{cor:MPL_gen}
Suppose that for every ${\rm y}\in\{{\rm rec},{\rm trans}\}$ and $k=1,\ldots,K$
\[\norm{\xi_{k, y}(0)}_\infty \leq B, \quad L_{\xi_{k,y}} < L.\] 
     Then, for every $(W,f),(V,g)\in\WLrd$,
    \[\norm{{\rm Agg}(W, \Phi_f) - {\rm Agg}(V, \Phi_g)}_{\square}\leq 4Kd(L^2r + LB) \norm{f- g}_{\square} + 4K(Lr+B)^2\norm{W-V}_{\square}.\]
\end{corollary}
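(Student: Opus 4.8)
The statement is Corollary~\ref{cor:MPL_gen}, an estimate on $\norm{{\rm Agg}(W,\Phi_f)-{\rm Agg}(V,\Phi_g)}_\square$ in terms of $\norm{f-g}_\square$ and $\norm{W-V}_\square$. The natural approach is the standard ``add and subtract an intermediate term'' argument: write
\[
{\rm Agg}(W,\Phi_f)-{\rm Agg}(V,\Phi_g) = \big({\rm Agg}(W,\Phi_f)-{\rm Agg}(W,\Phi_g)\big) + \big({\rm Agg}(W,\Phi_g)-{\rm Agg}(V,\Phi_g)\big),
\]
and bound the two pieces separately, then combine via the triangle inequality for $\norm{\cdot}_\square$.

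\textbf{First term (varying the signal, fixed kernel).} For this piece I would first pass from the cut norm to the $L^1$ norm using the inequality $\norm{\cdot}_\square\le\norm{\cdot}_1$ (from \cref{AP:Notation}, \cref{eq:cutnormequi}), then apply \cref{lem:message1} (Lemma F.2) to get $\norm{{\rm Agg}(W,\Phi_f)-{\rm Agg}(W,\Phi_g)}_1\le\norm{\Phi_f-\Phi_g}_1$, and finally invoke \cref{lem:message_dist} (the product rule for message kernels) to bound $\norm{\Phi_f-\Phi_g}_1$ by $\sum_{k=1}^K\big(L_{\xi_{k,{\rm rec}}}\norm{\xi_{k,{\rm trans}}}_\infty+\norm{\xi_{k,{\rm rec}}}_\infty L_{\xi_{k,{\rm trans}}}\big)\norm{f-g}_1$. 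The remaining work is to bound each $\norm{\xi_{k,y}}_\infty$ on the range of signal values: since $f$ takes values in the ball of radius $r$ (in $\ell^\infty$, hence also controlled in the relevant norm) and $\xi_{k,y}$ is $L$-Lipschitz with $\norm{\xi_{k,y}(0)}_\infty\le B$, one gets $\norm{\xi_{k,y}(f(x))}_\infty\le Lr+B$ pointwise. Plugging in, the $k$-sum gives a factor $K\cdot 2L(Lr+B)=2K(L^2r+LB)$ times $\norm{f-g}_1$. To convert $\norm{f-g}_1$ back to $\norm{f-g}_\square$ I would use the norm-equivalence \cref{claim:cutProp}, which costs a factor of at most $2d$; together with the earlier factor this yields the coefficient $4Kd(L^2r+LB)$ in front of $\norm{f-g}_\square$. (The bookkeeping of constants here mirrors the one-dimensional Corollary F.8/F.11 of \cite{signal23}, with the extra $d$ coming precisely from \cref{claim:cutProp}.)

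\textbf{Second term (varying the kernel, fixed signal).} This is exactly the content of \cref{Lem:graphon_dist} (Lemma F.7): with $\rho := Lr+B$ (the pointwise bound on $\norm{\xi_{k,{\rm rec}}(g(x))}_\infty,\norm{\xi_{k,{\rm trans}}(g(x))}_\infty$ just derived), we obtain $\norm{{\rm Agg}(W,\Phi_g)-{\rm Agg}(V,\Phi_g)}_\square\le 4K\rho^2\norm{W-V}_\square=4K(Lr+B)^2\norm{W-V}_\square$. Adding the two bounds gives the claimed inequality.

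\textbf{Main obstacle.} There is no deep difficulty: every ingredient is already available from the cited lemmas. The one place demanding care is the constant tracking in the first term --- specifically making sure the passage $\norm{\cdot}_\square\to\norm{\cdot}_1\to\norm{\cdot}_\square$ on the signal side, combined with the multidimensional norm equivalence \cref{claim:cutProp}, produces exactly the factor $4Kd$ (and not something slightly larger or smaller), and that the pointwise sup bounds $\norm{\xi_{k,y}(f(x))}_\infty\le Lr+B$ are applied with the correct norm (recall $f\in\mathcal{L}^\infty_r$ is bounded in $\ell^\infty$, while the Lipschitz constants are stated with respect to whichever norm the message functions use). Getting these conventions to line up so that the stated constants are genuinely correct is the only subtlety.
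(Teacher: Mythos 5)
Your proposal is correct and takes essentially the same route as the paper: the identical triangle-inequality splitting, with \cref{lem:message_dist} and \cref{lem:message1} (together with \cref{eq:cutnormequi} and \cref{claim:cutProp}) handling the signal term, and \cref{Lem:graphon_dist} applied with $\rho = Lr+B$ handling the kernel term. Your constant bookkeeping, yielding $4Kd(L^2r+LB)$, in fact matches the corollary as stated, while the paper's own final display writes $4Kd(L^3r+L^2B)$, an apparent typo carrying an extra factor of $L$.
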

\begin{proof}
Let $(W,f),(V,g)\in\WLrd$. By \cref{eq:cutnormequi}, \cref{claim:cutProp}, \cref{lem:message_dist}, \cref{lem:message1} and \cref{Lem:graphon_dist},
\begin{align*}
     \norm{{\rm Agg}(W, \Phi_f) - {\rm Agg}(V, \Phi_g)&}_{\square} 
     \leq \norm{{\rm Agg}(W, \Phi_f) - {\rm Agg}(W, \Phi_g)}_{\square} + \norm{{\rm Agg}(W, \Phi_g) - {\rm Agg}(V, \Phi_g)}_{\square} \\
    & \leq \sum_{k=1}^K\Big(L_{\xi_r^k}\norm{\xi_t^k}_{\infty} + \norm{\xi_r^k}_{\infty}L_{\xi_t^k} \Big) \norm{f- g}_1 + 4K(Lr+B)^2\norm{W-V}_{\square} \\
    & \leq 4Kd(L^3r + L^2B) \norm{f- g}_{\square} + 4K(Lr+B)^2\norm{W-V}_{\square}.
\end{align*}
\end{proof}
\begin{corollary}
\label{cor:MPL_gen2}
Suppose that for every $y\in\{{\rm rec},{\rm trans}\}$ and $k=1,\ldots,K$
\[\norm{\mu(0,0)}_\infty,\ \norm{\xi_{{k,y}}(0)}_\infty \leq B, \quad L_{\mu},\  L_{\xi_{k,y}} < L.\] 
     Then, for every $(W,f),(V,g)\in\KLrd$,
\begin{align*}
\norm{\mu\left(f,{\rm Agg}(W, \Phi_f)
\right) &- \mu \left(f,{\rm Agg}(V, \Phi_g)
\right)}_{\square}\\&\leq (4Kd(L^2r + L^2B)+L) \norm{f- g}_{\square} + 4K(Lr+LB)^2\norm{W-V}_{\square}.
\end{align*}
\end{corollary}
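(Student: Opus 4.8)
The plan is to prove \cref{cor:MPL_gen2} by combining the Lipschitz estimate for the aggregated message kernel from \cref{cor:MPL_gen} with the Lipschitz property of the update function $\mu$, and then translating between the relevant norms via the norm-equivalence \cref{claim:cutProp}. First, I would observe that since $\mu$ is $L$-Lipschitz (with respect to the $\ell^\infty$ norm on its arguments, as in the hypotheses of \cref{theorem:Lip}), we have pointwise
\[
\norm{\mu\big(f(x),{\rm Agg}(W,\Phi_f)(x)\big)-\mu\big(f(x),{\rm Agg}(V,\Phi_g)(x)\big)}_\infty \le L\,\norm{{\rm Agg}(W,\Phi_f)(x)-{\rm Agg}(V,\Phi_g)(x)}_\infty,
\]
because the first argument $f(x)$ is identical in both terms, so only the second coordinate contributes. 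Integrating over $x\in[0,1]$ gives an $L^1$ bound, namely $\norm{\mu(f,{\rm Agg}(W,\Phi_f))-\mu(f,{\rm Agg}(V,\Phi_g))}_1 \le L\,\norm{{\rm Agg}(W,\Phi_f)-{\rm Agg}(V,\Phi_g)}_1$.

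The second step is to pass from the $L^1$ bound back to the cut norm. By \cref{claim:cutProp} (or \cref{eq:cutnormequi} for the relevant inequalities) the cut norm of a signal is dominated by its $L^1$ norm, so $\norm{\mu(f,{\rm Agg}(W,\Phi_f))-\mu(f,{\rm Agg}(V,\Phi_g))}_\square \le L\,\norm{{\rm Agg}(W,\Phi_f)-{\rm Agg}(V,\Phi_g)}_1$. However, \cref{cor:MPL_gen} is phrased in terms of the cut norm of the aggregated message, not its $L^1$ norm, so I need to be slightly careful here: either invoke the $L^1$ version of the aggregation Lipschitz bound directly (which follows from \cref{lem:message_dist} and \cref{lem:message1} without passing to the cut norm at all, since $\norm{{\rm Agg}(W,\Phi_f)-{\rm Agg}(W,\Phi_g)}_1\le\norm{\Phi_f-\Phi_g}_1$ and similarly handle the $W$--$V$ term), or absorb a harmless dimensional constant. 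The cleanest route is to mirror the proof of \cref{cor:MPL_gen} but keep $L^1$ norms throughout for the signal-perturbation term and use the cut norm only for the kernel-perturbation term, since \cref{Lem:graphon_dist} genuinely needs the cut norm there.

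Concretely, I would chain: triangle inequality splitting $\mu(f,{\rm Agg}(W,\Phi_f))-\mu(f,{\rm Agg}(V,\Phi_g))$ through the intermediate term $\mu(f,{\rm Agg}(W,\Phi_g))$ is unnecessary since $\mu$'s first slot is fixed; instead split ${\rm Agg}(W,\Phi_f)-{\rm Agg}(V,\Phi_g) = \big({\rm Agg}(W,\Phi_f)-{\rm Agg}(W,\Phi_g)\big) + \big({\rm Agg}(W,\Phi_g)-{\rm Agg}(V,\Phi_g)\big)$, bound the first summand in $L^1$ by \cref{lem:message_dist} and \cref{lem:message1} (giving a factor $\le \sum_k(L_{\xi_{k,\mathrm{rec}}}\norm{\xi_{k,\mathrm{trans}}}_\infty+\norm{\xi_{k,\mathrm{rec}}}_\infty L_{\xi_{k,\mathrm{trans}}})\norm{f-g}_1 \le 2K(Lr+B)L\norm{f-g}_1$, then use $\norm{f-g}_1\le 2d\norm{f-g}_\square$ from \cref{claim:cutProp}), and bound the second summand in cut norm by \cref{Lem:graphon_dist} (giving $4K(Lr+B)^2\norm{W-V}_\square$). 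Then prepend the factor $L$ from $\mu$ and add the trivial $L\norm{f-g}_\square$ term coming from the first argument of $\mu$ — wait, I must be careful: the first argument is literally the same $f$ in both terms, so there is \emph{no} such term, and the stated coefficient $+L$ in the lemma actually arises only if one also perturbs $f$ to $g$ in the first slot. Re-reading the statement, the left side has $f$ in both first slots, so the $+L\norm{f-g}_\square$ is a (harmless) overestimate one may include for uniformity with downstream use; I would simply note that the bound holds a fortiori. The only genuine obstacle is bookkeeping the dimensional constant $d$ and the powers of $L$ correctly so that the final coefficients match $4Kd(L^2r+L^2B)+L$ and $4K(Lr+LB)^2 = 4KL^2(r+B)^2$; everything else is a direct concatenation of the cited lemmas.
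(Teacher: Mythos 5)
Your proposal is correct and follows essentially the same route as the paper's proof: apply the Lipschitz property of the update function $\mu$ and then control $\norm{\mathrm{Agg}(W,\Phi_f)-\mathrm{Agg}(V,\Phi_g)}$ via \cref{cor:MPL_gen} (equivalently via its ingredients \cref{lem:message_dist}, \cref{lem:message1}, and \cref{Lem:graphon_dist}), converting between the $L^1$ norm and the signal cut norm through \cref{claim:cutProp}. Your side remark is also consistent with the paper: the $+L\norm{f-g}_\square$ term (and the paper's own line $L(\norm{f-g}_\square+\norm{\mathrm{Agg}(W,\Phi_f)-\mathrm{Agg}(V,\Phi_g)}_\square)$) corresponds to the intended reading in which the first argument is perturbed from $f$ to $g$, so for the literally stated left-hand side the bound holds a fortiori, exactly as you observe, and the remaining discrepancies are only the same constant-factor bookkeeping (dimension factors from the $L^1$--cut conversions) that the paper itself absorbs.
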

\begin{proof}
Let $(W,f),(V,g)\in\KLrd$. By \cref{cor:MPL_gen},
\begin{align*}
    \norm{\mu(f,{\rm Agg}&(W, \Phi_f) - \mu (f,{\rm Agg}(V, \Phi_g))}_{\square}\\& \leq L(\norm{f-g}_{\square}+\norm{{\rm Agg}(W, \Phi_f) - {\rm Agg}(V, \Phi_g)}_{\square}
    \\& \leq (4Kd(L^3r + L^2B)+L) \norm{f- g}_{\square} + 4K(L^2r+LB)^2\norm{W-V}_{\square}.
\end{align*}
\end{proof}
\subsection{Lipschitz Continuity and Boundedness of MPNNs} 
We now present \cref{lem:rec}, which will be used to compute Lipschitz constants for MPNNs.
\begin{lemma}[\cite{signal23}, Lemma F.16]
\label{lem:rec}
 Let $\mathbf{a}=(a_1,a_2,\ldots)$ and $\mathbf{b}=(b_1,b_2,\ldots)$.   The solution to
    $e_{t+1}= a_te_t+b_t$, with initialization $e_0$, is
    \begin{equation*}
        e_{t} = Z_t(\mathbf{a},\mathbf{b},e_0):=\prod_{j=0}^{t-1} a_je_0 + \sum_{j=1}^{t-1} \prod_{i=1}^{j-1}a_{t-i}b_{t-j},
    \end{equation*}
    where, by convention,
    \[\prod_{i=1}^{0}a_{t-i}:=1.\]
    In case there exist $a,b\in\RR$ such that $a_i=a$ and $b_i=b$ for every $i$,
     \[e_{t} = a^t e_0 + \sum_{j=0}^{t-1} a^jb.\]
\end{lemma}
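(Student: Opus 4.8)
The plan is to establish the formula $e_t = Z_t(\mathbf{a},\mathbf{b},e_0)$ by induction on $t$ directly from the recurrence $e_{t+1} = a_t e_t + b_t$, and then to read off the constant-coefficient case as a specialization.

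First I would dispatch the base case: at $t = 0$ the empty product $\prod_{j=0}^{-1} a_j$ equals $1$ and the empty sum vanishes, so $Z_0(\mathbf{a},\mathbf{b},e_0) = e_0$, which matches the prescribed initialization. For the inductive step, I would assume the claimed identity at level $t$ and apply the recurrence once: $e_{t+1} = a_t e_t + b_t = a_t \prod_{j=0}^{t-1} a_j\, e_0 + a_t \sum_j \big(\prod_i a_{t-i}\big) b_{t-j} + b_t$. The homogeneous term is immediate, since $a_t \prod_{j=0}^{t-1} a_j = \prod_{j=0}^{t} a_j$, which is exactly the homogeneous part of $Z_{t+1}$. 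For the forced part, the point is that multiplying the existing sum by $a_t$ shifts every partial product $\prod_i a_{t-i}$ into the corresponding partial product with $t$ replaced by $t+1$, after a reindexing $j \mapsto j+1$ of the summation variable; the leftover term $b_t$ then supplies precisely the new $j=1$ summand of $Z_{t+1}$. Assembling these two pieces gives $e_{t+1} = Z_{t+1}(\mathbf{a},\mathbf{b},e_0)$, closing the induction.

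Finally, in the constant-coefficient regime $a_i \equiv a$, $b_i \equiv b$, I would substitute: $\prod_{j=0}^{t-1} a_j = a^t$ and each summand collapses to $a^{j-1} b$, so the sum telescopes to $\sum_{j=0}^{t-1} a^j b$, yielding $e_t = a^t e_0 + \sum_{j=0}^{t-1} a^j b$. The only delicate point in the whole argument is the index bookkeeping in the inductive step---tracking how the nested products $\prod_{i=1}^{j-1} a_{t-i}$ transform under $t \mapsto t+1$ together with the simultaneous shift of the summation variable, and making sure the empty-product and empty-sum conventions are applied consistently at the endpoints---but this is purely combinatorial, with no analytic content, so I expect it to be routine once the reindexing is written out carefully.
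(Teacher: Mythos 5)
The paper gives no proof of this lemma (it is quoted verbatim from \cite{signal23}), so the only question is whether your induction is sound; induction is certainly the natural route. However, there is a concrete problem: your argument actually proves a slightly different formula than the one displayed in the lemma, and as written against the literal statement one step fails. The displayed solution has the forced sum running only to $j=t-1$, and this is off by one: the correct solution of $e_{t+1}=a_te_t+b_t$ with initialization $e_0$ is $e_t=\prod_{j=0}^{t-1}a_j\,e_0+\sum_{j=1}^{t}\prod_{i=1}^{j-1}a_{t-i}\,b_{t-j}$, i.e.\ the stated formula is missing the $j=t$ term $\bigl(\prod_{i=1}^{t-1}a_{t-i}\bigr)b_0$. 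You can see this already at $t=1$: the recurrence gives $e_1=a_0e_0+b_0$, while the stated $Z_1=a_0e_0$ (its sum $\sum_{j=1}^{0}$ is empty). Correspondingly, your inductive step is fine for $t\ge 1$ (the reindexing you describe does send $a_t\cdot(\text{term }j\text{ of }Z_t)$ to term $j+1$ of $Z_{t+1}$, and $b_t$ is the $j=1$ term), but at $t=0$ the leftover $b_0$ has no summand of $Z_1$ to land in, so ``base case at $t=0$ plus step for all $t$'' cannot yield the literal formula. The same off-by-one shows up in your constant-coefficient specialization: the literal formula collapses to $a^te_0+\sum_{j=0}^{t-2}a^jb$, not $a^te_0+\sum_{j=0}^{t-1}a^jb$; the latter (which is the correct solution, and the one the statement and the paper actually use downstream) is what the corrected formula gives.

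In short, your plan is the right one and the bookkeeping you sketch is exactly the content of the proof, but the ``routine reindexing'' you defer is precisely where the discrepancy hides: carried out carefully it forces the upper limit of the sum to be $j=t$, so you should either state and prove the corrected identity (noting the typo in the quoted lemma) or, if you insist on the formula as printed, acknowledge that it is not the solution with initialization $e_0$ whenever $b_0\neq 0$. As it stands, the proposal asserts the induction closes and the constant case matches, both of which are false for the formula as literally stated.
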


Next, we extend \cite[Lemma F.10]{signal23} to the multidimensional setting and provide an additional bound for signal aggregation followed by the application of a Lipschitz continuous update function.
\begin{lemma}
\label{lem:MPL_infty}
Let $(W,f)\in\KLrd$, and suppose that for every $ y\in\{{\rm rec},{\rm trans}\}$ and $k=1,\ldots,K$
\[\norm{\mu(0,0)}_\infty,\ \norm{\xi_{k,y}(0)}_\infty \leq B, \quad L_{\mu}, L_{\xi_{k,y}} < L.\] 
     Then, 
\[\norm{\xi_{k,y}\circ f}_{\infty} \leq  Lr + B,\]
\[\norm{{\rm Agg}(W, \Phi_f)}_{\infty} \leq {K(Lr + B)^2}.\]
and
\[\norm{\mu(f,{\rm Agg}(W, \Phi_f)
)}_{\infty} \leq  Lr+KL(Lr+B)^2+B.\]
\end{lemma}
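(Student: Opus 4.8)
The plan is to establish the three bounds in sequence, each building on the previous one, using only elementary estimates together with the hypotheses on Lipschitz constants and formal biases. For the first bound, I would write, for every $x\in[0,1]$ and $k\in[K]$ and $y\in\{\mathrm{rec},\mathrm{trans}\}$,
\[
\norm{\xi_{k,y}(f(x))}_\infty \le \norm{\xi_{k,y}(f(x)) - \xi_{k,y}(0)}_\infty + \norm{\xi_{k,y}(0)}_\infty \le L\norm{f(x)}_\infty + B \le Lr + B,
\]
where the middle inequality uses that $\xi_{k,y}$ is $L_{\xi_{k,y}}$-Lipschitz with $L_{\xi_{k,y}} < L$ (with respect to the $\ell^\infty$ norms on $\RR^d$ and $\RR^p$, consistent with the definition of formal bias in the excerpt), and the last uses $f\in\mathcal{L}^\infty_r([0,1];\RR^d)$. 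Taking the essential supremum over $x$ gives $\norm{\xi_{k,y}\circ f}_\infty \le Lr + B$.

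For the second bound, recall $\Phi_f(x,y) = \sum_{k=1}^K \xi_{k,\mathrm{rec}}(f(x))\odot\xi_{k,\mathrm{trans}}(f(y))$ and $\mathrm{Agg}(W,\Phi_f)(x) = \int_0^1 W(x,y)\Phi_f(x,y)\,dy$. Since $|W(x,y)|\le 1$ for kernels, and since the $\ell^\infty$ norm of an elementwise product is bounded by the product of the $\ell^\infty$ norms, I would estimate pointwise
\[
\norm{\mathrm{Agg}(W,\Phi_f)(x)}_\infty \le \int_0^1 |W(x,y)|\,\norm{\Phi_f(x,y)}_\infty\,dy \le \sum_{k=1}^K \norm{\xi_{k,\mathrm{rec}}\circ f}_\infty \norm{\xi_{k,\mathrm{trans}}\circ f}_\infty \le K(Lr+B)^2,
\]
using the first bound for each factor, and then take the essential supremum over $x$. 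The integral bound is the standard fact that the $L^\infty$ norm of an integral is at most the integral (over a probability space) of the $L^\infty$ norm, combined with $\|W\|_\infty \le 1$.

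For the third bound, I would apply the Lipschitz property of $\mu$ together with its formal bias: writing $z := (f(x),\mathrm{Agg}(W,\Phi_f)(x))$,
\[
\norm{\mu(z)}_\infty \le \norm{\mu(z)-\mu(0,0)}_\infty + \norm{\mu(0,0)}_\infty \le L\big(\norm{f(x)}_\infty + \norm{\mathrm{Agg}(W,\Phi_f)(x)}_\infty\big) + B \le L\big(r + K(Lr+B)^2\big) + B,
\]
where the middle step uses that $\mu$ is $L_\mu$-Lipschitz with $L_\mu < L$ with respect to the $\ell^\infty$ norm on the product space $\RR^{d}\times\RR^{p}$ (identified with $\RR^{d+p}$ with the $\ell^\infty$ norm, so that $\norm{(a,b)}_\infty = \max\{\norm{a}_\infty,\norm{b}_\infty\} \le \norm{a}_\infty + \norm{b}_\infty$), and the last step uses the first two bounds. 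Taking the essential supremum over $x$ yields $\norm{\mu(f,\mathrm{Agg}(W,\Phi_f))}_\infty \le Lr + KL(Lr+B)^2 + B$, as claimed.

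I do not anticipate a substantive obstacle: the argument is a chain of triangle inequalities and Lipschitz bounds. The only point requiring a little care is bookkeeping the norm conventions — making sure that "Lipschitz" and "formal bias" are consistently interpreted with respect to the $\ell^\infty$ (or normalized $\ell^\infty$) norms used throughout the paper, and that the identification of product spaces $\RR^{d_{t-1}}\times\RR^{p_{t-1}}$ with a single Euclidean space endowed with the $\ell^\infty$ norm is the one under which $\mu^{(t)}$ is assumed Lipschitz. Once those conventions are pinned down, each inequality follows immediately, and the constants match those in \cite{signal23} up to the replacement of absolute values by $\ell^\infty$ norms.
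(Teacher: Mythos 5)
Your proposal is correct and follows essentially the same argument as the paper: each bound is obtained by the triangle inequality against the formal bias, the Lipschitz hypothesis, and (for the aggregation) the pointwise bounds $|W(x,y)|\le 1$ together with the product of the $\ell^\infty$ bounds on $\xi_{k,\mathrm{rec}}\circ f$ and $\xi_{k,\mathrm{trans}}\circ f$. The extra care you take with the norm conventions on the product space is consistent with how the paper bounds $\norm{\mu(z)-\mu(0,0)}_\infty$ by $L(\norm{f}_\infty+\norm{\mathrm{Agg}(W,\Phi_f)}_\infty)$, so no changes are needed.
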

\begin{proof}
   We denote the zero vector by $0$. Let $y\in\{{\rm rec},{\rm trans}\}$. 
    We have
    \[\norm{\xi_{k,y}(f(x))}_{\infty} \leq \norm{\xi_{k, y}(f(x)) - \xi_{k, y}(0)}_{\infty} + B \leq L_{\xi_{k, y}}\norm{f(x)}_\infty + B \leq Lr + B,\]
    so, 
    \begin{align*}
        \norm{{\rm Agg}(W, \Phi_f)(x)}_{\infty} & = \norm{\sum_{k=1}^K \int_0^1 \xi_{k,{\rm rec}}(f(x))W(x,y)\xi_{k,{\rm trans}}(f(y))dy}_\infty \leq K(Lr + B)^2.    \end{align*}
and
\begin{align*}
\norm{\mu(f,{\rm Agg}(W, \Phi_f)
)}_{\infty} &\leq \norm{\mu(f,{\rm Agg}(W, \Phi_f)
) - \mu(0,0)}_{\infty}+B\\&\leq L(\norm{f}_\infty+\norm{{\rm Agg}(W, \Phi_f))}_{\infty})
+B\\&\leq Lr+KL(Lr+B)^2+B.
\end{align*}
\end{proof}

The following lemma gives an upper bound for the output of MPNNs on graphon-signals. The proof follows the proof strategy of \cite[Lemma F.18]{signal23}.
\begin{lemma}\label{lem:MPNNsbound}
    Let $\Theta$ be a MPNN with $T$ layers. 
    Suppose that for every layer $t$ and every $y\in\{{\rm rec},{\rm trans}\}$ and $k\in[K]$,\[\norm{\mu^{(t)}(0,0)}_\infty,~\norm{\xi^{(t)}_{k,y}(0)}_\infty \leq B, \quad L_{\eta^t},\ L_{^t\xi^k_{{\rm y}}} < L\]
    with $L,B>1$. Let $(W,f)\in\KLrd$.
    Then, for MPNN with update function, for every layer $t$,
 \[\norm{\Theta_t(W,f)}_{\infty} \leq (6KL^3B^2)^{2^t}\norm{f}_{\infty}^{2^t},\]
\end{lemma}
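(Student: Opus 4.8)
The plan is to prove the bound $\norm{\Theta_t(W,f)}_\infty \le (6KL^3B^2)^{2^t}\norm{f}_\infty^{2^t}$ by induction on the layer index $t$, using \cref{lem:MPL_infty} to control the growth across a single message-passing-plus-update step. Recall that $f^{(0)} = f$ and, by \cref{definition:altGraphonFeat}, $f^{(t)}_x = \Theta_t(W,f)_x = \mu^{(t)}\bigl(f^{(t-1)}_x, \int W(x,y)\phi^{(t)}(f^{(t-1)}_x,f^{(t-1)}_y)\,dy\bigr)$, so that $\Theta_t(W,f)$ is obtained from $\Theta_{t-1}(W,f)$ by one application of an aggregation followed by an update. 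The key observation is that \cref{lem:MPL_infty}, applied with the signal $g := \Theta_{t-1}(W,f) = f^{(t-1)}$ in place of $f$ and with range bound $\rho_{t-1} := \norm{f^{(t-1)}}_\infty$ (note the lemma only uses $\norm{g(x)}_\infty \le \rho_{t-1}$ a.e., not that $\rho_{t-1} \le r$), gives
\[
\norm{\Theta_t(W,f)}_\infty = \norm{\mu^{(t)}\bigl(f^{(t-1)},\mathrm{Agg}(W,\Phi_{f^{(t-1)}})\bigr)}_\infty \le L\rho_{t-1} + KL(L\rho_{t-1}+B)^2 + B.
\]

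Next I would crudely bound the right-hand side. Since $L,B>1$ and (by the inductive hypothesis, as I will check $\rho_{t-1}\ge 1$ can be assumed WLOG, or simply bound $\rho_{t-1}+1 \le 2\max\{\rho_{t-1},1\}$) we have $L\rho_{t-1}+B \le 2LB\max\{\rho_{t-1},1\} \le 2LB(\rho_{t-1}+1)$; more simply, using $B\ge 1$ and absorbing constants, $(L\rho_{t-1}+B)^2 \le 4L^2B^2\max\{\rho_{t-1},1\}^2$. Then
\[
L\rho_{t-1} + KL(L\rho_{t-1}+B)^2 + B \le L\rho_{t-1} + 4KL^3B^2\max\{\rho_{t-1},1\}^2 + B \le 6KL^3B^2\max\{\rho_{t-1},1\}^2,
\]
where in the last step I use $L\rho_{t-1} \le KL^3B^2\max\{\rho_{t-1},1\}^2$ and $B \le KL^3B^2\max\{\rho_{t-1},1\}^2$, each of which holds since $K\ge 1$, $L,B>1$, and $\max\{\rho_{t-1},1\}\ge 1$. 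Thus $\rho_t \le 6KL^3B^2\rho_{t-1}^2$ whenever $\rho_{t-1}\ge 1$; if $\norm{f}_\infty < 1$ one can either note the bound is trivially monotone or replace $\norm{f}_\infty$ by $\max\{\norm{f}_\infty,1\}$ throughout and observe the stated inequality still follows. Setting $c := 6KL^3B^2$, the recursion $\rho_t \le c\,\rho_{t-1}^2$ with $\rho_0 = \norm{f}_\infty$ unwinds to $\rho_t \le c^{1+2+\cdots+2^{t-1}}\rho_0^{2^t} = c^{2^t-1}\rho_0^{2^t} \le c^{2^t}\norm{f}_\infty^{2^t}$, which is exactly the claimed bound.

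For the write-up I would: (i) state the base case $t=0$, where $\Theta_0(W,f) = f$ so $\norm{\Theta_0(W,f)}_\infty = \norm{f}_\infty = (6KL^3B^2)^{2^0 - 1}\norm{f}_\infty^{2^0} \le (6KL^3B^2)\norm{f}_\infty$ trivially since $6KL^3B^2 \ge 1$; (ii) perform the inductive step exactly as above, invoking \cref{lem:MPL_infty} with signal $f^{(t-1)}$ and bound $\rho_{t-1}$; (iii) solve the recursion using the geometric-sum identity $\sum_{j=0}^{t-1}2^j = 2^t - 1$. The only mild subtlety — and the part I would be most careful about — is the case $\norm{f}_\infty \le 1$, where squaring decreases the signal bound; there the cleanest fix is to run the entire induction with $\tilde\rho_t := \max\{\rho_t,1\}$, prove $\tilde\rho_t \le (6KL^3B^2)^{2^t}\tilde\rho_0^{2^t}$, and then note $\rho_t \le \tilde\rho_t$ and, when $\norm{f}_\infty \ge 1$, $\tilde\rho_0 = \norm{f}_\infty$, while when $\norm{f}_\infty < 1$ the desired inequality is even easier because the left side stays below any iterate starting from $1$. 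I expect no genuine obstacle here: this is a routine Grönwall-type recursion, and the main bookkeeping is just absorbing the additive $L\rho + B$ terms into the multiplicative constant, which the assumptions $K\ge 1$, $L,B>1$ make painless.
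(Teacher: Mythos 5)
Your proposal follows essentially the same route as the paper's proof: invoke the one-step bound of \cref{lem:MPL_infty} with the layer-$(t-1)$ signal to get $\rho_t \le L\rho_{t-1}+KL(L\rho_{t-1}+B)^2+B$, absorb the additive terms into a single multiplicative constant using $K\ge 1$, $L,B>1$, and unwind the quadratic recursion $\rho_t\le c\,\rho_{t-1}^2$ via the geometric sum $\sum_{j<t}2^j=2^t-1$ (your careful bookkeeping even recovers the stated constant $6KL^3B^2$, where the paper's proof loosely writes $4KL^3B^2$). The only caveat is your closing claim that the stated inequality "still follows" when $\norm{f}_\infty<1$: it does not in general (e.g.\ $f\equiv 0$ with nonzero biases), but this is a defect of the lemma as stated that the paper's own proof papers over by implicitly replacing $\norm{f}_\infty$ with a bound $C_0>1$, exactly your $\max\{\norm{f}_\infty,1\}$ fix.
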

\begin{proof}
We first prove for MPNNs without update functions. 
Denote by $C_t$ a bound on $\norm{f^{(t)}}_{\infty}$, and let $C_0$ be a bound on $\norm{f}_{\infty}$. By \cref{lem:MPL_infty}, we may choose bounds such that
\[C_{t+1} \leq LC_t+KL(LC_t+B)^2+B.\]
We can always choose $C_t,K,L>1$, and therefore,
\[C_{t+1} \leq
4KL^3C_t^2B^2.\]
Denote $a = 4KL^3B^2$. We have 
\begin{align*}
  C_{t+1} &    = a(C_t)^2=a(a C_{t-1}^2)^2 = a^{1+2}C_{t-1}^4= a^{1+2}(a(C_{t-2})^2)^4\\
  & = a^{1+2+4}(C_{t-2})^8 = a^{1+2+4+8}(C_{t-3})^{16} \leq a^{2^t}C_0^{2^t}. 
\end{align*}
\end{proof}
We now extend \cite[Corollary F.19]{signal23} to the multidimensional setting for MPNNs with update functions, both with and without readout, by following the proof provided in \cite{signal23}.
\begin{theorem}\label{th:lipwithconst}
Let $\Theta$ be an MPNN with $T$ layers and $\psi$ be a Lipschitz readout function, then, 
    Suppose that for every layer $t$ and every $y\in\{{\rm rec},{\rm trans}\}$ and $k\in[K]$, 
  \[\norm{\mu^{(t)}(0,0)}_\infty,\ \norm{\xi^{(t)}_{k, y}(0)}_\infty,\ \norm{\psi(0)}_\infty \leq B,\quad L_{\mu^{(t)}},\ L_{\xi^{(t)}_{k,y}},\ L_{\psi} < L,\]
    with $L,B>1$. Let $(W,f),(V,g)\in\WLrd$ and denote
\(r_i:=(4KL^3B^2)^{2^i}r^{2^i}.\)
Then, for MPNNs with update functions
\begin{align*}
     \norm{\Theta_{t}(W, f) - \Theta_{t}(V, g)}_{\square} \leq & ~\prod_{j=0}^{t-1}  (4Kd(L^3r_j + L^2B)+L)\norm{f-g}_{\square}\\
     & + \sum_{j=1}^{t-1} \prod_{i=1}^{j-1} (4Kd(L^3r_{t-i} + L^2B)+L)4K(L^2r_{t-j}+LB)^2\norm{W-V}_{\square},\\\|\Theta(W,f) - \Theta(V,g)\|_\infty \leq& ~ L \prod_{j=0}^{t-1}  (4Kd(L^3r_j + L^2B)+L)\norm{f-g}_{\square}\\
     & + L \sum_{j=1}^{t-1} \prod_{i=1}^{j-1} (4Kd(L^3r_{t-i} + L^2B)+L)4K(L^2r_{t-j}+LB)^2\norm{W-V}_{\square}.
\end{align*}
\end{theorem}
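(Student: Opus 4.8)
The plan is to prove both estimates simultaneously by a single induction on the layer index: the inductive step is furnished by the per-layer bound of \cref{cor:MPL_gen2}, the a priori control on the intermediate signals comes from \cref{lem:MPNNsbound}, the resulting non-autonomous linear recursion is solved with \cref{lem:rec}, and the readout estimate is obtained by one final application of the Lipschitz property of $\psi$. Concretely, fix $(W,f),(V,g)\in\WLrd$ and write $f^{(t)}:=\Theta_t(W,f)$, $g^{(t)}:=\Theta_t(V,g)$, so $f^{(0)}=f$, $g^{(0)}=g$, and set $e_t:=\norm{f^{(t)}-g^{(t)}}_\square$ (signal cut norm, \cref{definition:multidimCutNorm}), so that $e_0=\norm{f-g}_\square$. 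Since $\norm{f}_\infty,\norm{g}_\infty\le r$, \cref{lem:MPNNsbound} (applied with initial bound $r$, and noting that its proof yields the constant $4KL^3B^2$) gives $\norm{f^{(t)}}_\infty\le r_t$ and $\norm{g^{(t)}}_\infty\le r_t$ for all $t$, where $r_t=(4KL^3B^2)^{2^t}r^{2^t}$ is exactly the quantity in the statement.

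For the inductive step, note that for $1\le t\le T$ the definition of the MPNN gives $f^{(t)}=\mu^{(t)}\!\big(f^{(t-1)},\mathrm{Agg}(W,\Phi_{f^{(t-1)}})\big)$ and $g^{(t)}=\mu^{(t)}\!\big(g^{(t-1)},\mathrm{Agg}(V,\Phi_{g^{(t-1)}})\big)$. Since the layer-$t$ update and message functions have Lipschitz constants $<L$ and formal biases $\le B$, and $f^{(t-1)},g^{(t-1)}$ have range bounded by $r_{t-1}$, \cref{cor:MPL_gen2} applied with $r_{t-1}$ in the role of $r$ yields
\[
e_t \;\le\; a_{t-1}\,e_{t-1}\;+\;b_{t-1},\qquad a_j:=4Kd(L^3r_j+L^2B)+L,\quad b_j:=4K(L^2r_j+LB)^2\norm{W-V}_\square .
\]
All the analytic work — the product rule for message kernels (\cref{lem:message_dist}), the $1$-Lipschitz dependence of $\mathrm{Agg}$ on the message kernel (\cref{lem:message1}), and its cut-norm stability in the kernel argument (\cref{Lem:graphon_dist}) — is already packaged inside \cref{cor:MPL_gen,cor:MPL_gen2}, so nothing new is needed here.

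The displayed inequality is exactly the recursion $e_{s+1}=a_s e_s+b_s$ with initial value $e_0=\norm{f-g}_\square$, so \cref{lem:rec} gives
\[
e_t \;\le\; \Big(\prod_{j=0}^{t-1}a_j\Big)\norm{f-g}_\square \;+\; \sum_{j=1}^{t-1}\Big(\prod_{i=1}^{j-1}a_{t-i}\Big)b_{t-j},
\]
and substituting the definitions of $a_j$ and $b_j$ is precisely the claimed bound for $\norm{\Theta_t(W,f)-\Theta_t(V,g)}_\square$. For the readout statement, $\Theta(W,f)=\psi\!\big(\int_0^1 f^{(T)}_x\,dx\big)$, so the $L$-Lipschitz property of $\psi$ in $\norm{\cdot}_\infty$ gives $\norm{\Theta(W,f)-\Theta(V,g)}_\infty\le L\,\norm{\int_0^1\big(f^{(T)}_x-g^{(T)}_x\big)dx}_\infty$; this last quantity is controlled by $\norm{f^{(T)}-g^{(T)}}_\square$ by combining $\norm{\cdot}_\infty\le\norm{\cdot}_1$ on the Euclidean output space with the definition of the signal cut norm (taking the test set $S=[0,1]$), and inserting the layer-$T$ bound just obtained finishes the proof.

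There is no deep obstacle: the theorem is an assembly of \cref{cor:MPL_gen2}, \cref{lem:MPNNsbound} and \cref{lem:rec}. The one place requiring care is the bookkeeping of the non-autonomous recursion — one must check that the range bound fed into \cref{cor:MPL_gen2} at layer $t$ is exactly the $r_{t-1}$ produced by \cref{lem:MPNNsbound} (so that the layer-dependent coefficients match those in the statement), and one must invoke \cref{lem:rec} with its indexing convention (layer $t$ uses $a_{t-1},b_{t-1}$) so that the products $\prod_{i=1}^{j-1}a_{t-i}$ and the source terms $b_{t-j}$ appear in the claimed order.
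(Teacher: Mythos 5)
Your proposal is correct and follows essentially the same route as the paper's own proof: the per-layer estimate from \cref{cor:MPL_gen2} with the range bounds $r_t$ from \cref{lem:MPNNsbound}, the resulting recursion solved via \cref{lem:rec}, and the readout bound obtained from the Lipschitz property of $\psi$ together with $\norm{\cdot}_\infty\le\norm{\cdot}_1$ and the cut norm evaluated at $S=[0,1]$. The only caveat — that the last step strictly picks up the normalization factor $d$ from \cref{definition:multidimCutNorm} — is present in the paper's argument as well, so your write-up matches it in both substance and bookkeeping.
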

\begin{proof}
The proof for the other case is similar.
   By
\cref{cor:MPL_gen}, since (see \cref{lem:MPNNsbound}) the signals  at layer $t$ are bounded by
\[r_t=(4KL^2B^2)^{2^t}\norm{f}_{\infty}^{2^t},\]
  we have
  \begin{align*}
      & \norm{\Theta_{t+1}(W, f) - \Theta_{t+1}(V, g)}_{\square} \\
      & \leq (4Kd(L^2r + L^2B)+L) \norm{\Theta_{t}(W, f)-  \Theta_{t}(V, g)}_{\square} + 4K(L^2r_t+LB)^2\norm{W-V}_{\square}. 
  \end{align*}
We hence derive, using \cref{lem:rec}, a recurrence sequence for a bound $\norm{\Theta_{t}(W, f) - \Theta_{t}(V, g)}_{\square} \leq e_t$, as
\[e_{t+1} = (4Kd(L^2r_t + LB)+L)e_t + 4K(L^2r_t+LB)^2\norm{W-V}_{\square}.\]
We now apply \cref{lem:rec}. We get the second inequality from the following relation.
\begin{align*}
\|\Theta(W,f) - \Theta(V,g)\|_\infty
 &= \left\|\psi \left(\int_{[0,1]} \Theta_{T}(W, f)  dx\right) - \psi 
\left(\int_{[0,1]} \Theta_{T}(V, g) dx\right)\right\|_\infty
\\& \leq L \norm{\int_{[0,1]} \Theta_{T}(W, f)  dx-\int_{[0,1]} \Theta_{T}(V, g) dx }_\infty
\\& \leq L \norm{\int_{[0,1]} \Theta_{T}(W, f)  dx-\int_{[0,1]} \Theta_{T}(V, g) dx }_1
\\&\leq  L \norm{\int_{[0,1]} \Theta_{T}(W, f)  dx-\int_{[0,1]} \Theta_{T}(V, g) dx }_{\square}.
\end{align*}
\end{proof}
\begin{remark}
\cref{cor:MPL_gen} can be rephrase with the the cut product norm $\norm{\cdot}$ improving the Lipschitz constant in \cref{th:lipwithconst} by a factor of $d$ per layer.
\end{remark}
The existence of a global upper bound on MPNNs, follows directly from \cref{lem:MPNNsbound}. Moreover, substituting the cut norm with the cut distance in \cref{th:lipwithconst} is straightforward by the definition of the cut distance (see \cref{section:cutnorm}); thus, \cref{theorem:Lip} is an immediate consequence of \cref{th:lipwithconst} and \cref{lem:MPNNsbound}.
\Lipschitness*
\subsection{Lipschitz Continuity of Simplified MPNNs on Graphons and Kernels with Single Dimensional Signals}
Here, we state a Lipschitz continuity theorem for MPNNs without message functions, i.e.,  
$\xi_r(x) = 1$ and $\xi_t(x) = x$ for $x \in \R$ only considering the aggregated information and not the previous node's signal value. 
At each layer $t$, the MPNN processes an input signal \(f^{(t-1)}\) into the output signal  
\[
f^{(t)}(x):=\mu^{(t)}\!\left(\mathrm{Agg}(W,f)\right), \quad {\rm when}\quad \mathrm{Agg}(W,f):=\int^1_{0} W(y,x) f^{(t-1)}_y \, \mathrm{d}y.
\]
The proof of the following variation of \cref{Lem:graphon_dist} follows similar steps to the proof of \cite[Lemma F.7]{signal23}.
\begin{lemma}\label{lem:AggBound}
Let $f\in \mathcal{L}_r^{\infty}([0,1],\R)$ , $W,V\in\cW_1$. Then
    \[\norm{\mathrm{Agg}(W,f) - \mathrm{Agg}(V,f)}_{\square}   \leq 2r\norm{W-V}_{\square}.\] 
Moreover, if $f$ is non-negatively valued, then 
\[\norm{\mathrm{Agg}(W,f) - \mathrm{Agg}(V,f)}_{\square}   \leq r\norm{W-V}_{\square}.\] 
\end{lemma}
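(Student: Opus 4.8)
The plan is to reduce the statement to the \emph{kernel} cut norm by decomposing the signal through a layer-cake (distribution-function) representation, exactly as in the scalar argument of \cite[Lemma F.7]{signal23}. First I would unwind the definitions: writing \(D:=W-V\), by the definition of \(\mathrm{Agg}\) above and the signal cut norm (\cref{definition:multidimCutNorm} with \(d=1\)),
\[
\norm{\mathrm{Agg}(W,f) - \mathrm{Agg}(V,f)}_{\square}
= \sup_{S\subseteq[0,1]} \abs{\int_S \int_0^1 D(y,x)\,f(y)\,dy\,dx}.
\]
Since \(\norm{D}_{\square}=\sup_{A,B\subseteq[0,1]}\abs{\int_{A\times B}D}\) is symmetric under transposing the two arguments of \(D\), it suffices throughout to bound quantities of the form \(\abs{\int_{B\times S} D(y,x)\,dy\,dx}\) by \(\norm{W-V}_{\square}\) for all measurable \(B,S\), which is immediate.

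The core step is the non-negative case. Assume \(f\ge 0\) almost everywhere with \(\norm{f}_\infty\le r\). Using the layer-cake formula \(f(y)=\int_0^r \mathds{1}_{A_t}(y)\,dt\) with \(A_t:=\{y\in[0,1]:f(y)>t\}\), and applying Fubini's theorem (legitimate because \(Df\) is bounded on the finite-measure space \([0,1]^2\)), for every measurable \(S\):
\[
\abs{\int_S \int_0^1 D(y,x)\,f(y)\,dy\,dx}
= \abs{\int_0^r \Big(\int_{A_t\times S} D(y,x)\,dy\,dx\Big)\,dt}
\le \int_0^r \norm{W-V}_{\square}\,dt
= r\,\norm{W-V}_{\square}.
\]
Taking the supremum over \(S\) gives \(\norm{\mathrm{Agg}(W,f)-\mathrm{Agg}(V,f)}_{\square}\le r\norm{W-V}_{\square}\), the second assertion.

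For a general signal \(f\) with \(\norm{f}_\infty\le r\), I would split \(f=f_+-f_-\), where \(f_+:=\max(f,0)\) and \(f_-:=\max(-f,0)\) are measurable, non-negative, and bounded a.e.\ by \(r\). By linearity of \(\mathrm{Agg}\) in the signal argument, \(\mathrm{Agg}(W,f)-\mathrm{Agg}(V,f) = \big(\mathrm{Agg}(W,f_+)-\mathrm{Agg}(V,f_+)\big) - \big(\mathrm{Agg}(W,f_-)-\mathrm{Agg}(V,f_-)\big)\), so the triangle inequality for the signal cut norm together with the non-negative case applied to \(f_+\) and \(f_-\) yields
\[
\norm{\mathrm{Agg}(W,f)-\mathrm{Agg}(V,f)}_{\square}\le r\norm{W-V}_{\square}+r\norm{W-V}_{\square}=2r\norm{W-V}_{\square},
\]
which is the first assertion. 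I do not anticipate a genuine obstacle: the only points requiring care are the justification of Fubini (handled by boundedness) and tracking the orientation of the two kernel arguments, which is harmless by the transpose-symmetry of \(\norm{\cdot}_{\square}\) noted above.
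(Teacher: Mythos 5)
Your proposal is correct, and it follows the same basic decomposition as the paper's proof (split $f=f_+-f_-$ and bound the contribution of each part by $r\norm{W-V}_{\square}$), but you execute the critical step more carefully. The paper's displayed argument passes from $\abs{\int_S\int_0^1(W-V)(x,y)f_{\pm}(y)\,dy\,dx}$ to $2r\abs{\int_S\int_0^1(W-V)(x,y)\,dy\,dx}$, an intermediate inequality that is not literally valid (one cannot replace a nonconstant weight $f_{\pm}$ by its sup against the integral over the \emph{full} domain, since $W-V$ changes sign); the correct justification is precisely the one you supply, namely slicing $f_{\pm}$ into indicator functions via the layer-cake formula $f_{\pm}(y)=\int_0^r \mathds{1}_{\{f_{\pm}>t\}}(y)\,dt$, applying Fubini, and bounding each slice $\abs{\int_{A_t\times S}(W-V)}$ by $\norm{W-V}_{\square}$. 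Your route also has the advantage of establishing the non-negative case (constant $r$) explicitly as the base step, whereas the paper states it in the lemma but does not spell it out in the displayed proof; and your remark on the transpose-symmetry of the cut norm correctly disposes of the $W(y,x)$ versus $W(x,y)$ orientation issue in the definition of $\mathrm{Agg}$. In short, same strategy, but your version repairs the one imprecise step and covers both assertions of the lemma.
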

\begin{proof}
\begin{align*}
\abs{\int_{S}\mathrm{Agg}(W,f)(x) - \mathrm{Agg}(V,f)(x) dx} =&\abs{\int_S\int_0^1(W-V)(x,y)f(y)dydx}\\\leq&\abs{\int_S\int_0^1(W-V)(x,y)f_+(y)dydx}\\&+\abs{\int_S\int_0^1(W-V)(x,y)f_-(y)dydx}\\\leq&2r\abs{\int_S\int_0^1(W-V)(x,y)dydx}\\\leq&2r\norm{W-V}_{\square}
\end{align*}
\end{proof}
We now use \cref{lem:AggBound} to prove the following theorem.
\begin{lemma}
\label{cor:SimpMPL_gen}
Suppose that \( L_{\mu} < 1\). Then, for every $(W,f),(V,g)\in\KLr$,
    \[\norm{\mu\left(f,{\rm Agg}(W, \Phi_f)\right) - \mu \left(f,{\rm Agg}(V, \Phi_g)\right)}_{\square}\leq\norm{f-g}_\square+2r\norm{W-V}_\square.\]
Moreover, if $f$ (or $g$) is non-negatively valued, then 
    \[\norm{\mu\left(f,{\rm Agg}(W, \Phi_f)\right) - \mu \left(f,{\rm Agg}(V, \Phi_g)\right)}_{\square}\leq\norm{f-g}_\square+r\norm{W-V}_\square.\]
\end{lemma}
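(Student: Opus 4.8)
The plan is to adapt the proof of \cref{cor:MPL_gen2} to the message–function–free regime. In that regime the message kernel degenerates, $\Phi_f(x,y)=f(y)$, so the aggregation $\mathrm{Agg}(W,\Phi_f)(x)=\int_0^1 W(y,x)f(y)\,dy$ is linear in its signal argument. First I would peel off the update function: applying $\mu$ coordinate-wise is $L_\mu$-Lipschitz, and since $L_\mu<1$ this step does not increase the cut norm, reducing the claim to the estimate $\norm{\mathrm{Agg}(W,\Phi_f)-\mathrm{Agg}(V,\Phi_g)}_\square \le \norm{f-g}_\square + 2r\norm{W-V}_\square$ (and $\norm{f-g}_\square + r\norm{W-V}_\square$ in the non-negative case).

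The second step is to split the aggregation difference by the triangle inequality into a signal term $\mathrm{Agg}(W,\Phi_f)-\mathrm{Agg}(W,\Phi_g)$ and a kernel term $\mathrm{Agg}(W,\Phi_g)-\mathrm{Agg}(V,\Phi_g)$. The kernel term is exactly the quantity bounded by \cref{lem:AggBound} with the fixed signal $g\in\mathcal L_r^\infty([0,1];\R)$: it is at most $2r\norm{W-V}_\square$, and at most $r\norm{W-V}_\square$ when $g$ is non-negatively valued; performing the split in the opposite order puts $f$ rather than $g$ in the fixed signal slot of the kernel term, which is why non-negativity of $f$ \emph{or} $g$ suffices for the improved bound. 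For the signal term, linearity of the aggregation in the signal gives $\mathrm{Agg}(W,\Phi_f)-\mathrm{Agg}(W,\Phi_g)=\mathrm{Agg}(W,\Phi_{f-g})$, and since $\abs{W}\le 1$, Fubini together with $\abs{\int_S W(y,x)\,dx}\le 1$ bounds its cut norm by $\norm{f-g}_1$, which is converted back to $\norm{f-g}_\square$ through the norm equivalence of \cref{claim:cutProp} (this is the single-dimensional analogue of \cref{lem:message1}). Re-assembling the two terms and re-inserting $\mu$ (using $L_\mu<1$) delivers the inequality, and the non-negative variant follows by using the second bound of \cref{lem:AggBound} on the kernel term.

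The step I expect to require the most care is the norm bookkeeping: peeling off the coordinate-wise map $\mu$ and estimating the signal term are naturally done in the $L^1$ norm, whereas the only perturbation bound available for varying the kernel, \cref{lem:AggBound}, lives in the cut norm; one must therefore shuttle between $\norm{\cdot}_1$ and $\norm{\cdot}_\square$ via \cref{claim:cutProp} and check that the hypothesis $L_\mu<1$ (rather than a general Lipschitz constant) is precisely what absorbs the resulting factors so that the stated constants $1$, $2r$, and $r$ are retained. Everything else — linearity of $\mathrm{Agg}$, the elementary estimate of the signal term, and the decomposition into positive and negative parts used in the non-negative case — is routine.
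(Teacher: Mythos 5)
Your route is essentially the paper's proof: peel off $\mu$ using $L_{\mu}<1$, split the aggregation difference by the triangle inequality into a signal term and a kernel term, bound the kernel term with \cref{lem:AggBound}, and control the signal term by an elementary estimate together with the norm equivalence of \cref{claim:cutProp}. The only structural difference is the direction of the split (the paper keeps $f$ in the kernel term and $V$ in the signal term; you do the opposite), and your observation that the split direction is what makes ``$f$ or $g$ non-negative'' suffice is actually spelled out more carefully than in the paper.

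The step you flagged as delicate is indeed where the constant degrades, and your hope that $L_{\mu}<1$ absorbs it does not materialize: that hypothesis only removes the update map and contributes nothing to the signal term. Converting $\norm{f-g}_1$ back to the cut norm via \cref{claim:cutProp} goes the wrong way ($\norm{f-g}_1\le 2\norm{f-g}_\square$ in one dimension), so as written your argument yields $2\norm{f-g}_\square+2r\norm{W-V}_\square$. Keeping the signal term in the cut norm throughout does not rescue the constant for general kernels either: writing $\int_S\int_0^1 V(y,x)\,(f-g)(y)\,dy\,dx=\int_0^1 (f-g)(y)\,h(y)\,dy$ with $h(y)=\int_S V(y,x)\,dx\in[-1,1]$ gives coefficient $1$ only when the kernel is non-negative (the graphon case, where $h\in[0,1]$ and one argues as in \cref{claim:cutProp}), while for a kernel in $\cW_1$ the choice $V(y,x)=\mathrm{sign}\big((f-g)(y)\big)$, $W=V$, $S=[0,1]$ shows the sharp coefficient is $2$ (equivalently, one should keep $\norm{f-g}_1$). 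The paper's own proof elides exactly the same point—it invokes the $L^1$ bound of \cref{lem:message1} and then asserts the coefficient-$1$ estimate—so you have faithfully reproduced the published argument rather than introduced a new gap; just be aware that the stated constant on $\norm{f-g}_\square$ is only attainable in the non-negative-kernel setting, and otherwise should read $\norm{f-g}_1$ or $2\norm{f-g}_\square$.
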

\begin{proof} By \cref{eq:cutnormequi}, \cref{claim:cutProp}, \cref{lem:message1}, and \cref{lem:AggBound},
\begin{align*} 
\norm{\mu\!\left(\int_{[0,1]} W(y,x) f_y \, \mathrm{d}y\right) - \mu\!\left(\int_{[0,1]} V(y,x) g_y \, \mathrm{d}y\right)}_{\square}
\leq& \norm{\int_{[0,1]} W(y,x) f_y \, \mathrm{d}y - \int_{[0,1]} V(y,x) g_y \, \mathrm{d}y}_{\square}
\\\leq&\norm{\int_{[0,1]} V(y,x) (f_y - g_y) \, \mathrm{d}y}_{\square}\\&
+\norm{\int_{[0,1]} W(y,x)f_y \, \mathrm{d}y-\int_{[0,1]}V(y,x) f_y \, \mathrm{d}y}_{1}\\&
\leq \norm{f-g}_\square+2r\norm{W-V}_\square.
\end{align*}
If $f$ is non-negatively valued, then \cref{lem:AggBound} gives us:
\[\norm{\mu\left(f,{\rm Agg}(W, \Phi_f)\right) - \mu \left(f,{\rm Agg}(V, \Phi_g)\right)}_{\square}\leq\norm{f-g}_\square+r\norm{W-V}_\square.\]
\end{proof}
\cref{lem:SimpMPNNsbound} establishes an upper bound on the signals processed by simplified MPNNs.
\begin{lemma}
\label{lem:SimpMPNNsbound}
Let $\Theta$ be a MPNN with $T$ layers.  Suppose that for every layer $t$,, $\abs{\mu^t(0)}=0$ and $L_{\mu^t} < 1$. Let $(W,f)\in\KLr$, then, for MPNN with or without update function, for every layer $t$,
    \[\norm{\Theta_t(W,f)}_{\infty} \leq r.\]
\end{lemma}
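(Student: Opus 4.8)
The plan is to prove the bound by a straightforward induction on the layer index, mirroring the proof of \cref{lem:MPNNsbound} but with the recursion collapsing to a triviality rather than a doubly exponential blow-up. For the base case there is nothing to do: the input signal $f^{(0)}=f$ satisfies $\norm{f^{(0)}}_{\infty}\le r$ simply because $(W,f)\in\KLr$ forces $f\in\mathcal{L}_r^{\infty}([0,1];\R)$, and for $d=1$ the norm $\norm{\cdot}_{\infty}$ is the essential supremum of $\abs{f}$.

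For the inductive step, suppose $\norm{f^{(t-1)}}_{\infty}\le r$. First I would control the aggregation pointwise: for almost every $x\in[0,1]$,
\[
\abs{\mathrm{Agg}(W,f^{(t-1)})(x)}
=\abs{\int_0^1 W(y,x)\,f^{(t-1)}_y\, dy}
\le\int_0^1 \abs{W(y,x)}\,\abs{f^{(t-1)}_y}\, dy
\le r ,
\]
using $\abs{W}\le 1$ (since $W\in\cW_1$) together with the induction hypothesis $\abs{f^{(t-1)}_y}\le r$ a.e.; passing to the essential supremum gives $\norm{\mathrm{Agg}(W,f^{(t-1)})}_{\infty}\le r$. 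When the layer carries an update function $\mu^{(t)}$, I would then invoke $\mu^{(t)}(0)=0$ and $L_{\mu^{(t)}}<1$ to obtain, for a.e. $x$,
\[
\abs{f^{(t)}(x)}
=\abs{\mu^{(t)}\bigl(\mathrm{Agg}(W,f^{(t-1)})(x)\bigr)-\mu^{(t)}(0)}
\le L_{\mu^{(t)}}\,\abs{\mathrm{Agg}(W,f^{(t-1)})(x)}
\le L_{\mu^{(t)}}\, r \le r ,
\]
so $\norm{f^{(t)}}_{\infty}\le r$; in the variant without an update function one has $f^{(t)}=\mathrm{Agg}(W,f^{(t-1)})$ and the bound is already the first display. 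Either way $\norm{\Theta_t(W,f)}_{\infty}\le r$, completing the induction.

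I do not expect any genuine obstacle here. The only subtleties are bookkeeping: the inequalities hold for almost every $x$ and must be lifted to the essential supremum, and one should note that the hypotheses $\mu^{(t)}(0)=0$ and $L_{\mu^{(t)}}<1$ are precisely what makes each update layer a weak contraction fixing the origin, so the induction constant stays at $r$ across all layers --- in contrast to \cref{lem:MPNNsbound}, where the bilinear message aggregation forces the blow-up $C_{t+1}\le 4KL^{3}B^{2}C_t^{2}$. If one instead uses the two-argument update $\mu^{(t)}(f^{(t-1)}_x,\mathrm{Agg}(W,f^{(t-1)})(x))$, the same argument goes through as long as the Lipschitz bound is measured with respect to a norm on the pair that is dominated by the max-norm, which is the natural convention in this simplified setting.
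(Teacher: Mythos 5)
Your proof is correct and follows essentially the same route as the paper: the paper's argument is exactly your inductive step, namely $\abs{\mu(\mathrm{Agg}(W,f))}\le\abs{\mu(\mathrm{Agg}(W,f))-\mu(0)}\le L_{\mu}\abs{\int_0^1 W(x,y)f(y)\,dy}\le r$, applied layer by layer, with the induction left implicit. Your explicit base case, essential-supremum bookkeeping, and treatment of the no-update variant are harmless elaborations of the same idea.
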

\begin{proof}
Let $(W,f)\in\KLr$ and suppose that $\abs{\mu(0)}=0$ and $L_{\mu}\leq1$. Then,
\begin{align*}
\abs{\mu({\rm Agg}(W, f))} &\leq \abs{\mu({\rm Agg}(W, f)) - \mu(0)}+\abs{\mu(0)}=\abs{{\rm Agg}(W, \Phi_f))} = \abs{ \int_0^1W(x,y)f(y)dy}\leq r.
\end{align*}
\end{proof}
We now prove the Lipschitz property of simplified MPNNs without a readout. To establish Lipschitz continuity with a readout, follow the steps in \cref{th:lipwithconst}.
\begin{theorem}\label{th:simplipwithconst}
Let $\Theta$ be an MPNN without message functions, with $T$ layers and $\psi$ be a Lipschitz readout function, that updats kernel-signal node features only by performing aggregation over the in-neighbours $(j \rightarrow i)$. Suppose that for every layer $t$, 
\[\abs{\mu^{(t)}(0)} = 0,\quad L_{\mu^{(t)}} < 1.\]
Let $(W,f),(V,g)\in\KLrd$, then, for MPNNs with and without update functions
\begin{align*}
   \norm{\Theta_{t}(W, f) - \Theta_{t}(V, g)}_{\square} \leq  & ~\norm{f-g}_{\square}+ 2r\norm{W-V}_{\square}.
\end{align*}
In addition, if $f$ and $g$ are non-negative valued functions, then
\begin{align*}
   \norm{\Theta_{t}(W, f) - \Theta_{t}(V, g)}_{\square} \leq  & ~\norm{f-g}_{\square}+ r\norm{W-V}_{\square}.
\end{align*}
\end{theorem}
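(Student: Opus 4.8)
The plan is to prove both inequalities by a short induction on the layer index $t$, with the single-layer estimate of \cref{cor:SimpMPL_gen} driving the inductive step and the uniform $L^\infty$ bound of \cref{lem:SimpMPNNsbound} ensuring that the inductive hypothesis remains applicable. Write $f^{(t)}:=\Theta_t(W,f)$ and $g^{(t)}:=\Theta_t(V,g)$, with $f^{(0)}=f$, $g^{(0)}=g$, and recall that for the simplified MPNN $f^{(t+1)}(x)=\mu^{(t+1)}\bigl(\mathrm{Agg}(W,f^{(t)})(x)\bigr)$ with $\mathrm{Agg}(W,f^{(t)})(x)=\int_0^1 W(y,x)f^{(t)}(y)\,dy$ (and symmetrically for $g^{(t)},V$). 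The supporting lemmas are stated for scalar signals, so for $\KLrd$ I would carry out the same computations channelwise, using \cref{claim:cutProp} to pass between the vector cut norm and the $L^1$ norm.

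First I would record, via \cref{lem:SimpMPNNsbound}, that $\norm{f^{(t)}}_\infty\le r$ and $\norm{g^{(t)}}_\infty\le r$ for every $t$, so that $(W,f^{(t)}),(V,g^{(t)})\in\KLr$ at each layer and \cref{cor:SimpMPL_gen} may be invoked. The base case $t=0$ is trivial: $\norm{f^{(0)}-g^{(0)}}_\square=\norm{f-g}_\square\le\norm{f-g}_\square+2r\norm{W-V}_\square$. For the inductive step, because $L_{\mu^{(t+1)}}<1$, \cref{cor:SimpMPL_gen} applied to $(W,f^{(t)})$ and $(V,g^{(t)})$ yields $\norm{f^{(t+1)}-g^{(t+1)}}_\square\le\norm{f^{(t)}-g^{(t)}}_\square+2r\norm{W-V}_\square$; in the non-negative case the sharper half of \cref{cor:SimpMPL_gen} replaces $2r$ by $r$. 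The ``without update function'' variant is identical with $\mu^{(t)}$ taken to be the identity, calling on \cref{lem:AggBound} directly for the aggregation difference. For the version with readout one then composes with the Lipschitz readout exactly as in \cref{th:lipwithconst}.

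The step I expect to be the main obstacle is the fact that the claimed bound is \emph{uniform in $t$}: naively chaining the per-layer inequality above accumulates the term $\norm{W-V}_\square$ once per layer, whereas the theorem asserts a $t$-independent estimate. Resolving this forces one to use the strict contraction $L_{\mu^{(t)}}<1$ rather than mere $1$-Lipschitzness: keeping the factor $L_{\mu^{(t+1)}}$ in front of the recursion for $e_t:=\norm{f^{(t)}-g^{(t)}}_\square$ gives $e_{t+1}\le L_{\mu^{(t+1)}}\bigl(e_t+2r\norm{W-V}_\square\bigr)$, and summing the resulting geometric contribution (using $\prod_i L_{\mu^{(i)}}\le 1$) is what keeps the final estimate independent of $t$; checking that the contraction suffices to land on the precise constants claimed is the point requiring the most care. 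Some care is also needed because the signal cut norm and the $L^1$ norm coincide only up to a factor $2$ (\cref{claim:cutProp}, \cref{eq:cutnormequi}), so the contraction must be tracked consistently in the cut norm, which is precisely where \cref{lem:AggBound} (the cut-norm bound for the aggregation operator) and the cut-norm half of \cref{cor:SimpMPL_gen} are used.

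Finally, for the non-negative refinement one additionally needs $f^{(t)},g^{(t)}\ge 0$ at \emph{every} layer, not only at $t=0$, so that the sharper half of \cref{cor:SimpMPL_gen} is available throughout the induction; I would observe that this propagates automatically when $W$ is a graphon (so that $\mathrm{Agg}(W,f^{(t)})\ge 0$) and the update functions map $\R_+$ into $\R_+$, and state this as the standing hypothesis under which the refined estimate holds. Assembling these pieces gives both inequalities.
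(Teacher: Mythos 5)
Your overall route is the same as the paper's: the one-layer estimate of \cref{cor:SimpMPL_gen}, the uniform bound $\norm{\Theta_t(W,f)}_\infty\le r$ from \cref{lem:SimpMPNNsbound}, a layer recursion closed with \cref{lem:rec}, and a channelwise reduction via \cref{claim:cutProp}. The genuine gap sits exactly at the step you flagged as the main obstacle, and your proposed repair does not close it. From $e_{t+1}\le L\bigl(e_t+2r\norm{W-V}_\square\bigr)$ with $L:=\max_t L_{\mu^{(t)}}<1$ you get $e_t\le L^te_0+2r\norm{W-V}_\square\sum_{j=1}^{t}L^j$, and $\sum_{j=1}^{t}L^j\le L/(1-L)$, which is \emph{not} bounded by $1$ (it diverges as $L\to1$); strict contraction therefore does not produce the advertised $t$-independent constant $2r$ (nor $r$ in the non-negative case). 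Indeed no argument can: take constant graphons $W\equiv p$, $V\equiv q$ with $p=1$, $q=1-\delta$, constant signals $f=g\equiv r$, and $\mu^{(t)}(x)=Lx$. Then $\Theta_t(W,f)\equiv (Lp)^t r$, so
\[
\norm{\Theta_t(W,f)-\Theta_t(V,g)}_\square=L^t\abs{p^t-q^t}\,r\approx tL^t\,r\,\delta
= tL^t\, r\norm{W-V}_\square ,
\]
which exceeds $2r\norm{W-V}_\square$ as soon as $tL^t>2$ (e.g.\ $L=0.95$, $t=10$), even though all hypotheses of \cref{th:simplipwithconst} (including non-negativity) hold. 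So a layer-uniform bound with these constants is unobtainable; what your induction actually yields is a bound in which the $\norm{W-V}_\square$ term accumulates over layers.

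For comparison, the paper's own proof does not resolve this either: it invokes \cref{cor:SimpMPL_gen} with coefficient $1$ in front of $e_t$ and closes the recursion by \cref{lem:rec} with $a=1$, $b=2r\norm{W-V}_\square$, which gives $\norm{\Theta_t(W,f)-\Theta_t(V,g)}_\square\le\norm{f-g}_\square+2tr\norm{W-V}_\square$; the $t$-free form appearing in the statement is not what that recursion delivers, so your worry was well placed — along this route (yours or the paper's) only the bound with the factor $t$ (or, with your contraction bookkeeping, with $\sum_{j\le t}\prod_i L_{\mu^{(i)}}$ in place of $t$) is provable. Your closing remark about the non-negative refinement is also apt: the sharper constant requires the hidden signals to remain non-negative at \emph{every} layer, which for kernel-signals and general updates need not hold; the paper's proof is silent on this, and adding the hypotheses you suggest (non-negative $W,V$ and updates preserving $\R_+$) is the right repair if one wants the per-layer constant $r$.
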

\begin{proof}
We prove for MPNNs without update functions. The proof for the other case is similar.
   By
\cref{cor:SimpMPL_gen}, since (see \cref{lem:SimpMPNNsbound}) the signals  at layer $t$ are bounded by $r$, we have
  \begin{align*}
      \norm{\Theta_{t+1}(W, f) - \Theta_{t+1}(V, g)}_{\square} \leq \norm{\Theta_{t}(W, f)-  \Theta_{t}(V, g)}_{\square} + 2r\norm{W-V}_{\square}.
  \end{align*}
We hence derive, using \cref{lem:rec}, a recurrence sequence for a bound $\norm{\Theta_{t}(W, f) - \Theta_{t}(V, g)}_{\square} \leq e_t$, as
\[e_{t} = a^t e_0 + \sum_{j=0}^{t-1} a^jb.\]
\end{proof}
\section{A Generalization Theorem for MPNNs}\label{AP:gen}
In this appendix we prove \cref{theorem:generalization}. We start by presenting the learning problem setting. 

\subsection{Statistical Learning and Generalization Analysis}
In statistical learning theory we consider a probability space $\mathcal{P}=\X\times\Y$, that represents all possible data. We name any arbitrary probability measure on $(\mathcal{P}, \mathcal{B}(\mathcal{P}))$ a \emph{data distribution}. Assume we have some fixed and unknown data distribution $\nu$. We may assume that we complete $\mathcal{B}(\mathcal{P})$ with respect to $\nu$ to a complete $\sigma$-algebra $\Sigma$ or just denote $\Sigma=\mathcal{B}(\mathcal{P})$, Since the completeness of our measure space does not affect our construction. Let $\mathbf{X}\subseteq\mathcal{P}$ be a dataset of independent random samples from $(\mathcal{P},\Sigma, \nu)$. We assume $\Y$ and $\X$ relate according to some fixed and unknown conditional distribution function $\nu_{\Y|\X}\in\mathscr{P}(\Y)$. The learning problem is choosing the function that best approximate the relation between the points in $\Y$ and the points in $\X$ from some set of functions. 

Let $\mathcal{L}$ be a loss function, i.e., a non-negative and upper-bounded uniformly by a scalar $M$. Note that the loss $\mathcal{L}$ can have a learnable component that depends on the dataset $\mathbf{X}$. Or goal is to find an optimal model $\Theta$ from some \emph{hypothesis space} $\mathcal{T}$ that has a low \emph{statistical risk}
\begin{equation*}
 \mathcal{R}(\Theta)= \mathbb{E}_{(\nu,y) \sim \nu}[\mathcal{L}(\Theta(x), y)] = \int\mathcal{L}(\Theta(x),y) d\nu(\nu,y), \quad \Theta \in \mathcal{T}.
\end{equation*}
However, the true distribution $\nu$ is not directly observable. Instead, we can access a set of independent, identically distributed samples $\mathbf{X} = (X_1, \ldots, X_N)$ from $(\mathcal{P},\Sigma, \nu)$ where $0<i\leq N:X_i=(Q_i,Y_i)\in\mathcal{P}=\X\times\Y$. Instead of minimizing the statistical risk with some unknown data distribution $\nu$, we try to approximation the optimal model by minimizing the \emph{empirical risk}:
\[
\hat{\mathcal{R}}_\mathbf{X}(\Theta_{\mathbf{X}}) = \frac{1}{N} \sum_{i=1}^N  \mathcal{L}(\Theta_\mathbf{X}(Q_i),Y_i),
\]
where $\Theta_{\mathbf{X}}$ is a model, that might depend on the sampled data, e.g., through training. The goal of generalization analysis is to show that a low empirical risk of a network entails a low statistical risk as well. One common approach is to bound the statistical risk involves using the inequality:

\[
\mathcal{R}(\Theta) \leq \hat{\mathcal{R}}(\Theta) + E,
\]
where $E$, the generalization error, is defined as:

\[
E = \sup_{\Theta \in \mathcal{H}} |\mathcal{R}(\Theta) - \hat{\mathcal{R}}(\Theta)|.
\]

Note that $\Theta := \Theta_\mathbf{X}$ depends $\mathbf{X}$ since it means that the empirical risk is not truly a Monte Carlo approximation of the statistical risk in the learning context, since the network is not constant when varying the dataset. If the model $\Theta$ was fixed, Monte Carlo theory would give us an order $\mathcal{O}(\sqrt{\rho(p)/N})$ bound for $E$ with probability $1 - p$, where $\rho(p)$ depends on the specific inequality used, e.g., $\rho(p) = \log(2/p)$ in Hoeffding's inequality.

We call such events \emph{good sampling events}. Their dependence on the model $\Theta$ results in the requirement of intersecting all good sampling events in $\mathcal{T}$, in order to compute a naive
bound to the generalization error. Uniform convergence bounds are used to intersect appropriate sampling events. Thus, they allow for more efficient bounds of the generalization error. This intersection presents the terms \emph{complexity} and \emph{capacity}, which describe the richness of the hypothesis space $\mathcal{Z}$ and underly different approaches. Some of which are VC-dimension, Rademacher dimension, fat-shattering dimension, pseudo-dimension, and uniform covering number (see, e.g., \cite{statBackground}).
\subsection{Robustness and Generalization}
\label{Ap:Uniform Monte Carlo approximation of Lipschitz continuous functions}
We begin by proving \cref{theorem:robustGen} using the Bretagnolle–Huber–Carol inequality (\cref{proposition:BHCinequality}). \cref{theorem:robustGen} improves the asymptotic behavior of \cite[Theorem G.3]{signal23}, a theorem which addresses uniform Monte Carlo approximations of Lipschitz continuous functions over metric spaces with finite covering numbers. This result serves as the foundation for the proof of \cref{theorem:generalization}. To set the stage, we first introduce the Bretagnolle–Huber–Carol inequality.
\begin{proposition}[\cite{Vaart2000}, Proposition A.6.6, Bretagnolle–Huber-Carol Inequality]\label{proposition:BHCinequality}
Let $(\X,\Sigma,\nu)$ a probability space and $(X_1, \dots, X_N):\X\to\NN^N$ a random vector. If $(X_1,\ldots,X_N)$ follows a multinomial distribution with parameters $n$ and $(p_1, p_2, \dots, p_N)$, then
\[
\nu\left( \sum_{i=1}^{N} |N_i - n p_i| \geq 2 \sqrt{n} \lambda \right) \leq 2^N \exp(-2\lambda^2), \quad \lambda > 0.
\]
\end{proposition}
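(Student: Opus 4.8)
The plan is to reduce the Bretagnolle--Huber--Carol bound to a union bound combined with Hoeffding's inequality. Throughout I write $N_i$ for the $i$-th coordinate of the multinomial vector (so $N_i$ counts how many of the $n$ independent trials take value $i$, each trial hitting $i$ with probability $p_i$), and I set $a_i := N_i - n p_i$.

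First I would establish the combinatorial identity $\sum_{i=1}^N |a_i| = 2 \max_{S \subseteq [N]} \sum_{i \in S} a_i$. This holds because $\sum_{i=1}^N a_i = n - n = 0$, so the set $S^\star := \{i : a_i > 0\}$ satisfies $\sum_{i\in S^\star} a_i = -\sum_{i \notin S^\star} a_i = \tfrac12 \sum_{i=1}^N|a_i|$, and adjoining to $S^\star$ any index with $a_i \le 0$ can only decrease the partial sum, so $S^\star$ attains the maximum. Consequently
\[
\Big\{ \textstyle\sum_{i=1}^N |N_i - n p_i| \ge 2\sqrt n\,\lambda \Big\} = \bigcup_{S \subseteq [N]} \Big\{ \textstyle\sum_{i\in S}(N_i - n p_i) \ge \sqrt n\,\lambda \Big\}.
\]

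Next I would bound each event in this union. For a fixed $S \subseteq [N]$, the pooled count $M_S := \sum_{i\in S} N_i$ is the number of the $n$ trials landing in $S$, hence $M_S \sim \mathrm{Binomial}(n, p_S)$ with $p_S := \sum_{i\in S} p_i$, and $\sum_{i\in S}(N_i - np_i) = M_S - n p_S$. Since $M_S$ is a sum of $n$ independent Bernoulli$(p_S)$ random variables taking values in $[0,1]$, Hoeffding's inequality yields $\nu\big(M_S - n p_S \ge \sqrt n\,\lambda\big) \le \exp\!\big(-2(\sqrt n\,\lambda)^2/n\big) = \exp(-2\lambda^2)$. Applying the union bound over the $2^N$ subsets $S$ (the empty subset contributing probability $0$ when $\lambda > 0$) gives $\nu\big(\sum_{i=1}^N |N_i - np_i| \ge 2\sqrt n\,\lambda\big) \le 2^N \exp(-2\lambda^2)$, which is the claim.

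I do not anticipate a serious obstacle: the argument is short and self-contained. The one step that takes a moment of thought is the identity rewriting the $\ell^1$ deviation as twice a maximum over subsets, together with recognizing that after this rewriting each signed partial sum is exactly a centered binomial to which Hoeffding applies with $\sum_{j=1}^n (b_j - a_j)^2 = n$. Everything else (the union bound, the pooling of multinomial coordinates into a single binomial) is routine.
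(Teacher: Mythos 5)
Your proof is correct. Note that the paper does not prove this proposition at all: it is imported verbatim as a cited result (van der Vaart and Wellner, Proposition A.6.6), so there is no internal proof to compare against. Your argument is the standard textbook derivation of the Bretagnolle--Huber--Carol inequality, and each step checks out: since $\sum_i (N_i-np_i)=0$, the $\ell^1$ deviation equals twice the maximal signed deviation over subsets $S\subseteq[N]$; for fixed $S$ the pooled count $\sum_{i\in S}N_i$ is $\mathrm{Binomial}(n,p_S)$, so Hoeffding gives the factor $\exp(-2\lambda^2)$ at threshold $\sqrt{n}\,\lambda$; and the union bound over the $2^N$ subsets yields the stated constant. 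The only cosmetic point worth flagging is that the paper's statement introduces the vector as $(X_1,\dots,X_N)$ but writes $N_i$ in the display; your use of $N_i$ for the category counts is the intended reading.
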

Recall that the \( L_\infty \) norm of a function \( \Upsilon: \X \to \R \) is defined as
\(
\|\Upsilon\|_{\infty} = \sup_{x \in \X} |\Upsilon(x)|.
\).
We prove \cref{theorem:robustGen} by following the proofs of \cite[Theorem 3]{Xu2012} and \cite[Theorem 4.2]{signal23}.
\begin{theorem}\label{theorem:robustGen} Let $\mathcal{P}$ be a probability metric space with a $\sigma$-algebra $\Sigma$, a probability measure $\nu$ and a covering number $\kappa(\epsilon)$. Let $\mathbf{X}=(X_1, \ldots, X_N)$ be drawn i.i.d. from $\mathcal{P}$. Then, for any $0<p\leq1$, there exists an event $\mathcal{E}^p_{Lip} \subset \mathcal{P}^N$ (regarding the choice of $(X_1, \ldots, X_N)$), with probability
\[
\nu^N(\mathcal{E}^p_{Lip}) \geq 1-p
\]
such that for every $(X_1, \ldots, X_N) \in \mathcal{E}^p_{Lip}$, for every bounded Lipschitz continuous function $F : \mathcal{P} \to \mathbb{R}^d$ with Lipschitz constant $L_f$, we have
\[
\left| \int_\X F(x)d\nu(x) - \frac{1}{N} \sum_{i=1}^N  F(X_i), \right|
\leq \xi^{-1}(N)\left(2L_{F} + \sqrt{2}\norm{F}_{\infty} \sqrt{\frac{\ln 2 +  \ln(1/p)}{N}}\right),
\]
where $\xi(r) = \frac{\log(\kappa(r))}{r^2}$, $\kappa(r)$ is the covering number of $\widetilde{\WLrd}$ given in \cref{th:compactness}, and $\xi^{-1}$ is the inverse function of $\xi$. 
\end{theorem}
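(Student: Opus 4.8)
The plan is to follow the standard covering-number / robustness argument of Xu--Mannor \cite{Xu2012} but to replace the crude union bound over cells with the Bretagnolle--Huber--Carol concentration inequality (\cref{proposition:BHCinequality}), which is what yields the extra $1/\sqrt{N}$ factor. First I would fix a resolution $\epsilon>0$, to be optimized at the end, and take a finite $\epsilon$-cover of $\mathcal{P}$ by $\kappa(\epsilon)$ balls $B_1,\dots,B_{\kappa(\epsilon)}$; disjointify them into measurable cells $C_1,\dots,C_{\kappa(\epsilon)}$ with $\mathrm{diam}(C_j)\le 2\epsilon$, and set $p_j:=\nu(C_j)$. Given the i.i.d.\ sample $\mathbf{X}=(X_1,\dots,X_N)$, let $N_j:=\#\{i: X_i\in C_j\}$, so that $(N_j)_j$ is multinomial with parameters $N$ and $(p_j)_j$. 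Applying \cref{proposition:BHCinequality} with $\lambda$ chosen so that $2^{\kappa(\epsilon)}\exp(-2\lambda^2)=p$, i.e.\ $\lambda=\sqrt{\tfrac12(\kappa(\epsilon)\ln 2+\ln(1/p))}$, defines the event $\mathcal{E}^p_{Lip}$ on which $\sum_j |N_j/N - p_j|\le 2\lambda/\sqrt{N}$, and this event has probability at least $1-p$.

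Next, on that event, I would bound $\bigl|\int_\X F\,d\nu - \tfrac1N\sum_i F(X_i)\bigr|$ for an arbitrary bounded $L_F$-Lipschitz $F$ by splitting the error into two contributions. Pick a representative point $x_j\in C_j$ in each cell. The ``quantization'' error is controlled by Lipschitzness: $\bigl|\int_{C_j} F\,d\nu - p_j F(x_j)\bigr|\le p_j L_F\,\mathrm{diam}(C_j)\le 2\epsilon L_F p_j$, and likewise $\bigl|\tfrac1N\sum_{i:X_i\in C_j}F(X_i) - \tfrac{N_j}{N}F(x_j)\bigr|\le 2\epsilon L_F \tfrac{N_j}{N}$; summing over $j$ gives a total of at most $2\epsilon L_F(\sum_j p_j + \sum_j N_j/N) = 4\epsilon L_F$ (using $\|F\|$-componentwise / triangle inequality in $\R^d$, and that both sets of weights sum to $1$). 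The ``mass discrepancy'' error is $\bigl|\sum_j (p_j - N_j/N) F(x_j)\bigr|\le \|F\|_\infty \sum_j |p_j - N_j/N| \le \|F\|_\infty \cdot 2\lambda/\sqrt{N}$, by the event $\mathcal{E}^p_{Lip}$. Combining, on $\mathcal{E}^p_{Lip}$ we get, for \emph{every} such $F$ simultaneously,
\[
\Bigl|\int_\X F\,d\nu - \tfrac1N\textstyle\sum_i F(X_i)\Bigr|
\le 4\epsilon L_F + \sqrt{2}\,\|F\|_\infty\sqrt{\tfrac{\kappa(\epsilon)\ln 2 + \kappa(\epsilon)\ln(1/p)}{N}}.
\]
Wait---here I must be slightly careful: the BHC bound has $\ln 2$ multiplied by $\kappa(\epsilon)$, whereas the target statement has $\ln 2 + \ln(1/p)$ without that factor; I would absorb the $\kappa(\epsilon)$ by instead choosing $\lambda^2 = \tfrac12\kappa(\epsilon)(\ln 2 + \ln(1/p))$ so that $2^{\kappa(\epsilon)}e^{-2\lambda^2}\le p$ holds with room to spare, and then bound $2\lambda/\sqrt N = \sqrt{2\kappa(\epsilon)(\ln 2+\ln(1/p))/N}$; the stated form is then recovered once $\epsilon$ is chosen as a function of $N$ so that $\kappa(\epsilon)\approx N/\xi^{-1}(N)$, matching the definition $\xi(r)=\log\kappa(r)/r^2$.

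The final step is the optimization over $\epsilon$: set $\epsilon = \xi^{-1}(N)$ so that $\kappa(\epsilon) = 2^{N/\xi^{-1}(N)^2 \cdot \xi^{-1}(N)^2}$---more precisely, by definition of $\xi$, choosing $r=\xi^{-1}(N)$ gives $\log\kappa(r)/r^2 = N$, hence $\kappa(\epsilon) = 2^{N}$ scaled appropriately, so that $\sqrt{\kappa(\epsilon)/N}$ collapses to the clean expression in the theorem and the Lipschitz term becomes $4\xi^{-1}(N) L_F$ (the factor $2$ vs.\ $4$ is cosmetic and can be tightened by taking $\mathrm{diam}(C_j)\le\epsilon$ in the cover construction). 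I expect the main obstacle to be exactly this bookkeeping: making the definition of $\xi$ and $\xi^{-1}$ interact correctly with the BHC bound so that the two error terms both scale as advertised, and ensuring the $\kappa(\epsilon)$ appearing inside the square root (from the $2^{\kappa(\epsilon)}$ union factor in BHC) is tamed by the choice $\epsilon=\xi^{-1}(N)$ rather than blowing the bound up---this is the delicate point where the robustness framework of \cite{Xu2012} and the finite covering number from \cref{th:compactness} must be balanced against each other. The measure-theoretic details (disjointifying the cover, measurability of cells, choosing representatives) and the vector-valued triangle inequalities are routine. Once this is established, \cref{theorem:generalization} follows by applying \cref{theorem:robustGen} to $F = \mathcal{E}(\Upsilon(\cdot),\cdot)$ on $\mathcal{P} = \X\times\{0,1\}^C$, using that this composite is Lipschitz with constant $L_{\mathcal{E}}\max\{L_{\mathcal{H}},1\}$ and bounded by $\mathrm{B}$, together with the covering number of $\X$ from \cref{th:compactness}.
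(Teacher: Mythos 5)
Your proposal follows the same route as the paper's proof: an $\epsilon$-cover of $\mathcal{P}$ disjointified into cells, the multinomial vector of cell counts, the Bretagnolle--Huber--Carol inequality (\cref{proposition:BHCinequality}) to control $\sum_j\lvert N_j/N-\nu(C_j)\rvert$, and the split of the error into a quantization term (Lipschitzness, of order $\epsilon L_F$) and a mass-discrepancy term (boundedness, of order $\|F\|_\infty\sum_j\lvert N_j/N-\nu(C_j)\rvert$). Up to the cosmetic constant ($4\epsilon L_F$ versus the paper's $2rL_F$, obtained by comparing empirical points to conditional means rather than to a fixed representative) this is exactly the paper's argument, which in turn is the argument of \cite[Theorem 3]{Xu2012}.

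The genuine gap is the final optimization over $\epsilon$, and you correctly sensed it but did not resolve it. The union factor $2^{\kappa(\epsilon)}$ in BHC forces $\lambda^2\gtrsim\kappa(\epsilon)\ln 2$, so the discrepancy term is of order $\sqrt{\kappa(\epsilon)/N}$, i.e.\ \emph{linear} in the covering number inside the radicand. The identity your bookkeeping needs, $\sqrt{\log\kappa(r)/N}=r$ at $r=\xi^{-1}(N)$ for $\xi(r)=\log\kappa(r)/r^2$, tames only $\log\kappa(r)$; with $\kappa(r)$ itself in the radicand, the choice $r=\xi^{-1}(N)$ gives $\log\kappa(r)=Nr^2$, hence $\kappa(r)$ is exponentially large in $Nr^2$ and $\sqrt{\kappa(r)\ln 2/N}$ blows up rather than ``collapsing to the clean expression.'' Your alternative choice $\lambda^2=\tfrac12\kappa(\epsilon)(\ln 2+\ln(1/p))$ is a valid probability bound but only makes matters worse, since it attaches $\kappa(\epsilon)$ to the $\ln(1/p)$ term as well; and the asserted relation $\kappa(\epsilon)\approx N/\xi^{-1}(N)$ does not follow from the definition of $\xi$. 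So, as written, the proposal does not establish the stated bound. It is worth noting that the step you are stuck on is precisely the most delicate step of the paper's own proof: after deriving the bound with $J\ln 2=\kappa(r)\ln 2$ in the radicand, the paper ``substitutes $J=\kappa(r)$'' but writes $\ln(\kappa(r))\ln 2$, i.e.\ it silently replaces $\kappa$ by $\log\kappa$, after which the choice $r=\xi^{-1}(N)$ does yield the claimed asymptotics. A clean way to close your argument within this approach is either to define $\xi$ with the covering number itself, $\xi(r)=\kappa(r)/r^2$, and state the bound with that $\xi$ (still strictly sharper in the second term than the bound of \cite{signal23}, which is linear in $\kappa$), or to replace BHC by a concentration device whose complexity factor is genuinely $\log\kappa(r)$; with the definition $\xi(r)=\log(\kappa(r))/r^2$ as stated, the BHC route alone does not deliver the advertised inequality.
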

\begin{proof}
Let $r > 0$.
There exists a covering of $\mathcal{X}$ by a set of balls $\{B_j\}_{j\in [J]}$ of radius $r$, where  $J =\kappa(r)$.
For $j = 2, \ldots, J$, we define $I_j := B_j \setminus \cup_{i < j} B_i$, and define $I_1=B_1$. Hence, $\{I_j\}_{j \in [J]}$ is a family of measurable sets  such that $I_j \cap I_i = \emptyset$ for all $i\neq j \in [J]$,  $\bigcup_{j \in [J]} I_j = \chi$, and $\mathrm{diam}(I_j) \leq 2r$ for all $j \in [J]$, where by convention $\mathrm{diam}(\emptyset)=0$. For each $j \in [J]$, let $z_j$ be the center of the ball $B_j$.

For each $j\in[J]$, let $S_j$ be the number of samples of $\mathbf{X}$ that falls within $I_j$.
  Note that  $(S_1, \cdots,
S_J)$  is an IID multinomial random variable with parameters $N$
and $(\mu(I_1),\cdots, \mu(I_J))$. The following holds by the
Breteganolle-Huber-Carol inequality (see \cref{proposition:BHCinequality}):
\[\nu\left(\sum_{i=1}^{J}\left|\frac{S_i}{N} -\nu(I_i)\right| \geq \lambda \right)\leq 2^{J}\exp (\frac{-N\lambda^2}{2}),\]
for any $\lambda>0$. For any $0<p\leq1$, we can take:
\[
\lambda = \sqrt{\frac{2J \ln 2 + 2 \ln(1/p)}{N}},
\]
which implies: 
\[
\lambda^2 = \frac{2J \ln (2) + 2 \ln(1/p)}{N}.
\]
Rearranging:
\[
-\frac{N\lambda^2}{2} + J \ln (2) = \ln (p).
\]
Taking the exponent on both sides:
\[
2^J \exp \left( \frac{-N\lambda^2}{2} \right) = p.
\]
Hence, \[\nu\left(\sum_{i=1}^{J}\left|\frac{S_i}{N} -\nu(I_i)\right| \geq \sqrt{\frac{2J \ln 2 + 2 \ln(1/p)}{N}} \right)\leq p,\]
Thus, the following holds with probability at least $1-p$,
\begin{equation}\label{equ.proofmain}\sum_{i=1}^{J}\left|\frac{S_i}{N} -\nu(I_i)\right| \leq
\sqrt{\frac{2J\ln 2 + 2\ln(1/p)}{N}}.\end{equation}

We have
\begin{equation}\label{equ.proofofmain}\begin{split}&\left| \int_\X F(x)d\nu(x) - \frac{1}{N} \sum_{i=1}^N  F(X_i), \right|=\left|\mathbb{E}(F(x)) - \frac{1}{N} \sum_{i=1}^N  F(X_i), \right|\\=&\left|\sum_{j=1}^J \mathbb{E} \big(F(x)|x\in I_j\big)\mu(I_j) -\frac{1}{N}\sum_{i=1}^N F(X_i) \right|\\
\stackrel{(a)}{\leq}  &\left| \sum_{j=1}^J \mathbb{E} \big(F(x)|x\in I_j\big)\frac{S_j}{N}-\frac{1}{N}\sum_{i=1}^n
F(X_i) \right|\\&\qquad+\left|\sum_{j=1}^J
\mathbb{E} \big(F(x)|x\in I_j\big)\nu(I_j)
-\sum_{j=1}^J \mathbb{E} \big(F(x)|x\in
I_j\big)\frac{S_j}{N}\right|\\
\stackrel{(b)}{\leq} & \left|\frac{1}{N} \sum_{j=1}^J\sum_{i\in
S_j}\max_{x\in I_j} |F(X_i)-F(x)|\right| +\left|\sup_{x\in\X} |F(x)|\sum_{j=1}^J\Big|\frac{S_j}{N}-\nu(I_j)\Big| \right|\\
\stackrel{(c)}{\leq} &2rL_f+\norm{F}_\infty
\sum_{j=1}^{J}\left|\frac{S_j}{N} -\nu(I_j)\right|,
\end{split}\end{equation}
where (a), (b), and (c) are due to the triangle
inequality, the definition of $S_i$, the Lipschitz continuity and boundedness of
$F$, respectively. Note that the
right-hand-side of \cref{equ.proofofmain} is upper-bounded by
$\stackrel{(d)}{\leq}
2rL_f+\sqrt{2}\norm{F}_\infty\sqrt{\frac{J\ln 2 + \ln(1/p)}{N}}$ with
probability at least $1-p$ due to~\cref{equ.proofmain}. The
theorem follows.

Substituting $J=\kappa(r)$
\begin{align*}
\left|\frac{1}{N}\sum_{i=1}^N F(X_i)-\int_\X F(y) d\mu(y)\right| &\leq 
2r L_F
+ \sqrt{2}\norm{F}_{\infty} \frac{\sqrt{\ln(\kappa(r))\ln(2) + \ln(1/p)}}{\sqrt{N}}\\
\\& \leq 
2r L_F
+ \sqrt{2} \norm{F}_{\infty} \frac{\sqrt{\ln(\kappa(r))\ln(2)} +\sqrt{ \ln(1/p)}}{\sqrt{N}}\\
&\leq 
2r L_F
+ \sqrt{2} \norm{F}_{\infty} \frac{\sqrt{\ln(\kappa(r))}}{\sqrt{N}}(\sqrt{\ln(2)}+ \sqrt{ \ln(1/p)}).
\end{align*}
Lastly, choosing 
$r = \xi^{-1}(N)$ for $\xi(r) = \frac{\ln(\kappa(r))}{r^2}$, gives $\frac{\sqrt{\ln(\kappa(r))}}{\sqrt{N}}=r$, so
\[
\begin{aligned}
& \left| \frac{1}{N }\sum_{i=1}^N F(X_i)
 - \int_\X F(y) d\nu(y)\right| \\
& \leq 
2\xi^{-1}(N)L_F + \sqrt{2}\xi^{-1}(N) \| F\|_\infty(\sqrt{\ln(2)}+\sqrt{\ln(1/p)}).
\end{aligned}
\]
 Since the event $\mathcal{E}_{\rm Lip}^p$ is independent of the choice of $F:\X \to \mathbb{R}$, the proof is finished.
\end{proof}
\subsection{A generalization theorem for MPNNs}
Our purpose in classification tasks is to classify the input space into $C$ classes. We now look at the product probability space $\mathcal{P}=\X\times\mathbb{R}^C$, where $\X$ is either $\WLrdt$, $\DLrdt$, $\SLrdt$, or $\KLrdt$, with $\Sigma$, a Borel $\sigma$-algebra, $\nu$ a probability measure, and $\mathcal{E}$ is a Lipschitz loss function with a Lipschitz constant ${\rm L}_{\mathcal{E}}$. Loss functions like cross-entropy are not Lipschitz continuous. However, the composition of cross-entropy on softmax is Lipschitz, which is the standard way of using cross-entropy. We use $L$-layer MPNNs with readout. Our output vectors are vectors $\vec{v}\in\R^K$. Each entry $(\vec{v})_k$ of an output vector $\vec{v}$, depicts the probability that the input belongs to class $0<k\leq K$. Denote the Lipschitz constant of 
\begin{lemma}[\cite{DIDMs25}, Lemma 85.]\label{lemma:lipinfbound_for_gen}
Let $f\in{\rm Lip}(\X,L_{\mathcal{H}},{\rm B}_{\mathcal{H}})$ and $\mathcal{E}$ a loss function with a Lipschitz constant $L_{\mathcal{E}}$. Then, $\norm{\mathcal{E}(\mathfrak{M}(\cdot),\cdot)}_\infty\leq L_{\mathcal{E}}(B_{\mathcal{H}}+1)+\abs{\mathcal{E}(0,0)}$ and $L_{\mathcal{E}}\max({\rm C}_{\Theta},1)$ is a Lipschitz constant of $\mathcal{E}(\mathfrak{M}(\cdot),\cdot)$.
\end{lemma}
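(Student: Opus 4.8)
Write $\mathfrak{M}=f$ for the map in the class ${\rm Lip}$, so that $\mathfrak{M}\colon\X\to\R^{d'}$ is continuous, is $L_{\mathcal{H}}$-Lipschitz from $(\X,\delta_\square)$ to $(\R^{d'},\|\cdot\|_\infty)$, and satisfies $\|\mathfrak{M}(x)\|_\infty\le B_{\mathcal{H}}$ for every $x$; let $\mathcal{E}\colon\R^{d'}\times\{0,1\}^C\to\R$ be $L_{\mathcal{E}}$-Lipschitz for the product metric that adds the $\|\cdot\|_\infty$ distance on $\R^{d'}$ to the $\ell^\infty$ (Hamming) distance on $\{0,1\}^C$, and let $\X\times\{0,1\}^C$ carry the metric $d\bigl((x,c),(x',c')\bigr)=\delta_\square(x,x')+\|c-c'\|_\infty$. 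The object under study is $G\colon(x,c)\mapsto\mathcal{E}(\mathfrak{M}(x),c)$, which is continuous as a composition of continuous maps; the whole content of the lemma is the two quantitative estimates, and the plan is to obtain both from a single use of the Lipschitz bound for $\mathcal{E}$ together with the observation $\|c\|_\infty\le1$ for $c\in\{0,1\}^C$.

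For the sup-norm estimate I would fix $(x,c)$ and bound
\[
\lvert\mathcal{E}(\mathfrak{M}(x),c)\rvert\;\le\;\lvert\mathcal{E}(\mathfrak{M}(x),c)-\mathcal{E}(0,0)\rvert+\lvert\mathcal{E}(0,0)\rvert\;\le\;L_{\mathcal{E}}\bigl(\|\mathfrak{M}(x)\|_\infty+\|c\|_\infty\bigr)+\lvert\mathcal{E}(0,0)\rvert,
\]
and then insert $\|\mathfrak{M}(x)\|_\infty\le B_{\mathcal{H}}$ and $\|c\|_\infty\le1$ to get $\lvert\mathcal{E}(\mathfrak{M}(x),c)\rvert\le L_{\mathcal{E}}(B_{\mathcal{H}}+1)+\lvert\mathcal{E}(0,0)\rvert$; taking the supremum over $(x,c)$ yields the first claim.

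For the Lipschitz estimate I would take two points $(x,c),(x',c')$ and chain the Lipschitz property of $\mathcal{E}$ with that of $\mathfrak{M}$:
\[
\lvert\mathcal{E}(\mathfrak{M}(x),c)-\mathcal{E}(\mathfrak{M}(x'),c')\rvert\;\le\;L_{\mathcal{E}}\bigl(\|\mathfrak{M}(x)-\mathfrak{M}(x')\|_\infty+\|c-c'\|_\infty\bigr)\;\le\;L_{\mathcal{E}}\bigl(L_{\mathcal{H}}\,\delta_\square(x,x')+\|c-c'\|_\infty\bigr).
\]
Replacing each of $L_{\mathcal{H}}$ and the implicit coefficient $1$ of $\|c-c'\|_\infty$ by $\max\{L_{\mathcal{H}},1\}$ turns the right-hand side into $L_{\mathcal{E}}\max\{L_{\mathcal{H}},1\}\,d\bigl((x,c),(x',c')\bigr)$, which is exactly the asserted Lipschitz bound (matching the constants $L$ and ${\rm B}$ used in \cref{theorem:generalization}).

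I do not anticipate a genuine obstacle here: once the norm conventions on the two product spaces are fixed — as they are in the learning-setup paragraph preceding the lemma — both inequalities follow immediately from the triangle inequality and from composing two Lipschitz maps. The only step that needs a line of justification is that the label contribution is controlled by the constant $1$, i.e.\ $\|c\|_\infty\le1$ for $c\in\{0,1\}^C$; if one instead equips $\{0,1\}^C$ with the $\ell^1$ metric (where the Hamming weight can reach $C$), the identical argument goes through with $1$ replaced by $C$, so it is precisely the normalization chosen on the label space that pins down the stated constants.
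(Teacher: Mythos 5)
Your proposal is correct, and it is the standard argument: the sup bound follows from the triangle inequality through $\mathcal{E}(0,0)$ together with $\|\mathfrak{M}(x)\|_\infty\le B_{\mathcal{H}}$ and $\|c\|_\infty\le 1$, and the Lipschitz bound follows by composing the Lipschitz property of $\mathcal{E}$ with that of $\mathfrak{M}$ and absorbing the two coefficients into $\max\{L_{\mathcal{H}},1\}$. Note that the paper itself gives no proof of this statement — it imports it verbatim from \cite{DIDMs25} (Lemma 85), which is also why the statement carries the stray notations $\mathfrak{M}$ and ${\rm C}_{\Theta}$ in place of $f$ and $L_{\mathcal{H}}$; you resolved these identifications the way the surrounding text (in particular the constants $L$ and ${\rm B}$ in \cref{theorem:generalization}) requires. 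Your closing caveat is also the right one to flag: the paper never explicitly fixes the metric on the label component $\{0,1\}^C$ or on the product space, and the stated constants are exactly the ones obtained when the label distance is bounded by $1$ (e.g.\ $\ell^\infty$/discrete) and the product metric is the sum; with an $\ell^1$ label metric the same argument would give $L_{\mathcal{E}}(B_{\mathcal{H}}+C)+\abs{\mathcal{E}(0,0)}$ instead, so making that convention explicit is the only point where the lemma as stated needs pinning down.
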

In \cref{Ap:lip}, we show that our class $\mathcal{H}$ of $T$-Layer MPNNs with readout is contained in $\mathrm{Lip}$, the set of all continuous functions $f:\X\mapsto\R^{d'}$, with bounded Lipschitz constants $\norm{f}_{\rm L}\leq L_{\mathcal{H}_t}$, that are bounded by $B_{\mathcal{H}}:=L_{\mathcal{H}}r+B$. As a result, if we prove a generalization bound for the hypothesis class $\mathrm{Lip}$, the bound would also be satisfied for the hypothesis class $\Theta$.

\MPNNsGeneralizationTheorem*
\begin{proof} 
From \cref{theorem:robustGen} we get the following. For every $p > 0$, there exists an event $\mathcal{E}^p \subset \left(\X\times\{0,1\}^C\right)^N$ regarding the choice of $((X_1,C_1),\ldots,(X_N,C_N)) \subseteq \left(\X\times\{0,1\}^C\right)^N$, where $0<i\leq N$, with probability
\[
\nu^N(\mathcal{E}^p) \geq 1 - p,
\]
such that for every function $\Upsilon$ in the hypothesis class ${\rm Lip}$, we have
\begin{align} \label{inequality:generalizationProof}
&\left|\int {\mathcal{E}}(\Upsilon(x),c)d\nu(x,c) - \frac{1}{N}\sum_{i=1}^N\mathcal{E}(\Upsilon(X_i),C_i)\right|\\
&\leq \xi^{-1}(N)\left(2\norm{\mathcal{E}(\Upsilon'(\cdot),\cdot)}_{\rm L} + \sqrt{2}(\norm{\mathcal{E}(\Upsilon'(\cdot),\cdot)}_\infty\left(\ln(2) + \sqrt{\log(2/p)}\right)\right) 
\\
&\leq \xi^{-1}(N)\left(2{\rm C} + \frac{1}{\sqrt{2}}{\rm B}\left(\ln(2) + \sqrt{\ln(2/p)}\right)\right), 
\end{align}

where $\xi(N) = \frac{\ln(\kappa(N))}{N^2}$, $\kappa(\epsilon)$ is the covering number of $\X\times\{0,1\}^K$, and $\xi^{-1}$ is the inverse function of $\xi$. In the last inequality, we used \cref{lemma:lipinfbound_for_gen}. 

Since \cref{inequality:generalizationProof} is true for any $\Upsilon \in {\rm Lip}$, it is also true, for $\Upsilon_\mathbf{X}$ for any realization of $\mathbf{X}$, so we have 
\[
\left| \mathcal{R}(\Upsilon_{\mathbf{X}}) - \hat{\mathcal{R}}_\mathbf{X}(\Upsilon_{\mathbf{X}}) \right| \leq \xi^{-1} (N) \left( 2{\rm C} + \frac{1}{\sqrt{2}} {\rm B} \left(1 + \sqrt{\log(2/p)}\right) \right)
\]
\end{proof}
\section{Stability of MPNNs to graph subsampling}\label{AP:stab}
We follow the proof of \cite[Theorem 4.3]{signal23} to prove \cref{thm:MPNN_samp}.
\theoremstability*
\begin{proof}
    By Lipschitz continuity of $\Theta$, 
    \[\delta_{\square}\big(\Sigma, \Sigma(\Lambda)\big) \leq L \delta_{\square}\Big( \big(W,f\big), \big(\mathbb{G}(W,\Lambda),f(\Lambda)\big)\Big).\]
    Hence, 
    \[\mathbb{E}\Big(\delta_{\square}\big(\Sigma, \Sigma(\Lambda)\big)\Big) \leq L \mathbb{E}\bigg(\delta_{\square}\Big( \big(W,f\big), \big(\mathbb{G}(W,\Lambda),f(\Lambda)\big)\Big)\bigg),\]
    and the claim of the theorem follows from \cref{lem:second-sampling-garphon-signal00}.
\end{proof}
\section{Notations}
\label{notations}

$[n]=\{1,\ldots,n\}$ (\cpageref{p:set}).

$G=\{V,E\}$: graph with nodes $V$ and edges $E$ (\cpageref{def:graphon}).

$(W,f)$: graphon-signal/kernel-signal (\cpageref{def:graphon}).

$\mathcal{W}_0$: space of graphons (\cpageref{def:graphon}).

$\cD_0$: space of directed graphons (\cpageref{def:graphon}).

$\cS_0$: space of symmetric kernels (\cpageref{def:graphon}).

$\cK_0$: space of kernels (\cpageref{def:graphon}).

$\cL^{\infty}_r([0,1];\R^d)$: signal space (\cpageref{n:Linfr1}).

$\WLrd$: the undirected graphon-signal space (\cpageref{n:Linfr1}).

$\DLrd$: the directed graphon-signal space (\cpageref{n:Linfr1}).

$\SLrd$: the symmetric kernel-signal space (\cpageref{n:Linfr1}).

$\KLrd$: the general kernel-signal space (\cpageref{n:Linfr1}).

$\mu$: standard Lebesgue measure on $[0,1]$ (\cpageref{p:partition}).

$\mathcal{P}_k=\{P_1,\ldots,P_k\}$: partition (\cpageref{p:partition})

$W_G$: induced graphon from the graph $G$ (\cpageref{def:induced-graphon}).

$(W,f)_{(G,\mathbf{f})} = (W_G,f_{\mathbf{f}})$: induced graphon-signal (\cpageref{def:induced-graphon}).

$\Phi(x,y)$: message function (\cpageref{messagefunction}).

$\xi_{k,\text{rec}},\xi_{k,\text{trans}}^k: \RR^d\rightarrow\RR^p$: receiver and transmitter message functions (\cpageref{messagefunction}).

$\Phi_f:[0,1]^2\rightarrow\RR^p$: message kernel (\cpageref{p:message}).

$\mathrm{Agg}(W,Q)$: aggregation of message $Q$ with respect to graphon $W$ (\cpageref{p:message}).

$\mu^{(t)}$: The update function at the \(t\)th layer of an MPNN (\cpageref{definition:altGraphFeat}).

$\mathbf{f}^{(t)}$: The signal of a graph-signal after applying \(t\) message passing and update layers (\cpageref{definition:altGraphFeat}).

$f^{(t)}$: The signal of a graphon-signal after applying \(t\) message passing and update layers (\cpageref{definition:altGraphonFeat}).

$\Theta_t(W,f)$: the output of the MPNN applied on $(W,f)\in \WLrd$ at layer $t\in[T]$ (\cpageref{p:mpnnt}).

$\Theta(W,f)$: the output of the MPNN without readout applied on $(W,f)\in \WLrd$ at layer $T$ (\cpageref{p:mpnnt}).

$\Theta(W,f)$: the output of the MPNN applied on $(W,f)\in \WLrd$ at layer $T$ after readout (\cpageref{p:mpnnt}).

$\norm{\cdot}_{\square}$: the cut norm (\cpageref{eq:3}).

$d_{\square}(W,V)$: the cut metric (\cpageref{eq:4}).

$\delta_{\square}(W,V)$: the cut distance (\cpageref{eq:4}).

$\norm{\cdot}_{\square_{\times}}$: the product cut norm (\cpageref{eq:4}).

$S'_{[0,1]}$: the set of measurable measure preserving bijections between co-null sets of $[0,1]$ (\cpageref{p:nullset}).

$\delta_{\square}\big((W,f),(V,g)\big)$: graphon-signal cut distance (\cpageref{eq:gs-metric}).

$\DLrdt$: directed graphon-signal space modulo zero cut distance (\cpageref{p:graphoncutspace}).

$\SLrdt$: symmetric kernel-signal space modulo zero cut distance (\cpageref{p:graphoncutspace}).

$\KLrdt$: general kernel-signal space modulo zero cut distance (\cpageref{p:graphoncutspace}).

$\WLrdt$: undirected graphon-signal space modulo zero cut distance (\cpageref{p:graphoncutspace}).

$\mathcal{S}^{p\to d}_{\mathcal{P}_k}$: the space of step functions of dimension $p$ to $\mathbb{R}^d$ over the partition $\mathcal{P}_k$ (\cpageref{def:step}).

$\cW_0\cap\mathcal{S}^2_{\mathcal{P}_k}$: the space of step undirected graphons (\cpageref{def:step}).

$\cD_0\cap\mathcal{S}^2_{\mathcal{P}_k}$: the space of directed step graphons (\cpageref{def:step}).

$\cS_1\cap\mathcal{S}^2_{\mathcal{P}_k}$: the space of symmetric step kernels (\cpageref{def:step}).

$\cK_1\cap\mathcal{S}^2_{\mathcal{P}_k}$: the space of general step kernels (\cpageref{def:step}).

$\mathcal{L}_r^{\infty}[0,1]\cap\mathcal{S}^1_{\mathcal{P}_k}$: the space of step signals (\cpageref{def:step}).

$[\WLrd]_{\mathcal{P}_k}$: the space of graphon-signal stochastic block models with respect to the partition $\mathcal{P}_k$ (\cpageref{def:step}).

$[\DLrd]_{\mathcal{P}_k}$: the space of directional graphon-signal stochastic block models with respect to the partition $\mathcal{P}_k$ (\cpageref{def:step}).

$[\SLrd]_{\mathcal{P}_k}$: the space of Symmetric kernel-signal stochastic block models with respect to the partition $\mathcal{P}_k$ (\cpageref{def:step}).

$[\KLrd]_{\mathcal{P}_k}$: the space of kernel-signal stochastic block models with respect to the partition $\mathcal{P}_k$ (\cpageref{def:step}).

$W(\Lambda)$: random weighted graph (\cpageref{lem:second-sampling-garphon-signal00}).

$f(\Lambda)$: random sampled signal (\cpageref{lem:second-sampling-garphon-signal00}).

$\mathbb{G}(W,\Lambda)$: random simple graph (\cpageref{lem:second-sampling-garphon-signal00}).

${\rm Lip}(\X,R^{d'},L_{\mathcal{H}},B_{\mathcal{H}})$ (alternatively ${\rm Lip}$): be the set of all continuous functions $f:\X\mapsto\R^{d'}$, with a Lipschitz constants $L_{\mathcal{H}}$, that are bounded by $B_{\mathcal{H}}:=L_{\mathcal{H}}r+B$ (\cpageref{p:lip}).
\end{document}